\title{Achieving the Pareto Frontier of Regret Minimization and Best Arm Identification in Multi-Armed Bandits}
\author{%
  Zixin Zhong$^1$\thanks{
  $^1$ Department of Computing Science, University of Alberta, Canada
$^2$ Department of Industrial Systems and Management, National University of Singapore, Singapore 
$^3$ Institute of Operations Research and Analytics, National University of Singapore, Singapore
$^4$ Department of Mathematics, National University of Singapore, Singapore
$^5$ Department of Electrical and Computer Engineering,
National University of Singapore, Singapore.
} \\
  \texttt{zixin.zhong@u.nus.edu} \\
   \And
   Wang Chi Cheung$^{2,3}$ \\
   \texttt{isecwc@nus.edu.sg}\\
   \And
   Vincent Y.~F. Tan$^{3,4,5}$ \\
   \texttt{vtan@nus.edu.sg}\\
}
\numberwithin{equation}{section}
\def\UrlSpecials{\do\~{\kern -.15em\lower .7ex\hbox{~}\kern .04em}} \catcode`~=13
\newcommand{\iout}{i_{ \mathrm{out} }}
\newcommand{\calB}{\mathcal{B}}
\newcommand{\calE}{\mathcal{E}}
\newcommand{\calF}{\mathcal{F}}
\newcommand{\calI}{\mathcal{I}}
\newcommand{\calN}{\mathcal{N}}
\newcommand{\bbE}{\mathbb{E}}
\newcommand{\bbN}{\mathbb{N}}
\newcommand{\bbP}{\mathbb{P}}
\newcommand{\bbR}{\mathbb{R}}
\DeclareMathAlphabet{\mathbsf}{OT1}{cmss}{bx}{n}
\DeclareMathAlphabet{\mathssf}{OT1}{cmss}{m}{sl}
\DeclareSymbolFont{bsfletters}{OT1}{cmss}{bx}{n}  
\DeclareSymbolFont{ssfletters}{OT1}{cmss}{m}{n}
\DeclareMathSymbol{\bsfGamma}{0}{bsfletters}{'000}
\DeclareMathSymbol{\ssfGamma}{0}{ssfletters}{'000}
\DeclareMathSymbol{\bsfDelta}{0}{bsfletters}{'001}
\DeclareMathSymbol{\ssfDelta}{0}{ssfletters}{'001}
\DeclareMathSymbol{\bsfTheta}{0}{bsfletters}{'002}
\DeclareMathSymbol{\ssfTheta}{0}{ssfletters}{'002}
\DeclareMathSymbol{\bsfLambda}{0}{bsfletters}{'003}
\DeclareMathSymbol{\ssfLambda}{0}{ssfletters}{'003}
\DeclareMathSymbol{\bsfXi}{0}{bsfletters}{'004}
\DeclareMathSymbol{\ssfXi}{0}{ssfletters}{'004}
\DeclareMathSymbol{\bsfPi}{0}{bsfletters}{'005}
\DeclareMathSymbol{\ssfPi}{0}{ssfletters}{'005}
\DeclareMathSymbol{\bsfSigma}{0}{bsfletters}{'006}
\DeclareMathSymbol{\ssfSigma}{0}{ssfletters}{'006}
\DeclareMathSymbol{\bsfUpsilon}{0}{bsfletters}{'007}
\DeclareMathSymbol{\ssfUpsilon}{0}{ssfletters}{'007}
\DeclareMathSymbol{\bsfPhi}{0}{bsfletters}{'010}
\DeclareMathSymbol{\ssfPhi}{0}{ssfletters}{'010}
\DeclareMathSymbol{\bsfPsi}{0}{bsfletters}{'011}
\DeclareMathSymbol{\ssfPsi}{0}{ssfletters}{'011}
\DeclareMathSymbol{\bsfOmega}{0}{bsfletters}{'012}
\DeclareMathSymbol{\ssfOmega}{0}{ssfletters}{'012}
\newcommand{\hatg}{\hat{g}}
\newcommand{\bare}{\bar{e}}
\newcommand{\bari}{\bar{i}}
\newcommand{\barA}{\bar{A}}
\newcommand{\barN}{\bar{N}}
\newcommand{\barR}{\bar{R}}
\DeclareMathOperator*{\argmax}{arg\,max}
\DeclareMathOperator{\var}{\mathrm{Var}}
\theoremstyle{plain}
\newtheorem{theorem}{Theorem}[section]
\newtheorem{lemma}[theorem]{Lemma}
\theoremstyle{definition}
\theoremstyle{remark}
\newcommand{\rmKL}{\mathrm{KL}}
\newcommand{\drug}{\mathsf{Dandit}}
\newcommand{\DeltaMin}{\underline{\Delta}}
\newcommand{\HtwoMax}{\overline{H}_2}
\newcommand{\RMax}{\overline{R}}
\newcommand{\varMax}{\overline{V}}
\newcommand{\barDelta}{\bar{\Delta}}
\newcommand{\barcalB}{\bar{\mathcal{B}}}
\newcommand{\thickhline}{%
    \noalign {\ifnum 0=`}\fi \hrule height 1.5pt
    \futurelet \reserved@a \@xhline
}
\newcolumntype{"}{@{\hskip\tabcolsep\vrule width 1.05pt\hskip\tabcolsep}}
\begin{document}

\maketitle

\begin{abstract}
We study the Pareto frontier of two archetypal objectives in multi-armed bandits, namely, regret minimization (RM) and best arm identification (BAI) with a fixed horizon. It is folklore that the balance between exploitation and exploration is crucial for both RM and BAI, but exploration is more critical in achieving the optimal performance for the latter objective. To this end, we design and analyze the {\sc BoBW-lil'UCB$(\gamma)$} algorithm. Complementarily, by establishing lower bounds on the regret achievable by any algorithm with a given BAI failure probability, we show that (i) no algorithm can simultaneously perform optimally for both the RM and BAI objectives, and (ii) {\sc BoBW-lil'UCB$(\gamma)$} achieves order-wise optimal performance for RM or BAI under different values of $\gamma$.
Our work elucidates the trade-off more precisely by showing how the constants in previous works depend on certain hardness parameters. Finally, we show that {\sc BoBW-lil'UCB} outperforms a close competitor {\sc UCB$_\alpha$} \citep{pmlr-v89-degenne19a} in terms of the time complexity and the regret 
on diverse datasets such as 
MovieLens and Published Kinase Inhibitor Set.
\end{abstract}

\section{Introduction}
Consider a drug company $\drug$ (\underline{D}rug B\underline{andit}) that wants to design an effective vaccine for a certain  virus. It has a certain number of  feasible options, say $L = 10$. 
Because $\drug$ has a limited budget, it can only test vaccines for
a fixed number of times, say $T = 1,000$. Using the limited
number of tests, it wants to find the option that will lead
to the ``best'' outcome, e.g., the maximum efficacy of the drug.  
At the same time, $\drug$ aims to protect individuals from potentially adverse side effects of the vaccines to be tested.
How can $\drug$ find the optimal drug design and, at the same time, protect the health of participants?
We design an algorithm   {\sc BoBW-lil'UCB} that allows $\drug$ to balance between these two competing targets. In complement, we also show that it is {\em impossible} for $\drug$ to achieve optimal performances for both targets {\em simultaneously}, and $\drug$ has to settle for operating on the Pareto frontier of the two objectives. 
 
 To solve $\drug$'s problem, we study the {\em Cumulative Regret Minimization} (RM) and {\em Best Arm Identification} (BAI) problems for stochastic bandits with a {\em fixed time horizon or budget}.
While most existing works only study one of these two targets~\citep{auer2002finite,audibert2010best}, 
\citet{pmlr-v89-degenne19a} designed the {\sc UCB$_\alpha$} algorithm for both RM and BAI with a {\em fixed confidence}. 
Therefore, these studies are not directly applicable to $\drug$'s problem as $\drug$ is interested in obtaining the optimal item and minimizing the damage across a {\em fixed} number of tests.
However, our setting dovetails neatly with company $\drug$'s goals. $\drug$ can utilize our algorithm to sequentially and adaptively select different design options to test the vaccines and to eventually balance between choosing the optimal vaccine and, in the process, mitigating any physical damage on the participants.   We also show that $\drug$  cannot achieve  both targets optimally and simultaneously.

Beyond any specific applications, we believe 
this problem is of fundamental theoretical importance 
in the broad context of {\em multi-armed bandits} (MAB).
In order to design an efficient bandit algorithm, a well-known challenge is to balance between {\em exploitation} and {\em exploration} \citep{auer2002finite,lattimore2020bandit,kaufmann2017learning}.
Our work {\bf quantifies} the Pareto frontier of RM and BAI, as well as the effects of exploitation and exploration on these two aims.
%
%
%

{\bf Main contributions.}
 In stochastic bandits,
there are $L$ items with different unknown reward distributions. 
 At each time step, a random reward is generated from each item's distribution.
Based on the previous observations,
a learning agent selects an item and observes its reward. 
Given the number of time steps $T \in \bbN$, the agent aims to maximize the cumulative rewards and to identify the optimal item with high probability. 

Our first main contribution is the {\sc BoBW-lil'UCB$(\gamma)$} algorithm. 
{\sc BoBW-lil'UCB$(\gamma)$}
 is designed for both RM and BAI over a fixed time horizon,
which achieves Pareto-optimality of RM and BAI in some regimes.
\begin{itemize}[ itemsep = -3pt,   topsep = -1pt, leftmargin =  15pt ]
    \item[(i)]  On one hand, we can shrink the confidence radius of each item by increasing $\gamma$, which encourages {\sc BoBW-lil'UCB$(\gamma)$} to pull items with high empirical mean rewards (exploitation) and generally  leads to high rewards (i.e., small regret).
    \item[(ii)] On the other hand, we can enlarge the confidence radius by decreasing $\gamma$ to encourage the exploration of items that have not been sufficiently pulled in previous time steps (exploration); this will result in a high  BAI success probability.
\end{itemize}
The parameter $\gamma$ in  {\sc BoBW-lil'UCB$(\gamma)$} can be tuned such that either its cumulative regret or its failure probability almost matches
the corresponding state-of-the-art lower bound~\citep{lai1985asymptotically,carpentier16tight}.
The performance of {\sc BoBW-lil'UCB$(\gamma)$} implies that exploitation is more critical in achieving the optimal performance for RM, while exploration is more crucial for BAI in stochastic bandits.
We also analyze the {\sc Exp3.P} algorithm proposed by \citet{Auer02} for both RM and BAI, which indicates the similar trade-off between these two aims in   adversarial bandits.
%


Moreover, we evaluate the Pareto frontier of RM and BAI theoretically.
In \citet{lattimore2020bandit}, Note 33.2 and Exercise 33.5 only explore the sub-optimality for BAI of an asymptotically-optimal RM algorithm, and provide asymptotic bounds with constant $\varepsilon$ 
(See Section \ref{sec:discussion} for the meaning of $\varepsilon$). 
Our work goes beyond the asymptotic regimes in that observation, by exploring the Pareto frontier of RM and BAI for any algorithm in Theorems~\ref{thm:sto_rm_bai_bern} and \ref{thm:sto_rm_bai_gauss} in the finite horizon / budget setting.
Our {\bf non-asymptotic} bounds  {\bf quantify} how the trade-off between regret and BAI probability depends on the hardness quantity $H_2$ and gap parameter $\Delta_{1,i}$'s of an instance (see definitions in Section~\ref{sec:sto_prob_set}), instead of fixed constants such as $\varepsilon$ \citep{lattimore2020bandit}. 
Another relevant work is \citet{bubeck2009pure}, which explores the trade-off between cumulative regret and \emph{simple regret}.\footnote{When there is no ambiguity, we abbreviate  ``cumulative regret'' as ``regret''.}
 Due to the relation between BAI and simple regret, our results precisely {\bf quantify} the values of constants $C$ and $D$ in \citet{bubeck2009pure} (see  Section~\ref{sec:sto_rm_bai} for details).
%
While these two works focus on the stochastic bandits, we also analyze   the Pareto frontier between RM and BAI in adversarial bandits  in Appendix~\ref{sec:discuss_adv}.

Furthermore,   {\sc BoBW-lil'UCB$(\gamma)$}   empirically outperforms a close competitor {\sc UCB$_\alpha$} \citep{pmlr-v89-degenne19a} in  difficult scenarios in which the differences between the optimal and suboptimal items are small. 
While both algorithms identify the optimal item with high probability,   UCB$_\alpha$, designed for the fixed-confidence case,  requires a longer horizon to do so and also suffers from larger regret.
This demonstrates the superiority of {\sc BoBW-lil'UCB$(\gamma)$} under the fixed-budget setting, which it is specifically designed for.

 {\bf Novelty.}
 (i) We are the first to design an algorithm for both RM and BAI with a fixed budget. We can adjust the proposed {\sc BoBW-lil'UCB$(\gamma)$} algorithm to perform (near-)optimally for both RM and BAI with proper choices of $\gamma$.
(ii) The performance of {\sc BoBW-lil'UCB$(\gamma)$} implies that 
exploitation is more crucial to obtain a small regret, while
exploration is more critical to shrink the BAI failure probability.
(iii) We quantify the Pareto frontier of RM and BAI.
We show that it is inevitable for any algorithm to compromise between RM and BAI in a fixed horizon setting.
Beyond the stochastic bandits, we also provide a preliminary study on the adversarial bandits.

{\bf Literature review.}
Both the RM and BAI problems have been  studied  extensively for stochastic multi-armed bandits.
Firstly, an RM algorithm aims to maximize its cumulative rewards,
i.e., to minimize its regret (the gap between the highest cumulative rewards  and the obtained rewards).
One line of seminal works on RM involve the class of {\em Upper Confidence Bound} (UCB) algorithms~\citep{auer2002finite,Garivier2011the},
while another line of works study 
{\em Thompson sampling} (TS) algorithms~\citep{AgrawalG12,RussoV14,agrawal2017near}.
\citet{lai1985asymptotically} derived a lower bound on the regret of any online algorithm.
%

Secondly, there are two complementary settings for BAI: (i) given $T \in\bbN$, the agent aims to maximize the probability of finding the optimal  item in at most $T$ steps~\citep{audibert2010best,karnin2013almost,zhong2021probabilistic}; (ii) given $\delta > 0$, the agent aims to find the optimal item with the probability of at least $1 - \delta$ in the smallest number of time steps~\citep{bubeck2013multiple,kaufmann13information}.
These two settings are known as the {\em fixed-budget} and {\em fixed-confidence} settings respectively. 
Moreover, 
\citet{kaufmann2016complexity} presented theoretical findings for both settings,
including a lower bound for two-armed bandits and a lower bound for multi-armed Gaussian bandits under the fixed-budget setting. 
\citet{carpentier16tight} established a lower bound on the failure probability of any algorithm in a fixed time horizon.

While most existing works focus solely on  RM or BAI, \citet{pmlr-v89-degenne19a} explored both goals with a fixed confidence and proposed the {\sc UCB$_\alpha$} algorithm.
Recently \citet{kim2023pareto} also focused on the fixed-confidence setting and studied the trade-off between RM and Pareto Front Identification (PFI) in linear bandits; PFI is a generalization of BAI since in this setting each arm has a {\em vector} reward instead of a scalar one.
\citet{simchi2023multi} provided a novel definition of Pareto Optimality and aimed to solve a corresponding minimax multi-objective optimization problem, which has a different focus from this work.
%
To the best of our knowledge, there is no existing analysis of a single, unified algorithm for both RM and BAI given a fixed horizon.
Our work fills in this gap by 
proposing the {\sc BoBW-lil'UCB$(\gamma)$} algorithm and proving that it achieves Pareto-optimality in some regimes.
We also study the Pareto frontier of RM and BAI, which depends on the balance between exploitation and exploration.
%
We show that a single algorithm cannot perform optimally for both RM and BAI simultaneously.

\vspace{-.1in}
\section{Problem Setup}
\vspace{-.1in}
\label{sec:sto_prob_set}
For any $n \in \bbN$, we denote the set $\{1, \ldots , n\}$ as $[n]$. Let there be $L \in \bbN$ ground items, contained in $[L]$.
A random variable $X $ (or its distribution) is $\sigma$-sub-Gaussian  ($\sigma$-SG) if
$\mathbb{E}[ \mathrm{e}^{\lambda (X -\bbE X)} ] \leq \exp( {\lambda^{2} \sigma^{2}}/{2}).$
Each item $i \in [L]$ is associated with a $\sigma$-SG
reward distribution $\nu_i$, 
mean $w_i  $, and variance $\sigma_i^2$.
The distributions
$\{\nu_i  \}_{i\in[L]}$, means $\{w_i   \}_{i\in[L]}$, and variances $\{\sigma_i^2   \}_{i\in[L]}$ are unknown to the agent.  
We let $\{ g_{i,t}  \}_{t=1}^T$ be the i.i.d.\ sequence of rewards associated with item $i$ during the $T$ time steps;
each $g_{i,t}$ is an independent sample from~$\nu_i$. 


 {\setlength{\abovedisplayskip}{2pt}
 \setlength{\belowdisplayskip}{2pt}
We focus on stochastic instances with a {\em unique} item having the highest mean reward, and assume that 
$w_1 > w_2  \ge \ldots \ge w_L $, 
 so the unique {\em optimal} item $i^*=1$. 
Note that the items can, in general, be arranged in any order; the ordering that $w_i\ge w_j$ for $i< j$ is employed to ease our discussion.
We denote $\Delta_{1,i} := w_1 -w_i $ as the {\em optimality gap} of item $ i $, and assume $\Delta_{1,i}\le 1$ for all $i\in[L]$; this can be achieved by rescaling the instance if necessary.
We define the {\em minimal optimality gap} 
\begin{align*}
    \Delta:= \min_{i\neq 1} \Delta_{1,i}.
\end{align*} 
Clearly, $\Delta>0$.
We characterize the {\em hardness} of an instance with the following  canonical quantities:
\begin{align}
	H_1
	&
	:=  \sum_{i\ne 1} \frac{ 1 }{\Delta_{1,i}  }\quad\mbox{and}\quad
	H_2
	:= \sum_{i\ne 1} \frac{ 1 }{\Delta_{1,i}^2 }
	.\label{eq:def_H_term} 
\end{align}
The hardness quantity $H_1$ is involved in some near-optimal regret bounds~\citep{auer2002finite,AgrawalG12}.
The quantity $H_2$ was first introduced in \citet{audibert2010best} and appears in many landmark works on BAI~\citep{jamieson2014lil,karnin2013almost,carpentier16tight}.
}

 The agent uses an {\em online algorithm} $\pi$ to decide the item $i_t^{\pi }$ to pull at each time step $t$, and the item $\iout^{ \pi ,T }$ to output eventually.
More formally, an algorithm 
consists of  a tuple $\pi\!:=\!( (\pi_t)_{t=1}^T,   \psi_{T}^{\pi,T} )$, where 
\begin{itemize}[ itemsep = -4pt,   topsep = -2pt, leftmargin =  15pt ]
    \item the {\em sampling rule} $\pi_t$ determines, based on the observation history, the item $i_t^{\pi }$ to pull at time step $t$. That is, the random variable $i_t^{\pi }$ is $\calF_{t-1} $-measurable, where     
    $\calF_t  :=  \sigma ( i_1^{\pi },  g_{ i_1^{\pi },1} , \ldots,  i_t^{\pi },   g_{  i_t^{\pi },t} ) $;
    \item the recommendation rule $ \psi_{T}^{ \pi , T} $ chooses an item $\iout^{\pi, T}$, that is, by definition, $\calF_T $-measurable.
\end{itemize}
%
Moreover, we define the {\em pseudo-regret} $R_T$ of $\pi$
as
\begin{align*}
     R_T (\pi ) :
    & 
    =  \max_{1\le i \le L} \bbE \Bigg[  \sum_{t=1}^T g_{i,t}  \Bigg] - \bbE  \Bigg[  \sum_{t=1}^T    g_{ i_t^{\pi },t} \Bigg]
     = T \cdot w_1 -  \bbE \Bigg[ \sum_{t=1}^T  w_{i_t^\pi} \bigg].
\end{align*}
The algorithm $\pi$ aims to both minimize the pseudo-regret $R_T(\pi )$ and at the same time, to identify the  optimal item with high probability, i.e., to minimize the {\em failure probability} $ e_T(\pi  ) := \Pr  (\iout^{\pi ,T} \ne 1 )$.
We omit $T$ and / or $\pi$ in the superscript or subscript 
when there is no cause of confusion.
We write $R_T(\pi )$ as $R_T(\pi ,\calI  )$, $e_T(\pi )$ as $e_T(\pi ,\calI  )$ when we wish to emphasize their dependence  on both the algorithm $\pi$ and the instance~$\calI$.

\section{Discussion on Existing Algorithms}\label{sec:discussion}

Although there is no existing work that analyzes a single algorithm for both RM and BAI in a fixed horizon,
it is natural to {ask} if an algorithm which is originally designed for RM can also perform well for BAI, and vice versa.
%
%
%
In Table~\ref{tab:comp_result_exist}, we present the theoretical results from some existing works.
We focus on algorithms that are with (potential) theoretical guarantees for both RM and BAI.
We define
 \begin{align*}
     H'_p := \max_{i\ne 1} \frac{ i^p }{ \Delta_i^2 } \;\;
     \text{ and }\;\;
     C_p : = 2^{-p} + \sum_{r=2}^L r^{-p} 
 \end{align*}
for $p>0$ as in \citet{shahrampour2017on}.
We abbreviate 
    {\sc Sequential Halving} as {\sc SH}, 
    {\sc Nonlinear Sequential Elimination} with parameter $p$ as {\sc NSE$(p)$},
    and {\sc UCB-E} with parameter $a$ as {\sc UCB-E$(a)$}.
Also see Appendix~\ref{append:detail_discuss_exist_algo} for more discussions.

 \begin{table}[ht]
    \caption{ Comparison among upper bounds for algorithms and lower bounds in  stochastic bandits.
        }
    \label{tab:comp_result_exist}
    \resizebox{\textwidth}{!}{
    \renewcommand{\arraystretch}{2}
    \begin{tabular}{ l      l  l l }
        Algorithm/Instance &   Pseudo-regret $R_T$ & Failure Probability $e_T$ & References\\
        \thickhline
        \multirow{1}{*}{\sc SH} &  $ {\Theta} ( T)$ & $ \displaystyle  \approx \exp \bigg( -\frac{  T}{  8 H_2 \log_2 L } \bigg)  \vphantom{\Bigg(}$
        & \citet{karnin2013almost}
        \\
        \hline  
        \multirow{1}{*}{\sc NSE$(p)$} &  $  {\Theta} ( T)$  & $ \displaystyle \approx \exp\bigg( -\frac{ 2(T-L) }{ H'_p C_p } \bigg)  \vphantom{\Bigg(}$  
        & \citet{shahrampour2017on} 
        %
        %
        %
        \\
        \hline 
        \multirow{1}{*}{\sc UCB-E$(\alpha\log T)$} 
        & $ 6.3 \alpha^2 H_1\log T  $
        &
        $2 L T^{ 1 - 2\alpha/25}$
        & Corollary~\ref{coro:rm_bai_bd_ucbE_para}, \citet{audibert2010best}
        \\
        \hline 
       \multirow{2}{*}{\sc BoBW-lil'UCB$(\gamma  )$} 
     &  
     $
      H_1  \log T 
    $  (Prob-dep)
    & 
    \multirow{2}{*}{   $
    \approx L 
    \exp \!\bigg( -
    \frac{ T-L  }{ 144 H_2   } 
    \bigg) 
    $  }
    & 
 \multirow{2}{*}{Theorems~\ref{thm:rm_bd_lilUcb_para} and \ref{thm:bai_bd_lilUcb_para}} 
      \\
      & $ 
    \sqrt{TL} \log  T     
    $  (Prob-indep)   & &
    \\
        \hhline{====}
            Stochastic Bandits
          &
            \multirow{2}{*}{$ \approx 4 H_1 \log T  $ }
            &
            \multirow{2}{*}{$   \displaystyle  \frac{1}{6} \exp\bigg( - \frac{ 400 T }{ H_2 \log L } \bigg) \vphantom{\Bigg(}$      }
          &  \multirow{2}{*}{\citet{lai1985asymptotically,carpentier16tight} }
          \vspace*{- .4em}\\
          (Lower Bound) 
          & & &  
    \end{tabular}
    }
    \renewcommand{\arraystretch}{1}
\end{table} 

According to the discussions on RM and BAI in \citet{lattimore2020bandit},
any algorithm with an asymptotically optimal regret would incur a failure probability lower bounded by 
 $\Omega(T^{-1})$;
this  is  much larger than the state-of-the-art lower bound 
$\Omega( \exp(-400T/( H_2\log L) ) )$
by \citet{carpentier16tight}.
Therefore, we only include algorithms that were   designed for BAI in Table~\ref{tab:comp_result_exist}.

Among the various BAI algorithms, {\sc SH} and {\sc NSE$(p)$} perform almost the best. However,  
their bounds on the failure probabilities are incomparable in general.
The comparison among more BAI algorithms is provided in Table~\ref{tab:comp_BAI_fix_budget}.
Due to the designs of {\sc SH} and {\sc NSE$(p)$},
we surmise their regrets   grow linearly with $T$, which is vacuous for the RM task.

Although {\sc UCB-E$(\alpha\log T)$} has upper bounds on both pseudo-regret and failure probability, its bound on the latter, which decays only polynomially fast with $T$ when $\alpha$ is an absolute constant, 
 is clearly suboptimal vis-\`a-vis the state-of-the-art lower bound by \citet{carpentier16tight}.
 In order to achieve an  exponentially decaying upper bound on $e_T$ (i.e.,  $\exp(-  \Theta(T) )$), we need to set $\alpha = O(T/\log T)$, and hence the regret bound (see Corollary~\ref{coro:rm_bai_bd_ucbE_para} in the supplementary) will be $O(T^2/ \log T)$, which is vacuous.
 
The discussion above raises a natural question. 
Is it possible to provide a non-trivial   bound on the regret for an algorithm that performs optimally for BAI over a fixed horizon?
This motivates us to design {\sc BoBW-lil'UCB},
which can be tuned to perform near-optimally for both RM and BAI.


\section{The {\sc BoBW-lil'UCB} Algorithm}
\label{sec:alg_lilUcb}
We design and analyze   {\sc BoBW-lil'UCB$(\gamma)$} ({\sc Best of Both Worlds-Law of Iterated Logs-UCB}), an algorithm for both RM and BAI in a fixed horizon.
By choosing  parameter $\gamma$ judiciously, the   guarantees of {\sc BoBW-lil'UCB$(\gamma)$}    match those of the state-of-the-art algorithms for both RM (up to log factors) and BAI (concerning the exponential term).
 
 \begin{algorithm}[ht]
	\caption{{\sc BoBW-lil'UCB$(\gamma)$} } \label{alg:lilUcb_para} 
		\begin{algorithmic}[1]
			\STATE {\bfseries Input:}  time budget $T$, size of ground set of items $L$, scale $\sigma>0$, $\varepsilon\in (0,1)$, $\beta\ge 0 $, and  $\gamma\in(0,1)$.
			\STATE Sample $i_t = i$ for $t=1, \ldots, L$ and set $t=L$.
			\STATE For all $i \in [L]$, compute $N_{i,L}$, $\hat{g}_{i,L}$, $C_{i,L,\gamma}$, $U_{i,L,\gamma}$:
			    \vspace{-.6em}
			    \begin{align*}
			    	&
			        N_{i,t} =  \sum_{u=1}^t \mathsf{1}\{  i_u = i \},
			        \ 
			        \hat{g}_{i,t} = \frac{ \sum_{u=1}^t g_{i,t} \cdot \mathsf{1}\{  i_u = i \}  }{ N_{i,t} },
                        \\[-.4em]& 
			        C_{ i,t ,\gamma} = 5\sigma  (1 +  \sqrt{\varepsilon} )  \sqrt{ \frac{ 2 (1+\varepsilon)}{N_{i,t }} \cdot  \log \Big(  \frac{  \log( \beta+(1+\varepsilon) N_{i,t } ) }{\gamma} \Big)  },  
			        \quad
			        U_{i,t,\gamma} = \hat{g}_{i,t} + C_{i,t,\gamma}.
			    \end{align*}
			    \vspace{-.5em}
			\FOR{$t =  L+1,  \ldots, T $} 
                \STATE Pull item $i_t = \argmax_{i\in[L]\vphantom{\big]}}  U_{i,t-1,\gamma} $.
                \STATE Update $N_{i_t,t}$, $\hat{g}_{i_t,t}$, $C_{i_t,t,\gamma}$, and $U_{i_t,t,\gamma}$.
			\ENDFOR
			\STATE Output  $\iout = \argmax_{ i\in [L] } \hat{g}_{i,T}  $.\vspace{-.2em}
		\end{algorithmic}
\end{algorithm}   
 {\bf Design of algorithm.}
We design {\sc BoBW-lil'UCB} in the spirit of the law of the iterated logarithm (LIL)~\citep{darling1967iterated,jamieson2014lil}. 
We remark that it is a variation of the {\sc lil'UCB} algorithm proposed by \citet{jamieson2014lil}.
The three differences are:
\begin{itemize}[ itemsep = -1.25pt,   topsep = -2pt, leftmargin =  15pt ]
    \item[(i)]  
    to construct the confidence radius $C_{i,t,\gamma}$, we replace $(1+\beta)$ and $\delta$ in {\sc lil'UCB} by $5$ and $\gamma$ in {\sc BoBW-lil'UCB$(\gamma)$} respectively;  
    \item[(ii)]  in the design of $C_{i,t,\gamma}$, we also replace $\log ( (1+\varepsilon) N_{i,t} )$  by  $\log ( \beta +(1+\varepsilon) N_{i,t} )$;
    \item[(iii)]   {\sc BoBW-lil'UCB$(\gamma)$}, which is designed for both RM and BAI in a fixed horizon,
    involves no stopping rule since it proceeds for {\em exactly} $T$ time steps; while {\sc lil'UCB} is designed for
    BAI with a fixed confidence.
\end{itemize}  
Although our algorithm depends on the choices of $\varepsilon$, $\beta$, and $\gamma$, we term it as {\sc BoBW-lil'UCB$(\gamma)$} instead of the more verbose {\sc BoBW-lil'UCB$(\varepsilon,\beta,\gamma)$} because 
we    scale the confidence radius by only varying $\gamma$ which adjusts the performance of the algorithm.
%
More precisely, inspired by the LIL (see Theorem~\ref{thm:conc_log}), we design item $i$'s confidence radius $C_{i,t,\gamma}$ with $N_{i,t}$ (the number of time steps when item $i$ is pulled up to and including the $t^{\mathrm{th}}$ time step) and $\hat{g}_{i,t}$ (the empirical mean of item $i$ at time step $t$), and its upper confidence bound $U_{i,t,\gamma}$ accordingly.  

The design of {\sc BoBW-lil'UCB$(\gamma)$}
allows us to shrink $C_{i,t,\gamma}$, the confidence radius of each item $i$, by increasing $\gamma$; and vice versa.
Moreover, with a fixed $\gamma$, if item $i$ is rarely pulled in previous time steps, it has a small $N_{i,t}$ and hence a large 
$C_{i,t,\gamma}$; and vice versa.
\begin{itemize}[ itemsep = -3pt,   topsep = -2pt, leftmargin =  15pt ]
    \item[(i)] Therefore, when $\gamma$ increases, the dominant term in $U_{i,t,\gamma} = \hat{g}_{i,t} + C_{i,t,\gamma}$ becomes the empirical mean $\hat{g}_{i,t}$. Since {\sc BoBW-lil'UCB} pulls the item with the largest $U_{i,t-1,\gamma}$ at time step $t$, the algorithm tends to pull the item with the largest empirical mean in this case.
In other words, a large $\gamma$ encourages exploitation.
    \item[(ii)] 
    When $\gamma$ decreases, the confidence radius $C_{i,t,\gamma}$ dominates $U_{i,t,\gamma}$. Consequently, {\sc BoBW-lil'UCB} is likely to pull items with large $C_{i,t,\gamma}$, i.e., the rarely pulled items with small $N_{i,t}$.
This indicates that a small $\gamma$  encourages exploration.
\end{itemize}
%
Altogether, we can scale $U_{i,t,\gamma}$  by adjusting $\gamma$, which allows us to balance exploitation and exploration and trade-off between the twin objectives --- RM and BAI.


{\bf Analysis for RM.}
We first derive problem-dependent and  problem-independent bounds on the pseudo-regret of {\sc BoBW-lil'UCB$(\gamma)$}.

\begin{restatable}[Bounds on the pseudo-regret of {\sc BoBW-lil'UCB}]{theorem}{thmRmBdLilUcbPara}

\label{thm:rm_bd_lilUcb_para} 
%
Let  $\varepsilon\in (0,1)$,
 $\beta\ge 0$, and  $\gamma \in (0, \min\{ \log(   \beta +    1+\varepsilon)/e ,1 \})\vphantom{ \big( }$.
 For all $T\ge 1$, the pseudo-regret of {\sc BoBW-lil'UCB$(\gamma)$} satisfies 
{\setlength{\abovedisplayskip}{4pt}
\setlength{\belowdisplayskip}{\abovedisplayskip}
\begin{align*}
    R_T
    \!\le\! O\bigg( \sigma^2  \cdot  \sum_{i\ne 1} \frac{  \log (  {1}/{\gamma} )  }{  \Delta_{1,i}  } +  2TL  \gamma^{1+\varepsilon}  \bigg)
    ,
    \quad
    R_T
    \!\le\! O \bigg(  \sigma^2   \sqrt{TL} \log \bigg( \frac{ \log(  T/L\gamma  )  }{\gamma }
    \bigg)
    +  2TL  \gamma^{1+\varepsilon} 
    ~\bigg).
\end{align*} 
Furthermore, we can set  $\gamma = (\log T)/ {T}$ to obtain
\begin{align*}
    R_T
    \le O\bigg( \sigma^2    \cdot \sum_{i\ne 1} \frac{  \log T  }{  \Delta_{1,i}  }  \bigg)
    ,
    \quad
    R_T
    \le O \big(  \sigma^2    \sqrt{TL} \log  T     
    ~\big).
\end{align*}%
}\vspace{-1.3em}
\end{restatable}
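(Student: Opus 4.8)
The plan is to run the standard optimism-based regret analysis for a UCB-type algorithm, with the law-of-the-iterated-logarithm (LIL) concentration of Theorem~\ref{thm:conc_log} supplying the anytime confidence guarantee. Since arm $1$ is the unique optimal arm and a total of $T$ pulls are made, the pseudo-regret decomposes exactly as
\begin{align*}
  R_T = \sum_{i\ne 1} \Delta_{1,i}\, \bbE[N_{i,T}],
\end{align*}
so it suffices to control $\bbE[N_{i,T}]$ for each suboptimal arm $i$. First I would define the good event $\calE$ on which $|\hat{g}_{i,t}-w_i|\le C_{i,t,\gamma}$ holds for every arm $i$ and every $t$; the LIL bound of Theorem~\ref{thm:conc_log}, together with the constraint $\gamma<\log(\beta+1+\varepsilon)/e$ that keeps the radius well defined and strictly positive, yields $\Pr(\calE^{\mathrm{c}})\lesssim L\gamma^{1+\varepsilon}$, where the exponent $1+\varepsilon$ is precisely what makes the union bound over the infinite horizon summable.

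On $\calE$ the analysis becomes deterministic. Optimism gives $U_{1,t-1,\gamma}\ge w_1$, so whenever a suboptimal arm $i$ is pulled at time $t$ we have $w_i+2C_{i,t-1,\gamma}\ge U_{i,t-1,\gamma}\ge U_{1,t-1,\gamma}\ge w_1$, i.e.\ $2C_{i,t-1,\gamma}\ge\Delta_{1,i}$. Substituting the explicit form of $C_{i,t,\gamma}$ and solving for $N_{i,t-1}$ bounds the number of pulls of arm $i$ by a threshold of order $\sigma^2\Delta_{1,i}^{-2}\log(\log(\cdots)/\gamma)$. Because $N_{i,t-1}$ appears both as $1/N_{i,t-1}$ and inside the doubly-logarithmic factor, this inequality is implicit; I would resolve it by noting that the $\log\log$ term grows so slowly that it can be absorbed into $\log(1/\gamma)$ up to constants, giving $\bbE[N_{i,T}\mid\calE]\lesssim \sigma^2\Delta_{1,i}^{-2}\log(1/\gamma)$. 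Multiplying by $\Delta_{1,i}$ and summing yields the leading $\sigma^2\sum_{i\ne1}\log(1/\gamma)/\Delta_{1,i}$ term, while on $\calE^{\mathrm{c}}$ each arm is pulled at most $T$ times, contributing the additive $2TL\gamma^{1+\varepsilon}$ term.

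For the problem-independent bound I would use the usual gap-splitting trick: fix a threshold $\Delta_0$, bound the contribution of arms with $\Delta_{1,i}\le\Delta_0$ by $\Delta_0\sum_i N_{i,T}=\Delta_0 T$, apply the per-arm bound above to arms with $\Delta_{1,i}>\Delta_0$, and optimize $\Delta_0$ to balance the two contributions against each other; this produces the $\sigma^2\sqrt{TL}\log(\log(T/L\gamma)/\gamma)$ term after routine loosening of logarithmic and constant factors, again plus $2TL\gamma^{1+\varepsilon}$. Finally, substituting $\gamma=(\log T)/T$ gives $\log(1/\gamma)=\log(T/\log T)\asymp\log T$ and renders $2TL\gamma^{1+\varepsilon}=2L(\log T)^{1+\varepsilon}T^{-\varepsilon}$ lower order, collapsing both displays to the claimed $O(\sigma^2\sum_{i\ne1}\log T/\Delta_{1,i})$ and $O(\sigma^2\sqrt{TL}\log T)$ bounds.

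The main obstacle I anticipate lies in inverting the LIL confidence radius cleanly. Unlike classical Hoeffding-based UCB, where $2C_{i,t-1,\gamma}\ge\Delta_{1,i}$ inverts to a closed-form threshold on $N_{i,t-1}$, here the $\log(\log(\beta+(1+\varepsilon)N_{i,t})/\gamma)$ factor makes the pull condition self-referential, and one must argue carefully—e.g.\ via a peeling or monotonicity argument over dyadic ranges of $N_{i,t}$—that once $N_{i,t}$ exceeds the stated threshold the arm is never pulled again, while simultaneously tracking the constants $5$, $(1+\sqrt{\varepsilon})$, and $(1+\varepsilon)$ so that they are absorbed into the $O(\cdot)$ notation without disturbing the dependence on $\sigma$, $\Delta_{1,i}$, and $\gamma$.
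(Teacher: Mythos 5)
Your proposal is correct and follows essentially the same route as the paper's proof: the same LIL-based anytime confidence event with union bound costing $O(L\gamma^{1+\varepsilon})$, the same optimism argument forcing $2C_{i,t-1,\gamma}\ge\Delta_{1,i}$ on any suboptimal pull, the same good/bad-event regret split, the same gap-splitting at threshold $\sqrt{L/T}$ for the worst-case bound, and the same substitution $\gamma=(\log T)/T$. The self-referential inversion you flag as the main obstacle is handled in the paper by exactly the monotonicity argument you sketch (its Lemma~\ref{lemma:ineq_lil_T} shows $\tau\le c\log(\log(a\tau+b)/\rho)$ implies an explicit closed-form threshold, using that $\tau-f(\tau)$ is increasing), so no peeling over dyadic ranges is needed.
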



We observe that the order of the problem-dependent upper bound on the pseudo-regret of {\sc BoBW-lil'UCB($  (\log T)/ {T } 
$)}
 almost  matches that of the lower bound~\citep{lai1985asymptotically}.
Moreover, the worst-case (problem-independent) upper bound of {\sc BoBW-lil'UCB($ (\log T)/ {T} $)} is $\tilde{O}(\sqrt{TL})$, which matches the lower bound $O(\sqrt{TL})$
\citep{bubeck2012regret} up to log factors.
This implies that we can tune the parameter $\gamma$ in the {\sc BoBW-lil'UCB$(\gamma)$} algorithm to obtain close-to-optimal performance for RM.

We remark that when the optimal item is not unique, we can also derive analogous upper bounds on the pseudo-regret of  {\sc BoBW-lil'UCB$(\gamma)$} using a similar line of analysis (see Proposition~\ref{prop:rm_bd_lilUcb_para}).

{\bf Analysis for BAI.}
Next, we upper bound the failure probability of {\sc BoBW-lil'UCB$(\gamma)$}. 

\begin{restatable}[Bounds on the failure probability of {\sc BoBW-lil'UCB}]{theorem}{thmBaiBdLilUcbPara} 

\label{thm:bai_bd_lilUcb_para} 
Let  $\varepsilon\in (0,1)$,   $\beta\ge 0$, and  $\gamma \in (0, \min\{ \log(   \beta +    1+\varepsilon)/e ,1 \} )\vphantom{ \big( }$.
%
Let $\Delta_i = \max\{ \Delta, \Delta_{1,i} \}$ for all $i\in[L]$. 
For all $T\ge 1$, the failure probability of {\sc BoBW-lil'UCB$(\gamma)$} satisfies  
{\setlength{\abovedisplayskip}{4pt}
\setlength{\belowdisplayskip}{\abovedisplayskip}
\begin{align}
    \resizebox{.92\textwidth}{!}{$
    e_T 
    \!\le\! \frac{ 2L(2+\varepsilon)}{ \varepsilon }  \Big( \! \frac{ \gamma }{ \log(1+\varepsilon) } \! \Big)^{1+\varepsilon},\
    \text{if}~~
    \frac{ T-L }{  (1+ \varepsilon )^3 }
    \!\ge\!
    \sum\limits_{i=1}^L \!
    \frac{  72 \sigma^2 }{  \Delta_{ i}^2  } \! \cdot \log \!\Big( \! \frac{2.8}{ \gamma^2 } \log \!\Big( \!  \frac{  11 \sigma  (1+\varepsilon)^2 }{  \Delta_{ i}    }  \!+\! \beta   \Big) \!\Big).
    $}
    \label{eq:lilUcb_bai_err_bd}
\end{align}
In particular, the bound on $e_T$ in \eqref{eq:lilUcb_bai_err_bd} holds when $\gamma \ge \gamma_1( \Delta,H_2 )  $, where
\begin{align*} 
    \gamma_1( \Delta,H_2 ) 
    &
    =
      \sqrt{ 2.8 \log \bigg(   \frac{  6 \sqrt{2.8}  \sigma  (1+\varepsilon)^2 }{  \Delta    }  +\beta   \bigg) }
    \cdot
    \exp \Bigg( -
    \frac{ T-L  }{ 144 \sigma^2 (1+ \varepsilon )^3 (H_2 + \Delta^{-2})  } 
    \Bigg). 
\end{align*} 
 For all $T\ge 1$, when $\gamma $ assumes its lower bound $ \gamma_1(\Delta,H_2)$, we have
\begin{align}  
    \!\! 
    e_T  
    \le 
    \tilde{O} \bigg(\!
    L 
    \exp \!\bigg( \!\!-\!
    \frac{ T-L  }{ 144 \sigma^2 (1+ \varepsilon )^2 (H_2 + \Delta^{-2})  } 
    \bigg) 
    \bigg).\! 
    \label{eq:lilUcb_bai_err_bd_best}
\end{align}%
}\vspace{-0.8em}
\end{restatable}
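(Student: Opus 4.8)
The plan is to reduce the failure probability to a single concentration event. First I would define the good event
\[
\calE := \big\{\, |\hat{g}_{i,t} - w_i| \le C_{i,t,\gamma}\ \text{ for all } i\in[L] \text{ and all } t\le T \,\big\},
\]
on which every empirical mean stays inside its anytime confidence band. Because the radius $C_{i,t,\gamma}$ is built from the law of the iterated logarithm, the per-arm deviation probability $\Pr(\exists t:|\hat{g}_{i,t}-w_i|>C_{i,t,\gamma})$ is controlled directly by the LIL-type bound (Theorem~\ref{thm:conc_log}); the constants $5$, $(1+\sqrt{\varepsilon})$ and the $\log(\beta+(1+\varepsilon)N_{i,t})$ term are exactly what force this tail to be at most $\frac{2(2+\varepsilon)}{\varepsilon}\big(\gamma/\log(1+\varepsilon)\big)^{1+\varepsilon}$ (the factor $2$ covering both tails), so a union bound over the $L$ arms gives $\Pr(\calE^c)\le \frac{2L(2+\varepsilon)}{\varepsilon}\big(\gamma/\log(1+\varepsilon)\big)^{1+\varepsilon}$. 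It then suffices to prove that on $\calE$, under the stated budget condition, the recommended arm is $\iout=1$; this yields $e_T\le\Pr(\calE^c)$ and hence \eqref{eq:lilUcb_bai_err_bd}.

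The heart of the argument is this correctness claim on $\calE$. I would first run the standard UCB bookkeeping: if a suboptimal arm $i$ is pulled at step $t$ then $U_{i,t-1,\gamma}\ge U_{1,t-1,\gamma}\ge w_1$, and since $U_{i,t-1,\gamma}\le w_i+2C_{i,t-1,\gamma}$ on $\calE$, this forces $C_{i,t-1,\gamma}\ge \Delta_{1,i}/2$; as the radius decreases in the pull count, arm $i$ is pulled at most $u_i$ times, where $u_i$ is the largest sample size whose radius still exceeds $\Delta_{1,i}/2$. Solving the radius inequality gives $u_i \lesssim 72\sigma^2\Delta_{1,i}^{-2}\log(\cdots)$, matching the summands of the budget condition once one recalls $\sum_{i}\Delta_i^{-2}=H_2+\Delta^{-2}$. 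Summing the $u_i$ shows the budget leaves enough of the $T-L$ post-initialization pulls for arm $1$ to guarantee $C_{1,T,\gamma}<\Delta/2$. Correctness then amounts to $\hat{g}_{1,T}>\hat{g}_{i,T}$ for every $i$: a well-explored arm with $C_{i,T,\gamma}\le\Delta_{1,i}/2$ loses immediately, since $\hat{g}_{1,T}\ge w_1-C_{1,T,\gamma}>w_1-\Delta/2\ge w_i+\Delta_{1,i}/2\ge\hat{g}_{i,T}$, while an under-explored arm must have had its (frozen) index overtaken at arm $1$'s last pull, and propagating the anytime bound through that comparison again bounds $\hat{g}_{i,T}$ below $\hat{g}_{1,T}$, with the constant $72$ supplying the necessary slack. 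I expect this case analysis --- threading the anytime bounds through the adaptive, reward-dependent pull counts of each arm --- to be the main obstacle, since the $N_{i,T}$ are random and correlated with the rewards.

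Finally I would invert the budget condition for $\gamma$. Isolating the $\log(2.8\gamma^{-2}\log(\cdots))$ factor and exponentiating produces the threshold $\gamma_1(\Delta,H_2)$, with prefactor $\sqrt{2.8\log(\cdots)}$ and exponent $-(T-L)/\big(144\sigma^2(1+\varepsilon)^3(H_2+\Delta^{-2})\big)$; the doubling $72\mapsto144$ is precisely the square root produced by the $\gamma^2$. Substituting $\gamma=\gamma_1$ into the first bound and raising $\gamma_1$ to the power $1+\varepsilon$ turns the $(1+\varepsilon)^3$ in the exponent into $(1+\varepsilon)^2$ and absorbs the polynomial $(\log(\cdots))^{(1+\varepsilon)/2}$ prefactor into the $\tilde{O}(\cdot)$, giving \eqref{eq:lilUcb_bai_err_bd_best}.
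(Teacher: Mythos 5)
Your overall architecture matches the paper's: a union-bounded LIL concentration event so that $e_T\le\Pr(\calE^c)\le \frac{2L(2+\varepsilon)}{\varepsilon}(\gamma/\log(1+\varepsilon))^{1+\varepsilon}$, standard UCB upper bounds of order $72\sigma^2\Delta_i^{-2}\log(\cdots)$ on suboptimal pull counts, the budget condition forcing arm $1$ to absorb the remaining pulls so its radius is small, and the final inversion producing $\gamma_1(\Delta,H_2)$ (your account of the $\gamma^2\mapsto\sqrt{2.8\log(\cdots)}$ prefactor, $72\mapsto144$, and $(1+\varepsilon)^3\mapsto(1+\varepsilon)^2$ is exactly the paper's Step 5). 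But the step you yourself flag as the main obstacle --- the under-explored arm --- is a genuine gap, and it fails precisely because you defined the good event at the \emph{full} radius $C_{i,t,\gamma}$. With concentration radius equal to the exploration bonus, the event only gives $U_{i,t}\ge w_i$ with zero slack, so the index dynamics place no lower bound on how often a suboptimal arm must be pulled. Concretely, the event is consistent with an arm $i$ pulled near the end of the horizon (so its index legitimately exceeded $U_{1,t-1}\approx w_1$, which only requires $C_{i,t-1}\ge\Delta_i/2$) whose final empirical mean sits at the top of its band, $\hat{g}_{i,T}=w_i+C_{i,T}$ with $C_{i,T}$ as large as $\Delta_i$; then $\hat{g}_{i,T}\ge w_1-C_{1,T}$ is permitted and the empirical-mean recommendation is wrong. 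Your "frozen index overtaken at arm 1's last pull" argument covers arms whose last pull precedes arm 1's last pull, but says nothing about arms pulled \emph{after} it, since the comparison $U_{i}\ge U_{1}$ at such a pull bounds the index from \emph{below}, not the post-pull empirical mean from above.

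The paper closes this case using the deliberate factor-$5$ inflation of $C_{i,t,\gamma}$ relative to the true LIL radius: its concentration event is $|\hat{g}_{i,t}-w_i|\le C_{i,t,\gamma}/5$ (same probability, via Theorem~\ref{thm:conc_log}), so every index satisfies $U_{i,t}\ge w_i+\frac{4}{5}C_{i,t,\gamma}$ while $U_{1,t}\le w_1+\frac{6}{5}C_{1,t,\gamma}$. Each pull of arm $1$ then forces $\frac{4}{5}C_{i,t-1,\gamma}\le\Delta_i+\frac{6}{5}C_{1,t-1,\gamma}$ for \emph{every} $i\ne 1$, and the paper converts this into an inductive \emph{lower} bound on $N_{i,t}$ valid at all times $t$ (its \eqref{eq:lilUcb_lb_num_pull}), which sidesteps all ordering issues about who is pulled last. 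Combined with the upper bounds on $\sum_{i\ne1}N_{i,T}$ and the budget condition (which yields $C_{1,T-1,\gamma}\le\frac{5\Delta_1}{6}$), the minimum in that lower bound resolves so every arm meets the target $C_{i,T,\gamma}/5\le\Delta_i/2$ in \eqref{eq:lilUcb_bai_target} --- a requirement five times weaker than the $C_{i,T,\gamma}\le\Delta_{1,i}/2$ your full-radius event would need. To repair your proof you must both tighten the event to radius $C_{i,t,\gamma}/5$ and supply this inductive lower-bound mechanism; without the slack between bonus and concentration radius, no amount of case analysis on pull orderings will force the exploration of suboptimal arms that correctness requires.
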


When $T\gg L$, the gap between our upper bound in~\eqref{eq:lilUcb_bai_err_bd_best} and 
$\Omega( \exp ( - { 400 T }/{ (H_2 \log L) } ) )$,
the state-of-the-art lower bound~\citep{carpentier16tight},
is manifested by the (pre-exponential) term $L$
as well as 
the constant in the exponent. 
This indicates that 
{\sc BoBW-lil'UCB$(\gamma)$} can be adjusted to perform near-optimally for BAI over a fixed horizon.


{\bf Further observation.}
As discussed earlier,
{\sc BoBW-lil'UCB$(\gamma)$} encourages more exploitation than exploration when $\gamma$ is large (e.g. $\gamma =    (\log T)/ {T}  $)
and it stimulates more exploration when $\gamma$ is small (e.g. $\gamma =\gamma_1(\Delta,H_2)$).
Besides, Theorems~\ref{thm:rm_bd_lilUcb_para} and \ref{thm:bai_bd_lilUcb_para} imply that
the pseudo-regret of {\sc BoBW-lil'UCB$(\gamma)$} decreases with $\gamma$ while its failure probability increases with $\gamma$.
Therefore, to minimize the regret, we should increase $\gamma$ to stimulate exploitation; and we should decrease $\gamma$ to encourage exploration for obtaining a small failure probability.
 %
This indicates that an optimal RM algorithm  encourages more exploitation compared to an optimal BAI one, 
and vice versa.

\section{Pareto Frontier of RM and BAI}
 
 \label{sec:sto_rm_bai}

Theorems~\ref{thm:rm_bd_lilUcb_para} and \ref{thm:bai_bd_lilUcb_para}  together suggest that 
 {\sc BoBW-lil'UCB$(\gamma)$}   cannot perform optimally for both
RM and BAI simultaneously with  a universal (or single) choice of $\gamma$.
 %
In this section, we 
prove that no algorithm can perform optimally for these two objectives simultaneously. 
Given a certain failure probability of an algorithm,
our goal is to establish a non-trivial lower bound on its pseudo-regret. 
 
 We first consider bandit instances in which items have bounded rewards.
Let $\calB_1( \DeltaMin, \RMax  )$ denote the set of stochastic instances where
(i) the minimal optimality gap $\Delta\ge \DeltaMin$;
and (ii) there exists $R_0\in \bbR$ such that the rewards are bounded in $[R_0,R_0+\RMax]$.
Let $\calB_2(\DeltaMin, \RMax ,\HtwoMax) $ denote the set of instances 
that
(i) belong to $\calB_1(\DeltaMin, \RMax)$,
and
(ii)
have hardness quantities $H_2\le \HtwoMax$. 

%
 
\begin{restatable}{theorem}{thmStoRmBaiBern}
\label{thm:sto_rm_bai_bern} 
Let $\phi_{T}, \DeltaMin, \RMax, \HtwoMax >0$.
%
Let $\pi$ be any algorithm  with $e_T(\pi,\calI) \le   \exp( -  \phi_{T} )/4$ for all $\calI \in \calB_1( \DeltaMin, \RMax  )$.
%
Then
{\setlength{\abovedisplayskip}{0pt}
\setlength{\belowdisplayskip}{\abovedisplayskip}
\begin{align*}
	\sup_{ \calI \in \calB_1( \DeltaMin, \RMax  ) }
	R_T (\pi, \calI )
	\ge 
	 \phi_{T}   \cdot  \frac{  (L-1) \RMax   }{ 8 \DeltaMin  },
	\qquad 
    \sup_{ \calI \in \calB_2( \DeltaMin, \RMax, \HtwoMax ) }
	R_T (\pi,\calI)
	\ge  
     \phi_{T}   \cdot  \frac{ \DeltaMin \HtwoMax \RMax^3}{8}.
\end{align*}%
}\vspace{-0.4em}
%
\end{restatable}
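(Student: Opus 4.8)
The plan is to prove both inequalities by a change-of-measure argument that turns the assumed failure-probability guarantee into lower bounds on the expected number of pulls of the suboptimal arms, and thence into pseudo-regret via the decomposition $R_T(\pi,\calI)=\sum_{i\ne 1}\Delta_{1,i}\,\bbE_{\calI}[N_{i,T}]$. I would fix a convenient base instance $\calI$ and, for each suboptimal arm $j\ne 1$, an alternative $\calI^{(j)}$ obtained by raising \emph{only} the mean of arm $j$ until it becomes the unique best arm, arranged so that $\calI^{(j)}$ still lies in the relevant class and hence still satisfies $e_T(\pi,\calI^{(j)})\le \exp(-\phi_{T})/4$. A clean choice is to let every arm be a two-point distribution on $\{R_0,R_0+\RMax\}$, with arm $1$ at mean $R_0+\RMax/2+\DeltaMin/2$ and all others at $R_0+\RMax/2-\DeltaMin/2$ (so every gap equals $\DeltaMin$); then $\calI^{(j)}$ moves arm $j$ up to $R_0+\RMax/2+3\DeltaMin/2$, which stays in $[R_0,R_0+\RMax]$ and has minimal gap exactly $\DeltaMin$.

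The engine is the Bretagnolle--Huber inequality combined with the chain rule for the log-likelihood of a bandit trajectory. Applying them to the event $\{\iout^{\pi,T}=1\}\in\calF_T$, which has probability at least $1-\exp(-\phi_{T})/4$ under $\calI$ and at most $\exp(-\phi_{T})/4$ under $\calI^{(j)}$ (arm $1$ being suboptimal there), and using that $\calI$ and $\calI^{(j)}$ differ only in arm $j$, gives
\begin{align*}
\bbE_{\calI}[N_{j,T}]\cdot \rmKL(\nu_j\,\|\,\nu_j^{(j)})\ \ge\ \phi_{T}-O(1).
\end{align*}
Because both reward laws are supported in a window of width $\RMax$, their variances are at most $\RMax^2/4$, so for the prescribed mean shift the binary KL satisfies $\rmKL(\nu_j\|\nu_j^{(j)})=O(\DeltaMin^2/\RMax^2)$, whence $\bbE_{\calI}[N_{j,T}]$ is at least a constant multiple of $\phi_{T}$ divided by this KL.

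To obtain the first inequality I would substitute this bound into the regret decomposition and sum over all $L-1$ suboptimal arms on the single base instance, then pass to the supremum over $\calB_1(\DeltaMin,\RMax)$. For the second inequality I would tune the number of suboptimal arms so that the base instance sits in $\calB_2$: with all gaps equal to $\DeltaMin$ one has $H_2=(L-1)/\DeltaMin^2$, so choosing $L-1\asymp \HtwoMax\DeltaMin^2$ enforces $H_2\le\HtwoMax$ and makes the summed regret scale with $\DeltaMin\HtwoMax$; passing to the supremum over $\calB_2(\DeltaMin,\RMax,\HtwoMax)$ then yields the second bound. In both cases the constant $1/8$ and the precise power of $\RMax$ are read off from the explicit two-point KL computation rather than from a crude Pinsker bound.

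The main obstacle is the simultaneous control of two competing demands in the design of $\calI^{(j)}$. The alternative must remain inside the prescribed class so that the failure-probability hypothesis transfers to it --- this is delicate for $\calB_2$, where raising arm $j$ perturbs $H_2$ and one has to check that the perturbed hardness still obeys $H_2\le\HtwoMax$ (which I would secure by keeping the shift a fixed constant multiple of $\DeltaMin$). At the same time the shift must be large enough to swap the identity of the optimal arm yet small enough that $\rmKL(\nu_j\|\nu_j^{(j)})$ stays of order $\DeltaMin^2/\RMax^2$; reconciling these and extracting the sharp constant from the Bernoulli KL is the delicate part, after which the summation and the passage to the supremum are routine.
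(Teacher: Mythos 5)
Your engine is the same as the paper's: the Bretagnolle--Huber-type inequality (the paper's Lemma~\ref{lemma:kl_to_event}) combined with the divergence decomposition (Lemma~\ref{lemma:kl_decomp}), applied to bounded-reward instances in which exactly one suboptimal arm is raised above arm $1$, with both the base and the perturbed instance kept inside the class so the failure hypothesis applies to both. The paper routes this through a pigeonhole step --- assume $R_T\le\overline{\mathrm{Reg}}$, locate a single under-pulled arm $\ell_1$ with $\bbE[N_{\ell_1,T}]\le\overline{\mathrm{Reg}}/(b^3 d_{\ell_1}^3 H_2(1))$, and conclude in the contrapositive --- and bounds the Bernoulli KL via reverse Pinsker; you instead extract $\bbE[N_{j,T}]\cdot\rmKL\ge\phi_T$ for \emph{every} suboptimal arm and sum in the regret decomposition. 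That directional difference is legitimate (your per-arm route is in fact cleaner, and with the $1/4$ constants the BH step loses no $O(1)$ slack at all).

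The genuine gap is the calibration of the gaps, and it is fatal to the stated bounds. You pin the actual reward gaps at $\Theta(\DeltaMin)$ inside a window of width $\RMax$, so your two-point KL is $\Theta(\DeltaMin^2/\RMax^2)$ and your method yields $\sup R_T\gtrsim\phi_T(L-1)\RMax^2/\DeltaMin$ for $\calB_1$ and, with your choice $L-1\asymp\HtwoMax\DeltaMin^2$, only $\gtrsim\phi_T\,\DeltaMin\HtwoMax\RMax^2$ for $\calB_2$ --- but the theorem demands $\RMax^3$ there, and no constant-chasing in the binary KL recovers a full factor of $\RMax$ (likewise your $\calB_1$ bound dominates the required $\phi_T(L-1)\RMax/(8\DeltaMin)$ only when $\RMax$ is bounded below). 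The paper's construction puts the gap at the Bernoulli-\emph{parameter} level: rewards are $b\cdot\mathrm{Bern}(1/2\pm d_\ell)$ with scale $b=\RMax$ and $d=\DeltaMin$, so the true reward gap is $\RMax\DeltaMin$ while $\rmKL(\mathrm{Bern}(1/2-d)\,\|\,\mathrm{Bern}(1/2+d))=\Theta(\DeltaMin^2)$ is scale-invariant. Each suboptimal pull then costs $\RMax\DeltaMin$, each arm must be pulled $\gtrsim\phi_T/\DeltaMin^2$ times, and --- decisively for $\calB_2$ --- the hardness budget $H_2=(L-1)/(\RMax\DeltaMin)^2\le\HtwoMax$ now admits $L-1=\HtwoMax\RMax^2\DeltaMin^2$ arms, which is exactly where the extra power of $\RMax$ (hence $\RMax^3$) comes from; your arm count $L-1\asymp\HtwoMax\DeltaMin^2$ forfeits it. So keep your per-arm BH argument, but decouple the mean shift from $\DeltaMin$: shift arm $j$ by $2\RMax\DeltaMin$ within the $\RMax$-window (noting, as the paper implicitly does via its $d_\ell\le 1/4$ and scaling conventions, that class membership of these instances requires the corresponding regime of $\DeltaMin$ and $\RMax$), and re-tune the number of arms to saturate the $H_2$ constraint at the enlarged gap.
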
 

In Theorem~\ref{thm:sto_rm_bai_bern}, we apply the bounds $R_0$ and $R_0+\RMax$ on items' rewards to classify instances.
In general, $\Delta H_2\le (L-1)/\Delta$ holds for any instance, and equality holds when
 $\Delta_{1,i}=\Delta $ for all $i\ne 1$.
 Therefore, $\calB_1(\DeltaMin, \RMax ) = \calB_2(\DeltaMin, \RMax, (L-1)/(\DeltaMin^2))  $.
When $\RMax> 1$,  
the analysis for the set $\calB_2(\DeltaMin, \RMax, (L-1)/(\DeltaMin^2))$ provides a better bound (higher lower bound) for the set $\calB_1(\DeltaMin, \RMax  )$.

Our {\bf non-asymptotic} bounds complete the asymptotic observation on the trade-off between regret and BAI in  \citet{lattimore2020bandit} and  {\bf quantify} how the trade-off 
depends on the hardness quantity $H_2$ and gap $\Delta_{1,i}$'s of an instance,
instead of fixed constants such as $\varepsilon$. 
Another relevant work is \citet{bubeck2009pure}.
%
On one hand, \citet{bubeck2009pure} explores the trade-off between the cumulative regret $R_T$ and the simple regret $r_T$ and shows that any algorithm with $R_T\le C \psi(T)$ satisfies $r_T\ge { \DeltaMin } \exp(-D\psi(T))/{2}$ in some instance.
On the other hand, our work studies the Pareto frontier of $R_T$ and the BAI failure probability $e_T$. 
Since $e_T$ and $r_T$ satisfy that $\DeltaMin \cdot e_T \le r_T \le e_T$,
our Theorem~\ref{thm:sto_rm_bai_bern}  indicates that  
\begin{restatable}{corollary}{coroStoRmBaiBern}
\label{coro:sto_rm_bai_bern_to_simple_reg} 
Let $\phi_{T}, \DeltaMin >0$.
%
Let $\pi$ be any algorithm satisfying  
\begin{align*}
	\sup_{ \calI \in \calB_1( \DeltaMin, 1  ) }
	R_T (\pi, \calI )
	\le 
	 \phi_T \cdot \frac{ L-1}{ 8\DeltaMin},
\end{align*}%
then $e_T \ge \exp(-\phi_T)/4$ and $r_T  \ge  \DeltaMin \exp(-\phi_T)/4$.
\end{restatable}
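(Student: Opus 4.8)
The plan is to read Corollary~\ref{coro:sto_rm_bai_bern_to_simple_reg} as the contrapositive of the first inequality in Theorem~\ref{thm:sto_rm_bai_bern}, specialised to $\RMax=1$, and then to convert the resulting failure-probability lower bound into a simple-regret bound via the elementary relation $\DeltaMin\cdot e_T\le r_T$ quoted just above the corollary. Throughout I interpret the conclusion in the worst-case sense over the class, i.e.\ $e_T=\sup_{\calI\in\calB_1(\DeltaMin,1)}e_T(\pi,\calI)$, so that the claim to be proved is $\sup_{\calI}e_T(\pi,\calI)\ge\exp(-\phi_T)/4$ together with the existence of an instance on which $r_T$ is correspondingly large. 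First I would argue by contradiction: suppose $\sup_{\calI\in\calB_1(\DeltaMin,1)}e_T(\pi,\calI)=:e^\star<\exp(-\phi_T)/4$. This negated statement yields a \emph{uniform} bound $e_T(\pi,\calI)\le e^\star$ over all $\calI$, which is exactly the kind of hypothesis Theorem~\ref{thm:sto_rm_bai_bern} consumes; the only subtlety is that the theorem is fed a threshold of the form $\exp(-\phi)/4$, so I would pick the matching exponent $\phi':=-\log(4e^\star)$ (and let $\phi'\to\infty$ in the degenerate case $e^\star=0$). By construction $e_T(\pi,\calI)\le\exp(-\phi')/4$ for every $\calI\in\calB_1(\DeltaMin,1)$, and crucially $\phi'>\phi_T$ since $e^\star<\exp(-\phi_T)/4$.

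Next I would invoke the first bound of Theorem~\ref{thm:sto_rm_bai_bern} with parameters $\phi'$, $\DeltaMin$, and $\RMax=1$ to obtain
\begin{align*}
\sup_{\calI\in\calB_1(\DeltaMin,1)}R_T(\pi,\calI)\ \ge\ \phi'\cdot\frac{L-1}{8\DeltaMin}\ >\ \phi_T\cdot\frac{L-1}{8\DeltaMin},
\end{align*}
where the strict inequality uses $\phi'>\phi_T$. This contradicts the standing hypothesis $\sup_{\calI}R_T(\pi,\calI)\le\phi_T(L-1)/(8\DeltaMin)$, so the supposition fails and $\sup_{\calI}e_T(\pi,\calI)\ge\exp(-\phi_T)/4$, the value being attained (up to arbitrarily small slack) on some instance $\calI\in\calB_1(\DeltaMin,1)$.

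Finally, for that same instance the definition of simple regret gives $r_T(\pi,\calI)=\sum_{i\ne 1}\Delta_{1,i}\Pr(\iout=i)\ge\Delta\cdot e_T(\pi,\calI)\ge\DeltaMin\,e_T(\pi,\calI)$, since every suboptimal gap on an instance of $\calB_1(\DeltaMin,1)$ is at least $\Delta\ge\DeltaMin$; combining with the previous display yields $r_T\ge\DeltaMin\exp(-\phi_T)/4$. The point requiring the most care—and what I expect to be the main obstacle—is the strict-versus-nonstrict boundary: Theorem~\ref{thm:sto_rm_bai_bern} delivers the lower bound on regret only under a threshold hypothesis of the exact shape $\exp(-\phi)/4$, so one cannot apply it at $\phi_T$ directly against the nonstrict regret hypothesis. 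Shifting to $\phi'>\phi_T$ (equivalently, a limiting argument $\phi'\downarrow\phi_T$, and $\phi'\to\infty$ when $e^\star=0$ or the supremum is unattained) is what resolves this cleanly, while the rest of the argument is bookkeeping.
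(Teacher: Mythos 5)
Your proof is correct and takes essentially the same route as the paper: the corollary is obtained there directly as the contrapositive of the first bound in Theorem~\ref{thm:sto_rm_bai_bern} specialised to $\RMax=1$, combined with the per-instance relation $\DeltaMin\cdot e_T\le r_T$. Your $\phi'=-\log(4e^\star)$ device (and the limiting cases $e^\star=0$ or an unattained supremum) simply makes explicit the strict-versus-nonstrict boundary bookkeeping that the paper leaves implicit.
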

In view of Corollary \ref{coro:sto_rm_bai_bern_to_simple_reg},   we have precisely {\em quantified} 
that
$C =  (L - 1)/(8\DeltaMin)$ and $D = 1$ in the work of \citet{bubeck2009pure}.


Furthermore, we establish a similar analysis  for instances in which the variance of each item's reward distribution is bounded.
Let $\calB_1'(\DeltaMin, \varMax )$ denote the set of   instances where 
(i) the minimal optimality gap $\Delta\ge \DeltaMin $;
(ii) for each item $i$, the variance $\sigma_i^2\le \varMax$. 
Let $\calB_2'(\DeltaMin, \varMax,\HtwoMax)$ denote the set of  instances (i) that belong to  $\calB_1'(\DeltaMin, \varMax )$, and (ii) 
have hardness quantities $H_2\le \HtwoMax$.  
%
%
The key difference between the proofs of these two theorems lies in  the design of hard instances. We elaborate on the details in 
Appendix~\ref{append:sto_analyze_trade_off}.

\begin{restatable}{theorem}{thmStoRmBaiGauss}
\label{thm:sto_rm_bai_gauss} 
Let $\phi_{T}, \DeltaMin, \varMax,\HtwoMax>0$.
%
Let $\pi$ be any algorithm  with $e_T(\pi,\calI) \le   \exp( -  \phi_{T} )/4$ for all $\calI \in \calB_1'( \DeltaMin ,\varMax )$.
%
Then
{\setlength{\abovedisplayskip}{1pt}
\setlength{\belowdisplayskip}{2pt}
\begin{align*}
	\sup_{ \calI \in \calB_1'( \DeltaMin, \varMax ) }
	R_T (\pi, \calI )
	\ge 
	 \phi_{T}   \cdot  \frac{ (L-1)  \varMax }{2 \DeltaMin } ,
	\qquad 
    \sup_{ \calI \in \calB_2'( \DeltaMin, \varMax,\HtwoMax ) }
	R_T (\pi, \calI)
	\ge  
     \phi_{T}   \cdot   \frac{ \DeltaMin \HtwoMax \varMax   }{2  }.
\end{align*}%
}\vspace{-1.1em}
%
\end{restatable}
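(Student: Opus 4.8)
The plan is to mirror the proof of Theorem~\ref{thm:sto_rm_bai_bern}, the only genuine change being that the bounded-reward hard instances are replaced by Gaussian ones so that the Kullback--Leibler (KL) divergences are computed exactly rather than merely bounded. Throughout, write $N_{j,T}$ for the number of pulls of arm $j$ up to time $T$ and recall that, on any instance whose unique optimal arm is $1$, the pseudo-regret decomposes as $R_T(\pi,\calI)=\sum_{j\ne 1}\Delta_{1,j}\,\bbE_{\calI}[N_{j,T}]$. It therefore suffices to exhibit a single hard instance $\calI^\star$ in the relevant class and to lower bound each $\bbE_{\calI^\star}[N_{j,T}]$ by a change-of-measure argument.

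Concretely, I would fix a base instance $\calI^\star$ of Gaussian arms with common variance $\varMax$ and optimal arm $1$, and for each $j\ne 1$ define an alternative $\calI_j$ by raising the mean of arm $j$ by $2\Delta_{1,j}$, so that the new mean is $w_j+2\Delta_{1,j}$ and $j$ becomes the unique best arm. A short check shows that all gaps of $\calI_j$ are at least $\DeltaMin$ (raising arm $j$ only enlarges the other gaps), so both $\calI^\star$ and $\calI_j$ lie in $\calB_1'(\DeltaMin,\varMax)$ and the hypothesis $e_T\le\exp(-\phi_T)/4$ applies to each. Considering the event $\{\iout=1\}$, which has probability at least $1-\exp(-\phi_T)/4$ under $\calI^\star$ yet at most $\exp(-\phi_T)/4$ under $\calI_j$ (outputting $1\ne j$ is a failure there), the Bretagnolle--Huber inequality gives $\rmKL(\bbP_{\calI^\star}\|\bbP_{\calI_j})\ge\phi_T$. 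The divergence decomposition for bandits reduces the left side to $\bbE_{\calI^\star}[N_{j,T}]\cdot\rmKL(\calN(w_j,\varMax)\|\calN(w_j+2\Delta_{1,j},\varMax))=\bbE_{\calI^\star}[N_{j,T}]\cdot 2\Delta_{1,j}^2/\varMax$, whence $\bbE_{\calI^\star}[N_{j,T}]\ge\phi_T\varMax/(2\Delta_{1,j}^2)$.

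For the first bound I take all suboptimal gaps equal to $\DeltaMin$, so $R_T\ge\DeltaMin\sum_{j\ne 1}\bbE_{\calI^\star}[N_{j,T}]\ge(L-1)\phi_T\varMax/(2\DeltaMin)$. For the second bound I instead choose gaps in $[\DeltaMin,1]$ realizing $H_2(\calI^\star)=\HtwoMax$ and sum the per-arm estimate to get $R_T\ge\tfrac{\phi_T\varMax}{2}\sum_{j\ne 1}\Delta_{1,j}^{-1}=\tfrac{\phi_T\varMax}{2}H_1$; the elementary inequality $H_1\ge\DeltaMin H_2$, valid whenever every gap is at least $\DeltaMin$, then yields $R_T\ge\phi_T\DeltaMin\HtwoMax\varMax/2$. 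I expect the main obstacle to be purely in the instance design for the second bound: I must exhibit a feasible gap profile attaining $H_2=\HtwoMax$ while pinning the minimal gap at $\DeltaMin$, and verify that every boosted alternative $\calI_j$ still satisfies $H_2(\calI_j)\le\HtwoMax$ (which holds because shifting arm $j$ upward strictly shrinks the other $1/\Delta^2$ terms) so that the uniform failure-probability hypothesis remains applicable. Once these membership checks are in place, the Bretagnolle--Huber and divergence-decomposition steps are routine and identical in form to those of Theorem~\ref{thm:sto_rm_bai_bern}.
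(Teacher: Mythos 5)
Your proposal is correct, and it rests on exactly the same machinery as the paper's proof: the same Gaussian hard instances (the alternative $\calI_j$ raises arm $j$'s mean by $2\Delta_{1,j}$, so the per-pull KL is $(2\Delta_{1,j})^2/(2\varMax)=2\Delta_{1,j}^2/\varMax$), the same change-of-measure inequality (the paper's Lemma combining $P(A)+Q(\barA)\ge\tfrac12\exp(-\rmKL(P\|Q))$ with the divergence decomposition), and the same exact Gaussian KL computation. The one genuine difference is the final accounting. The paper argues contrapositively: it supposes $R_T\le\overline{\reg}$, writes $\overline{\reg}\ge\sum_{\ell\ne1}d_\ell\,\bbE[N_{\ell,T}]$, and pigeonholes against $H_2=\sum_{\ell\ne 1}d_\ell^{-2}$ to extract a \emph{single} under-pulled arm $\ell_1$ with $\bbE[N_{\ell_1,T}]\le\overline{\reg}/(d_{\ell_1}^3H_2)$, then applies the change of measure once, to the pair (base instance, instance $\ell_1$), to force the worst-case failure probability above $\tfrac14\exp(-2\overline{\reg}/(H_2\sigma^2\min_\ell d_\ell))$. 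You instead invoke the uniform failure hypothesis on all $L-1$ pairs to get the per-arm bounds $\bbE[N_{j,T}]\ge\phi_T\varMax/(2\Delta_{1,j}^2)$ and sum, obtaining the cleaner intermediate statement $R_T\ge\tfrac{\phi_T\varMax}{2}H_1$ on the base instance, from which both claimed bounds follow via $H_1=(L-1)/\DeltaMin$ (equal gaps) and $H_1\ge\DeltaMin H_2$. The two routes yield identical constants; yours buys a slightly stronger intermediate result and avoids the pigeonhole, at the cost of applying the hypothesis to $L$ instances rather than two — immaterial here since the hypothesis is assumed uniformly over $\calB_1'(\DeltaMin,\varMax)$ in both proofs. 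Two minor remarks: your check that $H_2(\calI_j)\le\HtwoMax$ is superfluous, since the failure hypothesis is over $\calB_1'(\DeltaMin,\varMax)$ and the boosted alternatives need only minimal gap $\ge\DeltaMin$ and variance $\le\varMax$, which you verify; and the feasibility constraint you flag for the second bound ($\HtwoMax\le(L-1)/\DeltaMin^2$) is equally implicit in the paper's own choice $d_\ell\equiv d$ with $H_2=(L-1)/d^2$ and $d\ge\DeltaMin$, so your argument is no more restrictive than the original on this point.
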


By characterizing stochastic rewards with different statistics, 
Theorems~\ref{thm:sto_rm_bai_bern} and \ref{thm:sto_rm_bai_gauss} provide different lower bounds on the pseudo-regret.
We observe that when the rewards of items are bounded in $[R_0,R_0+\RMax]$ for some $R_0\in \bbR$, the variances of the rewards are bounded by $ \RMax^2/4$.
Therefore,
{\setlength{\abovedisplayskip}{1pt}
\setlength{\belowdisplayskip}{\abovedisplayskip}
\begin{align*} 
    \calB_1( \DeltaMin, \RMax ) 
    \subset \calB_1'\bigg(\DeltaMin, \frac{\RMax^2}{4} \bigg),
    \qquad 
    \calB_2(\DeltaMin, \RMax, \HtwoMax) 
    \subset \calB_2' \bigg(\DeltaMin,  \frac{\RMax^2}{4}, \HtwoMax \bigg).
\end{align*}    
Besides, it is clear that  
\begin{align*} 
    \calB_1(\DeltaMin,\RMax), \ 
    \calB_2(\DeltaMin,\RMax,h),\ 
     \calB_2' \bigg(\DeltaMin, \frac{\RMax^2}{4}, h \bigg) 
    \subset \calB_1'\bigg(\DeltaMin, \frac{\RMax^2}{4} \bigg).
\end{align*}
Due to the relationship among these four sets of instances, 
we let $\pi$ be an algorithm with $e_T (\pi,\calI) \le \exp(-\phi_{T} )/4 $ in {\em any} instance of $\calB_1' ( \DeltaMin,  {\RMax^2}/{4}  )$,
and compare the derived lower bounds on its pseudo-regret $ R_T (\pi,\calI)$ in Table~\ref{tab:comp_lower_bd}.%
}
Table~\ref{tab:comp_lower_bd} indicates that 
\begin{itemize}[ itemsep = -4pt,   topsep = -1pt, leftmargin =  15pt ]
    \item when the bound for $\calB_1(\DeltaMin,   \RMax   )$ (
     second column  of Table~\ref{tab:comp_lower_bd}) holds for $\calB_1'( \DeltaMin,   \RMax^2/4  )$, 
    the quantities $L$ and $\DeltaMin$ are of the same order in the bounds derived for $\calB_1(\DeltaMin,   \RMax  )$ and $\calB_1'( \DeltaMin,   \RMax^2/4 )$ respectively;
    \item similarly, when the bound for $\calB_2(\DeltaMin,   \RMax , \HtwoMax)$ (
 third column) holds for $\calB_2'(\DeltaMin,  {\RMax^2}/{4}, \HtwoMax)$, 
the quantities $L$ and $\DeltaMin$ are of the same order in the bounds for $\calB_2(\DeltaMin,   \RMax , \HtwoMax)$ and $\calB_2'(\DeltaMin,  {\RMax^2}/{4}, \HtwoMax)$.
%
\end{itemize}
\begin{table}[ht]
    \caption{Lower bounds on 
    $R_T$ when 
    $e_T \le \mathrm{e}^{-\phi_{T} }/4 $. 
    }
    \label{tab:comp_lower_bd}
    \resizebox{\textwidth}{!}{
    \renewcommand{\arraystretch}{1.5}
\centering
    \begin{tabular}{ l " l | l | l  | l }
        Instance Set &  
        $\calB_1(\DeltaMin,  \RMax )$ &
        $\calB_2(\DeltaMin,   \RMax , \HtwoMax)$ &
        $  \calB_1' ( \DeltaMin,  {\RMax^2}/{4} )$ &
        $   \calB_2' (\DeltaMin,  {\RMax^2}/{4}, \HtwoMax  ) $
        \\
        \hline
        Bound on $R_T$  
        &
    $     \phi_{T}   \cdot {  (L\!-\!1)\RMax    }{/( 8\DeltaMin)  }  $ &
    $    \phi_{T}   \cdot  { \DeltaMin \HtwoMax \RMax^{3\vphantom{\hat{3}}}   }/{ 8  }  $ &
    $     \phi_{T}   \cdot  {  (L\!-\!1)\RMax^2    }/{ (8\DeltaMin )  }  $ &
    $     \phi_{T}   \cdot  { \DeltaMin \HtwoMax \RMax^2   }/{ 8  }  $
    \end{tabular}%
    }
    \renewcommand{\arraystretch}{1}
\end{table}
Moreover,
when $\RMax>1$, we can apply the analysis of $\calB_2(\DeltaMin,  \RMax , \HtwoMax)$ to obtain a better bound (higher lower bound) for $\calB_2'(\DeltaMin,  {\RMax^2}/{4}, \HtwoMax)$.

In {\em any} set of instances studied in Theorems~\ref{thm:sto_rm_bai_bern} or \ref{thm:sto_rm_bai_gauss}, 
\begin{itemize}[ itemsep = -4pt,   topsep = -1.2pt, leftmargin =  15pt ]
    \item when $\phi_{T}$ linearly grows with $T$, which is typical in the bounds on $e_T$~\citep{karnin2013almost,carpentier16tight}, the corresponding bound on $R_T$
    grows linearly with $T$ (vacuous);
    \item  when the bound on $R_T$ grows with $\log T$ as in~\citet{Garivier2011the} and \citet{lai1985asymptotically}, $\phi_{T}$ grows logarithmically with $T$ (i.e.,  the failure probability only decays polynomially).
\end{itemize}
Thus, we cannot achieve   optimal performances for both RM and BAI using any algorithm with fixed  parameters. 
Alternatively, we can apply {\sc BoBW-lil'UCB$(\gamma)$} to achieve the best of both objectives with proper choices of the single parameter $\gamma$.  


\textbf{Tightness of the upper and lower bounds.}
We compare the upper and lower bounds on the 
pseudo-regret of 
{\sc BoBW-lil'UCB($\gamma$)}
when the horizon $T\to\infty$.

\begin{restatable}{corollary}{coroRmLowerBdBernLilUcbPara}
\label{coro:rm_lower_bd_bern_lilUcb_para}  
Define the interval $\mathcal{I} (\nu, T) = [ \gamma_1(\DeltaMin,\HtwoMax), \min \{ \log(   \beta +    1+\varepsilon)/e , (\log T)/T,1/L\}  ] $, which is a function of the instance $\nu$ and the fixed horizon $T$.
When $\mathcal{I} (\nu, T) \neq \emptyset$,%
 let $\pi_0$ denote the online algorithm {\sc BoBW-lil'UCB$ ( \gamma )$} with $\gamma$ satisfying the condition that  $\gamma\in \mathcal{I} (\nu, T)$.
Then
{\setlength{\abovedisplayskip}{0.8pt}
\setlength{\belowdisplayskip}{2pt}
\begin{align*}
    &
    \sup_{ \calI \in \calB_2( \DeltaMin, 1, \HtwoMax ) } R_T( \pi_0 , \calI) 
    \in
    \Omega 
    \bigg(
      \DeltaMin \HtwoMax   \log \bigg( \frac{1}{\gamma L} \bigg) 
    \bigg) 
    \bigcap  
    O \bigg( \frac{   L   }{ \DeltaMin }  \log \bigg( \frac{1}{\gamma  } \bigg)  \bigg).
\end{align*}%
}\vspace{-1.3em} 
\end{restatable}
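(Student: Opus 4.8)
The plan is to sandwich $\sup_{\calI} R_T(\pi_0,\calI)$ by bounding it from above with the problem-dependent regret guarantee of Theorem~\ref{thm:rm_bd_lilUcb_para}, and from below by feeding the failure-probability guarantee of Theorem~\ref{thm:bai_bd_lilUcb_para} into the regret--BAI trade-off of Theorem~\ref{thm:sto_rm_bai_bern}. The role of the interval $\mathcal{I}(\nu,T)$ is to make all three invocations simultaneously valid: the constraint $\gamma\le(\log T)/T$ kills the residual regret term, the constraint $\gamma\le 1/L$ keeps the extracted $\phi_T$ nonnegative, and the lower endpoint $\gamma\ge\gamma_1(\DeltaMin,\HtwoMax)$ certifies the BAI bound.

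For the upper half I would invoke the first (problem-dependent) inequality of Theorem~\ref{thm:rm_bd_lilUcb_para}. On any $\calI\in\calB_2(\DeltaMin,1,\HtwoMax)$ the rewards lie in an interval of length $1$, so each is $\tfrac12$-sub-Gaussian and $\sigma^2=O(1)$; moreover $\Delta_{1,i}\ge\DeltaMin$ gives $\sum_{i\ne1}1/\Delta_{1,i}\le(L-1)/\DeltaMin$, so the leading term is $O\big(\tfrac{L}{\DeltaMin}\log(1/\gamma)\big)$. The residual $2TL\gamma^{1+\varepsilon}$ is controlled by $\gamma\le(\log T)/T$: indeed $2TL\gamma^{1+\varepsilon}\le 2L(\log T)^{1+\varepsilon}/T^{\varepsilon}\to0$, hence it is dominated by the leading term as $T\to\infty$. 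Since the bound is uniform in the instance, taking the supremum over $\calB_2(\DeltaMin,1,\HtwoMax)$ yields the $O\big(\tfrac{L}{\DeltaMin}\log(1/\gamma)\big)$ half.

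For the lower half I would first certify the hypothesis of Theorem~\ref{thm:sto_rm_bai_bern} uniformly over $\calB_2(\DeltaMin,1,\HtwoMax)$. The key monotonicity observation is that $\gamma_1(\Delta,H_2)$ is increasing in $1/\Delta$ and in $H_2$: when $\Delta\ge\DeltaMin$ and $H_2\le\HtwoMax$, both the prefactor $\sqrt{2.8\log(6\sqrt{2.8}\sigma(1+\varepsilon)^2/\Delta+\beta)}$ and the exponential $\exp(-(T-L)/(144\sigma^2(1+\varepsilon)^3(H_2+\Delta^{-2})))$ shrink, so $\gamma_1(\Delta,H_2)\le\gamma_1(\DeltaMin,\HtwoMax)\le\gamma$. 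Thus the condition $\gamma\ge\gamma_1(\Delta,H_2)$ of Theorem~\ref{thm:bai_bd_lilUcb_para} holds on every such instance, giving the uniform bound $e_T\le\frac{2L(2+\varepsilon)}{\varepsilon}\big(\gamma/\log(1+\varepsilon)\big)^{1+\varepsilon}$. I would then rewrite this as $e_T\le\exp(-\phi_T)/4$ by choosing $\phi_T=(1+\varepsilon)\log(1/\gamma)-\log\big(8L(2+\varepsilon)/(\varepsilon(\log(1+\varepsilon))^{1+\varepsilon})\big)$; since the subtracted term is $O(\log L)$ and $\gamma\le1/L$ forces $\log(1/(\gamma L))\ge0$, this gives $\phi_T=\Omega(\log(1/(\gamma L)))\ge0$. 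Feeding this $\phi_T$ into the second conclusion of Theorem~\ref{thm:sto_rm_bai_bern} with $\RMax=1$ yields $\sup_{\calB_2}R_T\ge\phi_T\DeltaMin\HtwoMax/8=\Omega(\DeltaMin\HtwoMax\log(1/(\gamma L)))$.

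The main obstacle is the uniformity needed to invoke Theorem~\ref{thm:sto_rm_bai_bern}: its stated hypothesis asks for the failure bound on all of $\calB_1(\DeltaMin,1)$, whereas the $\gamma_1$-monotonicity argument only delivers it on the subfamily $\calB_2(\DeltaMin,1,\HtwoMax)$, since instances with larger $H_2$ would require a larger $\gamma$. I would resolve this by verifying that the hard instances used in the proof of the $\calB_2$ conclusion all lie inside $\calB_2(\DeltaMin,1,\HtwoMax)$, so that uniformity over $\calB_2$ alone suffices. A secondary bookkeeping point is to track the $\log(1/\gamma)$ versus $\log(1/(\gamma L))$ discrepancy between the two sides, which is precisely the non-vanishing gap quantifying the looseness of the sandwich as $T\to\infty$.
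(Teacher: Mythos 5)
Your proposal is correct and follows essentially the same route as the paper's proof: an upper bound from the problem-dependent guarantee of Theorem~\ref{thm:rm_bd_lilUcb_para} (with $\sigma=1/2$ and $H_1\le (L-1)/\DeltaMin$), and a lower bound obtained by using the monotonicity of $\gamma_1(\Delta,H_2)$ in $\Delta^{-1}$ and $H_2$ to certify the failure-probability bound of Theorem~\ref{thm:bai_bd_lilUcb_para} uniformly over $\calB_2(\DeltaMin,1,\HtwoMax)$, converting it to the $\exp(-\phi_T)/4$ form, and invoking the $\calB_2$ conclusion of Theorem~\ref{thm:sto_rm_bai_bern}. Your two extra bookkeeping steps --- explicitly dominating the residual $2TL\gamma^{1+\varepsilon}$ term via $\gamma\le(\log T)/T$, and noting that Theorem~\ref{thm:sto_rm_bai_bern}'s hypothesis need only hold on the hard instances, which lie in $\calB_2$ --- are points the paper's own proof glosses over, and your resolutions are valid.
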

 We observe from Corollary~\ref{coro:rm_lower_bd_bern_lilUcb_para},%
    \footnote{$\HtwoMax $ has the same units as $L \cdot \DeltaMin^{-2}$; hence, the terms in $\Omega(\cdot)$ and $O(\cdot)$ also have the same units.}
     which combines Theorems~\ref{thm:rm_bd_lilUcb_para}, \ref{thm:bai_bd_lilUcb_para}, and~\ref{thm:sto_rm_bai_bern},  that
 the gap between the upper and lower bounds depend on the term  $\DeltaMin \HtwoMax$ 
    in the lower bound
    and $L/\DeltaMin$ in the upper bound. 
    As $H_2 \le (L-1)\Delta^{-2}$ for any instance, 
    when $\Delta = \Delta_{1,i}$ for all $i\ne 1$ (all suboptimal items have the same suboptimality gap),   equality holds, and hence the bounds  match up to a small additive $\log(1/L)$ term.   
Corollary~\ref{coro:rm_lower_bd_bern_lilUcb_para} implies that the parameter $\gamma$ in {\sc BOBW-lil-UCB$(\gamma)$} is essential in tuning the 
algorithm such that it can perform optimally for either RM or BAI. 
 This implies that in some regimes, {\sc BOBW-lil-UCB$(\gamma)$} achieves Pareto-optimality up to constant or small   additive (e.g., $\log(1/L)$) terms. 
%

Besides, to show that there are cases for which $\mathcal{I}(\nu, T)\ne\emptyset$, we use several examples here.  In the instance where $L=256$, $w_1=0.5$, $w_i=0.45$ for $i\ne 1$ and rewards are drawn from Bernoulli distributions, if we let $\varepsilon=0.01$, $\beta = e$, $\mathcal{I} (\nu, T)$ is always non-empty when time horizon $T\ge 10^6$ as shown in Table \ref{tab:range_gamma_differ_T}.
\vspace*{-.5em}
\begin{table}[htbp]
  \centering
  \caption{Intervals $\mathcal{I} (\nu, T)$ under different time horizons $T$}
    \begin{tabular}{l | ll}
    Time horizon $T$ & $\mathcal{I} (\nu, T)$ &  \\
    \thickhline
    $10^6 \vphantom{\Big(} $ & $[1.85\times 10^{-7},$ & $ 1.38\times 10^{-5}]$ \\
    \hline
    $10^7 \vphantom{\Big(} $ & $[4.33\times 10^{-73},$ & $1.61\times 10^{-6}]$ \\
    \hline
    $10^8 \vphantom{\Big(} $ & $[0,$ & $1.84 \times 10^{-7}]$ \\
    \hline
   $10^9 \vphantom{\Big(} $ & $[0,$ & $2.07\times 10^{-8}]$ \\
    \end{tabular}%
  \label{tab:range_gamma_differ_T}%
\end{table}

Furthermore, 
Corollary~\ref{coro:rm_lower_bd_bern_lilUcb_para} 
suggests that the  lower bound in Theorem~\ref{thm:sto_rm_bai_bern} is almost tight, as it is
achieved by   {\sc BoBW-lil'UCB$(\gamma)$}.
Hence, up to terms logarithmic in the parameters such as $L$, we have quantified the Pareto frontier for the trade-off between RM and BAI in stochastic bandits.

\section{Numerical Experiments}
\label{sec:experiment}
We numerically compare {\sc BoBW-lil'UCB$(\gamma)$} and {\sc UCB$_\alpha$} as they are the only algorithms that can be tuned to perform (near-)optimally for both RM and BAI.
Since   {\sc BoBW-lil'UCB$(\gamma)$}   is designed for the fixed-budget setting and  {\sc UCB$_\alpha$} is for the fixed-confidence setting, there cannot be a completely fair comparison between them.
However, we attempt to perform  fair comparisons as much as possible.

We evaluate the algorithms with both synthetic and real data.
For {\sc BoBW-lil'UCB$(\gamma)$}, we fix $\varepsilon=0. 01$,  $\beta=e$,  and vary $\gamma$.
For {\sc UCB$_\alpha$}, we 
 vary $\alpha$.
%
%
%
We run {\sc BoBW-lil'UCB$(\gamma)$} for $T$ (fixed in a specific instance) time steps,
when the horizon (stopping time) of {\sc UCB$_\alpha$} depends on its stopping rule and the instance.
Due to the difference between the fixed-horizon and fixed-confidence settings,
 the regrets of each algorithm may be accumulated over different time horizons.
 
For each choice of algorithm and instance, we run $10^4$ independent trials.
%
Since the empirical failure probability of {\sc BoBW-lil'UCB$(\gamma)$} is below $1\%$ in each instance  (see Table  \ref{tab:err_prob_err001}  in Appendix \ref{append:experiment_table_emp_failure}),
we set $\delta=0.01$ for {\sc UCB$_\alpha$}, which guarantees that the failure probability of {\sc UCB$_\alpha$} is also below $1\%$. 
We also present the experiment results with empirical failure probabilities below $2\%$ in Appendix \ref{append:extra_experiment} (where we set $\delta=0.02$ for {\sc UCB$_\alpha$}).
We
focus on the comparison on
(i) the time horizon each algorithm runs; 
 and (ii) the  regret incurred over  its corresponding horizon. 
We present the averages and standard deviations of the time horizons and \text{red}{the} regrets of each algorithm.
More  numerical results 
that reinforce the conclusions  herein are presented in   Appendix~\ref{append:extra_experiment}.

\subsection{Experiments using synthetic data}

\label{sec:experiment_syn}
 We set
  $  w_1 = 0.5 $,
 and
 $ w_i = 0.5- \Delta   $ for all $ i \ne 1$.
 We let $\mathrm{Bern}(a)$ denote the Bernoulli distribution with parameter $a$.
 We consider Bernoulli bandits, i.e., $\nu_i = \mathrm{Bern}(w_i)$.
We display some numerical results in Figure~\ref{pic:bern_L64_wGapMin0_05_0_1_err001} ; more results are postponed to Appendix~\ref{append:extra_experiment_syn_data}. 

\begin{figure}[ht]
    \begin{flushright}
        \includegraphics[width=.3\textwidth]{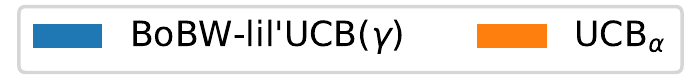}      
        \hspace{.4em}
        \vspace{-.3em}
    \end{flushright}
	\includegraphics[width=.48\textwidth]{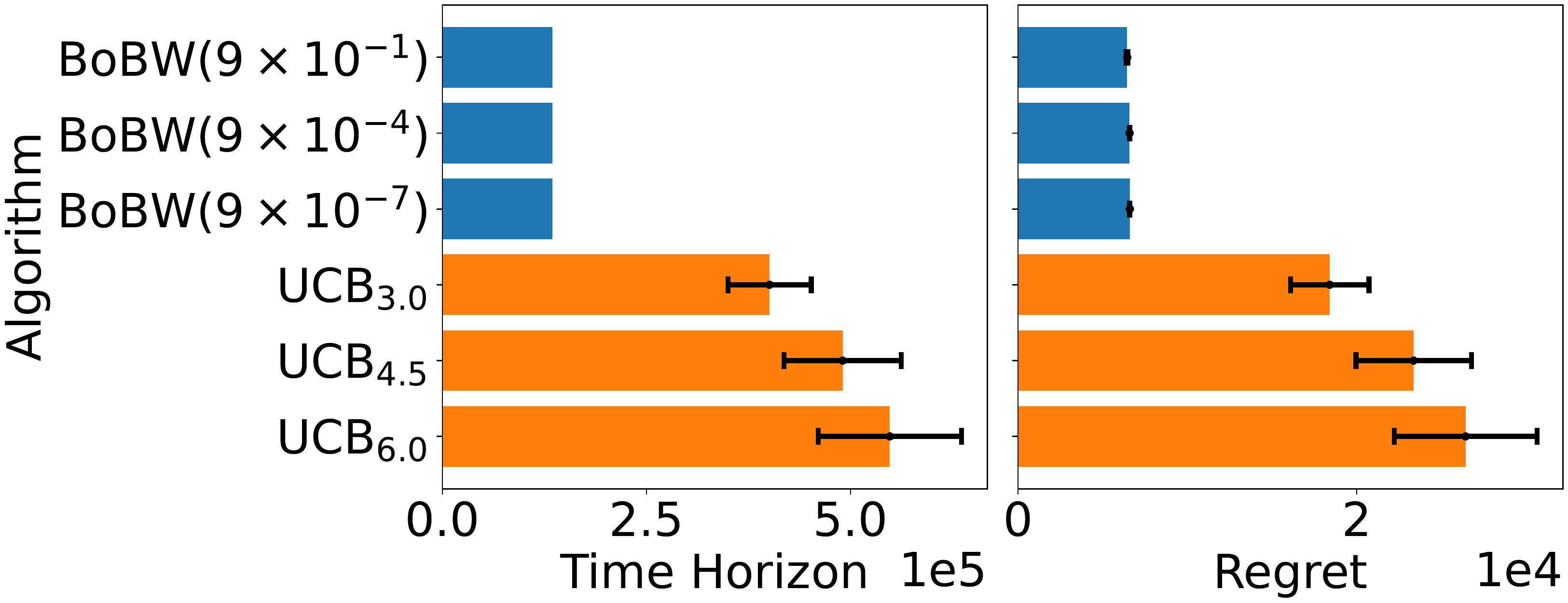}
	\hspace{.8em}
	\includegraphics[width=.48\textwidth]{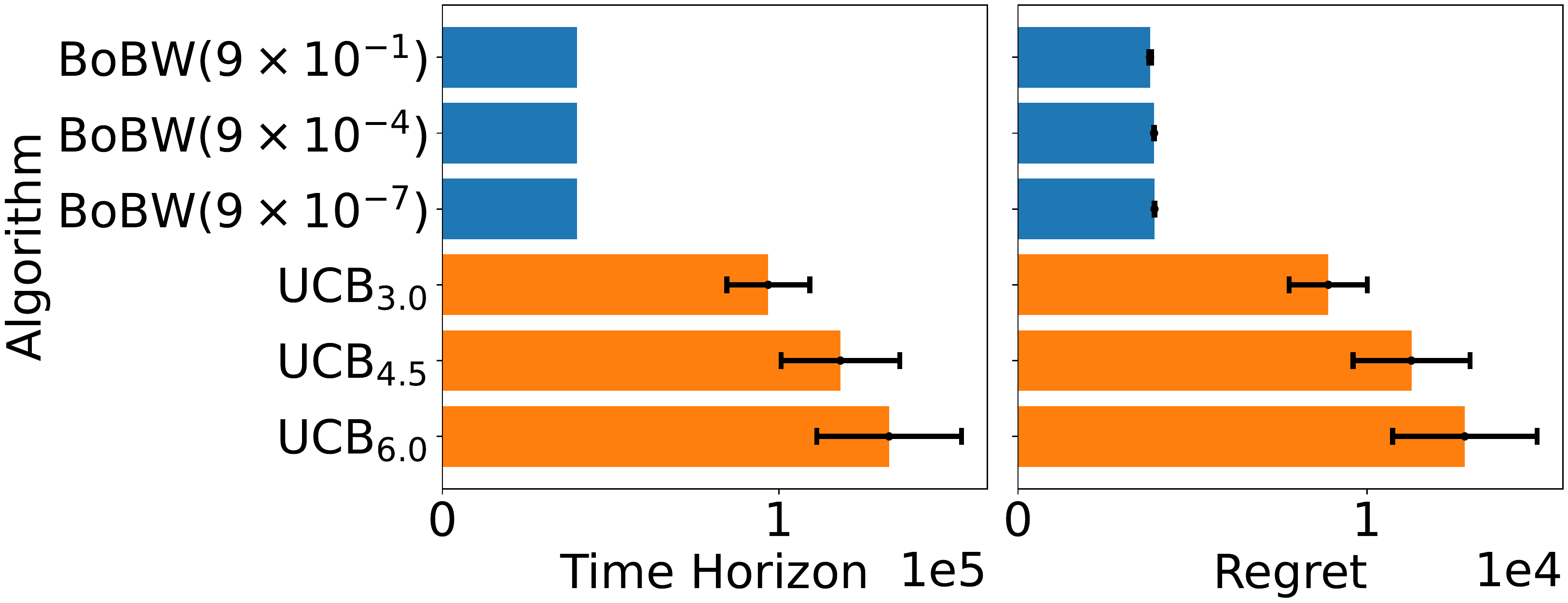}\\		
	\caption{Bernoulli instances with $L\!=\!64$,  failure probability $\!\le \!1\%$. Left: $ \Delta\! =\!0.05$; right:  $\Delta\! =\!0.1$.} 
	\label{pic:bern_L64_wGapMin0_05_0_1_err001}   
\end{figure}

Under each instance presented in Figure~\ref{pic:bern_L64_wGapMin0_05_0_1_err001},
the regret of {\sc BoBW-lil'UCB$(\gamma)$} is reduced when $\gamma$ grows (see Table~\ref{tab:experiment_reg_syn} for exact values), which corroborates with     Theorem~\ref{thm:rm_bd_lilUcb_para}.
Both the regret and the stopping time of {\sc UCB$_\alpha$} grow  with $\alpha$, which corroborates with \citet[Theorem 3]{pmlr-v89-degenne19a}.
Moreover, we observe that the standard deviations of the regrets are larger for {\sc UCB$_\alpha$}  compared to {\sc BoBW-lil'UCB$(\gamma)$}, which suggests that {\sc BoBW-lil'UCB$(\gamma)$} is more statistically robust and consistent in terms of the regret.
Note that
a larger $\Delta$ means that the difference between the optimal and suboptimal items is
more pronounced, 
resulting in 
an easier instance.
Given a fixed horizon $T$, our {\sc BoBW-lil'UCB$(\gamma)$} algorithm outperforms the  {\sc UCB$_\alpha$} algorithm with a varying range of parameters $\gamma$ and $\alpha$ in instances with different values of $\Delta$.

\subsection{Experiments on real datasets}
\label{sec:experiment_real}
We use two real-world datasets, the {\em MovieLens 25M} (ML-25M) dataset  
\citep{harper2015movielens}
  and the {\em Published Kinase Inhibitor Set 2} (PKIS2) dataset  
\citep{drewry2017progress},
 to evaluate the performances of {\sc BoBW-lil'UCB$(\gamma)$} and {\sc UCB$_\alpha$} in two types of practical applications, namely, content recommendation and drug recovery.
Similarly as in \citet{ZongNSNWK16,hong2020latent,jmlrsubmit,mason2020finding, mukherjee2021mean}, we generate data based on the real-world datasets.

{\bf ML-25M dataset.} 
GroupLens Research provides a collection of datasets online,\footnote{%
\url{https://grouplens.org/datasets/movielens}} %
 including the ML-25M dataset. 
These datasets describe the rating activities from MovieLens, a movie recommendation service, and are widely used to evaluate the performances of bandit algorithms~\citep{ZongNSNWK16,hong2020latent,jmlrsubmit}. 
The ML-25M dataset contains about $25$ million ratings across about $62$ thousand movies.
We choose movies with a high number of ratings in our simulations.
For each selected movie, we compute the empirical mean rating and generate random ratings according to a standard Gaussian distribution with the corresponding mean.
We aim to obtain cumulatively high ratings (RM) and identify the movie with the highest rating (BAI); these are standard objectives in online recommendation systems.


{\bf PKIS2 dataset.}
This repository%
 \footnote{%
 Table 4 in \url{https://www.biorxiv.org/content/10.1101/104711v1.supplementary-material}.}  
  tests 
$641$
 small molecule compounds (kinase inhibitor)  against 
 $406$
 protein kinases.
This experiment aims to find the most effective inihibitor against a targeted kinase, and is a fundamental study in cancer drug discovery. 
The entries in PKIS2 indicate the \emph{percentage control} of each inhibitor,
which show the effectiveness of inhibitors and follow log-normal distributions~\citep{christmann2016unprecedently}.
Accordingly, we generate random variables as in \citet{mason2020finding, mukherjee2021mean} (see Appendix~\ref{append:extra_experiment_real_data} for details). 
We aim to find out the most effective inhibitor with the highest percentage control against  one specific kinase MAPKAPK5, and also obtain high percentage controls cumulatively during the online learning process.
Our study may aid in understanding how best to design experimental studies that aim to identify the most effective inhibitor in a fixed number of tests (BAI in a fixed horizon), as well as to provide effective inhibitors throughout the course of study (RM).



\begin{figure}[th]
	\centering
	\includegraphics[width=.48\textwidth]{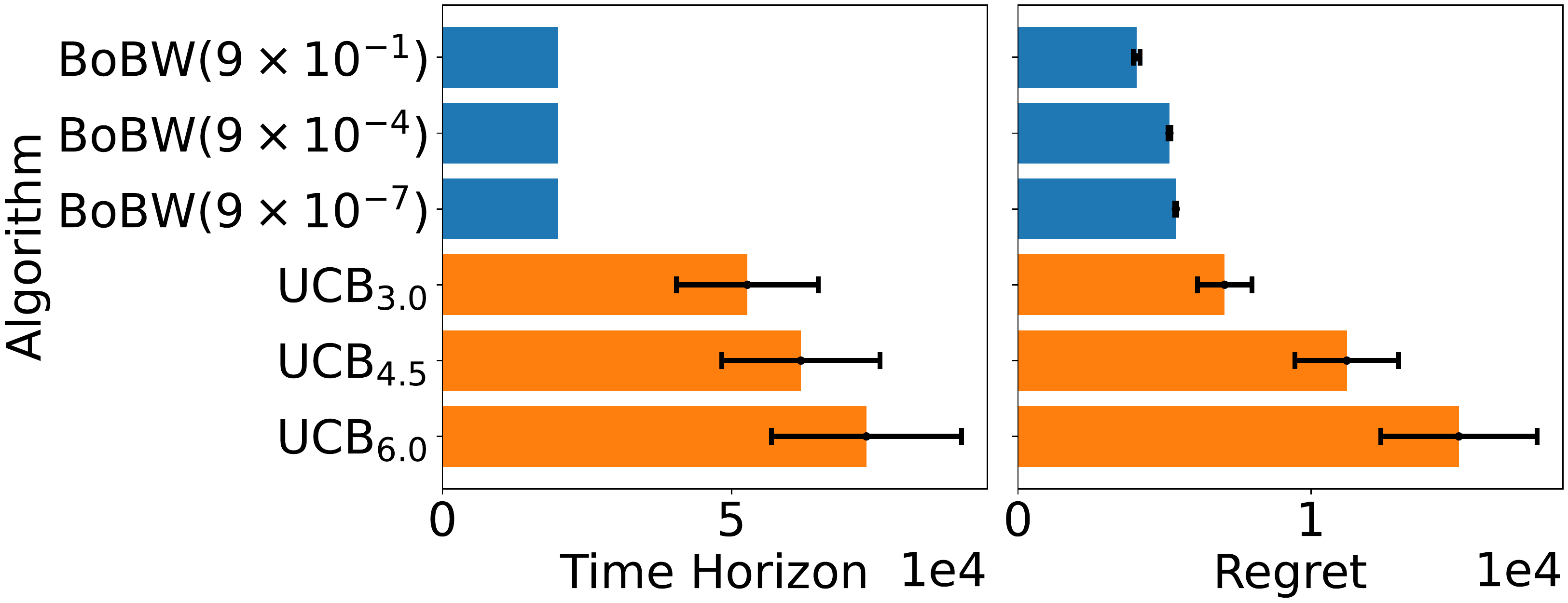} 
	\hspace{.5em}
	\includegraphics[width=.48\textwidth]{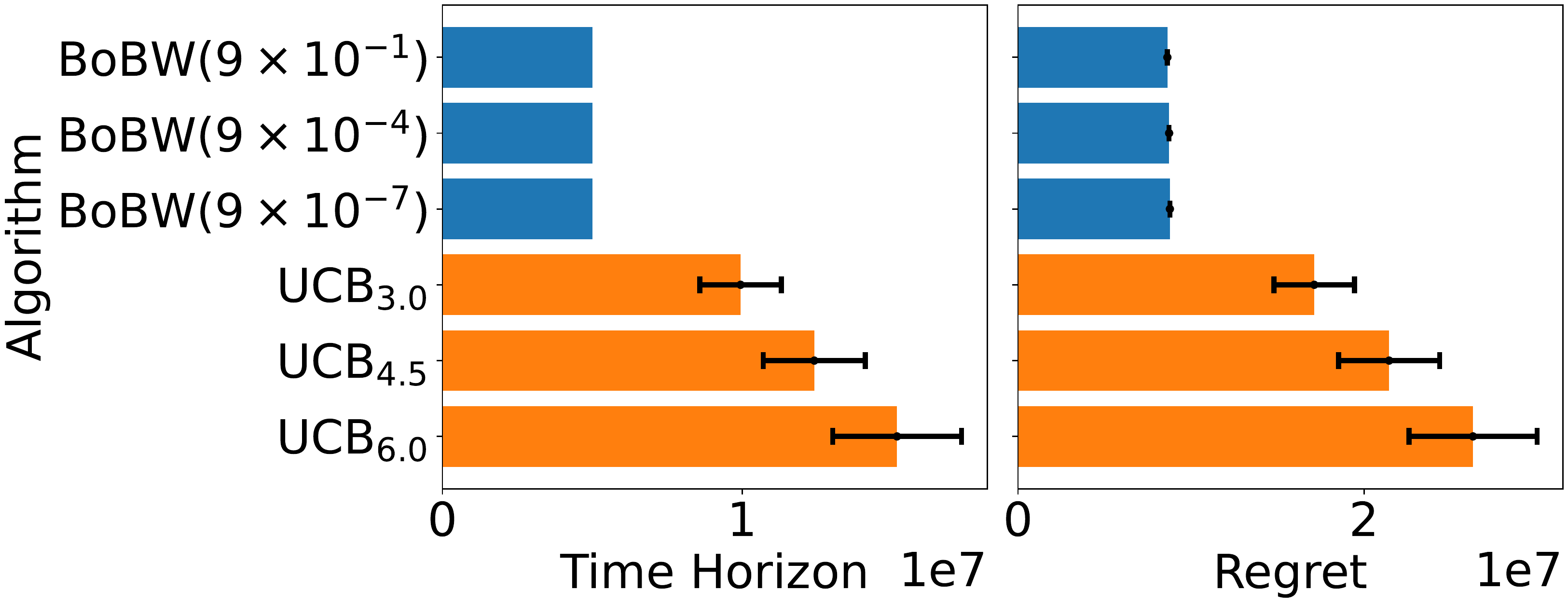} 
	\\
		\caption{Empirical failure probability $\le 1\%$. Left: ML-25M dataset;
	right: PKIS2 dataset.}
	\label{pic:ml25m_rating50_pkis2_MAPKAPK5_err001}   
\end{figure}

On the left of Figure~\ref{pic:ml25m_rating50_pkis2_MAPKAPK5_err001},
we report the results of the experiments on the $22$ movies with at least $50,000$ ratings from the ML-25M dataset.
The other plot in Figure~\ref{pic:ml25m_rating50_pkis2_MAPKAPK5_err001}
 considers the effectiveness of $109$ inhibitors test against the MAPKAPK5 kinase in the PKIS2 dataset.
%
%
%
%
%
Both figures suggest that with high probability,
{\sc BoBW-lil'UCB$(\gamma)$} can identify the most popular movie with the highest rating or the most effective inhibitor against  MAPKAPK5 with the highest percentage control within a fixed horizon. {\sc UCB$_\alpha$}  takes longer to do so, and  
also
suffers from a larger regret. 
These results from the real-life datasets suggest that given a fixed horizon $T$ and a wide range of parameters, {\sc BoBW-lil'UCB$(\gamma)$}
 outperforms {\sc UCB$_\alpha$} in 
these 
real-life instances,
which demonstrates the potential of {\sc BoBW-lil'UCB$(\gamma)$}    in practical settings.


%
%

\bibliography{trade_off_rm_bai_ref}
\bibliographystyle{apalike}

\newpage
\appendix

\begin{center}
	    {\LARGE {\bf Supplementary Material for \\ ``Achieving the Pareto Frontier of Regret Minimization\\ \vphantom{\big(} and Best Arm Identification in   Multi-Armed Bandits''}}
	\end{center}
	In Appendix~\ref{append:detail_discuss_exist_algo}, we discuss the existing algorithms and relevant theoretical findings in stochastic bandits.
	In Appendix~\ref{sec:discuss_adv},	
	we 
	(i) study the performance of {\sc UP-ADV}~\citep{audibert2010best} and {\sc Exp3.P}~\citep{Auer02} for both RM and BAI in  adversarial bandits,
	and
	(ii) provide a lower bound on the BAI failure probability and the Pareto frontier of RM and BAI in  adversarial bandits.
	In Appendix~\ref{sec:useful_fact}, we list the useful facts that are used in the analysis.
	In Appendices~\ref{append:sto_analyze_lilUcb} to \ref{sec:pf_adv_lb_trade_off}, we present detailed proofs of our theoretical results.
	In Appendix~\ref{append:extra_experiment}, more numerical results are provided.

\section{Detailed discussion on existing algorithms}

\label{append:detail_discuss_exist_algo}

While most existing works only aim to perform either RM or BAI,
\citet{pmlr-v89-degenne19a} 
designed and analyzed an algorithm called {\sc UCB$_\alpha$}  for both RM and BAI under the {\em fixed-confidence} setting.
Given any $\delta$, {\sc UCB$_\alpha$} aims to minimize the number of time steps $\tau$ so that $e_\tau  \leq \delta$, and, at the same time, the incurred regret $R_\tau$ can also be upper bounded. 
Therefore, the focus of \citet{pmlr-v89-degenne19a} differs from that of our work.
 We aim to study 
the pseudo-regret of an algorithm which can identify the best item with high probability in a {\em fixed horizon} $T$ in this work.  

To the best of our knowledge, their is no existing work that analyzes a single algorithm for both RM and BAI under the fixed-budget setting.
However, it is natural to question if an algorithm which is originally designed for RM can also perform well for BAI, and vice versa. 
We study some algorithms that are originally designed to achieve optimal performance for either RM or BAI. 

{\bf RM.}
According to the discussions on RM and BAI in \citet{lattimore2020bandit} (see the second point in Note 33.3),
for any algorithm with a regret that (nearly) matches the state-of-the-art lower bound~\citep{carpentier16tight}:
\begin{align}
    \liminf_{T\rightarrow \infty} \frac{ R_T(\pi)  }{ \log T }
    \ge 
    \sum_{i\ne 1} \frac{  \Delta_{1,i}  }{  \rmKL ( \nu_i \| \nu_1 ) },
    \nonumber 
\end{align}
we can construct two instances $\calI$ and $\calI'$  with
\begin{align*}
    w_1^\calI  > w_2^\calI  \ge \ldots \ge  w_L^\calI,\qquad
    w_i^{\calI'} = w_i^\calI  + \varepsilon (w_{1 }^\calI -w_{i }^\calI  ),\mbox{ for some }
    \varepsilon>0 
\end{align*} 
such that 
\begin{align}
    e_T ( \pi, \calI ) + e_T ( \pi, \calI' )
    \ge \Omega( T^{  (1+o(1) )(1+\varepsilon)^2 } ).
    \label{eq:sto_rm_bai_coarse_lower_bd}
\end{align}
This serves as a basic observation on the limitation for BAI of an algorithm that performs (near-)optimally for RM.

{\bf BAI.}
\citet{audibert2010best} were the first to explore the BAI problem under the fixed-budget setting.
\citet{carpentier16tight} provided a lower bound 
on the failure probability of any algorithm.
%
%
 
 In the spirit of {\sc UCB1}~\citep{auer2002finite},
\citet{audibert2010best} designed  {\sc UCB-E} for BAI.
We let {\sc UCB-E$(a)$} denote the UCB-E algorithm when it is run with parameter $a$.
 When $T$ is sufficiently large,
we can upper bound the pseudo-regret of {\sc UCB-E$(\alpha\log T)$}~$(\alpha\ge 2)$ 
with a similar analysis as that for  {\sc UCB1} (see Proof of Theorem 1 in \citet{auer2002finite}).
Besides, 
we can upper bound its failure probability with Theorem 1 in \citet{audibert2010best}.
 
\begin{restatable}{corollary}{coroRmBaiUcbEPara}
\label{coro:rm_bai_bd_ucbE_para}
Let $ \alpha > 12.5 $.
Assume that $g_{i,t}\in[0,1]$ for all $i\in[L]$, 
 and $ \alpha \log T \le 25(T-L)/(36H_2)$. {\sc UCB-E$(\alpha\log T)$} satisfies 
\begin{align} 
    R_T 
    &
    \le 2 \alpha^2 \sum_{i\ne 1} \bigg( \frac{\log T}{ \Delta_{1,i} } \bigg)
    + \bigg( 1+ \frac{\pi^2}{3}  \bigg) \cdot \bigg( \sum_{i \ne 1 } \Delta_{1,i}  \bigg),  \nonumber \\
    \nonumber
    \quad  
    e_T
    &
    \le 
      2L T^{(1-2\alpha/25)}
     .
     \nonumber
\end{align}
\end{restatable}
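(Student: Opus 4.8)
The plan is to prove the two inequalities by largely independent routes: the bound on $e_T$ follows essentially verbatim from the fixed-budget BAI guarantee of \citet{audibert2010best}, while the bound on $R_T$ is obtained by replaying the classical {\sc UCB1} regret analysis of \citet{auer2002finite} with the exploration bonus of {\sc UCB-E}. Throughout, recall that {\sc UCB-E$(a)$} pulls $i_t = \argmax_{i\in[L]} \bigl( \hat g_{i,t-1} + \sqrt{a/N_{i,t-1}} \bigr)$, and here we instantiate $a = \alpha\log T$; the rewards lie in $[0,1]$, so Hoeffding's inequality applies with the cleanest constants.

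For the failure probability, I would invoke Theorem~1 of \citet{audibert2010best}, which asserts that whenever the exploration parameter obeys $0 < a \le \tfrac{25}{36}\cdot\tfrac{n-L}{H}$ --- with $n$ the budget and $H = \sum_{i\ne 1}\Delta_{1,i}^{-2}$, which is precisely our $H_2$ --- the probability of recommending a suboptimal arm is at most $2nL\exp(-2a/25)$. Substituting $n = T$ and $a = \alpha\log T$, the parameter restriction becomes exactly the hypothesis $\alpha\log T \le 25(T-L)/(36 H_2)$, and the bound reads
\[
  e_T \;\le\; 2TL\exp\!\Bigl(-\tfrac{2\alpha\log T}{25}\Bigr) \;=\; 2L\,T^{\,1-2\alpha/25}.
\]
The requirement $\alpha > 12.5$ is exactly what renders the exponent $1-2\alpha/25$ negative, so that this bound actually decays in $T$ rather than being vacuous.

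For the pseudo-regret I would use $R_T = \sum_{i\ne 1}\Delta_{1,i}\,\mathbb{E}[N_{i,T}]$ and bound $\mathbb{E}[N_{i,T}]$ arm by arm. Fixing the threshold $\ell_i = \lceil 2\alpha^2\log T/\Delta_{1,i}^2\rceil$, I write $N_{i,T} \le \ell_i + \sum_{t} \mathbf{1}\{i_t = i,\, N_{i,t-1}\ge \ell_i\}$. Since $\alpha\ge 2$ gives $2\alpha^2 \ge 4\alpha$, once $N_{i,t-1}\ge \ell_i$ the confidence radius satisfies $\sqrt{a/N_{i,t-1}} \le \Delta_{1,i}/2$; hence the event $\{i_t = i,\, N_{i,t-1}\ge\ell_i\}$, which requires the {\sc UCB-E} index of $i$ to exceed that of arm $1$, forces either the empirical mean of arm $1$ to lie below its confidence bound or that of arm $i$ to lie above it. Each such deviation has probability at most $\exp(-2a) = T^{-2\alpha}$ by Hoeffding, and a union bound over the relevant $(t, N_{1,t-1}, N_{i,t-1})$-indices contributes only a lower-order tail that vanishes as $T\to\infty$ and is therefore dominated by the {\sc UCB1}-style constant $1 + \pi^2/3$. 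This yields $\mathbb{E}[N_{i,T}] \le 2\alpha^2\log T/\Delta_{1,i}^2 + 1 + \pi^2/3$; multiplying by $\Delta_{1,i}$ and summing over $i\ne 1$ gives the stated regret bound.

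The principal obstacle is the mismatch between the two exploration schedules: {\sc UCB1} uses the time-dependent radius $\sqrt{2\log t/s}$, whereas {\sc UCB-E$(\alpha\log T)$} uses the horizon-dependent but time-\emph{independent} radius $\sqrt{\alpha\log T/s}$. Consequently one cannot cite the {\sc UCB1} theorem as a black box; one must instead redo the deviation-and-union-bound bookkeeping with the constant radius and argue that the residual tail is controlled by $1+\pi^2/3$ \emph{for $T$ sufficiently large}, as the statement stipulates. The condition $\alpha\ge 2$ (implied by $\alpha>12.5$) is what makes this transfer clean: it guarantees $\alpha\log T \ge 2\log t$ for every $t\le T$, so the {\sc UCB-E} radius dominates the {\sc UCB1} radius and the comparison can be pushed through. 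Tracking the constants so that the leading coefficient can be reported as $2\alpha^2$ --- a deliberate over-estimate of the tight value $4\alpha$, legitimate because $\alpha\ge 2$ --- is the only genuinely delicate point.
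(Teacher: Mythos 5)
Your proposal is correct and takes essentially the same route as the paper, which likewise obtains $e_T$ by substituting $a=\alpha\log T$ into Theorem~1 of \citet{audibert2010best} (the parameter restriction becoming exactly $\alpha\log T\le 25(T-L)/(36H_2)$) and the regret bound by redoing the {\sc UCB1} analysis of \citet{auer2002finite} with the constant radius $\sqrt{a/N_{i,t-1}}$. Your bookkeeping --- the threshold $\ell_i$, the one-sided Hoeffding bound $e^{-2a}=T^{-2\alpha}$, the union bound whose residual tail is dominated by $1+\pi^2/3$, and the observation that $2\alpha^2\ge 4\alpha$ since $\alpha>12.5$ --- correctly fills in the details the paper leaves implicit in its one-line sketch.
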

When the horizon $T$ grows, Corollary~\ref{coro:rm_bai_bd_ucbE_para} indicates that the BAI failure probability of {\sc UCB-E$(\alpha\log T)$} decays only polynomially fast.
In order to achieve the upper bound on $e_T$ as $\exp(-  \Theta(T) )$, we need to set $\alpha = O(T/\log T)$, and hence the regret bound as shown in Corollary~\ref{coro:rm_bai_bd_ucbE_para} will be $O(T^2/ \log T)$, which is vacuous.

%

\subsection{Existing results under the fixed-budget setting of BAI}

We abbreviate {\sc Sequential Rejects} as {\sc SR}, {\sc Sequential Halving} as {\sc SH}, {\sc Nonlinear Sequential Elimination} with parameter $p$ as {\sc NSE$(p)$}.
Besides, 
we simplify the bounds for algorithms which were initially analyzed for more general problems than identification of the optimal item $i^*$.
we define
 \begin{align*}
     H'_p := \max_{i\ne 1} \frac{ i^p }{ \Delta_i^2 },
     \quad
     C_p : = 2^{-p} + \sum_{i=2}^L i^{-p} 
 \end{align*}
for $p>0$ as in \citet{shahrampour2017on}.
We let {\sc UGapEb$(a)$} denote the {\sc UGapEb} algorithm when it is run with parameter $a$.
In Table~\ref{tab:comp_BAI_fix_budget}, We present existing bounds from some seminal works. The algorithms are listed in chronological order.

 \begin{table}[ht]
    \caption{Comparison under the fixed-budget setting of BAI: upper bounds for algorithms and lower bounds in stochastic bandits.
    }
    \label{tab:comp_BAI_fix_budget}
    \vspace{1em}
    \resizebox{\textwidth}{!}{
    \centering
    \renewcommand{\arraystretch}{2.5}
    \begin{tabular}{ l      l  l}
        Algorithm/Instance &   Reference & Failure probability $e_T$ \\
        \thickhline
        {\sc UCB-E$ \displaystyle \bigg(\frac{25(T-L)}{36H_2}\bigg)$} &  \citet{audibert2010best}   & $ \displaystyle 2TL\exp \bigg( -\frac{  T-L }{  18 H_2 } \bigg) \vphantom{\Bigg(} $
        \\ 
        \hline
        {\sc SR}
        & \citet{audibert2010best}   & $ \displaystyle L(L-1)\exp \bigg( -\frac{  T-L }{  ( 1/2 + \sum_{i=2}^L 1/i ) H_2 } \bigg) \vphantom{\Bigg(_a} $
        \\ 
        \hline
        {\sc UGapEb$\displaystyle \bigg(  \frac{ T-L }{ 16H_2 } \bigg)$} &  \citet{gabillon2012best}  & $  \displaystyle 2TL  \exp\bigg(-  \frac{ T-L }{ 8H_2 } \bigg) \vphantom{\Bigg(}$
        \\  
        \hline  
         {\sc SAR} & \citet{bubeck2013multiple}   & $\displaystyle 2L^2 \exp \bigg( -\frac{  T-L }{  8( 1/2 + \sum_{i=2}^L 1/i ) H_2 } \bigg) \vphantom{\Bigg(_a} $
        \\  
        \hline 
        {\sc SH} &    \citet{karnin2013almost}  & $\displaystyle 3\log_2 L \cdot \exp \bigg( -\frac{  T}{  8 H_2 \log_2 L } \bigg) \vphantom{\Bigg(} $ 
        \\  
        \hline 
        {\sc NSE$(p)$} &  \citet{shahrampour2017on}  & $ \displaystyle (L-1)  \exp\bigg( -\frac{ 2(T-L) }{ H'_p C_p } \bigg) \vphantom{\Bigg(}  $ 
        \\ 
        \hhline{===} 
          Stochastic Bandits 
          &
            \citet{carpentier16tight} 
            &
            $  \displaystyle \frac{1}{6} \exp\bigg( - \frac{ 400 T }{ H_2 \log L } \bigg)$  \quad    (Lower Bound) 
    \end{tabular}
    }
    \renewcommand{\arraystretch}{1}
\end{table} 

Since SH and NSE pull a number of items ``uniformly'' in each phase, we surmise the regret grows like $\Theta(T)$. 
For instance, there are $\log L$ many phases in SH and at least two items are uniformly pulled during each phase, so at least one non-optimal item $j\ne 1$ is pulled for at least $ T / (L \log L)  $ times, leading to a regret at least $\Delta_{1,j} \cdot T /(L \log L)$.

 As discussed in \citet{shahrampour2017on}, $H_p' C_p \le H_2\log L$ in some special cases.
Therefore, {\sc SH} is better than {\sc NSE$(p)$} if we disregard the sub-exponential term,
while {\sc NSE$(p)$} is better in some cases in its dependence on   the exponential term.
However, they are incomparable in general.


\section{Conclusion and Further Discussion on Adversarial Bandits}
\label{sec:discuss_adv}

In Sections~\ref{sec:sto_rm_bai} and \ref{sec:experiment}, we explore the Pareto frontier of RM and BAI over a fixed horizon in stochastic bandits.
The performance of our {\sc BoBW-lil'UCB} algorithm sheds light on the different emphases of RM and BAI. 
Moreover, we prove that no algorithm can simultaneously perform optimally for both objectives
and {\sc BoBW-lil'UCB} nearly achieves the Pareto-optimality in some parameter regimes. 
However, as described in the discussion after Corollary~\ref{coro:rm_lower_bd_bern_lilUcb_para}, although our {\sc BoBW-lil'UCB} algorithm nearly achieves the Pareto frontier, we acknowledge that there remains a small gap $\log(1/L)$ which may be closed in the future by developing another more sophisticated algorithm.
 
 In real-life applications, it may be unrealistic to  assume   i.i.d.\ stochastic rewards,
 meaning that 
 the stochastic bandit model may not be appropriate.
 This brings the  study of {\em adversarial bandits}~\citep{Auer02,abbasi2018best} to the fore. Here, the rewards of each item are not necessarily drawn independently from the same distribution.
 In adversarial bandits, while there exists a lower bound on the regret of any algorithm~\citep{gerchinovitz2016refined},
 there is no lower bound on the failure probability for BAI.
 We fill this gap by proving a lower bound $\Omega( \exp( -150T\underline{\Delta}^2 ) )$  in Theorem~\ref{thm:adv_bai_lb}, where $\underline{\Delta}$ is the minimal gap between the empirically-optimal items and the other items (see Appendix~\ref{sec:adv_prob_set} for the definitions). This bound is almost tight as it nearly matches the upper bound of {\sc UP-ADV} \citep{abbasi2018best}.
 
 Furthermore, there is no existing analysis of a {\em single} algorithm that is applicable to both RM and BAI in adversarial bandits.
 We fill this gap by studying the performance of {\sc Exp3.P$(\gamma,\eta)$} \citep{Auer02} for both targets.
 Theorems~\ref{thm:rm_bd_expThreeP_beta_zero} and \ref{thm:bd_bai_expThreeP_zero} imply that by adjusting {\sc Exp3.P$(\gamma,\eta)$} with $\gamma$, we can balance between exploitation and exploration, and trade-off
between the twin objectives: RM and BAI. 
 Besides, Theorem~\ref{thm:adv_rm_bai_gauss} implies that no algorithm can simultaneously perform optimally for both objectives in adversarial bandits.
 However, since the regret bound of {\sc Exp3.P$(\gamma,\eta)$} is problem-independent, we cannot ascertain if {\sc Exp3.P$(\gamma,\eta)$} achieves the Pareto frontier between RM and BAI.
 The further study of the Pareto frontier in  adversarial bandits, especially the stochastically constrained adversarial bandits~\citep{zimmert2021tsallis,wei2018more},
 may serve as an interesting direction for future work.%
 


{\bf Outline.} In this section, we first formulate the RM and BAI problem in adversarial bandits in Appendix~\ref{sec:adv_prob_set}.
Next, we study the performance of {\sc UP-ADV}~\citep{audibert2010best} and {\sc Exp3.P}~\citep{Auer02} for both RM and BAI in Appendix~\ref{sec:adv_up_expThreeP}.
Subsequently, we provide a lower bound on the BAI failure probability and the Pareto frontier of RM and BAI in Appendix~\ref{sec:adv_lb_bai_trade_off_rm_bai}.
We summarize some theoretical findings in Table~\ref{tab:comp_result_adv}.
%

\subsection{Problem setup for adversarial bandits}
\label{sec:adv_prob_set}
In an adversarial bandit instance, we let $g_{i,t}\in [0,1]$ be the reward of item $i$ at time $t$, and let $ G_{i,t} := \sum_{u=1}^t g_{i,u} $ for all $1\le t\le T$.
We define the $\barDelta_{ i,j,T}$, \emph{empirical gap} between item $i$ and $j$ in $[L]$ and the {\em empirically-optimal} item $\bari^{* }_{T} $ as follows:
\begin{align*}
    \barDelta_{ i,j,T}  : = \frac{1}{T} \cdot (G_{i,t}   - G_{j,t} ),
    \quad
    \bari^{*}_{T}:= \argmax_{1\le i \le L } G_{i,T}.
\end{align*}
Moreover, we define the {\em empirically-minimal optimality gap} as
\begin{align*}
    \barDelta_{ T} := \min_{j \ne \bari_T^{* } } \barDelta_{ \bari_T^{* },j } .
\end{align*}
%
%
We say an instance  is {\em obliviously adversarial}%
    \footnote{When there is no ambiguity, we say an instance is adversarial to indicate that it is obliviously adversarial.}%
    , if $\{ \bm{g}_{i,t}  \}_{i,t} $ is a sequence of rewards obliviously generated by the instance before online process.  
    We assume the empirically-optimal item $\bari_T^{* }$ is {\em unique}, which implies that $\barDelta_{\min,T}  >0$.

Moreover, we define the {\em empirical-regret} $\barR_T^{\pi }$ 
of an online algorithm $\pi$ (as defined in Section~\ref{sec:sto_prob_set}) as
\begin{align*}
    &
    \barR_T^{\pi } :=  \max_{1\le i \le L} \sum_{t=1}^T g_{i,t} - \sum_{t=1}^T  g_{ i_t^{\pi },t} 
     = G_{ \bari^{* }_{T} , T }^\calI - \sum_{t=1}^T  g_{ i_t^{\pi },t}.
\end{align*}
Recall the definition of pseudo-regret $R_T(\pi)$ in Section~\ref{sec:sto_prob_set}: if an instance is stochastic, $ \bbE \barR_T^{\pi } = R_T^{\pi }$;
 if it is adversarial, $ \bbE \barR_T^{\pi } \le R_T^{\pi }$. 
The aim of the agent is slightly different in   stochastic and adversarial bandits:
\begin{itemize}[ itemsep = -4pt,   topsep = 8pt, leftmargin =  15pt ]
    \item if the instance is stochastic, the algorithm $\pi$ aims to both minimize the pseudo-regret $R_T(\pi )$ and identify the {pseudo-optimal} item with high probability, i.e., to minimize $ e_T (\pi ):= \Pr (\iout^{\pi ,T} \ne  i^{* }_T )$;
    \item if the instance is adversarial, the algorithm $\pi$ aims to both minimize the empirical-regret $\barR_T (\pi )$ and identify the  {empirically-optimal} item with high probability, i.e., to minimize $\bare_T(\pi ) :=\Pr  (\iout^{\pi, T} \ne \bari^{* }_T )$.
\end{itemize}
%
%
We omit $T$ and/or $\pi$ in the
superscript or subscript when there is no cause of confusion.
We write $\barR_T (\pi)$ as $\barR_T (\pi, \calI)$, $\bare_T (\pi)$ as $\bare_T (\pi, \calI)$ when we
wish to emphasize their dependence on both the algorithm
$\pi$ and the instance $\calI$.

\subsection{Adversarial algorithms: {\sc UP-ADV} and {\sc Exp3.P}}
\label{sec:adv_up_expThreeP}
We discuss the theoretical performances of two basic algorithms in this section.

Hence, we take the performance of this basic algorithm as a benchmark to evaluate any algorithm for this target.
Besides, it is clearly that the uniform pull algorithm is the same as Exp3.P algorithm with $\beta=0$, $\gamma=1$. We see that UP-ADV satisfies that $\bbE \barR_T \le T$, which is consistent with Theorem~\ref{thm:bd_bai_expThreeP_zero}.

{\bf The {\sc UP-ADV} algorithm.}
First of all, \citet{abbasi2018best} shows that a simple algorithm, which is termed as {\sc UP-ADV} and chooses an item based on the uniform distribution at each time step $t$, 
satisfies that
\begin{align}
        \bare_T
        \le 
        L \exp \bigg(  - \frac{ 3T  \barDelta_T^2 }{ 28 L  } \bigg).
        \label{eq:bai_bd_up_adv}
    \end{align} 
\citet{abbasi2018best} claimed that {\sc UP-ADV} performs near-optimally for BAI in  adversarial bandits, which is verified by our Theorem~\ref{thm:adv_bai_lb} in the next section.
Besides, it is obvious that {\sc UP-ADV} satisfies $\bbE \barR_T\le T$.
\begin{restatable}{algorithm}{algUpAdv} 
	\caption{{\sc Uniform Pull-ADV} (UP-ADV) \citep{abbasi2018best}} \label{alg:up_adv}
		\begin{algorithmic}[1]
			\STATE {\bfseries Input:}  time budget $T$, size of ground set of items $L$.
			\FOR{$t =  1,  \ldots, T $} 
                \STATE Choose item $i_t\in[L]$ with probability $1/L$.
                \STATE Update the estimated cumulative gain $ \tilde{G}_{i,t}  = \sum_{u=1}^t  g_{i,u} \cdot \mathbb{I} \{ i_u = i \} $. 
			\ENDFOR
			\STATE Output  $\iout = \argmax_{ i\in [L] } \tilde{G}_{i,T}  $.
		\end{algorithmic}
\end{restatable}

{\bf The {\sc Exp3.P} algorithm.}
After the {\sc Exp3} algorithm and its variations were proposed by \citet{Auer02} for RM in  adversarial bandits, this class of algorithms has been widely discussed as in
\citet{lattimore2020bandit,bubeck2012regret}.
We present {\sc Exp3.P$(\gamma, \eta)$} in Algorithm~\ref{alg:exp3p}.
%

\begin{restatable}{algorithm}{algExpThreeP} 
	\caption{{\sc Exp3.P$(\gamma, \eta)$} (\citet{bubeck2012regret}, Section 3.3, Fig. 3.1)} \label{alg:exp3p}
		\begin{algorithmic}[1]
			\STATE {\bfseries Input:}  time budget $T$, size of ground set of items $L$, parameters $\eta >0$ and $\gamma\in [0,1] $.   
			\STATE Set $p_1$ be the uniform distribution over $[L]$, i.e., $p_{i,1}=1/L \ \forall i\in[L]$. 
			\FOR{$t =  1,  \ldots, T $} 
                \STATE Choose item $i_t\in[L]$ with probability $p_{i,1}$.
                \STATE Compute the estimated gain for each item 
                    \begin{align*}
                        \tilde{g}_{i,t} = \frac{  g_{i,t} \cdot \mathbb{I} \{ i_t = i \}
                        }{  p_{i,t} }
                    \end{align*}
                    and update the estimated cumulative gain $ \tilde{G}_{i,t}  = \sum_{u=1}^t \tilde{g}_{i,u}  $.
			    \STATE Compute the new probability distribution over the items
			        $p_{t+1} = (p_{1,t+1}, \ldots , p_{L,t+1} )$ where
			        \begin{align*}
			            p_{i,t+1}  = ( 1- \gamma ) \cdot \frac{  \exp( \eta \tilde{G}_{i,t}  ) }{  \sum_{\ell=1}^L \exp( \eta \tilde{G}_{\ell,t}  )   } + \frac{ \gamma }{  L }.
			        \end{align*} 
			\ENDFOR
			\STATE Output  $\iout = \argmax_{ i\in [L] } \tilde{G}_{i,T}  $.
		\end{algorithmic}
\end{restatable}
We first provide the upper bound on the regret of {\sc Exp3.P$(\gamma, \eta)$}. The proof is similar to that in \citet{bubeck2012regret} and is postponed to Appendix~\ref{pf:thm_rm_bd_expThreeP_beta_zero}
\begin{restatable}[Bounds on the regret of {\sc Exp3.P$(\gamma, \eta)$}]
{theorem}{thmRmBdExpThreePBetaZero} 
    \label{thm:rm_bd_expThreeP_beta_zero}
    Let $\eta >0$, $\gamma\in[0,1/2]$  satisfying that $ L\eta \le \gamma$.
    Then we can upper bound the regret of {\sc Exp3.P$(\gamma, \eta)$} as follows.
    (i) Fix any given $\delta\in (0,1)$, with probability at least $1-\delta$, 
\begin{align*}
    \barR_T  \le  \gamma T +  \eta L T +  \ln  \bigg(   \frac{    L^2 T }{  \eta \delta } \bigg) +  \frac{ \ln L }{ \eta}.
\end{align*}
(ii) Moreover, 
\begin{align*}
    \bbE \barR_T \le     \gamma T + \eta L T +  \ln  \bigg(   \frac{    L^2 T }{  \eta   } \bigg) +  \frac{ \ln L }{ \eta}  + 1.
\end{align*} 
   
\end{restatable}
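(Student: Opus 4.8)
The plan is to follow the standard potential-function (exponential-weights) analysis of {\sc Exp3}-type algorithms, as in \citet{bubeck2012regret}, and then upgrade the in-expectation argument to a high-probability one via a martingale concentration. A useful simplification is that, under an oblivious adversary, the reward sequence $\{g_{i,t}\}$ is fixed in advance, so the empirically-optimal arm $\bari^{*}_{T}$ is a \emph{deterministic} index; this lets me apply a concentration inequality to a single fixed arm $j=\bari^{*}_{T}$ rather than to a random one.

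First I would set up the potential. Write the pure exponential weights $q_{i,t}=\exp(\eta \tilde{G}_{i,t-1})/\sum_{\ell}\exp(\eta \tilde{G}_{\ell,t-1})$, so that the sampling distribution is $p_{i,t}=(1-\gamma)q_{i,t}+\gamma/L$, whence $q_{i,t}\le p_{i,t}/(1-\gamma)$ and $p_{i,t}\ge \gamma/L$. The hypothesis $L\eta\le\gamma$ then guarantees $\eta\tilde{g}_{i,t}=\eta\, g_{i,t}\mathbb{I}\{i_t=i\}/p_{i,t}\le \eta L/\gamma\le 1$, so I may apply $e^{x}\le 1+x+x^{2}$ to each increment. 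Tracking $\ln(W_t/W_{t-1})$ for $W_t=\sum_i\exp(\eta\tilde{G}_{i,t})$, using $\ln(1+x)\le x$, summing over $t$, and lower-bounding $\ln(W_T/W_0)$ by the single term of any fixed arm $j$ (with $W_0=L$), I obtain the deterministic inequality
\[
\tilde{G}_{j,T}-\frac{\ln L}{\eta}\le \frac{1}{1-\gamma}\sum_{t=1}^{T} g_{i_t,t}+\eta\sum_{t=1}^{T}\sum_{i} q_{i,t}\,\tilde{g}_{i,t}^{2},
\]
where I used $\sum_i q_{i,t}\tilde{g}_{i,t}=q_{i_t,t}g_{i_t,t}/p_{i_t,t}\le g_{i_t,t}/(1-\gamma)$.

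The probabilistic part converts the estimated gains back to true quantities at $j=\bari^{*}_{T}$. The second-order sum has conditional expectation $\bbE[\sum_i q_{i,t}\tilde{g}_{i,t}^2\mid\calF_{t-1}]\le \frac{1}{1-\gamma}\sum_i g_{i,t}^2\le \frac{L}{1-\gamma}$, contributing $\eta LT/(1-\gamma)$ on average; combined with $\bbE[\tilde{G}_{\bari^{*}_{T},T}]=G_{\bari^{*}_{T},T}$, the bound $G_{\bari^{*}_{T},T}\le T$, and $1-\gamma\ge 1/2$, this already yields the in-expectation skeleton $\gamma T+\eta LT+\ln L/\eta$. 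To get the high-probability statement (i) I would instead control two one-sided deviations simultaneously: (a) a lower bound on $\tilde{G}_{\bari^{*}_{T},T}$ in terms of $G_{\bari^{*}_{T},T}$, and (b) an upper bound on the quadratic sum in terms of its conditional mean. Both are handled by the exponential-supermartingale (Freedman-type) argument of \citet{bubeck2012regret}: because the increments $\tilde{g}_{i,t}$ are nonnegative and the per-step conditional variances are controlled through $p_{i,t}\ge\gamma/L$ together with $L\eta\le\gamma$, the deviations collapse into an additive $O(\ln(L^2T/(\eta\delta)))$ term rather than a $\sqrt{T}$ term. A union bound over the two failure events (each of probability $\delta/2$, the constant absorbed into the logarithm) then gives (i).

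The main obstacle is precisely step (a): securing a one-sided lower tail for the importance-weighted estimate $\tilde{G}_{\bari^{*}_{T},T}$ with only a logarithmic, rather than $\sqrt{T}$, deviation. A naive Bernstein/Freedman bound sees a per-step conditional variance as large as $1/p_{\bari^{*}_{T},t}\le L/\gamma$, which would generate an $\Omega(\sqrt{TL/\gamma})$ term; the careful bookkeeping that makes these variance contributions \emph{merge} into the existing $\eta LT$ and $\gamma T$ terms, via the confidence mechanism of {\sc Exp3.P} and the coupling $L\eta\le\gamma$, is the delicate part and mirrors the corresponding lemma in \citet{bubeck2012regret}. Finally, part (ii) follows from (i) with no extra work: writing the bound of (i) as $\barR_T\le A+\ln(1/\delta)$ with $A=\gamma T+\eta LT+\ln(L^2T/\eta)+\ln L/\eta$ is equivalent to $\Pr(\barR_T> A+s)\le e^{-s}$ for all $s\ge 0$; since $\barR_T\ge 0$, integrating the tail gives $\bbE\barR_T\le A+\int_0^\infty e^{-s}\,ds=A+1$, which is exactly the claimed expectation bound.
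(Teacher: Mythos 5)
Your overall route coincides with the paper's: the paper also proves this theorem by adapting the potential-function analysis of Theorem 3.2 of \citet{bubeck2012regret} (with $\beta=0$), and your part (ii) is verbatim the paper's argument --- it sets $W' = \barR_T - A$ with $A = \gamma T + \eta LT + \ln(L^2T/\eta) + \ln L/\eta$, observes $\Pr(W' > \ln(1/\delta)) \le \delta$ for every $\delta$, and integrates the exponential tail to get $\bbE \barR_T \le A + 1$. Your deterministic potential computation and the in-expectation control of the quadratic term are also correct and standard. The difference is in how the pivotal concentration step is handled, and there your proposal has a genuine gap.

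The paper isolates that step as Lemma~\ref{lemma:rm_rev}: for any fixed item $i$, with probability at least $1-\delta$, $\sum_{t=1}^T g_{i,t} \le \sum_{t=1}^T \tilde{g}_{i,t} + \ln(LT/(\eta\delta))$, and then adopts the rest of the \citet{bubeck2012regret} proof unchanged. You correctly diagnose that this one-sided lower tail of the importance-weighted estimator at $j=\bari^{*}_{T}$ is the crux, and that a naive Freedman bound produces an $\Omega(\sqrt{TL/\gamma})$ deviation; but your proposed repair --- ``the confidence mechanism of {\sc Exp3.P}'' --- is not available for the algorithm actually analyzed here. Algorithm~\ref{alg:exp3p} in this paper is the bias-free variant: its estimator is the unbiased $\tilde{g}_{i,t} = g_{i,t}\,\mathbb{I}\{i_t=i\}/p_{i,t}$, with no $\beta/p_{i,t}$ exploration bonus in either the estimates or the weights. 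The high-probability lemma of \citet{bubeck2012regret} (their Lemma 3.1, giving $G_i \le \tilde{G}_{i,\beta} + \ln(1/\delta)/\beta$) hinges precisely on that bonus: the term $\beta\sum_{t} 1/p_{i,t}$ hidden inside the biased estimate $\tilde{G}_{i,\beta}$ is what compensates the conditional variance in the supermartingale argument. So the variance contributions do not ``merge into the existing $\eta LT$ and $\gamma T$ terms'' automatically; one must prove the $\beta=0$ analogue --- effectively Lemma~\ref{lemma:rm_rev} --- e.g., by introducing the bias virtually in the analysis and paying the cost $\beta\sum_t 1/p_{i,t}$, controlled through $p_{i,t} \ge \gamma/L \ge \eta$ (this is exactly where the hypothesis $L\eta \le \gamma$ and the $LT/\eta$ inside the paper's logarithm enter). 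As written, your proposal asserts that the deviations ``collapse into an additive $O(\ln(L^2T/(\eta\delta)))$ term'' without an argument, and that assertion is the entire nontrivial content of part (i). (Your side observation that an oblivious adversary makes $\bari^{*}_{T}$ deterministic, so a single fixed arm suffices, is correct, and is implicitly what allows the paper to apply its per-arm lemma at $i=\bari^{*}_{T}$.)
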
	
 We observe that {\sc Exp3.P$(1, \eta)$} is exactly the same as {\sc UP-ADV}, and the corresponding bound provided in Theorem~\ref{thm:bd_bai_expThreeP_zero} is with the same order as in \eqref{eq:bai_bd_up_adv} derived by \citet{abbasi2018best}.
 Our upper bound is even slightly smaller regarding the constants since we apply tighter concentration inequalities.
Next, we upper bound its failure probability to identify the empirically-optimal item $\bari_T^*$.
\begin{restatable}[Bound on the failure probability of {\sc Exp3.P}]{theorem}{thmBdBaiExpThreePZero}
\label{thm:bd_bai_expThreeP_zero}
    Assume $G_{1,T} \ge G_{2,T} \ge \ldots \ge G_{L,T}$.  We see that the optimal item $ \bari_T^*=1$.
    The failure probability of {\sc Exp3.P$(\gamma, \eta)$} satisfies
\begin{align*}
    &
    \bare_T
    \le 
    \exp \bigg(  - \frac{  \gamma T\barDelta_{1,2,T}^2  }{ 4L   }  \bigg)
        +
        \sum_{i=2}^L \exp \bigg(  - \frac{  3\gamma T(\barDelta_{1,2,T}/2 + \barDelta_{2,i,T} )^2 }{   L(3  +\barDelta_{1,2,T}/2 + \barDelta_{2,i,T}   )}    \bigg)
    \le 
    L \exp \bigg(  - \frac{  \gamma T\barDelta_{ T}^2  }{ 4L   }  \bigg)
    .
\end{align*}
\end{restatable}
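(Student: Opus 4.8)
The plan is to reduce the failure probability to a handful of tail estimates via a midpoint decomposition, and then to apply martingale concentration to the importance-weighted estimators $\tilde{g}_{i,t}$. Since $\iout=\argmax_{i}\tilde{G}_{i,T}$, the failure event $\{\iout\neq 1\}$ forces $\tilde{G}_{i,T}\ge\tilde{G}_{1,T}$ for some $i\neq 1$. Rather than union-bounding those events directly, I would split at the midpoint $m:=\tfrac12(G_{1,T}+G_{2,T})=G_{1,T}-\tfrac{T\barDelta_{1,2,T}}{2}$ between the top two arms: if $\tilde{G}_{1,T}\ge m$ while $\tilde{G}_{i,T}<m$ for every $i\neq 1$, then item $1$ is returned. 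Hence
\[
\{\iout\neq 1\}\subseteq\{\tilde{G}_{1,T}<m\}\cup\bigcup_{i=2}^{L}\{\tilde{G}_{i,T}\ge m\},
\]
and a union bound reduces the task to one lower-tail estimate for the optimal arm and $L-1$ upper-tail estimates for the suboptimal arms. Using $G_{1,T}-G_{i,T}=T\barDelta_{1,i,T}=T(\barDelta_{1,2,T}+\barDelta_{2,i,T})$, the thresholds translate into deviations $G_{1,T}-m=\tfrac{T\barDelta_{1,2,T}}{2}$ and $m-G_{i,T}=T(\tfrac{\barDelta_{1,2,T}}{2}+\barDelta_{2,i,T})$, precisely the quantities in the two terms of the claim.

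The analytic core is the concentration of $\tilde{g}_{i,t}=g_{i,t}\mathbb{I}\{i_t=i\}/p_{i,t}$. Because the rewards are oblivious and $p_{i,t}$ is $\calF_{t-1}$-measurable, each increment is conditionally unbiased, $\bbE[\tilde{g}_{i,t}\mid\calF_{t-1}]=g_{i,t}$, so $\tilde{G}_{i,T}-G_{i,T}$ is a martingale. I would record the two facts that drive everything: the increments are nonnegative and bounded, $0\le\tilde{g}_{i,t}\le L/\gamma$ (from $p_{i,t}\ge\gamma/L$), and their conditional second moments are controlled, $\bbE[\tilde{g}_{i,t}^2\mid\calF_{t-1}]=g_{i,t}^2/p_{i,t}\le \tfrac{L}{\gamma}g_{i,t}$, so that $\sum_{t}\bbE[\tilde{g}_{i,t}^2\mid\calF_{t-1}]\le \tfrac{L}{\gamma}G_{i,T}\le \tfrac{LT}{\gamma}$.

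For the optimal arm I would bound $\Pr(\tilde{G}_{1,T}<m)$ by a Chernoff argument on the \emph{lower} tail: since the increments are nonnegative, $e^{-\lambda x}\le 1-\lambda x+\tfrac{\lambda^2}{2}x^2$ yields a sub-Gaussian lower-deviation bound governed only by the variance proxy $\tfrac{LT}{\gamma}$, with no range penalty, producing the first term $\exp(-\gamma T\barDelta_{1,2,T}^2/(4L))$. For each suboptimal arm I would bound $\Pr(\tilde{G}_{i,T}\ge m)$ by a Freedman/Bernstein inequality for martingales with increments bounded by $L/\gamma$ and the variance proxy above; here the heavy upper tail forces the range into the denominator, giving the $3+\tfrac{\barDelta_{1,2,T}}{2}+\barDelta_{2,i,T}$ correction and hence the $i$-th summand. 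Summing yields the first displayed inequality. The second, simplified inequality then follows by noting that under $G_{1,T}\ge\cdots\ge G_{L,T}$ the minimal empirical gap is $\barDelta_T=\barDelta_{1,2,T}$, every deviation obeys $\tfrac{\barDelta_{1,2,T}}{2}+\barDelta_{2,i,T}\ge\tfrac{\barDelta_T}{2}$, and all empirical gaps are at most $1$ (rewards lie in $[0,1]$); bounding the range correction by this crude $O(1)$ estimate collapses all $L$ terms to $\exp(-\gamma T\barDelta_T^2/(4L))$.

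The step I expect to be the main obstacle is the upper-tail control of $\tilde{g}_{i,t}$: these estimators are heavy-tailed, taking values up to $L/\gamma$, so a naive Hoeffding bound is far too weak and one must invoke a Bernstein/Freedman inequality with the conditional-variance proxy computed above. This asymmetry—a light sub-Gaussian lower tail for item $1$ coming from nonnegativity, versus a heavy Bernstein upper tail coming from the $L/\gamma$ range—is exactly what makes the two terms of the bound structurally different. The remaining work is bookkeeping: checking the measurability/martingale hypotheses needed to apply the concentration inequalities, and tracking the gap identity $\barDelta_{1,i,T}=\barDelta_{1,2,T}+\barDelta_{2,i,T}$ through the thresholds.
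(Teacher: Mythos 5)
Your proposal is correct and follows essentially the same route as the paper's proof: the same decomposition of the failure event at the midpoint between the top two cumulative rewards (this is exactly the paper's Lemma~\ref{lemma:bd_err_prob_preliminary}, with thresholds $T\barDelta_{1,2,T}/2$ and $T(\barDelta_{1,2,T}/2+\barDelta_{2,i,T})$), the same martingale $\tilde{G}_{i,T}-G_{i,T}$ with conditional variance proxy of order $LT/\gamma$, and the same asymmetric tail treatment --- a light lower tail for item $1$ (the paper obtains it from the Chung--Lu lower-tail inequality with downward increments bounded by $1$, which is equivalent in effect to your nonnegativity-based Chernoff argument) versus a Freedman-type upper tail with range $L/\gamma$ for each $i\neq 1$, yielding the $3+\barDelta_{1,2,T}/2+\barDelta_{2,i,T}$ correction. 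The one caveat is shared with the paper itself: a strict application of either your Chernoff computation or the cited concentration inequalities produces an extra factor of $2$ in the exponent's denominator (giving $8L$ rather than $4L$ in the first term, and a slightly weaker constant in the final collapse to $L\exp(-\gamma T\barDelta_T^2/(4L))$), so your stated constants hold only at the same level of bookkeeping as the paper's.
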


The key idea among the analysis of Theorem~\ref{thm:bd_bai_expThreeP_zero} is to derive high-probability one-sided bounds on $ \tilde{G}_{i,T}  - G_{i,T}  $ for all $i\in[L]$ with Theorems~\ref{thm:conc_var_ub} and~\ref{thm:conc_var_lb}. 
The detailed proof is postponed to Appendix~\ref{sec:analyze_bd_bai_expThreeP}.
 
Theorems~\ref{thm:rm_bd_expThreeP_beta_zero} and \ref{thm:bd_bai_expThreeP_zero} imply that
by adjusting {\sc Exp3.P$(\gamma,\eta)$} with $\gamma$, we can balance between exploitation and exploration and trade-off
between the twin objectives --- RM and BAI. In detail,
\begin{itemize}
    \item When $\gamma$ increases, the {\sc Exp3.P$(\gamma,\eta)$} algorithm tends to bahave more similarly to {\sc UP-ADV}, which leads to a larger regret and a smaller failure probability. This indicates that a large $\gamma$ encourages exploitation.
    \item When $\gamma$ decreases, the {\sc Exp3.P$(\gamma,\eta)$} algorithm tends to emphasize more on the observation from previous time steps and pull the items with high empirically means, which leads to a smaller regret and a larger failure probability. In other words, a small $\gamma$ encourages exploitation.
\end{itemize} 
 

\subsection{Global performances of adversarial algorithms}
\label{sec:adv_lb_bai_trade_off_rm_bai}
In this section, we first lower bound the failure probability to identify the empirically-optimal item in adversarial bandits.
Next,
given a certain failure probability of an algorithm, we
establish a non-trivial lower bound on its empirical-regret.
The proofs are in Appendix \ref{sec:pf_adv_lb_trade_off}.

We consider bandit instances in which items have
bounded rewards.
Let $\barcalB_1( \underline{\Delta}_{ T} ,\barR  )$ denote the set of instances where 
(i)  the empirically-minimal optimality gap $  \barDelta_{ T}
\ge \underline{\Delta}_{ T} $ in $T$ time steps;
and (ii) there exists $R_0 \in \bbR$ such the rewards are
bounded in $[R_0, R_0 + R]$. 
We focus on $\barcalB_1( \underline{\Delta}_{ T} ,1  )$ for brevity; the analysis can be generalized for any $\barcalB_1( \underline{\Delta}_{ T} ,\barR  )$.

\subsubsection{Lower bound on the BAI failure probability in adversarial bandits}

\begin{restatable}{theorem}{thmAdvBaiLb} 
\label{thm:adv_bai_lb}
Let $0<\underline{\Delta}_{ T}\le 1$.
Then any algorithm $\pi$ satisfies that  
\begin{align*}
	\sup_{ \barcalB_1(  \underline{\Delta}_{ T}, 1) }
	\bare_T(\pi,\calI) 
	\ge 
	\frac{ 1 -   \exp (     - {  3T    }/{ 200 } )  }{4}  \cdot
      \exp\bigg(   -   \frac{  150 T  \underline{\Delta}_{ T}^2 }{  L } \bigg) 
     .
\end{align*}
Furthermore, when $T\ge 10$, 
\begin{align*}
	\sup_{ \barcalB_1(  \underline{\Delta}_{ T},1) }
	\bare_T(\pi,\calI) 
	\ge 
    \frac{ 2}{65 } \exp\bigg(   -   \frac{  150 T  \underline{\Delta}_{ T}^2 }{  L } \bigg) 
     .
\end{align*}
%
%
\end{restatable}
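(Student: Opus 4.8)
The plan is to lower-bound the worst-case failure probability by an average over a family of randomized adversarial instances and then to exploit the fact that the single budget of $T$ pulls must be shared among all $L$ arms, so that the ``typical'' arm is pulled only $T/L$ times. Fix a bias $\epsilon$ of order $\underline{\Delta}_{T}$ (with a small additive margin, chosen at the end), and for each $j\in[L]$ let $\calI_j$ be the instance whose rewards are drawn i.i.d.\ with arm $j$ distributed as $\Ber(1/2+\epsilon)$ and every other arm as $\Ber(1/2)$; let $\calI_0$ denote the symmetric reference in which all arms are $\Ber(1/2)$. Since the supremum over $\barcalB_1(\underline{\Delta}_{T},1)$ dominates the average over any sub-family that lands inside this set, it suffices to lower bound $\tfrac1L\sum_{j\in[L]}\bbP_j(\iout\ne j)$, restricted to the event that the drawn sequence is actually valid.

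First I would isolate this validity event. By a Chernoff bound on the Bernoulli sums $G_{j,T}$ and $G_{i,T}$, under $\calI_j$ the arm $j$ is empirically optimal and the empirical gap obeys $\barDelta_{T}\ge\underline{\Delta}_{T}$ --- so that $\calI_j\in\barcalB_1(\underline{\Delta}_{T},1)$ with target $j$ --- outside an event of probability at most $\exp(-3T/200)$; this margin concentration is what produces the prefactor $1-\exp(-3T/200)$. On the complementary high-probability event, failure of $\pi$ is exactly $\{\iout\ne j\}$, so it remains to bound the probability that $\pi$ fails to output the planted arm.

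The heart of the argument is a change of measure turning ``identifiability of the planted arm'' into ``number of pulls of that arm.'' Writing $\bbP_0,\bbP_j$ for the laws of the observation history together with $\pi$'s internal randomness, the instances $\calI_0$ and $\calI_j$ differ only in arm $j$, so by the divergence-decomposition identity $\rmKL(\bbP_0\,\|\,\bbP_j)=\bbE_0[N_j]\cdot\rmKL(\Ber(1/2)\,\|\,\Ber(1/2+\epsilon))\le c\,\epsilon^2\,\bbE_0[N_j]$, where $N_j$ counts the pulls of arm $j$ and $c$ is an explicit constant. The Bretagnolle--Huber inequality applied to $\{\iout=j\}$ gives $\bbP_0(\iout=j)+\bbP_j(\iout\ne j)\ge\tfrac12\exp(-\rmKL(\bbP_0\,\|\,\bbP_j))$. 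Averaging over $j$, using $\sum_j\bbP_0(\iout=j)=1$ and the budget identity $\sum_j\bbE_0[N_j]=T$, and applying Jensen's inequality to the convex map $x\mapsto\exp(-x)$ at the uniform average $\tfrac1L\sum_j\bbE_0[N_j]=T/L$, yields
\[
\frac1L\sum_{j\in[L]}\bbP_j(\iout\ne j)\ \ge\ \frac12\exp\!\Big(-\,\frac{c\,\epsilon^2\,T}{L}\Big)-\frac1L .
\]
Choosing $\epsilon$ as the appropriate constant multiple of $\underline{\Delta}_{T}$ makes the exponent $150\,T\underline{\Delta}_{T}^2/L$; combining with the validity prefactor and absorbing the residual $1/L$ into the leading constant (this is where $\tfrac12$ degrades to $\tfrac14$ in the nonvacuous regime) gives the claimed bound $\sup_{\calI\in\barcalB_1}\bare_T(\pi,\calI)\ge\tfrac{1-\exp(-3T/200)}{4}\exp(-150\,T\underline{\Delta}_{T}^2/L)$.

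The main obstacle is this final bookkeeping: the averaged change-of-measure must be arranged so that the shared budget $\sum_j\bbE_0[N_j]=T$ produces \emph{exactly} the factor $1/L$ inside the exponent (Jensen on $\exp(-\cdot)$ is essential here, and all expectations must be taken under the common reference $\bbP_0$ for the averaging to be legitimate), while the residual $\tfrac1L$, the Bernoulli--KL constant $c$, and the validity margin must be tuned together so that the specific constants $150$, $\tfrac14$, and $3/200$ emerge. The \emph{furthermore} statement is then immediate: for $T\ge10$ the map $x\mapsto\exp(-3x/200)$ is decreasing, so $\tfrac{1-\exp(-3T/200)}{4}\ge\tfrac{1-\exp(-3/20)}{4}>\tfrac{2}{65}$, which upgrades the prefactor to the stated constant.
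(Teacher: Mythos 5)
Your high-level engine—divergence decomposition, the Bretagnolle–Huber inequality, and the budget observation that the typical arm is pulled only $T/L$ times—is the same as the paper's Step~2, but your choice of instances breaks the proof at exactly the point the paper's construction is designed to protect. The class $\barcalB_1(\underline{\Delta}_{T},1)$ is defined through the \emph{realized} empirical gap, and with $L$ independent $\mathrm{Bern}(1/2)$ arms the empirical maximum of the non-planted arms exceeds $T/2$ by order $\sqrt{T\log L}$, while the planted arm's lead fluctuates on scale $\sqrt{T}$. Hence your validity event (arm $j$ empirically optimal with margin $T\underline{\Delta}_{T}$) has probability $1-\exp(-3T/200)$ only if the bias satisfies $\epsilon \ge \underline{\Delta}_{T} + c$ for an absolute constant $c$ (plus a $\sqrt{\log L/T}$ term), not $\epsilon = O(\underline{\Delta}_{T})$; plugging such an $\epsilon$ into the KL exponent yields $\exp(-c'(\underline{\Delta}_{T}+c)^2 T/L)$, which is strictly weaker than the claimed $\exp(-150\,T\underline{\Delta}_{T}^2/L)$ whenever $\underline{\Delta}_{T}$ is small—precisely the interesting regime, where your conditioning event in fact fails with probability close to one. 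The paper avoids this by \emph{correlating} the arms: all rewards are clips of a single shared Gaussian sequence $Z_t$ (arm $\ell$ gets $\mathrm{clip}_{[0,1]}(Z_t+\varepsilon)$, arm $1$ gets $\mathrm{clip}_{[0,1]}(Z_t)$, the rest $\mathrm{clip}_{[0,1]}(Z_t-\varepsilon)$), so every pairwise empirical gap is a sum of nonnegative terms that equal $\varepsilon$ whenever $Z_t\in[\varepsilon,1-\varepsilon]$. A multiplicative Chernoff bound then gives $\barDelta_{T}\ge b\varepsilon\, p(\varepsilon,\sigma)$ except with probability $\exp(-(1-b)^2Tp/3)$, \emph{independent of $L$ and of the size of $\varepsilon$}; with $b=7/10$, $\sigma=1/3$, $p\ge 1/2$ this is exactly $\exp(-3T/200)$, and the substitution $\varepsilon\le\barDelta_{T}/(bp)$ turns the KL exponent into $2/(b^2\sigma^2p^2)\le 150$ times $T\barDelta_{T}^2/L$. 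Shared noise, not independent per-arm sampling, is the missing idea.

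A second, separate gap is your averaging residual. Your symmetric null $\bbP_0$ is not a member of the family (under it there is no unique empirically optimal arm with gap $\ge\underline{\Delta}_{T}$), so after averaging Bretagnolle–Huber over $j$ you are left with $\tfrac12\exp(-c\epsilon^2T/L)-\tfrac1L$, which is vacuous—and your degradation of $\tfrac12$ to $\tfrac14$ invalid—whenever $\exp(-150\,T\underline{\Delta}_{T}^2/L)\le 4/L$, a regime the theorem still covers. The paper sidesteps this entirely: it takes instance $1$ itself as the reference measure (it belongs to the family, so $\Pr_{O_1}(\iout\ne 1)$ is itself a failure probability), uses the pigeonhole principle to select a single $\ell$ with $\bbE_{\bbP_1}[N_{\ell}(T)]\le T/L$ rather than Jensen over all $j$, and concludes $\max_{\ell}\Pr_{O_\ell}(\iout\ne\ell)\ge\tfrac14\exp(-2T\varepsilon^2/(\sigma^2 L))$ with no $-1/L$ correction. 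Your \emph{furthermore} step ($1-\exp(-3T/200)\ge 8/65$ for $T\ge 10$, hence the prefactor $2/65$) is correct and matches the paper.
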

We construct $L$ instances with clipped Gaussian distributions, which are similar to those designed for the analysis of lower bound on regret in \citet{gerchinovitz2016refined}.

Besides,  
the gap between our lower bound in Theorem~\ref{thm:adv_bai_lb} and the upper bounds of {\sc UP-ADV/Exp3.P$(1,\eta)$} in \eqref{eq:bai_bd_up_adv} and Theorem~\ref{thm:bd_bai_expThreeP_zero} is manifested by the
(pre-exponential) term $L$ as well as the constant in the exponential term.
This indicates that {\sc UP-ADV/Exp3.P$(1,\eta)$} perform near-optimally for BAI
and our lower bound in Theorem~\ref{thm:adv_bai_lb} is almost tight. 


\subsubsection{Trade-off between RM and BAI in adversarial bandits}

\begin{restatable}{theorem}{thmAdvRmBaiGauss}
\label{thm:adv_rm_bai_gauss} 
Let $0<\underline{\Delta}_{ T}\le 1$ and $T\ge 10$.
Let $\pi$ be any algorithm with $\bare_T (\pi,\calI) \le 2\exp(-\psi_T )/65$ for all $\calI \in  \barcalB_1( \underline{\Delta}_{ T},1   ) $.
Then
\begin{align*}
    \sup_{ \calI \in \barcalB_1(\underline{\Delta}_{ T}, 1) }
    \bbE \barR_T (\pi,\calI) \ge \psi_T \cdot \frac{  L-1 }{  103 \underline{\Delta}_{ T} }.
\end{align*}
%
%

%
\end{restatable}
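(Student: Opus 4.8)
The plan is to adapt the change-of-measure argument behind Theorem~\ref{thm:adv_bai_lb} so that it simultaneously forces a large empirical regret, following the same template as the stochastic trade-off results (Theorems~\ref{thm:sto_rm_bai_bern} and~\ref{thm:sto_rm_bai_gauss}). I would reuse the clipped-Gaussian construction of that theorem: fix a base reward law $\calI_0$ in which arm~$1$ has the largest mean and each suboptimal arm $i\ge 2$ trails it by a mean gap of order $\underline{\Delta}_{T}$, and for every $i\ge 2$ introduce an alternative law $\calI_i$ obtained from $\calI_0$ by raising the mean of arm~$i$ alone by order $\underline{\Delta}_{T}$, so that arm~$i$ becomes the empirically-optimal arm. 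The means and the clipping level are chosen exactly as in Theorem~\ref{thm:adv_bai_lb}, so the rewards stay in an interval of length $1$ and, under each law, the realized reward sequence lands in $\barcalB_1(\underline{\Delta}_{T},1)$ with probability bounded away from $0$; it is convenient here that both $\bari^{*}_{T}$ and $\barDelta_{T}$ are functions of the reward sequence alone and are independent of $\pi$.

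Writing $\bbP_0$ and $\bbP_i$ for the laws of the observation--action history under $\calI_0$ and $\calI_i$, these differ only through the reward distribution of arm~$i$, so the divergence decomposition gives
\begin{align*}
\rmKL(\bbP_0 \,\|\, \bbP_i) = \bbE_0[N_{i,T}]\cdot d_i,
\end{align*}
where $N_{i,T}$ is the number of pulls of arm~$i$ and $d_i=\Theta(\underline{\Delta}_{T}^2)$ is the per-sample KL divergence between the two clipped Gaussians. On the event $\{\bari^{*}_{T}=1\}$ any recommendation $\iout=i$ with $i\ge 2$ is a BAI failure on $\calI_0$, so $\bbP_0(\iout=i)$ is controlled by $\bare_T(\pi,\calI_0)$, while $\bbP_i(\iout\ne i)=\bare_T(\pi,\calI_i)$; applying the Bretagnolle--Huber inequality to $\{\iout=i\}$ gives
\begin{align*}
\tfrac{4}{65}\exp(-\psi_T) \ge \bbP_0(\iout=i)+\bbP_i(\iout\ne i) \ge \tfrac12\exp\!\big(-\rmKL(\bbP_0\,\|\,\bbP_i)\big).
\end{align*}
Rearranging yields $\rmKL(\bbP_0\,\|\,\bbP_i)\ge\psi_T$ (up to an additive constant), hence $\bbE_0[N_{i,T}]\ge\psi_T/d_i$ for every $i\ge 2$.

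It remains to convert pulls into empirical regret on $\calI_0$. On $\{\bari^{*}_{T}=1\}$ one has $\barR_T=\sum_{i\ge 2}\sum_{t:\,i_t=i}(g_{1,t}-g_{i,t})$, and since the time-$t$ rewards are independent of the $\calF_{t-1}$-measurable event $\{i_t=i\}$, conditioning on this good event lower-bounds the conditional expectation of $\barR_T$ by a constant multiple of $\underline{\Delta}_{T}\sum_{i\ge 2}\bbE_0[N_{i,T}]\ge (L-1)\psi_T/(c\underline{\Delta}_{T})$, of the claimed order $(L-1)\psi_T/(103\underline{\Delta}_{T})$ once the constants in $d_i$ and in the per-pull cost are tracked; as every realization in the good event lies in $\barcalB_1(\underline{\Delta}_{T},1)$, the supremum over that set is at least this conditional expectation. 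The main obstacle is the empirical (rather than mean) nature of $\barR_T$, $\bari^{*}_{T}$ and $\barDelta_{T}$: I must choose the mean gaps and clipping scale so that $\{\bari^{*}_{T}=1,\ \barDelta_{T}\ge\underline{\Delta}_{T}\}$ holds with sufficient probability for all $\underline{\Delta}_{T}\in(0,1]$ and $T\ge10$ (the cumulative gaps of order $T\underline{\Delta}_{T}$ must beat the $O(\sqrt{T})$ sampling fluctuations), carry out the change of measure consistently with this conditioning, and guarantee that each suboptimal pull still costs $\underline{\Delta}_{T}$ up to lower-order terms; reconciling this concentration requirement with $d_i=\Theta(\underline{\Delta}_{T}^2)$ is what produces the explicit constant $103$, whereas the divergence-decomposition and Bretagnolle--Huber steps are routine.
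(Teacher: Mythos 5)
Your proposal is correct and follows essentially the same route as the paper's proof: the same common-noise clipped-Gaussian ensemble from Theorem~\ref{thm:adv_bai_lb}, the same tools (Lemma~\ref{lemma:kl_to_event} for the two-point bound, Lemma~\ref{lemma:kl_decomp} for the divergence decomposition, Lemma~\ref{lemma:kl_gauss_clip} giving per-sample KL at most $(2\varepsilon)^2/(2\sigma^2)$), the same per-pull cost $\varepsilon$ on the event $\{\varepsilon \le Z_t \le 1-\varepsilon\}$, and the same Chernoff step ensuring $\barDelta_{T}\ge b\varepsilon\, p(\varepsilon,\sigma)$ with probability at least $1-\exp(-3T/200)\ge 8/65$, with the identical parameter choices $\varepsilon=1/10$, $\sigma=1/3$, $b=7/10$ yielding the constants $103$ and $2/65$. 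The only deviation is organizational rather than substantive: you apply Bretagnolle--Huber to each alternative arm and sum the resulting pull lower bounds into a regret bound (the direct form), whereas the paper argues contrapositively, bounding $\sum_{\ell\ge 2}\bbE_{\bbP_1}[N_\ell(T)]$ by the regret and pigeonholing a single rarely-pulled arm to force a large failure probability --- an equivalent computation, and the conditioning issue you flag (the per-instance error guarantee only holds on realizations lying in $\barcalB_1(\underline{\Delta}_{T},1)$) is exactly what the paper handles by intersecting with the gap event before invoking the assumption.
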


Theorem~\ref{thm:adv_rm_bai_gauss} implies that, as shown for the stochastic bandits (see Theorems~\ref{thm:sto_rm_bai_bern} and \ref{thm:sto_rm_bai_gauss}), 
we cannot achieve optimal performances for both
RM and BAI using any algorithm with fixed parameters in adversarial bandits.
Besides, Theorems~\ref{thm:bd_bai_expThreeP_zero} and \ref{thm:adv_rm_bai_gauss} indicates that
\begin{align*}
    \sup_{ \calI \in \barcalB_1(\underline{\Delta}_{ T}, 1) }
    \bbE \barR_T ( \textsc{Exp3.P$(\gamma,\eta)$} ,\calI) \ge 
    \bigg(  \log \bigg( \frac{ 2 }{65}  \bigg) +   \frac{  \gamma T\underline{\Delta}_T^2  }{ 4L   }  \bigg)
    \cdot \frac{  L-1 }{  103 \underline{\Delta}_{ T} }
    = \Omega (  \gamma T\barDelta_T   )
    .
\end{align*}
However, since the upper bound on the regret of \textsc{Exp3.P$(\gamma,\eta)$} in  Theorem~\ref{thm:rm_bd_expThreeP_beta_zero} is problem-independent, we cannot ascertain if the algorithm achieves the Pareto optimality, which may serve as an interesting direction for future work.
Lastly, we summarize some theoretical findings of the adversarial bandits in Table~\ref{tab:comp_result_adv}.
 \begin{table}[ht]

    \caption{Comparison among upper bounds for algorithms and lower bounds in adversarial bandits.
        }
    \label{tab:comp_result_adv}
    \vspace{.5em}
    \centering
    \renewcommand{\arraystretch}{1.5}
    \begin{tabular}{ l      l  l}
        Algorithm/Instance &   Expected empirical-regret $\bbE \barR_T$ & Failure Probability $\bare_T$ \\
        \thickhline
        \multirow{2}{*}{\sc UP-ADV} &  $ {\Theta} ( T)$ & $ \displaystyle  L \exp \bigg(  - \frac{ 3T  \barDelta_T^2 }{ 28 L  } \bigg)  \vphantom{\Bigg(}$ 
        \\  
        & &  \citep{abbasi2018best}
        \\
        \hline  
        \multirow{2}{*}{\sc Exp3.P$(\gamma,\eta)$} &  $ \displaystyle \gamma T + \eta L T +  \ln  \bigg(   \frac{    L^2 T }{  \eta   } \bigg) +  \frac{ \ln L }{ \eta} + 1$  \hphantom{aaa} & 
        $ \displaystyle L \exp \bigg(  - \frac{  \gamma T\barDelta_T^2  }{ 4L   }  \bigg) \vphantom{\Bigg(}$  
        \\
        & (Theorem~\ref{thm:rm_bd_expThreeP_beta_zero} )
        &
        (Theorem~\ref{thm:bd_bai_expThreeP_zero}) 
        \\ 
        \hhline{===}
            Adversarial Bandits
          &    
            &
            $ \displaystyle  \frac{2}{65} \exp \bigg(  - \frac{ 150T  \barDelta_T^2 }{   L  } \bigg)  \vphantom{\Bigg(}$    
          \\ 
          &  
          & (Lower Bound, Theorem~\ref{thm:adv_bai_lb})
        \\
        \hline
            Adversarial Bandits
          &    
          $\displaystyle  \psi_T \cdot \frac{  L-1 }{  103 \underline{\Delta}_{ T} }$
            &
            $ \displaystyle  \frac{2}{65} \exp(-\psi_T  )  \vphantom{\Bigg(}$    
          \\ 
          &  (Lower Bound, Theorem~\ref{thm:adv_rm_bai_gauss} )
          & (Theorem~\ref{thm:adv_rm_bai_gauss})
    \end{tabular}
    \renewcommand{\arraystretch}{1}
\end{table}

\section{Useful facts}
\label{sec:useful_fact}

\subsection{Concentration}


\begin{theorem}[Non-asymptotic law of the iterated logarithm; \citet{jamieson2014lil}, Lemma 3]\label{thm:conc_log}
	Let $X_{1}, X_{2}, \ldots$ be $i . i . d .$ zero-mean sub-Gaussian random variables with scale $\sigma>0 ;$ i.e.
$\mathbb{E} [ \mathrm{e}^{\lambda X_{i}}] \leq \exp( {\lambda^{2} \sigma^{2}}/{2}) .$ 
For all  $\varepsilon\in (0,1)$ and  $\gamma \in (0, \log( 1+\varepsilon)/e ) $, we have
$$\Pr \bigg(\forall \tau \geq 1,  \frac{1}{\tau} \sum_{s=1}^{\tau} X_{s}  \leq 
 \sigma  (1+ \sqrt{\varepsilon} ) \sqrt{ \frac{ 2 (1+\varepsilon)}{\tau} \cdot \log  \bigg(  \frac{  \log( (1+\varepsilon) \tau) }{\gamma} \bigg)  } ~ \bigg) 
\geq 
1- \frac{2+\varepsilon}{ \varepsilon }  \bigg(  \frac{ \gamma }{ \log(1+\varepsilon) }  \bigg)^{1+\varepsilon} .$$
\end{theorem}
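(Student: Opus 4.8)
The plan is to bound the complementary event: the sub-Gaussian random walk $S_\tau := \sum_{s=1}^\tau X_s$ \emph{ever} exceeding the boundary $\phi(\tau) := \sigma(1+\sqrt{\varepsilon})\sqrt{2(1+\varepsilon)\,\tau\,\log(\log((1+\varepsilon)\tau)/\gamma)}$. First I would observe that the hypothesis $\gamma < \log(1+\varepsilon)/e$ guarantees $\log(\log((1+\varepsilon)\tau)/\gamma) > 1 > 0$ for every $\tau \ge 1$ (at $\tau=1$ the inner ratio exceeds $e$), so $\phi$ is real, positive, and—since $\tau \log(\log((1+\varepsilon)\tau)/\gamma)$ is a product of positive increasing functions—nondecreasing. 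The target is thus $\Pr(\exists\,\tau\ge 1 : S_\tau > \phi(\tau)) \le \frac{2+\varepsilon}{\varepsilon}\big(\gamma/\log(1+\varepsilon)\big)^{1+\varepsilon}$.

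The core tools are a maximal inequality for sub-Gaussian walks together with a geometric \emph{peeling} of the time axis. For the maximal inequality I would use that $M_\tau^\lambda := \exp(\lambda S_\tau - \lambda^2\sigma^2\tau/2)$ is a nonnegative supermartingale with $\bbE[M_0^\lambda]=1$; Ville's (or Doob's) inequality then yields, for every integer $m$ and $x>0$, the bound $\Pr(\max_{1\le\tau\le m} S_\tau \ge x) \le \exp(-x^2/(2\sigma^2 m))$ after optimizing $\lambda = x/(\sigma^2 m)$. Next I would slice the horizon into epochs $D_k := \{\tau : (1+\varepsilon)^k \le \tau < (1+\varepsilon)^{k+1}\}$, $k\ge 0$. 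On $D_k$, monotonicity of $\phi$ gives $\phi(\tau)\ge \phi((1+\varepsilon)^k)$, so $\{\exists\,\tau\in D_k : S_\tau > \phi(\tau)\} \subseteq \{\max_{\tau\le (1+\varepsilon)^{k+1}} S_\tau \ge \phi((1+\varepsilon)^k)\}$, to which the maximal inequality applies with $m=(1+\varepsilon)^{k+1}$.

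The per-epoch computation is the crux. Plugging $x=\phi((1+\varepsilon)^k)$ and $m=(1+\varepsilon)^{k+1}$ into the exponent $x^2/(2\sigma^2 m)$, the factor $2(1+\varepsilon)$ in $\phi^2$ cancels against $m/(1+\varepsilon)^k = 1+\varepsilon$, and $\log((1+\varepsilon)^{k+1}) = (k+1)\log(1+\varepsilon)$, leaving the clean bound $\Pr(\exists\,\tau\in D_k : S_\tau > \phi(\tau)) \le \big(\gamma/((k+1)\log(1+\varepsilon))\big)^{(1+\sqrt{\varepsilon})^2}$. A union bound over $k\ge 0$, re-indexed by $j=k+1\ge 1$, gives $\big(\gamma/\log(1+\varepsilon)\big)^{(1+\sqrt{\varepsilon})^2}\sum_{j\ge 1} j^{-(1+\sqrt{\varepsilon})^2}$. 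Since $\gamma/\log(1+\varepsilon)\in(0,1)$ and $(1+\sqrt{\varepsilon})^2 = 1+2\sqrt{\varepsilon}+\varepsilon > 1+\varepsilon$, I may relax the exponent on the $\gamma$-factor down to $1+\varepsilon$ (this only enlarges the upper bound), and then bound the remaining zeta-type sum by the integral test: $\zeta((1+\sqrt{\varepsilon})^2) \le 1 + \frac{1}{(1+\sqrt{\varepsilon})^2-1} = 1 + \frac{1}{2\sqrt{\varepsilon}+\varepsilon} \le \frac{2+\varepsilon}{\varepsilon}$, the last step amounting to $\varepsilon \le 4\sqrt{\varepsilon}+2\varepsilon$.

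The main obstacle I anticipate is the bookkeeping that makes the constants align: the mismatch between evaluating the boundary at the left endpoint of each epoch while controlling the variance proxy $m$ at the right endpoint is exactly what forces the $(1+\sqrt{\varepsilon})$ inflation of $\phi$, and one must verify both that the induced exponent $(1+\sqrt{\varepsilon})^2$ still exceeds $1+\varepsilon$ (so the relaxation is legal, which is where $\gamma<\log(1+\varepsilon)/e$ enters via $\gamma/\log(1+\varepsilon)<1$) and that the zeta bound remains below $(2+\varepsilon)/\varepsilon$. A secondary technical point is confirming the monotonicity of $\phi$ so that a single threshold per epoch suffices; everything else is routine once the peeling scheme and the supermartingale maximal inequality are in place.
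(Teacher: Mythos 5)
Your proof is correct, and since the paper states this result without proof --- importing it verbatim from \citet{jamieson2014lil} (Lemma 3) --- your argument of geometric peeling over epochs $[(1+\varepsilon)^k,(1+\varepsilon)^{k+1})$, the exponential-supermartingale maximal inequality optimized at $\lambda=x/(\sigma^2 m)$, the per-epoch bound $\big(\gamma/((k+1)\log(1+\varepsilon))\big)^{(1+\sqrt{\varepsilon})^2}$, and the exponent relaxation plus zeta-sum estimate yielding $\tfrac{2+\varepsilon}{\varepsilon}\big(\gamma/\log(1+\varepsilon)\big)^{1+\varepsilon}$ is exactly the standard proof from that cited source. The delicate points all check out: $\gamma<\log(1+\varepsilon)/e$ makes the boundary positive and monotone and gives $\gamma/\log(1+\varepsilon)<1$ so lowering the exponent from $(1+\sqrt{\varepsilon})^2$ to $1+\varepsilon$ is legal, and the final constant follows from $\varepsilon\le 4\sqrt{\varepsilon}+2\varepsilon$.
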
  

\begin{restatable}[\citet{chung2006concentration}, Theorem 20]{theorem}{thmConcVarUb}
\label{thm:conc_var_ub}

Let $X_1, \cdots ,X_n$ be a martingale adapted to filtration $\calF = (\calF_i)_i$ satisfying
\begin{enumerate}
	\item $\var( X_i | \calF_{i-1} ) \le \sigma_i^2$, for $1\le i\le n$;
	\item $X_{i} - X_{i-1} \le a_i+M $, for $1\le i \le n$.
\end{enumerate}
Then we have
\begin{align*}
	\Pr\left( X_n - \bbE X_n \ge  \lambda \right) \le 
	\exp\left(
		-\frac{\lambda^2}{ 2 [ \sum_{i=1}^n (\sigma_i^2  + a_i^2 ) + M\lambda/3  ]  }
	\right).
\end{align*}
\end{restatable}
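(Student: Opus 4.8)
The plan is to run the standard exponential-moment (Chernoff) argument for martingales. Set $Y_i := X_i - X_{i-1}$, so that $Y_i$ are the martingale differences with $\mathbb{E}[Y_i \mid \mathcal{F}_{i-1}] = 0$ and $\mathbb{E}[Y_i^2 \mid \mathcal{F}_{i-1}] = \var(X_i \mid \mathcal{F}_{i-1}) \le \sigma_i^2$. For a parameter $t \in (0, 3/M)$ to be optimized at the end, the idea is to control $\mathbb{E}[e^{t(X_n - \mathbb{E}X_n)}]$ by telescoping conditional moment generating functions and then apply Markov's inequality.

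The crux is the per-step conditional MGF bound
\[
\mathbb{E}\bigl[e^{tY_i}\,\big|\,\mathcal{F}_{i-1}\bigr] \le \exp\!\left(\frac{t^2(\sigma_i^2 + a_i^2)}{2\,(1 - tM/3)}\right), \qquad 0 < t < 3/M .
\]
To obtain it I would start from the expansion $e^x = 1 + x + \tfrac{x^2}{2}h(x)$ with $h(x) = \tfrac{2(e^x - 1 - x)}{x^2} = \sum_{k\ge 0}\tfrac{2\,x^k}{(k+2)!}$, and use that $h$ is increasing on $\mathbb{R}$ together with the factorial bound $(k+2)!/2 \ge 3^k$ (equality at $k=0,1$, strict for $k\ge 2$), which gives $h(x) \le (1 - x/3)^{-1}$ for $0 < x < 3$. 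Applying this with $x = tY_i \le t(a_i + M)$ and taking conditional expectations, the linear term vanishes because $\mathbb{E}[Y_i \mid \mathcal{F}_{i-1}] = 0$, and the higher-order remainder is controlled by the conditional second moment; passing to the exponential form via $1 + u \le e^u$ then yields the displayed estimate. Multiplying the per-step bounds through the tower property gives
\[
\mathbb{E}\bigl[e^{t(X_n - \mathbb{E}X_n)}\bigr] \le \exp\!\left(\frac{t^2 V}{2\,(1 - tM/3)}\right), \qquad V := \sum_{i=1}^n (\sigma_i^2 + a_i^2).
\]

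Markov's inequality then gives $\Pr(X_n - \mathbb{E}X_n \ge \lambda) \le \exp\!\bigl(-t\lambda + \tfrac{t^2 V}{2(1 - tM/3)}\bigr)$ for every $t \in (0,3/M)$. Optimizing the exponent with the choice $t = \lambda/(V + M\lambda/3)$ — which lies in $(0, 3/M)$ since $3V > 0$, and for which $1 - tM/3 = V/(V + M\lambda/3)$ — collapses the exponent exactly to $-\tfrac{\lambda^2}{2(V + M\lambda/3)}$, which is the claimed bound.

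The main obstacle is the per-increment MGF estimate, and specifically the $a_i^2$ term: the clean monotonicity argument for $h$ applied with range $t(a_i+M)$ directly yields variance proxy $\sigma_i^2$ but with the larger range $a_i + M$, whereas the theorem trades part of the range for variance, reporting proxy $\sigma_i^2 + a_i^2$ against the smaller uniform range $M$. Carrying out this trade-off rigorously (for instance by truncating $Y_i$ at level $M$ and bounding the overshoot $Y_i - M \le a_i$ separately, so that its contribution is absorbed as an additional $a_i^2$ in the second-moment budget) is the delicate computation; everything else is the routine Chernoff/telescoping/optimization pipeline. Since the statement is quoted verbatim from \citet{chung2006concentration}, one may alternatively invoke it directly rather than reproduce this derivation.
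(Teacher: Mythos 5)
The paper never proves this statement: it is imported verbatim from \citet{chung2006concentration} (their Theorem~20) and used as a black box, so your closing option of simply invoking the citation coincides exactly with the paper's own treatment. Your surrounding pipeline is also sound: the telescoping via the tower property, Markov's inequality, and the choice $t=\lambda/(V+M\lambda/3)$ --- which indeed lies in $(0,3/M)$, makes $1-tM/3=V/(V+M\lambda/3)$, and collapses the exponent exactly to $-\lambda^2/\bigl(2(V+M\lambda/3)\bigr)$ --- all check out, as does the estimate $h(x)\le(1-x/3)^{-1}$ on $[0,3)$ via $(k+2)!\ge 2\cdot 3^k$.

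The genuine gap is the per-step MGF bound, and the repair you hint at would fail. Truncating $Y_i$ at $M$ and handling the overshoot multiplicatively, i.e.\ $e^{tY_i}\le e^{t\min\{Y_i,M\}}\,e^{ta_i}$, injects a term $ta_i$ that is \emph{linear} in $t$ into the exponent; summed over $i$ this leaves $t\sum_i a_i$, which no choice of $t$ can fold into the quadratic budget $t^2\sum_i a_i^2$, so the final bound degrades and the claimed inequality is not recovered. The correct trade of range for variance is a \emph{recentering}, not a truncation: set $Z_i:=Y_i-a_i$, so that $Z_i\le M$, $\mathbb{E}[Z_i\mid\mathcal{F}_{i-1}]=-a_i$, and $\mathbb{E}[Z_i^2\mid\mathcal{F}_{i-1}]=\mathrm{Var}(Y_i\mid\mathcal{F}_{i-1})+a_i^2\le\sigma_i^2+a_i^2$. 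Your own expansion $e^x=1+x+\tfrac{x^2}{2}h(x)$, with $h$ increasing on all of $\mathbb{R}$ (needed here, since $tZ_i$ can be negative) and $h(x)\le(1-x/3)^{-1}$, then gives for $0<t<3/M$
\begin{align*}
\mathbb{E}\bigl[e^{tZ_i}\,\big|\,\mathcal{F}_{i-1}\bigr]
\;\le\; 1-ta_i+\frac{t^2(\sigma_i^2+a_i^2)}{2(1-tM/3)}
\;\le\; \exp\Bigl(-ta_i+\frac{t^2(\sigma_i^2+a_i^2)}{2(1-tM/3)}\Bigr),
\end{align*}
and multiplying back by $e^{ta_i}$ cancels the linear term exactly, yielding precisely the per-step bound you displayed, $\mathbb{E}[e^{tY_i}\mid\mathcal{F}_{i-1}]\le\exp\bigl(t^2(\sigma_i^2+a_i^2)/(2(1-tM/3))\bigr)$. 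With this lemma supplied, the rest of your argument is correct and completes the proof; without it (or the citation), the proposal as written does not close.
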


\begin{restatable}[\citet{chung2006concentration}, Theorem 22]{theorem}{thmConcVarLb}
\label{thm:conc_var_lb}

Let $X_1, \cdots ,X_n$ be a martingale adapted to filtration $\calF = (\calF_i)_i$ satisfying
\begin{enumerate}
	\item $\var( X_i | \calF_{i-1} ) \le \sigma_i^2$, for $1\le i\le n$;
	\item $X_{i-1} - X_{i} \le a_i+M$, for $1\le i \le n$.
\end{enumerate}
Then we have
\begin{align*}
	\Pr\left( X_n - \bbE X_n \le -\lambda \right) \le 
	\exp\left(
		-\frac{\lambda^2}{ 2 [ \sum_{i=1}^n (\sigma_i^2 + a_i^2) + M\lambda/3  ]  }
	\right).
\end{align*}
\end{restatable}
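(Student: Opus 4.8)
The plan is to obtain Theorem~\ref{thm:conc_var_lb} directly from its upper-tail companion, Theorem~\ref{thm:conc_var_ub}, by a negation (symmetrization) argument, so that no fresh moment-generating-function computation is required. The key observation is that the right-hand side of the two bounds is identical, so it suffices to exhibit a martingale to which Theorem~\ref{thm:conc_var_ub} applies and whose upper deviation is precisely the lower deviation of $X_n$.

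First I would set $Y_i := -X_i$ for $1 \le i \le n$. Since $(X_i)$ is a martingale adapted to $\calF = (\calF_i)_i$, the sequence $(Y_i)$ is also a martingale adapted to the same filtration, because $\bbE[Y_i \mid \calF_{i-1}] = -\bbE[X_i \mid \calF_{i-1}] = -X_{i-1} = Y_{i-1}$. Next I would check that $(Y_i)$ inherits the two hypotheses of Theorem~\ref{thm:conc_var_ub} with the \emph{same} constants $\sigma_i$, $a_i$, and $M$. For the conditional variance, $\var(Y_i \mid \calF_{i-1}) = \var(X_i \mid \calF_{i-1}) \le \sigma_i^2$, since variance is invariant under negation. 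For the increment condition, hypothesis~2 of the present theorem reads $X_{i-1} - X_i \le a_i + M$, which is exactly $Y_i - Y_{i-1} \le a_i + M$, i.e.\ the one-sided increment control required by Theorem~\ref{thm:conc_var_ub}.

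I would then apply Theorem~\ref{thm:conc_var_ub} to $(Y_i)$ at level $\lambda$ to get
\[
\Pr\!\left( Y_n - \bbE Y_n \ge \lambda \right) \le \exp\!\left( -\frac{\lambda^2}{ 2 \left[ \sum_{i=1}^n (\sigma_i^2 + a_i^2) + M\lambda/3 \right] } \right),
\]
and finally observe that $Y_n - \bbE Y_n = -(X_n - \bbE X_n)$, so the event $\{Y_n - \bbE Y_n \ge \lambda\}$ coincides with $\{X_n - \bbE X_n \le -\lambda\}$. Substituting this identity into the displayed inequality yields the claimed lower-tail bound verbatim.

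There is essentially no obstacle here; the only point to track carefully is that the asymmetric increment hypothesis flips correctly under negation (Theorem~\ref{thm:conc_var_ub} bounds $X_i - X_{i-1}$ from above, whereas the lower-tail statement bounds $X_{i-1} - X_i$ from above), and that the right-hand sides genuinely match. If one did not wish to invoke Theorem~\ref{thm:conc_var_ub} as a black box, the alternative would be the standard Bernstein route: bound $\bbE[\exp(-s(X_i - X_{i-1})) \mid \calF_{i-1}]$ using the conditional variance together with the one-sided increment control, assemble these into an exponential supermartingale, and then apply a Chernoff bound and optimize over $s > 0$. The negation argument, however, makes this self-contained derivation unnecessary.
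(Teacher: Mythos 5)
Your proof is correct. The paper does not actually prove this statement --- it imports it verbatim as Theorem 22 of \citet{chung2006concentration} --- and your negation argument (apply Theorem~\ref{thm:conc_var_ub} to the martingale $Y_i = -X_i$, noting that the conditional variance is unchanged, that the hypothesis $X_{i-1}-X_i \le a_i + M$ becomes exactly the upper-increment condition $Y_i - Y_{i-1} \le a_i + M$, and that $\{Y_n - \bbE Y_n \ge \lambda\} = \{X_n - \bbE X_n \le -\lambda\}$) is precisely the standard deduction by which the lower-tail bound follows from the upper-tail Theorem 20 in that reference, so your attempt matches the intended derivation.
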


\begin{restatable}[\citet{AbramowitzS64}, Formula 7.1.13; \citet{AgrawalG13b}, Lemma 6; \citet{agrawal2017near}, Fact 4]{theorem}{thmConcGaussSingle}
\label{thm:conc_gauss_single}
    Let $Z\sim {\cal N}(\mu, \sigma^2)$. The following inequalities hold:
    \begin{align*} 
        \frac{1}{2\sqrt{\pi}}\exp\bigg(-\frac{7 z^2}{ 2}\bigg) &  {\leq } \Pr_Z (  | Z - \mu | > z\sigma )  {\leq}  \exp\bigg(-\frac{z^2}{2}\bigg) &  \forall  z > 0
        , \\
        \frac{1}{2\sqrt{\pi}z}\exp\bigg(-\frac{z^2}{ 2}\bigg) &  {\leq} \Pr_Z ( | Z - \mu | > z\sigma )   {\leq}  \frac{ \sqrt{2} }{ \sqrt{\pi} z }\exp\bigg(-\frac{z^2}{2}\bigg) &  \forall z\ge 1 .
    \end{align*} 

\end{restatable}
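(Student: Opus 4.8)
The plan is to reduce everything to the standard normal and then dispatch each of the four one-sided estimates with elementary calculus. Writing $W=(Z-\mu)/\sigma\sim\mathcal N(0,1)$, symmetry gives $\Pr_Z(|Z-\mu|>z\sigma)=\Pr(|W|>z)=2Q(z)$, where I set $Q(z):=\int_z^\infty\phi(t)\,dt$ and $\phi(t):=\tfrac{1}{\sqrt{2\pi}}e^{-t^2/2}$. Each inequality in the statement is then equivalent to a bound on $Q(z)$, so the whole theorem reduces to the four estimates on $Q$ described below.

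For the two upper bounds I would use the Mills-ratio trick together with one monotonicity argument. The Mills upper bound $Q(z)\le \phi(z)/z=\frac{1}{\sqrt{2\pi}\,z}e^{-z^2/2}$ follows at once from $\int_z^\infty\phi(t)\,dt\le\int_z^\infty(t/z)\phi(t)\,dt$, using $t/z\ge1$ on $[z,\infty)$; doubling gives the second-line upper bound $2Q(z)\le\frac{\sqrt2}{\sqrt\pi\,z}e^{-z^2/2}$. For the first-line upper bound $2Q(z)\le e^{-z^2/2}$ I would set $f(z):=\tfrac12 e^{-z^2/2}-Q(z)$, observe $f(0)=0=\lim_{z\to\infty}f(z)$, and compute $f'(z)=e^{-z^2/2}\big(\tfrac{1}{\sqrt{2\pi}}-\tfrac z2\big)$, so $f$ increases then decreases on $[0,\infty)$; an interior maximum with vanishing endpoints forces $f\ge0$, i.e.\ $Q(z)\le\tfrac12 e^{-z^2/2}$.

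Both lower bounds rest on the Mills lower bound $Q(z)\ge\frac{z}{1+z^2}\,\phi(z)$. I would prove this by setting $h(z):=(1+z^2)Q(z)-z\phi(z)$, using $\phi'(t)=-t\phi(t)$ to get $h'(z)=2\big(z\,Q(z)-\phi(z)\big)\le0$ (the last step invoking the Mills upper bound just established), together with $\lim_{z\to\infty}h(z)=0$, whence $h\ge0$. For $z\ge1$ we have $1+z^2\le 2z^2$, so $\frac{z}{1+z^2}\ge\frac{1}{2z}$ and hence $Q(z)\ge\frac{1}{2\sqrt{2\pi}\,z}e^{-z^2/2}\ge\frac{1}{4\sqrt\pi\,z}e^{-z^2/2}$; doubling gives the second-line lower bound.

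The first-line lower bound $2Q(z)\ge\frac{1}{2\sqrt\pi}e^{-7z^2/2}$, equivalently $Q(z)\ge\frac{1}{4\sqrt\pi}e^{-7z^2/2}$, is the one delicate point, and I would handle it by splitting on $z$. For $z\ge1$ the displayed Mills bound gives $Q(z)\ge\frac{1}{4\sqrt\pi\,z}e^{-z^2/2}$, and since $e^{-z^2/2}=e^{3z^2}e^{-7z^2/2}$ with $e^{3z^2}\ge z$ for all $z\ge1$, the claim follows. For $0<z<1$ I would instead use that $Q$ is decreasing and $e^{-7z^2/2}\le1$, reducing the inequality to the single numerical check $Q(1)\ge\frac{1}{4\sqrt\pi}$ (indeed $Q(1)\approx0.1587>0.1410\approx\frac{1}{4\sqrt\pi}$). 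The \emph{main obstacle} is exactly matching the constant $7$ across the two regimes: the factor must be large enough that the crude $z<1$ estimate through $Q(1)$ still dominates $\frac{1}{4\sqrt\pi}e^{-7z^2/2}$, while remaining compatible with the sharp $z\ge1$ Mills bound, and the value $7$ is what lets both halves close simultaneously. Finally, I would remark that these are precisely the estimates recorded in the cited references (Abramowitz--Stegun Formula 7.1.13 and the two bandit papers), so the argument may alternatively be quoted rather than reproduced in full.
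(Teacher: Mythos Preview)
Your proof is correct. All four estimates are handled cleanly: the Mills upper bound $Q(z)\le\phi(z)/z$, the monotonicity argument for $f(z)=\tfrac12 e^{-z^2/2}-Q(z)$, the Mills lower bound via $h(z)=(1+z^2)Q(z)-z\phi(z)$, and the split at $z=1$ for the $e^{-7z^2/2}$ lower bound (including the numerical check $Q(1)\approx0.1587>1/(4\sqrt\pi)\approx0.141$ and the trivial $e^{3z^2}\ge z$ for $z\ge1$) all go through as you describe.

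The paper, however, does not prove this statement at all: it is listed in the ``Useful facts'' appendix with citations to Abramowitz--Stegun and the Agrawal--Goyal papers, and is simply invoked where needed. So your write-up goes strictly beyond what the paper does. Your closing remark already anticipates this; in the context of the paper it would be entirely acceptable (and is in fact what the authors do) to just cite the references and move on.
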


\begin{theorem}[Standard multiplicative variant of the Chernoff-Hoeffding bound; 
\citet{dubhashi2009concentration}, Theorem 1.1] \label{thm:conc_var_chernoff}
     Suppose that $X_1, \ldots  , X_T$ are independent $[0, 1]$-valued random variables, and let $X = \sum^T_{t=1} X_t$. Then for all $\varepsilon \in(0,1)$,
     \begin{align*}
         \Pr(       X - \mathbb{E} X \ge \varepsilon \mathbb{E} X  )    \le   \exp \bigg(     - \frac{ \varepsilon^2 }{3}  \mathbb{E} X  \bigg), \ 
         \Pr (      X - \mathbb{E} X  \le -\varepsilon \mathbb{E} X )    \le  \exp \bigg(     - \frac{ \varepsilon^2 }{3}  \mathbb{E} X  \bigg).
     \end{align*}

\end{theorem}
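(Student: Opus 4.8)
The plan is to prove both tail bounds by the standard Chernoff (exponential moment) method, exploiting the independence of the $X_t$ to factorize the moment generating function. First I would treat the upper tail. For any $s>0$, Markov's inequality applied to $e^{sX}$ yields $\Pr(X-\mathbb{E}X \ge \eps\mathbb{E}X) = \Pr\bigl(e^{sX}\ge e^{s(1+\eps)\mathbb{E}X}\bigr) \le e^{-s(1+\eps)\mathbb{E}X}\,\mathbb{E}[e^{sX}]$, and by independence $\mathbb{E}[e^{sX}]=\prod_{t=1}^{T}\mathbb{E}[e^{sX_t}]$.

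The crucial per-coordinate estimate uses convexity: since $x\mapsto e^{sx}$ is convex and $X_t\in[0,1]$, it lies below the chord joining its endpoints, giving the pointwise bound $e^{sX_t}\le 1+(e^s-1)X_t$. Taking expectations and applying $1+y\le e^y$ gives $\mathbb{E}[e^{sX_t}]\le \exp\bigl((e^s-1)\mathbb{E}[X_t]\bigr)$, so $\mathbb{E}[e^{sX}]\le\exp\bigl((e^s-1)\mathbb{E}X\bigr)$ and the tail is at most $\exp\bigl(\mathbb{E}X\,[(e^s-1)-s(1+\eps)]\bigr)$. I would then minimize the exponent over $s>0$; the stationarity condition $e^s=1+\eps$ gives $s=\log(1+\eps)$ and hence the bound $\exp\bigl(-\mathbb{E}X\,[(1+\eps)\log(1+\eps)-\eps]\bigr)$.

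To reach the stated form I would invoke the elementary inequality $(1+\eps)\log(1+\eps)-\eps\ge \eps^2/3$ for $\eps\in(0,1]$, which finishes the upper tail. For the lower tail I would repeat the argument with $s<0$: the chord bound $e^{sX_t}\le 1+(e^s-1)X_t$ and the factorization hold verbatim, and optimizing at $s=\log(1-\eps)<0$ produces $\exp\bigl(-\mathbb{E}X\,[\eps+(1-\eps)\log(1-\eps)]\bigr)$; since $\eps+(1-\eps)\log(1-\eps)\ge \eps^2/2\ge \eps^2/3$, this is at most $\exp(-\eps^2\mathbb{E}X/3)$ as claimed.

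The one place requiring care is the scalar inequality $(1+\eps)\log(1+\eps)-\eps\ge\eps^2/3$ on $(0,1]$, since this is exactly where the constant $1/3$ is pinned down. I would verify it rigorously rather than by eyeball: the ratio $[(1+\eps)\log(1+\eps)-\eps]/\eps^2$ is decreasing on $(0,1]$, tending to $1/2$ as $\eps\to 0^{+}$ and equalling $2\log 2-1\approx 0.386\ge 1/3$ at $\eps=1$, so the ratio stays above $1/3$ throughout the interval. Everything else in the argument is routine, so I expect this numerical step to be the only real obstacle.
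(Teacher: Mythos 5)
Your proposal is correct. The paper does not prove this statement at all --- it is imported verbatim as Theorem 1.1 of \citet{dubhashi2009concentration} in the ``Useful facts'' appendix --- and your argument is precisely the standard exponential-moment proof used in that source: chord bound $e^{sx}\le 1+(e^s-1)x$ on $[0,1]$, factorization by independence, optimization at $s=\log(1+\varepsilon)$ (resp.\ $s=\log(1-\varepsilon)$), followed by the scalar comparisons. Your handling of the only delicate step also checks out: the ratio $[(1+\varepsilon)\log(1+\varepsilon)-\varepsilon]/\varepsilon^2$ is indeed decreasing, since its derivative has the sign of $2\varepsilon-(2+\varepsilon)\log(1+\varepsilon)$, which is nonpositive because $1-e^{-v}-v\le 0$ for $v=\log(1+\varepsilon)\ge 0$; and for the lower tail, $\varepsilon+(1-\varepsilon)\log(1-\varepsilon)\ge\varepsilon^2/2$ follows from $-\log(1-\varepsilon)\ge\varepsilon$, so the common constant $\varepsilon^2/3$ in both bounds is justified.
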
 

\subsection{Change of measure}

\begin{lemma}[\citet{tsybakov2008introduction}, Lemma 2.6]
\label{lemma:kl_to_event}
    Let $P$ and $Q$ be two probability distributions on the same measurable space. Then, for every measurable subset $A$ (whose complement we denote by $\barA$),
    \begin{align*}
        P(A) + Q(\barA)  \ge \frac{ 1}{2 } \exp( -\mathrm{KL}(P \parallel Q ) ).
    \end{align*} 
\end{lemma}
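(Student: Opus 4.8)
The plan is to prove this as an instance of the Bretagnolle--Huber inequality, routing through the Hellinger affinity rather than through Pinsker's inequality (which would give the wrong, exponentially weaker, dependence on $\rmKL$). First I would fix a common dominating measure $\mu$ (for instance $\mu = P + Q$) and write $p = \mathrm{d}P/\mathrm{d}\mu$ and $q = \mathrm{d}Q/\mathrm{d}\mu$ for the corresponding densities. The case $\rmKL(P \parallel Q) = +\infty$ is immediate since the right-hand side is then $0$, so I may assume $P \ll Q$. The first reduction is to eliminate the dependence on the particular set $A$: for any measurable $A$,
$$P(A) + Q(\barA) = \int_A p \,\mathrm{d}\mu + \int_{\barA} q\, \mathrm{d}\mu \ge \int_A \min(p,q)\,\mathrm{d}\mu + \int_{\barA}\min(p,q)\,\mathrm{d}\mu = \int \min(p,q)\,\mathrm{d}\mu,$$
so it suffices to lower bound the integrated minimum $\int \min(p,q)\,\mathrm{d}\mu$ by $\tfrac12 \exp(-\rmKL(P\parallel Q))$, uniformly in $A$.

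The second step relates this integrated minimum to the Hellinger affinity $\int \sqrt{pq}\,\mathrm{d}\mu$ by Cauchy--Schwarz. Factoring $\sqrt{pq} = \sqrt{\min(p,q)}\cdot\sqrt{\max(p,q)}$ and applying Cauchy--Schwarz gives $\left(\int \sqrt{pq}\,\mathrm{d}\mu\right)^2 \le \left(\int \min(p,q)\,\mathrm{d}\mu\right)\left(\int \max(p,q)\,\mathrm{d}\mu\right)$. Since $\min(p,q) + \max(p,q) = p + q$ integrates to $2$, we have $\int \max(p,q)\,\mathrm{d}\mu \le 2$, and therefore $\int \min(p,q)\,\mathrm{d}\mu \ge \tfrac12 \left(\int\sqrt{pq}\,\mathrm{d}\mu\right)^2$.

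The final step lower bounds the affinity by the KL divergence via Jensen's inequality. Writing the affinity as $\int \sqrt{pq}\,\mathrm{d}\mu = \bbE_P\!\left[\sqrt{q/p}\right] = \bbE_P\!\left[\exp\!\left(\tfrac12 \log(q/p)\right)\right]$ and applying Jensen's inequality to the convex map $\exp$ yields $\bbE_P\!\left[\exp(\tfrac12\log(q/p))\right] \ge \exp\!\left(\tfrac12\,\bbE_P[\log(q/p)]\right) = \exp\!\left(-\tfrac12 \rmKL(P\parallel Q)\right)$. Chaining the three steps gives $P(A)+Q(\barA) \ge \int\min(p,q)\,\mathrm{d}\mu \ge \tfrac12\left(\int\sqrt{pq}\,\mathrm{d}\mu\right)^2 \ge \tfrac12\exp(-\rmKL(P\parallel Q))$, which is the claim. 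This argument has no deep obstacle; the only point requiring care is the measure-theoretic bookkeeping on the null sets: on $\{p=0\}$ both $\sqrt{pq}$ and $\min(p,q)$ vanish and contribute nothing, while the Jensen step is taken under $P$, where $p>0$ holds $P$-almost surely, so $\log(q/p)$ is well defined $P$-a.s. Once $P\ll Q$ is assumed, these points cause no difficulty, and the constant $\tfrac12$ is seen to be exactly what the $\int\max(p,q)\le 2$ bound produces.
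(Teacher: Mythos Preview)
Your proof is correct and follows the standard Bretagnolle--Huber argument. Note, however, that the paper does not supply its own proof of this lemma: it is simply quoted as a known fact from \citet{tsybakov2008introduction}, so there is no in-paper proof to compare against.
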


\begin{lemma}[\citet{gerchinovitz2016refined}, Lemma 1]
\label{lemma:kl_decomp}
Consider two instances $1$ and $2$. We let $N_{i,t}$ denote the number of pulls of item $i$ up to and including time step $t$.
Under instance $j$~($j=1,2$), 
\begin{itemize}[ itemsep = -4pt,   topsep = 8pt, leftmargin =  15pt ]
    \item we let $(g_{i,t}^j)_{t =1}^T$ be the sequence of rewards of item $i$ and $i_t^j$ be the pulled item at time step $t$, and let  $P_{j,i}$ denote the distribution of the gain of item $i $; 
    \item we assume $\{ \bm{g}_t^j = ( g_{1,t}^j, g_{2,t}^j,\ldots, g_{L,t}^j ) \}_{t=1}^T$ is an i.i.d.\ sequence, i.e., $\bm{g}_{t_1}^j$ and $\bm{g}_{t_2}^j$ are i.i.d.\  for $t_1\ne t_2$ but $\{ g_{i,t}^j \}_{i=1}^L$ can be independent.
    \item we let $i_t^j$ be the pulled item at time step $t$, and let $\bbP_j$ denote the probability law of the process  
$\{  \{ i_t^{j},   g_{ i_t^ j ,t   }^j\} \}_{t=1}^T$. 
\end{itemize}

Then, we have
    \begin{align*}
        \mathrm{KL} ( \bbP_1 \parallel \bbP_{2}  )
        =
        \sum_{i=1}^L
        \bbE_{\bbP_1 }  [ N_{i,T} ] \cdot \mathrm{KL} ( P_{1,i} \parallel P_{2,i} ).
    \end{align*}

\end{lemma}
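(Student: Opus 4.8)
The plan is to apply the chain rule for relative entropy (the divergence decomposition) to the law of the full observation history, using crucially that both instances are run with the \emph{same} sampling rule $\pi$, so that the contributions of the action selections cancel. Write the observed history as $H_t := (i_1, g_{i_1,1}, \ldots, i_t, g_{i_t,t})$, where $i_t$ is the item pulled at step $t$, and let $f_{j,i}$ denote the density of $P_{j,i}$ with respect to a common dominating measure. Under bandit feedback only the reward of the pulled arm is revealed, so even though the coordinates of $\bm{g}_t$ may be dependent within a single time step, the conditional law of the observed reward $g_{i_t,t}$ given $H_{t-1}$ and $i_t$ is the marginal $P_{j,i_t}$. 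Together with the assumption that $\{\bm{g}_t\}_{t=1}^T$ is i.i.d.\ across $t$, this yields the factorization $p_j(H_T) = \prod_{t=1}^T \pi_t(i_t\mid H_{t-1})\, f_{j,i_t}(g_{i_t,t})$, in which the policy kernels $\pi_t(\cdot\mid H_{t-1})$ do not depend on $j$.

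First I would form the log-likelihood ratio; the identical policy factors cancel, leaving
\begin{align*}
\log\frac{p_1(H_T)}{p_2(H_T)} = \sum_{t=1}^T \log\frac{f_{1,i_t}(g_{i_t,t})}{f_{2,i_t}(g_{i_t,t})}.
\end{align*}
Next I would take the expectation under $\bbP_1$ and evaluate each summand by conditioning on $\calF_{t-1}$. Since $i_t$ is $\calF_{t-1}$-measurable and, under $\bbP_1$, the reward observed at step $t$ is distributed as $P_{1,i_t}$, the tower property gives $\bbE_{\bbP_1}[\,\log(f_{1,i_t}(g_{i_t,t})/f_{2,i_t}(g_{i_t,t}))\mid \calF_{t-1}\,] = \mathrm{KL}(P_{1,i_t}\parallel P_{2,i_t})$. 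Summing over $t$ and then re-indexing by arm through $N_{i,T}=\sum_{t=1}^T \mathsf{1}\{i_t=i\}$ gives
\begin{align*}
\mathrm{KL}(\bbP_1\parallel\bbP_2) = \sum_{t=1}^T \bbE_{\bbP_1}\big[\mathrm{KL}(P_{1,i_t}\parallel P_{2,i_t})\big] = \sum_{i=1}^L \bbE_{\bbP_1}[N_{i,T}]\,\mathrm{KL}(P_{1,i}\parallel P_{2,i}),
\end{align*}
where the interchange of the finite sum over arms with the expectation is justified by nonnegativity of each term (Tonelli).

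I expect the main difficulty to be measure-theoretic bookkeeping rather than any conceptual leap. One must justify the factorization of $p_j$ and the chain rule for a kernel that mixes an atomic part (the action choice) with an absolutely continuous part (the reward), and deal with the degenerate case in which $P_{1,i}\not\ll P_{2,i}$ for some arm pulled with positive probability under $\bbP_1$, where both sides equal $+\infty$. The decisive point — and the one I would state carefully — is that $i_t$ is a function of $H_{t-1}$ alone and the sampling rule is shared, so the action-selection terms are identical under $\bbP_1$ and $\bbP_2$ and contribute zero to the divergence; it is precisely this cancellation that reduces the per-step cost to the reward-only divergence $\mathrm{KL}(P_{1,i_t}\parallel P_{2,i_t})$.
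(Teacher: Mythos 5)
The paper does not prove this lemma at all: it is imported verbatim, with citation, as Lemma~1 of \citet{gerchinovitz2016refined}, and is used as a black box in the lower-bound proofs. Your proposal is correct and is essentially the canonical divergence-decomposition argument given in that cited source (and in standard bandit references): factorize the law of the observed history, cancel the shared policy kernels, apply the tower rule so each per-step term becomes $\mathrm{KL}(P_{1,i_t}\parallel P_{2,i_t})$, and re-index by the pull counts $N_{i,T}$ --- including the correct observation that within-round dependence of $(g_{1,t},\ldots,g_{L,t})$ is harmless because only the pulled arm's reward is observed.
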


\subsection{KL divergence}

\begin{restatable}[Pinsker's and reverse Pinsker's inequality; \citet{friedrich2019higher}, Lemma 4.1]{theorem}{thmPinsker} 
\label{thm:pinsker}
     {Let $P$ and $Q$ be two distributions that are defined in the same finite space 
     $\mathcal{A}$ and have
      the same support.} We have
\begin{align*}
	\delta(P,Q)^2 \le \frac{1}{2} \mathrm{KL}(P,Q)
	\le \frac{1}{  \alpha_Q} \delta(P,Q)^2
\end{align*}
where
$
	\delta(P,Q) = \sup\{\ | P(A)-Q(A)| \ \big|  A \subset \mathcal{A}   \} 
 =\frac{1}{2} \sum_{x\in  \mathcal{A}} |P(x) - Q(x)|$
	is the total variational distance,
	and  
	$\alpha_Q = \min_{x\in X: Q(x)>0} Q(x)
$.

\end{restatable}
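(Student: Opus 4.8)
The plan is to prove the two inequalities separately, since they rely on genuinely different techniques: the forward (Pinsker) bound via a reduction to the two-point case, and the reverse bound via a short chain of elementary comparisons that exploit the zero-sum structure of $P-Q$. Throughout I take $\mathrm{KL}(P,Q) = \sum_{x\in\calA} P(x)\log(P(x)/Q(x))$ with the natural logarithm, and I use the equality $\delta(P,Q) = \tfrac12\sum_x |P(x)-Q(x)|$ stated in the theorem.

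\textbf{Pinsker's inequality} $\delta(P,Q)^2 \le \tfrac12 \mathrm{KL}(P,Q)$. First I would establish the two-point (Bernoulli) case: for $a,b\in(0,1)$ the function $g(a) = a\log\frac ab + (1-a)\log\frac{1-a}{1-b} - 2(a-b)^2$ satisfies $g(b)=0$ and $g'(b)=0$, while $g''(a) = \frac{1}{a(1-a)} - 4 \ge 0$ because $a(1-a)\le \tfrac14$; hence $g$ is convex with minimum value $0$ attained at $a=b$, so $g\ge 0$ everywhere. Next I would reduce the general case to this one. Taking $A = \{x\in\calA : P(x)\ge Q(x)\}$, the total-variation identity gives $\delta(P,Q) = P(A) - Q(A)$. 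Applying the log-sum inequality (equivalently, the data-processing property of $\mathrm{KL}$ under the binary partition $\{A,A^c\}$) separately over $A$ and over $A^c$ yields $\mathrm{KL}(P,Q) \ge P(A)\log\frac{P(A)}{Q(A)} + P(A^c)\log\frac{P(A^c)}{Q(A^c)}$. Invoking the two-point bound with $a=P(A)$, $b=Q(A)$ bounds the right-hand side below by $2\bigl(P(A)-Q(A)\bigr)^2 = 2\,\delta(P,Q)^2$, which is the claim.

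\textbf{Reverse Pinsker's inequality} $\tfrac12\mathrm{KL}(P,Q) \le \frac{1}{\alpha_Q}\delta(P,Q)^2$. Here I would run a four-step chain. (1) From $\log t \le t-1$ applied at $t = P(x)/Q(x)$, one gets $\mathrm{KL}(P,Q) \le \chi^2(P,Q) := \sum_x (P(x)-Q(x))^2/Q(x)$. (2) Since $P$ and $Q$ share support, every $x$ contributing a nonzero term has $Q(x)\ge \alpha_Q$, so $\chi^2(P,Q) \le \frac{1}{\alpha_Q}\|P-Q\|_2^2$. (3) Because $\sum_x (P(x)-Q(x)) = 0$, splitting $P-Q$ into its positive and negative parts, whose total masses are equal to some common value $S$, gives $\|P-Q\|_2^2 \le 2S^2 = \tfrac12\|P-Q\|_1^2$, since a sum of squares of nonnegative numbers is at most the square of their sum. (4) Finally $\|P-Q\|_1 = 2\,\delta(P,Q)$. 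Composing, $\mathrm{KL}(P,Q) \le \frac{1}{\alpha_Q}\cdot\tfrac12\|P-Q\|_1^2 = \frac{2}{\alpha_Q}\delta(P,Q)^2$, which is the desired inequality after dividing by $2$.

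The step deserving the most care is step (3) of the reverse direction, which is where the advertised constant is won. The naive comparison $\|P-Q\|_2^2 \le \|P-Q\|_1^2$ would only deliver the looser bound $\frac12\mathrm{KL}(P,Q)\le \frac{2}{\alpha_Q}\delta(P,Q)^2$; obtaining the stated $\frac{1}{\alpha_Q}$ factor requires the sharper estimate $\|P-Q\|_2^2 \le \tfrac12\|P-Q\|_1^2$, which holds \emph{only} because $P-Q$ sums to zero. Everything else — the two-point convexity computation and the pointwise inequalities $\log t\le t-1$ — is routine arithmetic.
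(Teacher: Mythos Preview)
Your proof is correct. Both directions are handled by standard arguments: the forward Pinsker inequality via reduction to the binary case using the data-processing inequality, and the reverse inequality via the chain $\mathrm{KL}\le\chi^2\le\alpha_Q^{-1}\|P-Q\|_2^2\le\tfrac{1}{2}\alpha_Q^{-1}\|P-Q\|_1^2=2\alpha_Q^{-1}\delta(P,Q)^2$. Your step~(3) is indeed the place where the constant is sharpened, and your argument for it is valid: writing $d=P-Q$ and splitting into positive and negative parts each of total mass $S$, one has $\sum_x d_x^2 \le S^2+S^2 = 2S^2 = \tfrac12(2S)^2 = \tfrac12\|d\|_1^2$ because $\sum a_i^2\le(\sum a_i)^2$ for nonnegative $a_i$.

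Note, however, that the paper does not supply its own proof of this statement: Theorem~\ref{thm:pinsker} is listed in the ``Useful facts'' section and attributed to \citet{friedrich2019higher}, Lemma~4.1, without proof. So there is no in-paper argument to compare against; your write-up stands on its own as a complete and correct justification.
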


%
\begin{lemma}[KL divergence between two Gaussian distributions]
\label{lemma:kl_gauss}
    Let $P_1 = \calN( \mu_1, \sigma_1^2 )$, $P_2 = \calN( \mu_2, \sigma_2^2 )$. Then
    \begin{align*}
        \mathrm{KL}( P_1 || P_2 ) = \log \bigg(  \frac{ \sigma_2 }{ \sigma_1 } \bigg) + \frac{  \sigma_1^2 + ( \mu_1 - \mu_2 )^2 }{  2\sigma_2^2  } - \frac{1}{2}.
    \end{align*}
\end{lemma}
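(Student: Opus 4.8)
The plan is to evaluate $\mathrm{KL}(P_1 \parallel P_2)$ directly from its definition as the expectation of the log-likelihood ratio under $P_1$, relying only on the first two moments of a Gaussian. First I would substitute the densities $p_j(x) = (2\pi\sigma_j^2)^{-1/2}\exp(-(x-\mu_j)^2/(2\sigma_j^2))$ for $j=1,2$ and simplify the logarithm of their ratio, obtaining
$$\log\frac{p_1(x)}{p_2(x)} = \log\frac{\sigma_2}{\sigma_1} - \frac{(x-\mu_1)^2}{2\sigma_1^2} + \frac{(x-\mu_2)^2}{2\sigma_2^2}.$$
It then remains to integrate this expression against $p_1$, i.e.\ to compute $\bbE_{X\sim P_1}$ of each term, using $\bbE[X-\mu_1]=0$ and $\bbE[(X-\mu_1)^2]=\sigma_1^2$ for $X\sim\calN(\mu_1,\sigma_1^2)$.

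Next I would dispatch the three terms in turn. The first term $\log(\sigma_2/\sigma_1)$ is a constant and passes through the expectation unchanged. The second term gives $-\bbE[(X-\mu_1)^2]/(2\sigma_1^2) = -\sigma_1^2/(2\sigma_1^2) = -\tfrac12$. The only term requiring a genuine manipulation is the third: I would expand around $\mu_1$ by writing $(X-\mu_2)^2 = (X-\mu_1)^2 + 2(\mu_1-\mu_2)(X-\mu_1) + (\mu_1-\mu_2)^2$, so that upon taking the expectation the linear cross term vanishes and $\bbE[(X-\mu_2)^2] = \sigma_1^2 + (\mu_1-\mu_2)^2$. Dividing by $2\sigma_2^2$ yields $(\sigma_1^2+(\mu_1-\mu_2)^2)/(2\sigma_2^2)$, and summing the three contributions reproduces the claimed identity.

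There is essentially no serious obstacle in this lemma — it is a standard closed-form computation — so the ``hard part'' amounts only to executing the bias–variance-type expansion of $(X-\mu_2)^2$ about the mean $\mu_1$ and keeping track of the constant $-\tfrac12$ coming from the normalization. For rigor I would also record the standing assumptions $\sigma_1,\sigma_2>0$, under which both densities are everywhere positive on $\bbR$; this guarantees that the log-ratio is finite $p_1$-almost everywhere and that the defining integral is well posed.
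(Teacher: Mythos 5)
Your proposal is correct: the direct computation of $\bbE_{X\sim P_1}[\log(p_1(X)/p_2(X))]$, with the expansion $(X-\mu_2)^2=(X-\mu_1)^2+2(\mu_1-\mu_2)(X-\mu_1)+(\mu_1-\mu_2)^2$ killing the cross term, is exactly the standard derivation and reproduces the stated identity. The paper itself records this lemma as a known closed-form fact without proof, so your argument supplies precisely the canonical justification it implicitly relies on.
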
 

\begin{lemma}[KL divergence between clipped Gaussian distributions; Lemma  7, \citet{gerchinovitz2016refined}]
\label{lemma:kl_gauss_clip}
    Let $Z$ be normally distributed with mean $1/2$ and variance $\sigma^2 > 0$. Let $\mathrm{clip}_{[a,b]}x:= \max\{  a, \min\{ b , x\}  \} $ for $a\le b$. Define $X = \mathrm{clip}_{[0,1]}(Z)$ and  $Y = \mathrm{clip}_{[0,1]}(Z -\varepsilon)$ for $ \varepsilon\in \bbR$.
Then 
$$\mathrm{KL}( P_X  \parallel P_Y )  \le \frac{ \varepsilon^2  }{2 \sigma^2 }.$$
\end{lemma}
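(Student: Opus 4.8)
The plan is to exploit the fact that both $X$ and $Y$ arise by applying \emph{the same} deterministic map to two Gaussians that differ only by a translation. Writing $f := \mathrm{clip}_{[0,1]}$, we have $X = f(Z)$ with $Z \sim \calN(1/2, \sigma^2)$ and $Y = f(Z')$ with $Z' := Z - \varepsilon \sim \calN(1/2 - \varepsilon, \sigma^2)$. Thus $P_X$ and $P_Y$ are the laws of $f(Z)$ and $f(Z')$, i.e.\ the images (pushforwards) of $P_Z$ and $P_{Z'}$ under one common measurable map $f$. The data-processing inequality for the Kullback--Leibler divergence then gives immediately
\begin{align*}
    \mathrm{KL}(P_X \parallel P_Y) \le \mathrm{KL}(P_Z \parallel P_{Z'}).
\end{align*}
The inequality (rather than equality) is exactly where the clipping enters: $f$ is not injective, collapsing $(-\infty,0]$ and $[1,\infty)$ onto the single points $0$ and $1$, so processing through $f$ can only reduce the divergence.

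It then remains to evaluate the right-hand side. Since $P_Z$ and $P_{Z'}$ are Gaussians with a common variance $\sigma^2$ and means $1/2$ and $1/2-\varepsilon$, I would apply the closed-form expression in Lemma~\ref{lemma:kl_gauss}. With $\sigma_1 = \sigma_2 = \sigma$ the logarithmic term and the $-1/2$ term cancel, leaving
\begin{align*}
    \mathrm{KL}(P_Z \parallel P_{Z'}) = \frac{(1/2 - (1/2-\varepsilon))^2}{2\sigma^2} = \frac{\varepsilon^2}{2\sigma^2},
\end{align*}
which, chained with the previous display, yields the claim.

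The one point that genuinely needs care --- and which I expect to be the main obstacle to a fully rigorous write-up --- is the measure-theoretic status of $P_X$ and $P_Y$. Clipping produces \emph{mixed} distributions: each of $P_X, P_Y$ places an atom at $0$ (of mass $\Pr(Z \le 0)$, respectively $\Pr(Z' \le 0)$) and an atom at $1$, together with an absolutely continuous part on $(0,1)$ inherited from the Gaussian density. One must therefore confirm that (i) $P_X \ll P_Y$, which holds because $Z$ and $Z'$ both have full support on $\bbR$, so the two atoms and the interior density of $P_X$ are all dominated by the corresponding objects for $P_Y$; and (ii) the data-processing inequality is being invoked in a form valid for pushforwards between general measurable spaces, not merely densities with respect to Lebesgue measure. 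Both facts are standard, but they are precisely what makes the clean two-line argument legitimate. Alternatively, one could bypass a general data-processing statement by decomposing the divergence into the two atomic contributions plus the integral of the continuous part over $(0,1)$ and bounding each piece separately; that route is considerably more tedious and ultimately reproduces the same bound $\varepsilon^2/(2\sigma^2)$.
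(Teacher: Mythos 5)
Your proof is correct and is essentially the same argument as the source: the paper states this lemma without proof, citing Lemma~7 of \citet{gerchinovitz2016refined}, whose proof is exactly your two steps --- the data-processing inequality applied to the common clipping map, followed by the closed-form KL between equal-variance Gaussians (Lemma~\ref{lemma:kl_gauss}), giving $\varepsilon^2/(2\sigma^2)$. Your measure-theoretic caveats are sound but not load-bearing, since the data-processing inequality for pushforwards under a single measurable map holds in full generality without any absolute-continuity hypothesis.
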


\section{Analysis of {\sc BoBW-lil'UCB$(\gamma)$} in stochastic bandits}
\label{append:sto_analyze_lilUcb}

%
%

\begin{restatable}[Bounds on the pseudo-regret of {\sc BoBW-lil'UCB$(\gamma)$}]{proposition}{propRmBdLilUcbPara}

\label{prop:rm_bd_lilUcb_para}
Assume the distribution $\nu_i$ is sub-Gaussian  with scale $\sigma>0$ for all $i\in[L]$,
and
$w_1 \ge w_2  \ge \ldots \ge w_L $. 
Let  $\varepsilon\in (0,1)$,  
$\beta\ge 0$, and  $\gamma \in (0, \min\{ \log(   \beta +    1+\varepsilon)/e ),1 \}) $.
The pseudo-regret of {\sc BoBW-lil'UCB$(\gamma)$} satisfies 
\begin{align*}
    R_T
    \le O\bigg( \sigma^2  (1+  \varepsilon )^3   \cdot \sum_{i: \Delta_{1,i}>0} \frac{  \log (  {1}/{\gamma} )  }{  \Delta_{1,i}  }  \bigg)
    ,
    \quad 
    R_T
    \le O \bigg( \sigma^2  (1+  \varepsilon )^3  \sqrt{TL} \log \bigg( \frac{ \log(  T/L\gamma  )  }{\gamma }\bigg)
    ~\bigg).
\end{align*}
Furthermore, we can set $\gamma = 1/ \sqrt{T}$ to obtain
\begin{align*}
    R_T
    \le O\bigg( \sigma^2  (1+  \varepsilon )^3  \cdot \sum_{i: \Delta_{1,i}>0} \frac{  \log T  }{  \Delta_{1,i}  }  \bigg)
    ,
    \quad 
    R_T
    \le O \big( \sigma^2  (1+  \varepsilon )^3  \sqrt{TL} \log  T     
    ~\big).
\end{align*} 

\end{restatable}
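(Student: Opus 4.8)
The plan is to run the standard optimism-based (UCB) regret analysis, but with the confidence width governed by the anytime law of the iterated logarithm of Theorem~\ref{thm:conc_log} rather than a Hoeffding-type bound. First I would fix an item $i$ and apply Theorem~\ref{thm:conc_log} to the centred rewards $X_s = g_{i,s} - w_i$ (and, by symmetry, to $-X_s$). Because the radius $C_{i,t,\gamma}$ in the algorithm carries an extra factor of $5$ and replaces $\log((1+\varepsilon)N)$ by the larger $\log(\beta + (1+\varepsilon)N)$, the LIL radius is comfortably dominated by $C_{i,t,\gamma}$; hence on a clean event $\calE_i$ of probability at least $1-\frac{2+\varepsilon}{\varepsilon}(\gamma/\log(1+\varepsilon))^{1+\varepsilon}$ one has $|\hat{g}_{i,t}-w_i|\le \tfrac{1}{5}C_{i,t,\gamma}$ simultaneously for every $t$. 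The crucial feature of Theorem~\ref{thm:conc_log} is that it is uniform over all $\tau\ge 1$, so it applies with $\tau = N_{i,t}$ even though $N_{i,t}$ is random and adapted to the history. Taking a union bound over the two tails and the $L$ items, the global clean event $\calE = \bigcap_i \calE_i$ holds with probability at least $1-\frac{2L(2+\varepsilon)}{\varepsilon}(\gamma/\log(1+\varepsilon))^{1+\varepsilon}$.

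On $\calE$ I would bound the number of pulls of each item $i$ with $\Delta_{1,i}>0$ by the usual optimism argument. Whenever such an item is pulled at step $t$, its index must beat that of a genuinely optimal item (one with mean $w_1$), so $U_{i,t-1,\gamma}\ge w_1$; combined with $\hat{g}_{i,t-1}\le w_i + \tfrac{1}{5}C_{i,t-1,\gamma}$ on $\calE$ this forces $C_{i,t-1,\gamma}\ge c\,\Delta_{1,i}$ for an explicit constant $c$. Since $C_{i,N,\gamma}$ is decreasing in $N$, inverting this inequality yields a deterministic threshold $\bar N_i = O\big(\sigma^2(1+\varepsilon)^3\,\Delta_{1,i}^{-2}\log(1/\gamma)\big)$, where $\log(1/\gamma)$ absorbs the inner iterated-logarithm term, beyond which item $i$ is never pulled. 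Summing $\Delta_{1,i}\bar N_i$ over the suboptimal items gives the problem-dependent bound $R_T\le O\big(\sigma^2(1+\varepsilon)^3\sum_{i:\Delta_{1,i}>0}\Delta_{1,i}^{-1}\log(1/\gamma)\big)$ on $\calE$. Items with $w_i=w_1$ contribute zero regret, which is exactly why the non-uniqueness of the optimum causes no difficulty and the sum ranges only over $\Delta_{1,i}>0$.

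For the problem-independent bound I would not sum $\Delta_{1,i}^{-1}$ directly. Instead, the same pull-condition gives $\Delta_{1,i}\sqrt{N_{i,T}}\le c'\sigma\sqrt{(1+\varepsilon)^3\,\log(\log(\beta+(1+\varepsilon)N_{i,T})/\gamma)}$, i.e.\ $\Delta_{1,i}N_{i,T}\le c'\sigma\sqrt{(1+\varepsilon)^3\log(\cdot)}\,\sqrt{N_{i,T}}$; summing and applying Cauchy--Schwarz with $\sum_i N_{i,T}=T$ produces $\sum_i\sqrt{N_{i,T}}\le\sqrt{LT}$ and hence $R_T\le O\big(\sigma^2(1+\varepsilon)^3\sqrt{TL}\,\log(\log(T/L\gamma)/\gamma)\big)$ on $\calE$. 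Finally I would add the contribution of $\calE^c$: since $\Delta_{1,i}\le 1$ the per-step regret is at most $1$, so this adds at most $T\cdot\Pr(\calE^c)=O(TL\gamma^{1+\varepsilon})$ to the expected regret. Substituting $\gamma=1/\sqrt T$ makes $\log(1/\gamma)=\tfrac12\log T$ and renders the failure term lower order, yielding the two stated bounds with $\log T$ in place of $\log(1/\gamma)$.

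The main obstacle I anticipate is quantitative rather than structural: verifying that the inflated radius $C_{i,t,\gamma}$ genuinely dominates the LIL radius for every value of $N_{i,t}$ (this is where the factor $5$, the $\beta$-shift, and the constraint $\gamma<\log(\beta+1+\varepsilon)/e$ are used), and carefully controlling the nested $\log\log$ terms when inverting $C_{i,N,\gamma}\ge c\,\Delta_{1,i}$ so that they are subsumed by $\log(1/\gamma)$ and the surviving constants collect into the claimed $(1+\varepsilon)^3$ factor. These bookkeeping steps are routine but must be executed with care.
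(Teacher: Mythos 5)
Your proposal is correct and follows essentially the same skeleton as the paper's argument (the proof of Theorem~\ref{thm:rm_bd_lilUcb_para}, which the paper then adapts to the non-unique-optimum setting of this proposition): the anytime LIL of Theorem~\ref{thm:conc_log} applied per arm to both tails, a union bound giving the clean event with probability at least $1-\frac{2L(2+\varepsilon)}{\varepsilon}(\gamma/\log(1+\varepsilon))^{1+\varepsilon}$ (the paper's Lemma~\ref{lemma:lilUcb_conc}), the optimism argument forcing $\Delta_{1,i}\le 2C_{i,t-1,\gamma}$ whenever a suboptimal arm is pulled, and the inversion of that inequality through the nested $\log\log$ term, which the paper isolates as Lemma~\ref{lemma:ineq_lil_T}, to get $\bar N_i=O\big(\sigma^2(1+\varepsilon)^3\Delta_{1,i}^{-2}\log(1/\gamma)\big)$. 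Your handling of ties is exactly the adaptation the paper intends: compare against any arm with mean $w_1$ (whose index is at least $w_1$ on the clean event) and sum only over $\Delta_{1,i}>0$. The one step where you genuinely depart is the problem-independent bound: the paper splits the arms at the threshold $\Delta_{1,i}\ge\sqrt{L/T}$ (small-gap arms contribute at most $\sqrt{TL}$ in total, large-gap arms at most $\sqrt{TL}$ times the log factor), whereas you write $\Delta_{1,i}N_{i,T}\le O\big(\sigma\sqrt{(1+\varepsilon)^3\,N_{i,T}\log(\cdot)}\big)$ and apply Cauchy--Schwarz with $\sum_i N_{i,T}=T$. Both are standard; yours in fact yields the slightly sharper $\sigma\sqrt{\log(\cdot)}$ dependence, which still implies the stated $O\big(\sigma^2(1+\varepsilon)^3\sqrt{TL}\log(\cdot)\big)$ because the inner logarithm exceeds $1$ under the constraint $\gamma<\log(\beta+1+\varepsilon)/e$ and the remaining constants are absorbed.

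One caveat, which you share with the paper rather than introduce: after adding $T\cdot\Pr(\calE^c)=O(TL\gamma^{1+\varepsilon})$, the substitution $\gamma=1/\sqrt{T}$ leaves a failure contribution of order $LT^{(1-\varepsilon)/2}$. This is indeed lower order against the worst-case bound $\sqrt{TL}\log T$, but it is polynomial in $T$ and is therefore \emph{not} absorbed by the logarithmic problem-dependent bound, so your claim that it becomes lower order is only half right. The proposition's statement silently drops this additive term (in contrast to Theorem~\ref{thm:rm_bd_lilUcb_para}, whose choice $\gamma=(\log T)/T$ makes it $O\big(L(\log T)^{1+\varepsilon}T^{-\varepsilon}\big)$, vanishing as $T\to\infty$); to make the problem-dependent display airtight you should either carry the additive $O(TL\gamma^{1+\varepsilon})$ term explicitly or take $\gamma$ of order $(\log T)/T$ instead of $1/\sqrt{T}$.
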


\subsection{Proof of Theorem~\ref{thm:rm_bd_lilUcb_para} }
\thmRmBdLilUcbPara*
\label{pf:thm_rm_bd_lilUcb_para}

\begin{proof}

Recall that
we assume  $w_1 > w_2  \ge \ldots \ge w_L $. 
Therefore, item $ 1$ is optimal and $\Delta_{1,j} >0$ for all $j\ne 1$.

\noindent
{\bf Step 1: Concentration.}
Let  $ \calE_{i,\gamma} : = \{ \forall t \geq L,  
     | \hat{g}_{i,t} - w_i | \le C_{i,t,\gamma}  \}$ for all $i\in[L]$. We apply Theorem~\ref{thm:conc_log} to show that $  \bigcap_{i=1}^L  \calE_{i,\gamma}$ holds with high probability.
\begin{restatable}[Concentration of $\hatg_{i,t}$]{lemma}{lemmaLilUCBConc}
\label{lemma:lilUcb_conc} 
Fix any $\varepsilon\in (0,1)$ and  $\gamma \in (0, \log(    \beta+  1+\varepsilon)/e ) $.
We have
\begin{align*}
    \Pr \bigg(  \bigcap_{i=1}^L  \calE_{i,\gamma} \bigg)
    \ge 1- \frac{ 2L(2+\varepsilon)}{ \varepsilon }  \bigg(  \frac{ \gamma }{ \log(1+\varepsilon) }  \bigg)^{1+\varepsilon}.
\end{align*}  
\end{restatable}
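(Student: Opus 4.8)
The plan is to reduce the uniform two-sided deviation event $\bigcap_{i=1}^L \calE_{i,\gamma}$ to a union of $2L$ one-sided uniform-deviation events, each controlled by a single application of the non-asymptotic law of the iterated logarithm (Theorem~\ref{thm:conc_log}). Using the inclusion $\calE_{i,\gamma}^c \subseteq \{\exists t \ge L:\ \hatg_{i,t} - w_i > C_{i,t,\gamma}\} \cup \{\exists t \ge L:\ w_i - \hatg_{i,t} > C_{i,t,\gamma}\}$ and taking a union bound over the $L$ items and the two signs reduces the task to bounding, for one fixed item and one fixed direction, the probability that the centered empirical mean ever exceeds $C_{i,t,\gamma}$. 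Each such probability will be shown to be at most $\frac{2+\varepsilon}{\varepsilon}\big(\gamma/\log(1+\varepsilon)\big)^{1+\varepsilon}$, so that summing produces the factor $2L$ in the claimed bound.

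First I would fix an item $i$ and pass from the global time index $t$ to the pull count. By the standard coupling between the ``fixed-table'' rewards $\{g_{i,u}\}_u$ and the ``sample-on-pull'' model --- valid because the sampling rule $i_u$ is $\calF_{u-1}$-measurable while $g_{i,u}$ is independent of $\calF_{u-1}$ --- I may assume the reward recorded at the $n$-th pull of item $i$ is $Y_{i,n}$, where $\{Y_{i,n}\}_{n\ge1}$ is i.i.d.\ from $\nu_i$. Then $\hatg_{i,t} = \frac{1}{N_{i,t}}\sum_{n=1}^{N_{i,t}} Y_{i,n}$ with $N_{i,t}\ge 1$ for every $t\ge L$, and since $X_{i,n} := Y_{i,n} - w_i$ is zero-mean and $\sigma$-sub-Gaussian, the one-sided failure event is contained in $\{\exists \tau\ge 1:\ \frac{1}{\tau}\sum_{n=1}^{\tau} X_{i,n} > c_i(\tau)\}$, where $c_i(\tau)$ denotes $C_{i,t,\gamma}$ with $N_{i,t}$ replaced by $\tau$. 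The crucial point is that this uniform-in-$\tau$ formulation sidesteps the data-dependence of $N_{i,t}$: whatever value $N_{i,t}$ takes, the deviation is controlled by the uniform bound evaluated at that value.

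Next I would invoke Theorem~\ref{thm:conc_log} for $\{X_{i,n}\}_n$ (and, for the other direction, for $\{-X_{i,n}\}_n$). The only discrepancy between the width furnished by Theorem~\ref{thm:conc_log} and our radius $C_{i,t,\gamma}$ is that the latter carries the constant $5$ (in place of $1$) and the shifted inner logarithm $\log(\beta+(1+\varepsilon)N_{i,t})$ (in place of $\log((1+\varepsilon)N_{i,t})$). Since $\beta\ge 0$ and $5\ge 1$, the radius $c_i(\tau)$ pointwise dominates the LIL width $c_i^{0}(\tau)$; hence the good LIL event $\{\forall \tau:\ \frac1\tau\sum_{n\le\tau}X_{i,n}\le c_i^0(\tau)\}$ implies $\{\forall t\ge L:\ \hatg_{i,t}-w_i\le C_{i,t,\gamma}\}$, so the one-sided failure event we must bound is contained in the LIL failure event, whose probability Theorem~\ref{thm:conc_log} bounds by $\frac{2+\varepsilon}{\varepsilon}\big(\gamma/\log(1+\varepsilon)\big)^{1+\varepsilon}$. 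Assembling the two directions and the $L$ items through the union bound then yields $\Pr(\bigcap_i \calE_{i,\gamma}) \ge 1 - \frac{2L(2+\varepsilon)}{\varepsilon}\big(\gamma/\log(1+\varepsilon)\big)^{1+\varepsilon}$.

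The main obstacle is reconciling the admissible range of $\gamma$: Theorem~\ref{thm:conc_log} requires $\gamma < \log(1+\varepsilon)/e$, whereas the lemma permits the strictly larger range $\gamma < \log(\beta+1+\varepsilon)/e$. The role of the $\beta$-shift inside $C_{i,t,\gamma}$ is precisely to keep the radius real and positive at the smallest sample size $N_{i,t}=1$ over this enlarged range (there $\log(\beta+1+\varepsilon)/\gamma > e$), which is exactly where a naive reduction to the unshifted width $c_i^0$ breaks down. I expect the careful part of the argument to be establishing the tail bound in the $\beta$-shifted form over the full range $\gamma<\log(\beta+1+\varepsilon)/e$, either by re-running the geometric peeling argument underlying Theorem~\ref{thm:conc_log} with $\log(\beta+(1+\varepsilon)\tau)$ in place of $\log((1+\varepsilon)\tau)$, or by splitting the union over $\tau$ into a small-$\tau$ regime handled by a direct sub-Gaussian tail bound (absorbed by the factor $5$) and a large-$\tau$ regime where Theorem~\ref{thm:conc_log} applies verbatim.
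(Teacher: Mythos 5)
Your proposal is correct and is essentially the paper's own proof: the paper likewise rewrites each event $\calE_{i,\gamma}$ in terms of the pull count $N_{i,t}\ge 1$, invokes Theorem~\ref{thm:conc_log} in both directions (the factor $5$ and the $\beta$-shift only enlarge $C_{i,t,\gamma}$ relative to the LIL width, giving the per-item failure probability $\frac{2(2+\varepsilon)}{\varepsilon}\big(\gamma/\log(1+\varepsilon)\big)^{1+\varepsilon}$), and finishes with a union bound over the $L$ arms. The one obstacle you anticipate --- the enlarged range $\gamma<\log(\beta+1+\varepsilon)/e$ --- requires neither re-running the peeling argument nor your small-/large-$\tau$ split (the latter could not work anyway, since the restriction in Theorem~\ref{thm:conc_log} is on $\gamma$, not on $\tau$): for $\gamma\ge\log(1+\varepsilon)/e$ one has $\frac{2L(2+\varepsilon)}{\varepsilon}\big(\gamma/\log(1+\varepsilon)\big)^{1+\varepsilon}\ge \frac{4(2+\varepsilon)}{\varepsilon}e^{-(1+\varepsilon)}>1$ whenever $L\ge 2$, so the claimed inequality is vacuously true on that part of the range, while for $\gamma<\log(1+\varepsilon)/e$ your argument (and the paper's, which simply cites the theorem without comment on the range) applies verbatim.
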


\noindent
{\bf Step 2: Bound on $N_{i,T}$ for $i\ne 1$.}
Next, for all $t > L$, when 
\begin{align*}
    & 
    \{ ~ \hatg_{ 1,t-1 } > w_1 - C_{1,t-1,\gamma},
    \quad
    \hatg_{ i, t-1 } < w_i + C_{i,t-1,\gamma},
    \quad
    \Delta_{1,i} > 2 C_{i,t-1,\gamma},
    \quad
    \forall i \ne 1
    \}
\end{align*}
holds, we have
\begin{align*}
    & 
    \{~
    U_{1, t-1,\gamma } 
    =  \hatg_{ 1,t-1 }  + C_{1,t-1,\gamma}
    > w_1
    = w_i + \Delta_{1,i}
    > w_i + 2C_{ i,t-1,\gamma  }
    > \hatg_{ i, t-1 } + C_{ i,t-1,\gamma  }
    = U_{ i,t-1,\gamma  }
    \quad
    \forall i \ne 1 ~\},
\end{align*}
which indicates $i_t = 1$.
In other words,
when $i_t=i \ne 1$ for $t>L$, one of the following holds:
\begin{align*}
    & \hatg_{1,t-1 } \le w_1 - C_{1,t-1,\gamma},
    \quad
    \hatg_{i,t-1} \ge w_i + C_{ i,t-1,\gamma},
    \quad
    \Delta_{1,i} \le 2 C_{ i,t-1,\gamma},
\end{align*} 
We see that
\begin{align*}
    & \Delta_{1,i} \le 2 C_{ i,t-1,\gamma} = 10 \sigma  (1+ \sqrt{\varepsilon} ) \sqrt{ \frac{ 2 (1+\varepsilon)}{N_{i,t-1}} \cdot \log  \bigg(  \frac{  \log(    \beta +    (1+\varepsilon) N_{i,t-1} ) }{\gamma} \bigg)  }
    \\
    \Leftrightarrow~ &
    N_{i,t-1} \le    \frac{ 200 \sigma^2 (1+ \sqrt{\varepsilon})^2(1+\varepsilon)}{  \Delta_{1,i}^2  } \cdot \log  \bigg(  \frac{  \log(  \beta +  (1+\varepsilon) N_{i,t-1} ) }{\gamma} \bigg).
\end{align*}

In order to bound $N_{i,t-1}$, we derive the following lemma:
\begin{restatable}{lemma}{lemmaIneqLilT}
\label{lemma:ineq_lil_T}
For all  $\tau >0$, 
$1.4ac/\rho + b \ge e$,  we have
\begin{align*}
    \tau \le c \log \bigg(  \frac{ \log(a \tau +b) }{\rho} \bigg)
    ~\Rightarrow~
    \tau \le  c \log \bigg( \frac{ 1.4 }{ \rho } \log  \bigg( \frac{ 1.4 a   c}{\rho } + b \bigg) \bigg).
\end{align*}
\end{restatable}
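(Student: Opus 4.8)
The plan is to convert the implicit (self-referential) bound on $\tau$ into an explicit one by a single monotone substitution, after first linearising the doubly-logarithmic right-hand side. Throughout, write $s := a\tau + b$ and $M := 1.4ac/\rho + b$, so that the hypothesis $1.4ac/\rho + b \ge e$ reads $M \ge e$, i.e.\ $\log M \ge 1$. Multiplying the assumed inequality $\tau \le c\log(\log(a\tau+b)/\rho)$ by $a$ and adding $b$ turns it into $s \le ac\,\log(\log(s)/\rho) + b$. The key observation is that the whole conclusion follows once I establish the clean claim $s \le M^{1.4}$: indeed $s \le M^{1.4}$ gives $\log s \le 1.4\log M$, and feeding this back into the original inequality $\tau \le c\log(\log(s)/\rho)$ while using monotonicity of $\log$ yields exactly $\tau \le c\log\!\big(\tfrac{1.4}{\rho}\log M\big)$, which is the desired bound. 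So the entire task reduces to proving $s \le M^{1.4}$.

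To prove $s \le M^{1.4}$ I would first dispose of the trivial case $s \le 1$ (then $s \le 1 \le M^{1.4}$ since $M \ge e$) and assume $s > 1$, so that $\log(s)/\rho > 0$. Applying the elementary tangent-line inequality $\log y \le y/e$ (valid for all $y>0$) with $y = \log(s)/\rho$ linearises the outer logarithm: $\log(\log(s)/\rho) \le \log(s)/(e\rho)$, whence $s \le \Psi(s) := \tfrac{ac}{e\rho}\log s + b$. The function $\Psi$ is concave and increasing, so $\{s : s \le \Psi(s)\}$ is an interval whose right endpoint is the larger solution of $s = \Psi(s)$; thus it suffices to verify $\Psi(M^{1.4}) \le M^{1.4}$ together with the fact that $M^{1.4}$ lies to the right of the maximiser $s^\ast = ac/(e\rho)$ of $\Psi(s) - s$, so that the crossing picked out is the upper one.

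The inequality $\Psi(M^{1.4}) \le M^{1.4}$ is where the constant $1.4$ is pinned down. Expanding, $\Psi(M^{1.4}) = \tfrac{1.4ac}{e\rho}\log M + b$; since $\tfrac{1.4ac}{\rho} = M - b \le M$ and $b \le M$, this is at most $\tfrac{M\log M}{e} + M = M\big(1 + \tfrac1e\log M\big)$. Hence it remains to check the purely scalar inequality $1 + \tfrac1e\log M \le M^{0.4}$ for all $M \ge e$ (note $0.4 = 1.4 - 1$), which holds because $\phi(M) := M^{0.4} - 1 - \tfrac1e\log M$ satisfies $\phi(e) > 0$ and $\phi'(M) = \tfrac1M\big(0.4\,M^{0.4} - \tfrac1e\big) > 0$ on $[e,\infty)$. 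Finally $s^\ast = ac/(e\rho) = (M-b)/(1.4e) < M < M^{1.4}$, so $M^{1.4}$ is indeed past the maximiser, which upgrades $\Psi(M^{1.4}) \le M^{1.4}$ to $s \le M^{1.4}$ and closes the argument.

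The main obstacle is entirely in this last reduction: resolving the self-referential inequality without losing the precise constant. The doubly-logarithmic form tempts one to substitute a crude bound for $\tau$ back into the right-hand side, but that discards too much and does not reproduce the factor $1.4$; the point is to linearise \emph{only} the outer logarithm (keeping $\log s$ intact), collapse everything to the one-dimensional inequality $1 + \tfrac1e\log M \le M^{0.4}$ on $[e,\infty)$, and then do the concavity bookkeeping carefully (checking $s^\ast < M^{1.4}$) so that $\Psi(M^{1.4}) \le M^{1.4}$ genuinely bounds $s$ from above rather than below.
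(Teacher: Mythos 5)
Your proof is correct, and it reaches the paper's conclusion by a genuinely different route, though both arguments share the same ``barrier'' skeleton: exhibit the explicit candidate as a super-solution of the self-referential inequality, then argue the solution set cannot extend past it. The paper works directly in the $\tau$ variable: it plugs the candidate $\tau^{*}=c\log\big(\tfrac{1.4}{\rho}\log(\tfrac{1.4ac}{\rho}+b)\big)$ into $f(\tau)=c\log(\log(a\tau+b)/\rho)$, reduces the super-solution check to the scalar fact $x^{1.4}\ge x\log x$ for $x\ge 1$ (the source of its constant $1.4$), and concludes by asserting that $\tau-f(\tau)$ is monotonically increasing. You instead substitute $s=a\tau+b$, linearize only the outer logarithm via $\log y\le y/e$, and reduce to the concave fixed-point inequality $s\le\Psi(s)$ with $\Psi(s)=\tfrac{ac}{e\rho}\log s+b$ and barrier $M^{1.4}$, where $M=1.4ac/\rho+b$; your pinned scalar inequality $1+\tfrac{1}{e}\log M\le M^{0.4}$ on $[e,\infty)$ plays exactly the role of the paper's $x^{0.4}\ge\log x$. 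What your version buys is rigor at the one point where the paper is glib: $\tau-f(\tau)$ is \emph{not} monotone in general (its derivative is $1-ca/[(a\tau+b)\log(a\tau+b)]$, which is negative when $a\tau+b$ is close to $1$), so the correct justification is concavity of $f$ (equivalently of $\Psi$), which makes $\{s: s\le\Psi(s)\}$ an interval, combined with a check that the barrier lies past the maximizer of $\Psi(s)-s$ --- precisely your verification $s^{*}=ac/(e\rho)<M<M^{1.4}$. Your argument thus quietly patches a small gap in the paper's write-up while paying the same constant. One cosmetic caveat: like the paper, you tacitly assume $a,c,\rho>0$ and $b\ge 0$ (used in $M-b\le M$ and $b\le M$); this matches the lemma's intended scope in the paper ($a=1+\varepsilon$, $b=\beta\ge 0$, $\rho=\gamma\in(0,1)$), but is worth stating explicitly.
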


We apply Lemma~\ref{lemma:ineq_lil_T} with
\begin{align*}
    c = \frac{ 200 \sigma^2 (1+ \sqrt{\varepsilon})^2(1+\varepsilon)}{  \Delta_{1,i}^2  } ,
    \quad
    a = 1 + \varepsilon,
    \text{ and }~
    \rho = \gamma, 
\end{align*}
to obtain
\begin{align*}
        N_{i,t-1} \le 
         \frac{ 200 \sigma^2 (1+ \sqrt{\varepsilon})^2(1+\varepsilon)}{  \Delta_{1,i}^2  } 
         \cdot 
          \log \bigg( \frac{ a_1 }{ \gamma } \log  \bigg(\frac{ 200 a_1 \sigma^2 (1+ \sqrt{\varepsilon})^2(1+\varepsilon)^2 }{  \Delta_{1,i}^2 \gamma  }     +\beta    \bigg) \bigg)
          := \bar{N}_{i,\gamma}.
\end{align*}
$1.4ac/\rho + b \ge e$ is satisfied when $\gamma\in (0,1)$.
Therefore, when $t>L$, $\bigcap_{i=1}^L  \calE_{i,\gamma}$ holds and $N_{i,t-1} > \barN_{i, \gamma}$ for all $i \neq 1 $,
we always have $i_t=1$.

\noindent
{\bf Step 3: Conclusion.}
Consequently,
\begingroup
\allowdisplaybreaks
\begin{align*}
    R_T 
    &
    = \bbE \bigg[ \sum_{t=1}^T g_{1,t} - g_{i_t,t} \bigg]
    = \bbE \bigg[  \bigg( \sum_{t=1}^T g_{1,t} - g_{i_t,t} \bigg) \cdot \mathsf{1} \bigg( \bigcap_{i=1}^L  \calE_{i,\gamma} \bigg) \bigg]
        + \bbE \Bigg[  \bigg( \sum_{t=1}^T g_{1,t} - g_{i_t,t} \bigg) \cdot \mathsf{1} \Bigg(~ \overline{ \bigcap_{i=1}^L  \calE_{i,\gamma} } ~\Bigg) \Bigg]
    \\&
    \le  \bbE \bigg[  \bigg( \sum_{t=1}^T g_{1,t} - g_{i_t,t} \bigg) \cdot \mathsf{1} \bigg( \bigcap_{i=1}^L  \calE_{i,\gamma} \bigg) \bigg]
           +  T \cdot  \Pr \Bigg(~ \overline{ \bigcap_{i=1}^L  \calE_{i,\gamma} } ~\Bigg)
     \\&
     \le \sum_{j\ne 1} \bbE \bigg[  \bigg( \sum_{t=1}^T g_{1,t} - g_{i_t,t} \bigg) \cdot \mathsf{1} \bigg( i_t=j, \bigcap_{i=1}^L  \calE_{i,\gamma} \bigg) \bigg]
     +  T \cdot  \Pr \Bigg(~ \overline{ \bigcap_{i=1}^L  \calE_{i,\gamma} } ~\Bigg)
     \\& 
     \le \sum_{j\ne 1} \Delta_{1,j} \cdot \bbE \bigg[    \sum_{t=1}^T   \cdot \mathsf{1}  ( i_t=j   )~\bigg|~ \bigcap_{i=1}^L  \calE_{i,\gamma} \bigg]
     +   T \cdot \Pr \Bigg(~ \overline{ \bigcap_{i=1}^L  \calE_{i,\gamma} } ~\Bigg)
     \\&
     = \sum_{j\ne 1} \Delta_{1,j  }\cdot \bbE \bigg[  N_{ j,T} ~\bigg|~ \bigcap_{i=1}^L  \calE_{i,\gamma} \bigg]
     +   T \cdot \Pr \Bigg(~ \overline{ \bigcap_{i=1}^L  \calE_{i,\gamma} } ~\Bigg)
     \\&
     \le \sum_{j \ne 1} \Delta_{1, j} \cdot ( 2 + \bar{N}_{j,\gamma} )  +   T \cdot \Pr \Bigg(~ \overline{ \bigcap_{i=1}^L  \calE_{i,\gamma} } ~\Bigg)
     \\&
     = \sum_{i \ne 1} 2 \Delta_{1,i}  + \sum_{i \ne 1} \frac{ 200 \sigma^2 (1+ \sqrt{\varepsilon})^2(1+\varepsilon)}{  \Delta_{1,i}  } 
         \cdot 
          \log \bigg( \frac{ a_1 }{ \gamma } \log  \bigg(\frac{ 200 a_1   \sigma^2 (1+ \sqrt{\varepsilon})^2(1+\varepsilon)^2 }{  \Delta_{1,i}^2 \gamma  }  + \beta     \bigg) \bigg)
          \\& \hspace{2em}
     + \frac{ 2TL(2+\varepsilon)}{ \varepsilon }  \bigg(  \frac{ \gamma }{ \log(1+\varepsilon) }  \bigg)^{1+\varepsilon}
     \\&
     = \sum_{i \ne 1} 2 \Delta_{1,i}  + \sum_{i \ne 1} \frac{ 200 \sigma^2 (1+ \sqrt{\varepsilon})^2(1+\varepsilon)}{  \Delta_{1,i}  } 
         \cdot 
          \log \bigg( \frac{ 2 a_1 }{ \gamma } \log  \bigg(\frac{ 10 \sqrt{2a_1} \cdot \sigma (1+ \sqrt{\varepsilon}) (1+\varepsilon)  }{  \Delta_{1,i}  \sqrt{\gamma}  } + \beta     \bigg) \bigg)
          \\& \hspace{2em}
     + \frac{2 TL(2+\varepsilon)}{ \varepsilon }  \bigg(  \frac{ \gamma }{ \log(1+\varepsilon) }  \bigg)^{1+\varepsilon}.
\end{align*}
\endgroup
We see that $a_1=1.4\le 2$.
If we divide the ground set into two classes depending on whether $\Delta_{1,i}\ge \sqrt{L/T}$, we have
\begin{align*}
    R_T
    &
    \le T \cdot \sqrt{ \frac{L}{T} } + 2L  + 
    \frac{ 200L \sigma^2 (1+ \sqrt{\varepsilon})^2(1+\varepsilon)}{ \sqrt{L/T}  } 
          \log \bigg( \frac{ 4 }{ \gamma } \log  \bigg(\frac{ 20 \sigma (1+ \sqrt{\varepsilon}) (1+\varepsilon)  }{  \sqrt{L/T} \sqrt{\gamma}  }    + \beta    \bigg) \bigg)
          \\& \hspace{2em}
     + \frac{ 2TL(2+\varepsilon)}{ \varepsilon }  \bigg(  \frac{ \gamma }{ \log(1+\varepsilon) }  \bigg)^{1+\varepsilon}
    \\&
    =  \sqrt{TL} \cdot \bigg[ 1+ 
    200  \sigma^2 (1+ \sqrt{\varepsilon})^2(1+\varepsilon) 
          \log \bigg( \frac{ 4 }{ \gamma } \log  \bigg(\frac{ 20  \sigma (1+ \sqrt{\varepsilon}) (1+\varepsilon)  }{  \sqrt{\gamma L/T}   }  + \beta     \bigg) \bigg)
    \bigg]
        \\ & \hspace{2em}
      + 2L  +  \frac{ 2TL(2+\varepsilon)}{ \varepsilon }  \bigg(  \frac{ \gamma }{ \log(1+\varepsilon) }  \bigg)^{1+\varepsilon}. 
\end{align*}
In short, we have
\begin{align*}
    R_T
    &
    \le O\bigg( \sigma^2  \cdot  \sum_{i\ne 1} \frac{  \log (  {1}/{\gamma} )  }{  \Delta_{1,i}  } +  2TL  \gamma^{1+\varepsilon}  \bigg)
    ,
    \\ 
    R_T
    & 
    \le O \bigg(  \sigma^2   \sqrt{TL} \log \bigg( \frac{ \log(  T/L\gamma  )  }{\gamma }
    \bigg)
    +  2TL  \gamma^{1+\varepsilon} 
    ~\bigg).
\end{align*}

Let $\gamma = (\log T)/ {T}$, we have
\begin{align*}
    R_T
    \le O\bigg( \sigma^2    \cdot \sum_{i\ne 1} \frac{  \log T  }{  \Delta_{1,i}  }  \bigg)
    ,
    \quad
    R_T
    \le O \big(  \sigma^2    \sqrt{TL} \log  T     
    ~\big).
\end{align*} 
%
\end{proof}

 \subsection{Proof of Theorem~\ref{thm:bai_bd_lilUcb_para} }
\label{pf:thm_bai_bd_lilUcb_para}

\thmBaiBdLilUcbPara*
\begin{proof}

Recall that we assume $w_1 > w_2 \ge \ldots \ge w_L$.
We let $\Delta_1 = w_1 - w_2$ and $\Delta_i = w_1 - w_i$ for all $i \ne 1$.
Then $\Delta = \Delta_1$ and $\Delta_{1,i} = \Delta_{i}$ for $i\ne 1$.

\noindent
{\bf Step 1: Concentration.}
Let  $ \calE'_{i,\gamma} : = \{ \forall t \geq L,  
     | \hat{g}_{ i,t } - w_i | \le C_{i,t,\gamma}/5  \}$ for all $i\in[L]$. Similarly to Lemma~\ref{lemma:lilUcb_conc}, we can apply Theorem~\ref{thm:conc_log} to show that 
\begin{align*}
    \Pr \bigg(  \bigcap_{i=1}^L  \calE'_{i,\gamma} \bigg)
    \ge 1- \frac{ 2L(2+\varepsilon)}{ \varepsilon }  \bigg(  \frac{ \gamma }{ \log(1+\varepsilon) }  \bigg)^{1+\varepsilon}.
\end{align*}  
In the following, we prove that conditioning on the event $ \big\{  \bigcap_{i=1}^L  \calE'_{i,\gamma} \big\}$, we have $\iout =1$, which concludes the proof. 

We assume $\bigcap_{i=1}^L  \calE'_{i,\gamma}$ holds from now on.
Since $\iout$ is the item with the largest empirical mean, we have
\begin{align*}
    \hat{g}_{\iout,T} \ge \hat{g}_{i,t} \quad \forall i \ne \iout ,\quad
    \hat{g}_{ \iout,T } \ge w_{ \iout } - C_{\iout,T,\gamma}/5,\quad
    w_i + C_{i,T,\gamma}/5  \ge \hat{g}_{i,t} \quad \forall i \ne \iout.
\end{align*}      
Consequently, to show $\iout =1$, it is sufficient to show that 
\begin{align}
    \frac{  C_{i,T,\gamma} }{  5 }    \le  \frac{ \Delta_{i} }{2}
    ~\Leftrightarrow~
    & \Delta_{ i} \ge \frac{2 C_{ i,T,\gamma}}{5} = 2  \sigma  (1+ \sqrt{\varepsilon} ) \sqrt{ \frac{ 2 (1+\varepsilon)}{N_{i,T}} \cdot \log  \bigg(  \frac{  \log(   \beta +  (1+\varepsilon) N_{i,T} ) }{\gamma} \bigg)  }
    \nonumber \\
    \Leftrightarrow~ &
    N_{i,T} \ge    \frac{ 8\sigma^2 (1+ \sqrt{\varepsilon})^2(1+\varepsilon)}{  \Delta_{ i}^2  } \cdot \log  \bigg(  \frac{  \log(    \beta + (1+\varepsilon) N_{i,T} ) }{\gamma} \bigg) \quad \forall i\in[L].
    \label{eq:lilUcb_bai_target}
\end{align}

\noindent
{\bf Step 2: Upper bound $N_{i,T}~(i \ne 1)$.}
To begin with, we let $a_1=2$ and prove prove by induction that
\begin{align} 
    N_{i,t} \le \frac{  72 \sigma^2 (1+ \sqrt{\varepsilon})^2(1+\varepsilon)}{  \Delta_{ i}^2  } \cdot \log \bigg(  \frac{ a_1 }{ \gamma } \log \bigg(   \frac{  72 a_1 \sigma^2 (1+ \sqrt{\varepsilon})^2(1+\varepsilon)^2 }{  \Delta_{ i}^2 \gamma  }  + \beta      \bigg)~\bigg) + 1
    \quad
    \forall i \ne 1.
    \label{eq:lilUcb_ub_num_pull} 
\end{align}
Clearly, this inequality holds for all $ i \ne 1 $ when $1\le t \le L$. Now we assume that the inequality holds for all $ i \ne 1 $ at time $t-1 (t>L)$. 
If $i_t \ne i$, we have $N_{i,t} = N_{i,t-1}$ and the inequality still holds for $i$.
Otherwise, we have $i_t = i$ and in particular $U_{ i,t-1,\gamma } \ge U_{1,t-1,\gamma}$.
Since
\begin{align*}
    U_{i,t-1,\gamma} = \hat{g}_{i,t-1} +  C_{i,t-1,\gamma} 
    \le w_i + \frac{ 6C_{i,t-1,\gamma} }{5},
    \quad 
    U_{1,t-1,\gamma} = \hat{g}_{1,t-1} +  C_{1,t-1,\gamma}  
    \ge w_1 + \frac{ 4C_{1,t-1,\gamma} }{5}
    \ge w_1 = w_i + \Delta_{i}, 
\end{align*}
we have
\begin{align*}
    \frac{ 6C_{i,t-1,\gamma} }{5} \ge \Delta_i
    &
    ~\Leftrightarrow~
    N_{i,t-1} \le \frac{  72 \sigma^2 (1+ \sqrt{\varepsilon})^2(1+\varepsilon)}{  \Delta_{ i}^2  } \cdot \log  \bigg(  \frac{  \log(   \beta + (1+\varepsilon) N_{i,t-1} ) }{\gamma} \bigg)
    \\&
    ~\overset{(a)}{\Rightarrow}~
    N_{i,t-1} \le \frac{  72 \sigma^2 (1+ \sqrt{\varepsilon})^2(1+\varepsilon)}{  \Delta_{ i}^2  } \cdot \log \bigg(  \frac{ a_1 }{ \gamma } \log \bigg(   \frac{  72 a_1 \sigma^2 (1+ \sqrt{\varepsilon})^2(1+\varepsilon)^2 }{  \Delta_{ i}^2 \gamma  } +  \beta     \bigg)~\bigg).
\end{align*}
We obtain (a) using Lemma~\ref{lemma:ineq_lil_T}   with $\gamma\in(0,1)$:
\lemmaIneqLilT* 
Subsequently, by using $N_{i,t} = N_{i,t-1}+1$, we obtain~\eqref{eq:lilUcb_ub_num_pull}.

\noindent
{\bf Step 3: Lower bound $N_{i,T}~(i \ne 1)$.}
Next, we again prove by induction  that
\begin{align}
    N_{i,t }  
       \ge 
       200\sigma^2 (1+\varepsilon) (1+ \sqrt{\varepsilon} )^2 \cdot  \log  \bigg(  \frac{  \log(    \beta + (1+\varepsilon) N_{i,t } ) }{\gamma} \bigg) 
       \cdot 
       \min \bigg\{
       \frac{ 1}{ 25  \Delta_i^2  }  
       ,\quad 
       \frac{ 1 }{ 36(C_{1,t-1,\gamma}  )^2  }
       \bigg\}
       ,
       \quad 
       \forall i\ne 1.
    \label{eq:lilUcb_lb_num_pull} 
\end{align}
Clearly, this inequality holds for all $ i \ne 1$ when $1\le t \le L$. Now we assume that these inequalities hold for all $ i \ne 1 $ at time $t-1 (t>L)$. 
If $i_t \ne 1$, we have 
$$N_{i,t} \ge N_{i,t-1} \quad \forall i\ne 1,\quad N_{1,t} = N_{1,t-1},$$ 
which implies that the inequalities still hold for all $i \ne 1$.
Otherwise, $i_t = 1$ indicates that $U_{1,t-1,\gamma} \ge U_{i,t-1,\gamma}$ for all $i \neq 1$.
Since
\begin{align*}
    U_{1,t-1,\gamma} = \hat{g}_{1,t-1} +  C_{1,t-1,\gamma} 
    \le w_1 + \frac{ 6C_{1,t-1,\gamma} }{5},
    \quad 
    U_{i,t-1,\gamma} = \hat{g}_{i,t-1} +  C_{i,t-1,\gamma}  
    \ge w_i + \frac{ 4C_{i,t-1,\gamma} }{5},
\end{align*}
we have
\begin{align*}
    &
    \frac{ 4C_{i,t-1,\gamma} }{5} \le \Delta_i + \frac{ 6C_{1,t-1,\gamma} }{5}
    \\
    \Leftrightarrow~
    &
    C_{i,t-1,\gamma} = 5\sigma  (1+ \sqrt{\varepsilon} ) \sqrt{ \frac{ 2 (1+\varepsilon)}{N_{i,t-1}} \cdot \log  \bigg(  \frac{  \log(    \beta + (1+\varepsilon) N_{i,t-1} ) }{\gamma} \bigg)  }   \le \frac{  5  \Delta_i +  6C_{1,t-1,\gamma} }{ 4 }
    \\
    \Leftrightarrow~
    &
    \frac{ 20\sigma  (1+ \sqrt{\varepsilon} )}{ 5  \Delta_i +  6C_{1,t-1,\gamma}   } \cdot \sqrt{  \log  \bigg(  \frac{  \log(    \beta + (1+\varepsilon) N_{i,t-1} ) }{\gamma} \bigg)  }   \le  
    \sqrt{ \frac{N_{i,t-1}}{ 2 (1+\varepsilon)}  }
    \\
    \Leftrightarrow~
    &
    \frac{ 400\sigma^2  (1+ \sqrt{\varepsilon} )^2}{ (5  \Delta_i +  6C_{1,t-1,\gamma}  )^2  } \cdot  \log  \bigg(  \frac{  \log(    \beta + (1+\varepsilon) N_{i,t-1} ) }{\gamma} \bigg)      \le  
    \frac{N_{i,t-1}}{ 2 (1+\varepsilon)}   
    \\
    \Leftrightarrow~
    &
      N_{i,t-1} \ge \frac{ 800\sigma^2 (1+\varepsilon) (1+ \sqrt{\varepsilon} )^2}{ (5  \Delta_i +  6C_{1,t-1,\gamma}  )^2  } \cdot  \log  \bigg(  \frac{  \log(    \beta + (1+\varepsilon) N_{i,t-1} ) }{\gamma} \bigg)     
       .
\end{align*}
We apply $u + v\le 2\max\{ u,v \} $ and  $N_{i,t} = N_{i,t-1}$ for all $i\ne 1$ to obtain \eqref{eq:lilUcb_lb_num_pull}.

\noindent
{\bf Step 4: Lower bound on $N_{1,T}$.}
Recall that we want to show \eqref{eq:lilUcb_bai_target}.
(i) To show \eqref{eq:lilUcb_bai_target} holds for all $i\ne 1$, 
\eqref{eq:lilUcb_lb_num_pull} indicates that
it is sufficiently to show that
\begin{align}
    &
     \frac{ 200\sigma^2 (1+\varepsilon) (1+ \sqrt{\varepsilon} )^2}{ 36(  C_{1,T-1,\gamma}  )^2  } \cdot  \log  \bigg(  \frac{  \log(    \beta + (1+\varepsilon) N_{i,T} ) }{\gamma} \bigg)     
     \ge   
     \frac{ 8\sigma^2 (1+ \sqrt{\varepsilon})^2(1+\varepsilon)}{  \Delta_{ i}^2  } \cdot \log  \bigg(  \frac{  \log(  \beta + (1+\varepsilon) N_{i,T} ) }{\gamma} \bigg)
     .
     \nonumber 
\end{align}
Moreover, since $ \Delta_1 = \min\limits_{i\in[L] } \Delta_i  $, it is sufficient to show
\begin{align*}
     \frac{ 25  }{ 36(C_{1,T-1,\gamma})^2 }  
     \ge  \frac{  1  }{  \Delta_{1}^2  }  
     ~\Leftrightarrow~ 
     C_{ 1,T-1,\gamma } \le \frac{  5 \Delta_1  }{ 6 }.
\end{align*}
(ii) In order to show \eqref{eq:lilUcb_bai_target} holds for all $i\in [L]$, 
it is sufficient to show that
\begin{align*}
    C_{ 1,T-1,\gamma } \le \frac{  5 \Delta_1  }{ 6}.
\end{align*}
 With $a_1=2$, this is implied by 
\begin{align*}
  &  N_{1,T-1} \ge  \frac{  72 \sigma^2 (1+ \sqrt{\varepsilon})^2(1+\varepsilon)}{  \Delta_{ i}^2  } \cdot \log \bigg(  \frac{a_1 }{ \gamma } \log \bigg(   \frac{  72 a_1  \beta   \sigma^2 (1+ \sqrt{\varepsilon})^2(1+\varepsilon)^2 }{  \Delta_{ i}^2 \gamma  }  \bigg)~\bigg)
    \\
    & 
    ~\Leftrightarrow~ 
    N_{1,T} \ge \frac{  72 \sigma^2 (1+ \sqrt{\varepsilon})^2(1+\varepsilon)}{  \Delta_{ i}^2  } \cdot \log \bigg(  \frac{ a_1 }{ \gamma } \log \bigg(   \frac{  72 a_1 \sigma^2 (1+ \sqrt{\varepsilon})^2(1+\varepsilon)^2 }{  \Delta_{ i}^2 \gamma  }  +  \beta    \bigg)~\bigg) +1.
\end{align*} 
We obtain the inequality in the above display by applying Lemma~\ref{lemma:ineq_lil_T}.
Meanwhile, \eqref{eq:lilUcb_ub_num_pull} and $t = \sum_{i=1}^L N_{ i,t}$ implies that
\begin{align*}
    N_{1,T} = T - \sum_{i\ne 1} N_{ i,T}
    \ge 
    T - ( L- 1) -\sum_{i\ne 1} \frac{  72 \sigma^2 (1+ \sqrt{\varepsilon})^2(1+\varepsilon)}{  \Delta_{ i}^2  } \cdot \log \bigg(  \frac{ a_1 }{ \gamma } \log \bigg(   \frac{  72 a_1 \sigma^2 (1+ \sqrt{\varepsilon})^2(1+\varepsilon)^2 }{  \Delta_{ i}^2 \gamma  }  + \beta    \bigg)~\bigg) .
\end{align*}
Altogether, we complete the proof with 
\begin{align}
    \sum_{i=1}^L 
    \frac{  72 \sigma^2 (1+ \sqrt{\varepsilon})^2(1+\varepsilon)}{  \Delta_{ i}^2  } \cdot \log \bigg(  \frac{ a_1 }{ \gamma } \log \bigg(   \frac{  72 a_1  \sigma^2 (1+ \sqrt{\varepsilon})^2(1+\varepsilon)^2 }{  \Delta_{ i}^2 \gamma  }  + \beta  \bigg)~\bigg)
    \le T-L +1 .
    \label{eq:lilUcb_condition_omega}
\end{align}

\noindent
{\bf Step 5: Conclusion.} 
%
%
%
%
Since 
\begin{align*}
	&  
    \frac{  72 \sigma^2 (1+ \sqrt{\varepsilon})^2(1+\varepsilon)}{  \Delta_{ i}^2  } \cdot \log \bigg(  \frac{ a_1 }{ \gamma } \log \bigg(   \frac{  72 a_1 \sigma^2 (1+ \sqrt{\varepsilon})^2(1+\varepsilon)^2 }{  \Delta_{ i}^2 \gamma  } +\beta  \bigg)~\bigg)
    \\&
	\le 
	\frac{  72 \sigma^2 (1+ \varepsilon )^3 }{  \Delta_{ i}^2  } \cdot \log \bigg(  \frac{ 2 a_1}{ \gamma^2 } \log \bigg(   \frac{  6 \sqrt{2a_1} \cdot \sigma  (1+\varepsilon)^2 }{  \Delta_{ i}    }    +  \beta   \bigg)~\bigg),
\end{align*}
To show \eqref{eq:lilUcb_condition_omega},
it is sufficient to have
\begin{align}
    &
    \sum_{i=1}^L 
    \frac{  72 \sigma^2 (1+ \varepsilon )^3 }{  \Delta_{ i}^2  } \cdot \log \bigg(  \frac{ 2 a_1 }{ \gamma^2 } \log \bigg(   \frac{ 6 \sqrt{2a_1} \cdot  \sigma  (1+\varepsilon)^2 }{  \Delta_{ i}    }  + \beta    \bigg)~\bigg)
    \le T-L +1 
    \nonumber \\
    \Leftrightarrow~
    &
    \sum_{i=1}^L \frac{  72 \sigma^2 (1+ \varepsilon )^3 }{  \Delta_{ i}^2  } \cdot \log \bigg(  \frac{ 2 a_1 }{ \gamma^2 }  \bigg)
    \le T-L +1 - 
    \sum_{i=1}^L \frac{  72 \sigma^2 (1+ \varepsilon )^3 }{  \Delta_{ i}^2  } \cdot \log \bigg(   \log \bigg(   \frac{  6 \sqrt{2a_1} \cdot  \sigma  (1+\varepsilon)^2 }{  \Delta_{ i}    }  + \beta     \bigg)~\bigg) 
    \nonumber \\
    \Leftrightarrow~
    &
      \gamma   
    \ge 
    \sqrt{2a_1} \cdot 
    \exp \Bigg( -
    \frac{ T-L +1 - 
    \sum_{i=1}^L
     72 \sigma^2 \Delta_{ i}^{-2} \cdot(1+ \varepsilon )^3  \log \big(   \log \big(   \frac{  6 \sqrt{2a_1} \cdot \sigma  (1+\varepsilon)^2 }{  \Delta_{ i}    }   +\beta    \big)~\big) 
         }{ 
     \sum_{i=1}^L  {  144 \sigma^2 (1+ \varepsilon )^3 }{  \Delta_{ i}^{-2}  }  }
    \Bigg).
    \nonumber
\end{align}
Recall the definition of $H_2$ in \eqref{eq:def_H_term}:
\begin{align*}
	H_2 = \sum_{i\ne 1} \frac{ 1 }{\Delta_{1,i}^2 }.
\end{align*}
Furthermore, it suffices to have
\begin{align*} 
	\gamma    
	&
    \ge 
    \sqrt{2a_1} \cdot 
    \exp \Bigg( -
    \frac{ T-L  }{ 144 \sigma^2 (1+ \varepsilon )^3 (H_2 + 1/\Delta_{1,2}^2)  }
    + \frac{1}{2} \log \bigg(   \log \bigg(   \frac{  6 \sqrt{2a_1} \cdot  \sigma  (1+\varepsilon)^2 }{  \Delta_{ 1,2}    }  +\beta  \bigg)~\bigg) 
    ~
    \Bigg)
    \\&
    =
    \sqrt{   2.8  \cdot \log \bigg(   \frac{  6 \sqrt{2.8} \cdot  \sigma  (1+\varepsilon)^2 }{  \Delta_{ 1,2}    }  +\beta   \bigg) }
    \exp \bigg( -
    \frac{ T-L  }{ 144 \sigma^2 (1+ \varepsilon )^3 (H_2 + 1/\Delta_{1,2}^2)  } 
    \bigg)
    := \gamma_1 (\Delta_{1,2} , H_2) 
    . 
\end{align*}   
Note that $\Delta=\Delta_{1,2}$.
When $\gamma = \gamma_1 (\Delta , H_2) $,
\begin{align*} 
    e_T 
    &
    \le \frac{ 2L(2+\varepsilon)}{ \varepsilon }  \bigg(  \frac{ \gamma_1 (\Delta , H_2) }{ \log(1+\varepsilon) }  \bigg)^{1+\varepsilon}
    \\&
    = \frac{ 2L(2+\varepsilon)   }{ \varepsilon [\log(1+\varepsilon)]^{1+\varepsilon} }  
    \cdot  
    \bigg[  2.8  \log \bigg(   \frac{  6 \sqrt{2.8} \cdot \sigma  (1+\varepsilon)^2 }{  \Delta    }  +\beta    \bigg) \bigg]^{(1+\varepsilon)/2} 
    \cdot
    \exp \bigg( -
    \frac{ T-L  }{ 144 \sigma^2 (1+ \varepsilon )^2 (H_2 + 1/\Delta^2)  } 
    \bigg) .
\end{align*}
%
%
 \end{proof}

 \subsection{Proof of Lemma~\ref{lemma:lilUcb_conc} }
\lemmaLilUCBConc*
 \begin{proof} 
Let
 \begin{align*}
     \calE_{i,\gamma} : = \{ \forall t \geq L,  
     | \hat{g}_{ i,t } - w_i | \le C_{  i,t,\gamma }  \}.
 \end{align*}
Then 
\begin{align*}
    \Pr( \calE_{i,\gamma})
    &
    = \Pr \bigg(\forall t \geq L, \big| \hat{g}_{i,t} -  w_i \big| \leq 5\sigma  (1+ \sqrt{\varepsilon} ) \sqrt{ \frac{ 2 (1+\varepsilon)}{N_{i,t }} \cdot \log  \bigg(  \frac{  \log(   \beta +    (1+\varepsilon) N_{i,t-1} ) }{\gamma} \bigg)  }   ~ \bigg) 
    \\&
    = \Pr \bigg(\forall N_{i,t } \geq 1,
    \\& \hspace{4em}
         \bigg| \bigg( \frac{ 1 }{ N_{i,t} }  \sum_{ u=1 }^{t} g_{i,u} \cdot  \mathsf{1}\{ i_u=i \}  \bigg) -  w_i \bigg| \leq 5\sigma  (1+ \sqrt{\varepsilon} ) \sqrt{ \frac{ 2 (1+\varepsilon)}{N_{i,t }} \cdot \log  \bigg(  \frac{  \log(  \beta  +    (1+\varepsilon) N_{i,t } ) }{\gamma} \bigg)  }   ~ \bigg) 
\end{align*}
When $\varepsilon\in (0,1)$ and  $\gamma \in (0, \log( \beta+   1+\varepsilon)/e ) $, Theorem~\ref{thm:conc_log} indicates that 
$$\Pr( \calE_{i,\gamma}) \ge 1- \frac{2(2+\varepsilon)}{ \varepsilon }  \bigg(  \frac{ \gamma }{ \log(1+\varepsilon) }  \bigg)^{1+\varepsilon}.$$  
Furthermore,
\begin{align*}
    &\Pr \bigg(  \bigcap_{i=1}^L  \calE_{i,\gamma} \bigg)
    = 1 - \Pr \Bigg( ~ \overline{ \bigcap_{i=1}^L  \calE_{i,\gamma} } ~\Bigg)
    = 1 - \Pr \bigg(   \bigcup_{i=1}^L \overline{   \calE_{i,\gamma}  } \bigg) \\
    &\quad\ge 1 -  \sum_{i=1}^L \Pr \big(~   \overline{ \calE_{i,\gamma}  } ~\big)
    \ge 1- \frac{ 2L(2+\varepsilon)}{ \varepsilon }  \bigg(  \frac{ \gamma }{ \log(1+\varepsilon) }  \bigg)^{1+\varepsilon}.
\end{align*} 
 \end{proof}

\subsection{Proof of Lemma \ref{lemma:ineq_lil_T}}
\label{pf:lemma_ineq_lil_T}

\lemmaIneqLilT*

\begin{proof}
Let 
\begin{align*}
    f(\tau )  =  c \log \bigg(  \frac{ \log(a \tau + b ) }{\rho} \bigg), 
    \quad     
    \tau_{a_1,a_2} =  c \log \bigg( \frac{ a_1 }{ \rho } \log  \bigg( \frac{a_2 c}{\rho } + b \bigg) \bigg).
\end{align*}
Then
\begin{align*}
    \tau_{a_1,a_2} \ge f( \tau_{a_1,a_2} )  
    ~\Leftrightarrow~ &
    c \log \bigg( \frac{ a_1 }{ \rho } \log  \bigg( \frac{a_2 c}{\rho } + b \bigg) \bigg)
    \ge  c \log \bigg(  \frac{ 1 }{\rho}   \log\bigg[ac \log \bigg( \frac{ a_1 }{ \rho } \log  \bigg( \frac{a_2 c}{\rho } \bigg) \bigg) + b\bigg] \bigg)
    \\ \Leftrightarrow~ &
    a_1 \log  \bigg( \frac{a_2 c}{\rho } + b \bigg)  
    \ge   \log\bigg[ac \log \bigg( \frac{ a_1 }{ \rho } \log  \bigg( \frac{a_2 c}{\rho } \bigg) \bigg)+b \bigg] .
\end{align*}
Let $a_1\ge 1.4$, then  $x^{a_1} \ge x \log x$ for all $x \ge 1$. To obtain $\tau_{a_1,a_2} \ge f( \tau_{a_1,a_2} )  $, it suffices to have
\begin{align*}
      &
     \bigg( \frac{a_2 c}{\rho } + b \bigg)  
     \cdot \log  \bigg( \frac{a_2 c}{\rho } + b \bigg) 
      \ge  
     ac \log \bigg( \frac{ a_1 }{ \rho } \log  \bigg( \frac{a_2 c}{\rho } \bigg) \bigg)+b,
\end{align*}
which is implied by 
\begin{align*}
       \frac{a_2 c}{\rho } + b \ge e
       ~\text{ and }~
        \frac{a_2 c}{\rho }  
     \cdot \log  \bigg( \frac{a_2 c}{\rho } + b \bigg)  
      \ge  
       \frac{ ac a_1 }{ \rho } \log  \bigg( \frac{a_2 c}{\rho } \bigg) 
       .
\end{align*}
Conditioned on
$a_2c/\rho + b \ge e$,
the last inequality holds when $a_2\ge a \cdot a_1$.
Since $\tau - f(\tau)$ is monotonically increasing in $\tau$, and $ \tau_{a_1, a\cdot a_1} \ge f( \tau_{a_1, a\cdot a_1} ) $, i.e., $ \tau_{a_1, a\cdot a_1} - f( \tau_{a_1, a\cdot a_1} ) \ge 0$, we have
\begin{align*}
    \tau \ge \tau_{1.4, 1.4a }
    ~\Rightarrow~ 
    \tau - f(\tau)  \ge 0.
\end{align*}
In other words, when
$1.4ac/\rho + b \ge e$,
\begin{align*}
    \tau \le f(\tau) 
    ~\Rightarrow~
    \tau \le \tau_{1.4, 2.8} = c \log \bigg( \frac{ 1.4 }{ \rho } \log  \bigg( \frac{ 1.4a   c}{\rho } \bigg) \bigg).
\end{align*} 
%
%
%
%
\end{proof}

 \section{Analysis of the Pareto frontier of RM and BAI in stochastic bandits}
\label{append:sto_analyze_trade_off}

%
%

\subsection{Proof of Theorem~\ref{thm:sto_rm_bai_bern} }

%
%

\label{pf:thm_sto_rm_bai_bern}

\thmStoRmBaiBern*

 \begin{proof}

{\bf Step 1: Construct instances.}
To begin with, we fix $d_\ell \in (0, 1/4]$ for all $2\le \ell \le L$. We let $\mathrm{Bern}(a)$ denote the Bernoulli distribution with parameter $a$.
We define the following distributions:
\begin{align*}
    & \nu_1 :=\mathrm{Bern}( 1/2  ),  \quad \nu_\ell := \mathrm{Bern}(  1/2 - d_\ell  ) \quad \forall 1 < \ell \le L;
    \\&
    \nu_1' := \mathrm{Bern}( 1/2  ),  \quad \nu_\ell' := \mathrm{Bern}( 1/2 + d_\ell ) \quad \forall 1 < \ell \le L.
\end{align*}
We construct $L$ instances such that under instance $\ell$~($1\le \ell \le L$), the stochastic reward of item $i$ is drawn from distribution 
$$ \nu_i^\ell :=  b \cdot ( \nu_i \mathsf{1}\{i\ne \ell\} + \nu_i'\mathsf{1}\{ i = \ell \}  ),$$
where $b>0$.
Under instance $\ell$~($1\le \ell \le L$), we see item $\ell$ is optimal, and we define several other notations as follows: 
\begin{itemize}
    \item[(i)] We let $g_{i,t}^\ell  $ be the random reward of item $i$ at time step $t$. Then $g_{i,t}^\ell  \in \{0, b\} $.
    \item[(ii)] We let $ \Delta_{i,j}^\ell : = \bbE[ \sum_{t=1}^T g_{i,t}^\ell - g_{k,t}^\ell ] /T $ denote the gap between item $i$ and $j$. Then
        \begin{align*}
            \Delta_{1,j}^1 = b \cdot d_j \ \quad \forall 2\le j \le L, \quad
            \Delta_{\ell,1}^\ell = b \cdot  d_\ell, \ \Delta_{\ell,j}^\ell = b \cdot d_\ell + b \cdot d_j \quad \forall 2\le j,\ell \le L, j \neq \ell.
        \end{align*}
    \item[(iii)] We denote the difficulty of the instance with 
        \begin{align*}
            H_2(\ell)  := \sum_{ j\ne \ell }  (\Delta_{\ell, j}^\ell)^{-2} .
        \end{align*}
        Then $H_2(1) = \max\limits_{1\le \ell \le L} H_2(\ell)\le (L-1) b^{-2} \cdot \max\limits_{2\le \ell \le L }d_\ell^{-2 }$.
    \item[(iv)] We let $i_t^\ell$ be the pulled item at time step $t$, and $O_\ell^t = \{ i_u^\ell, g_{  i_u^\ell,u } \}^t_{u=1}$ be the sequence of pulled items and observed rewards up to and including time step $t$.
    \item[(v)] We let $\bbP_\ell^t $ be the measure on $ O_\ell^t$, and let $P_{\ell,i}$ be the measure on the rewards of item $i$.
\end{itemize} 
For simplicity, we abbreviate $\bbP_\ell^T$, $O_\ell^T$ as $\bbP_\ell$, $O_\ell$ respectively. 
%
Moreover,
we let $N_{i,t}$ denote the number of pulls of item $i$ up to and including time step $t$.

\noindent {\bf Step 2: Change of measure.} 
%
%
 %
First of all, we apply Lemmas~\ref{lemma:kl_to_event} and \ref{lemma:kl_decomp} to obtain that for all $1\le \ell \le L$, 
 \begin{align*}
     \Pr_{ O_1 }(\iout \neq 1  ) +   \Pr_{O_\ell} (\iout = 1  )
     \ge \frac{ 1}{2 } \exp( - \mathrm{KL}(\bbP_1  \parallel \bbP_{\ell}  ) )
     = \frac{ 1}{2 } \exp(   - \bbE_{\bbP_1 }  [ N_{ \ell ,T} ] \cdot \mathrm{KL} ( P_{1,\ell} \parallel P_{\ell, \ell} )    ).
\end{align*}          

Suppose the pseudo-regret is upper bounded by $\overline{\mathrm{Reg}}$, we have
\begin{align*}
    \overline{\mathrm{Reg}}
    & \ge \bbE_{ \bbP_1 } \Bigg[  \sum_{t=1}^T \mathsf{1}\{ i_t^1 \neq 1 \} \cdot ( g_{1,t}^1 - g_{i_t,t}^1 ) \Bigg]
    =   \sum_{t=1}^T \bbE_{ \bbP_1^t }  [   \mathsf{1}\{ i_t^1 \neq 1 \} \cdot ( g_{1,t}^1 - g_{i_t,t}^1 )   ] 
    \\&
    =   \sum_{\ell=2}^L \sum_{t=1}^T \bbE_{ \bbP_1^t }  [   \mathsf{1}\{ i_t^1 = \ell \} \cdot ( g_{1,t}^1 - g_{i_t,t}^1 )   ] 
    =   \sum_{\ell=2}^L \sum_{t=1}^T \bbE_{ \bbP_1^t }  [    ( g_{1,t}^1 - g_{i_t,t}^1 )  | i_t^1 = \ell   ]  \cdot  \bbE_{ \bbP_1^t }  [   \mathsf{1}\{ i_t^1= \ell \}  ]
    \\&
    =   \sum_{\ell=2}^L \sum_{t=1}^T  \Delta_{1,\ell}^1  \cdot  \bbE_{ \bbP_1^t }  [   \mathsf{1}\{ i_t^1= \ell \}  ]
    =   \sum_{\ell=2}^L b \cdot d_\ell  \cdot \bbE_{ \bbP_1 }[ N_{\ell,T} ] .
\end{align*}       
Since $H_2(\ell)  = \sum_{ j\ne \ell }  (\Delta_{\ell, j}^\ell)^{-2}$, we have
\begin{align*}   
    &
    \frac{ \overline{\mathrm{Reg}}  }{  H_2(1) }
    = \frac{  \sum_{\ell=2}^L b \cdot  d_\ell  \cdot \bbE_{ \bbP_1 }[ N_{\ell,T} ]   }{  \sum_{\ell=2}^L  (\Delta_{1, j}^1)^{-2} }
    = \frac{  \sum_{\ell=2}^L b \cdot d_\ell  \cdot \bbE_{ \bbP_1 }[ N_{\ell,T}]   }{  b^{-3} \cdot \sum_{\ell=2}^L  d_ j^{-2} }.
\end{align*}
Thus, by the pigeonhole principle, there exists $  2\le \ell_1 \le L $ such that
\begin{align*}
    b^3 d_{\ell_1}^3\bbE_{ \bbP_1 }  \cdot  [ N_{\ell_1,T} ] 
    = \frac{ b \cdot d_{\ell_1}  \cdot \bbE_{ \bbP_1 }[ N_{\ell_1,T}] }{ b^{-2} \cdot d_{\ell_1}^{-2} } \le    \frac{ \overline{\mathrm{Reg}}  }{  H_2(1) } 
    ~\Leftrightarrow~ 
    \bbE_{ \bbP_1 }[ N_{\ell_1,T} ]   \le    \frac{ \overline{\mathrm{Reg}}  }{  b^3 d_{\ell_1}^3 H_2(1)  }  .
\end{align*} 
%

Since $d_\ell \in (0,1/4]$ for all $2\le \ell \le L$, we apply Theorem~\ref{thm:pinsker} to obtain
%
%
%
 \begin{align*}
     &
     \Pr_{ O_1 }(\iout \neq 1  ) +   \Pr_{O_{\ell_1} } (\iout = 1  ) 
     \ge 
     \frac{ 1}{2 } \exp(   - \bbE_{\bbP_1 }  [ N_{\ell_1,T} ] \cdot \mathrm{KL} ( P_{1,\ell_1 } \parallel P_{\ell_1, \ell_1 } )    )
     \ge
     \frac{ 1}{2 } \exp\bigg(   -   \frac{    \overline{\mathrm{Reg}} }{ b^3 d_{\ell_1}^3  H_2(1)  }   \cdot  \frac{ (2d_{\ell_1})^2 }{ 1/4 }  \bigg)
     .
\end{align*}     

 Since $\Pr_{O_{\ell_j} } (\iout \neq \ell_1 ) \ge \Pr_{O_{\ell_1} } (\iout = 1  )$,
we have
\begin{align*}
    \max_{ 1 \le \ell \le L } \Pr_{O_\ell} (\iout \neq \ell ) \ge 
    \frac{ 1}{ 4 } \exp\bigg(   -   \frac{  8 \overline{\mathrm{Reg}} }{   H_2(1)  b^3 \cdot \min\limits_{2\le \ell \le L} d_\ell}    ~ \bigg)
    .
\end{align*} 

\noindent{\bf Step 3: Conclusion.}
We define 
$$\ell_2:=
\argmax_{ 1\le \ell \le L} \Pr_{O_\ell} (\iout \neq \ell ). 
$$
Suppose algorithm $\pi$ satisfies that
\begin{align*}
    &
    \Pr_{O_{\ell_2} } (\iout \neq \ell_2 )
    \le 
    \frac{ 1}{ 4 } \exp (   -   \phi_{T}   ),
\end{align*}
then we have
\begin{align*}
     \overline{\mathrm{Reg}} 
     \ge  \phi_{T}   \cdot    \frac{ H_2(1) b^3 \cdot \min\limits_{2\le \ell \le L} d_\ell }{8}.
\end{align*}
When $d = d_{\ell}>0$ for all $2\le \ell \le L$, we have $H_2(1) =(L-1)/(b^2 d^2)$.

{\bf Step 4: Classification of instances.}
Suppose algorithm $\pi$ satisfies that $e_T(\pi)
\le  \exp( -  \phi_{T} ) /4$. 
Let $\calB_1( \DeltaMin, \RMax  )$ denote the set of  stochastic instances where
(i) the minimal optimality gap $\Delta\ge \DeltaMin$;
and (ii) there exists $R_0\in \bbR$ such the rewards are bounded in $[R_0,R_0+\RMax]$.
Then
\begin{align*}
	\sup_{ \calI \in \calB_1( \DeltaMin , \RMax  ) }
	R_T (\pi,\calI )
	\ge 
	 \phi_{T}   \cdot  \frac{  (L-1)\RMax    }{ 8\DeltaMin  } 
	\quad 
	\forall \DeltaMin, \RMax  >0.
\end{align*} 
Let $\calB_2(\DeltaMin, \RMax , \HtwoMax )$ denote the set of stochastic instances 
that
(i) belong to $\calB_1(\DeltaMin, \RMax)$,
and
(ii)
are with hardness  parameter $H_2\le  \HtwoMax $. 
Then, we have
\begin{align*}
    \sup_{ \calI \in \calB_2( \DeltaMin, \RMax , \HtwoMax  ) }
	R_T(\pi,\calI  )
	\ge  
     \phi_{T}   \cdot  \frac{ \DeltaMin  \HtwoMax \RMax^3 }{8} 
	\quad 
	\forall \DeltaMin, \RMax , \HtwoMax >0.
\end{align*}
\end{proof}

\subsection{Proof of Theorem~\ref{thm:sto_rm_bai_gauss} }

\label{pf:thm_sto_rm_bai_gauss}

\thmStoRmBaiGauss*
 
 \begin{proof}

{\bf Step 1: Construct instances.}
To begin with, we fix any $ \sigma>0$, $d_\ell >0$ for all $2\le \ell \le L$.
We define the following distributions:
\begin{align*}
    & \nu_1 := \calN( 1/2 , \sigma^2 ),  \quad \nu_\ell := \calN( 1/2 - d_\ell, \sigma^2  ) \quad \forall 1 < \ell \le L;
    \\&
    \nu_1' := \calN( 1/2, \sigma^2 ),  \quad \nu_\ell' := \calN( 1/2 + d_\ell, \sigma^2  ) \quad \forall 1 < \ell \le L.
\end{align*}
We construct $L$ instances such that under instance $\ell$~($1\le \ell \le L$), the stochastic reward of item $i$ is drawn from distribution 
$$ \nu_i^\ell := \nu_i \mathsf{1}\{i\ne \ell\} + \nu_i'\mathsf{1}\{ i = \ell \} .$$
Under instance $\ell$~($1\le \ell \le L$), we see item $\ell$ is optimal, and we define several other notations as follows: 
\begin{itemize}
    \item[(i)] We let $g_{i,t}^\ell$ be the random reward of item $i$ at time step $t$.
    \item[(ii)] We let $ \Delta_{i,j}^\ell : = \bbE[ \sum_{t=1}^T g_{i,t}^\ell - g_{k,t}^\ell ] /T $ denote the gap between item $i$ and $j$. Then
        \begin{align*}
            \Delta_{1,j}^1 = d_j \ \quad \forall 2\le j \le L, \quad
            \Delta_{\ell,1}^\ell = d_\ell, \ \Delta_{\ell,j}^\ell = d_\ell + d_j \quad \forall 2\le j,\ell \le L, j \neq \ell.
        \end{align*}
    \item[(iii)] We denote the difficulty of the instance with 
        \begin{align*}
            H_2(\ell)  := \sum_{ j\ne \ell }  (\Delta_{\ell, j}^\ell)^{-2} .
        \end{align*}
        Then $H_2(1) = \max\limits_{1\le \ell \le L} H_2(\ell)\le (L-1) \cdot \max\limits_{2\le \ell \le L }d_\ell^{-2 }$.
    \item[(iv)] We let $i_t^\ell$ be the pulled item at time step $t$, and $O_\ell^t = \{ i_u^\ell, g_{  i_u^\ell,u } \}^t_{u=1}$ be the sequence of pulled items and observed rewards up to and including time step $t$.
    \item[(v)] We let $\bbP_\ell^t $ be the measure on $ O_\ell^t$, and let $P_{\ell,i}$ be the measure on the rewards of item $i$.
\end{itemize} 
For simplicity, we abbreviate $\bbP_\ell^T$, $O_\ell^T$ as $\bbP_\ell$, $O_\ell$ respectively. 
%
Moreover,
we let $N_{i,t}$ denote the number of pulls of item $i$ up to and including time step $t$.

\noindent {\bf Step 2: Change of measure.} 
%
%
 %
First of all, we apply Lemmas~\ref{lemma:kl_to_event} and \ref{lemma:kl_decomp} to obtain that for all $1\le \ell \le L$, 
 \begin{align*}
     \Pr_{ O_1 }(\iout \neq 1  ) +   \Pr_{O_\ell} (\iout = 1  )
     \ge \frac{ 1}{2 } \exp( - \mathrm{KL}(\bbP_1  \parallel \bbP_{\ell}  ) )
     = \frac{ 1}{2 } \exp(   - \bbE_{\bbP_1 }  [ N_{ \ell ,T} ] \cdot \mathrm{KL} ( P_{1,\ell} \parallel P_{\ell, \ell} )    ).
\end{align*}          

Suppose the pseudo-regret is upper bounded by $\overline{\mathrm{Reg}}$, we have
\begin{align*}
    \overline{\mathrm{Reg}}
    & \ge \bbE_{ \bbP_1 } \Bigg[  \sum_{t=1}^T \mathsf{1}\{ i_t^1 \neq 1 \} \cdot ( g_{1,t}^1 - g_{i_t,t}^1 ) \Bigg]
    =   \sum_{t=1}^T \bbE_{ \bbP_1^t }  [   \mathsf{1}\{ i_t^1 \neq 1 \} \cdot ( g_{1,t}^1 - g_{i_t,t}^1 )   ] 
    \\&
    =   \sum_{\ell=2}^L \sum_{t=1}^T \bbE_{ \bbP_1^t }  [   \mathsf{1}\{ i_t^1 = \ell \} \cdot ( g_{1,t}^1 - g_{i_t,t}^1 )   ] 
    =   \sum_{\ell=2}^L \sum_{t=1}^T \bbE_{ \bbP_1^t }  [    ( g_{1,t}^1 - g_{i_t,t}^1 )  | i_t^1 = \ell   ]  \cdot  \bbE_{ \bbP_1^t }  [   \mathsf{1}\{ i_t^1= \ell \}  ]
    \\&
    =   \sum_{\ell=2}^L \sum_{t=1}^T  d_\ell  \cdot  \bbE_{ \bbP_1^t }  [   \mathsf{1}\{ i_t^1= \ell \}  ]
    =   \sum_{\ell=2}^L d_\ell  \cdot \bbE_{ \bbP_1 }[ N_{\ell,T} ] .
\end{align*}       
Since $H_2(\ell)  = \sum_{ j\ne \ell }  (\Delta_{\ell, j}^\ell)^{-2}$, we have
\begin{align*}   
    &
    \frac{ \overline{\mathrm{Reg}}  }{  H_2(1) }
    = \frac{  \sum_{\ell=2}^L d_\ell  \cdot \bbE_{ \bbP_1 }[ N_{\ell,T} ]   }{  \sum_{\ell=2}^L  (\Delta_{1, j}^1)^{-2} }
    = \frac{  \sum_{\ell=2}^L d_\ell  \cdot \bbE_{ \bbP_1 }[ N_{\ell,T}  ]   }{  \sum_{\ell=2}^L  d_ j^{-2} }.
\end{align*}
Thus, by the pigeonhole principle, there exists $  2\le \ell_1 \le L $ such that
\begin{align*}
    d_{\ell_1}^3\bbE_{ \bbP_1 }  \cdot  [ N_{\ell_1,T} ] 
    = \frac{ d_{\ell_1}  \cdot \bbE_{ \bbP_1 }[ N_{\ell_1,T} ] }{d_{\ell_1}^{-2} } \le    \frac{ \overline{\mathrm{Reg}}  }{  H_2(1) } 
    ~\Leftrightarrow~ 
    \bbE_{ \bbP_1 }[ N_{\ell_1,T}  ]   \le    \frac{ \overline{\mathrm{Reg}}  }{  d_{\ell_1}^3 H_2(1)  }  .
\end{align*} 
%

Further, we apply Lemma~\ref{lemma:kl_gauss} to obtain
%
%
%
 \begin{align*}
     &
     \Pr_{ O_1 }(\iout \neq 1  ) +   \Pr_{O_{\ell_1} } (\iout = 1  ) 
     \ge 
     \frac{ 1}{2 } \exp(   - \bbE_{\bbP_1 }  [ N_{\ell_1,T} ] \cdot \mathrm{KL} ( P_{1,\ell_1 } \parallel P_{\ell_1, \ell_1 } )    )
     \ge
     \frac{ 1}{2 } \exp\bigg(   -   \frac{    \overline{\mathrm{Reg}} }{ d_{\ell_1}^3  H_2(1)  }   \cdot  \frac{ (2d_{\ell_1})^2 }{ 2\sigma^2 }  \bigg)
     .
\end{align*}

Since $\Pr_{O_{\ell_j} } (\iout \neq \ell_1 ) \ge \Pr_{O_{\ell_1} } (\iout = 1  )$,
we have
\begin{align*}
    \max_{ 1 \le \ell \le L } \Pr_{O_\ell} (\iout \neq \ell ) \ge 
    \frac{ 1}{ 4 } \exp\bigg(   -   \frac{  2 \overline{\mathrm{Reg}} }{   H_2(1)\sigma^2 \cdot \min\limits_{2\le \ell \le L} d_\ell}    ~ \bigg)
    .
\end{align*} 

\noindent{\bf Step 3: Conclusion.} 
We define 
$$\ell_2:=
\argmax_{ 1\le \ell \le L} \Pr_{O_\ell} (\iout \neq \ell ). 
$$
Suppose algorithm $\pi$ satisfies that
\begin{align*}
    &
    \Pr_{O_{\ell_2} } (\iout \neq \ell_2 )
    \le 
    \frac{ 1}{ 4 } \exp (   -   \phi_{T}   ),
\end{align*}
then we have
\begin{align*}
     \overline{\mathrm{Reg}} 
     \ge  \phi_{T}   \cdot  \frac{   H_2(1)\sigma^2 \cdot \min\limits_{2\le \ell \le L} d_\ell }{2}.
\end{align*}
When $d = d_{\ell}>0$ for all $2\le \ell \le L$, we have $H_2(1) =(L-1)/d^2$.

{\bf Step 4: Classification of instances.}
%
Suppose algorithm $\pi$ satisfies that $e_T(\pi)
\le  \exp( -  \phi_{T} ) /4$. 
Let $\calB_1'(\DeltaMin, \varMax )$ denote the set of  stochastic instances where 
(i) the minimal optimality gap $\Delta\ge \DeltaMin $;
(ii) for each item $i$, the variance $\sigma_i^2\le \varMax$. 
Then
\begin{align*}
	\sup_{ \calI \in \calB_1'( \DeltaMin, \varMax ) }
	R_T (\pi, \calI )
	\ge 
	 \phi_{T}   \cdot  \frac{ (L-1)  \varMax }{2 \DeltaMin } 
	\quad 
	\forall \DeltaMin, \varMax>0.
\end{align*} 
Let $\calB_2'(\DeltaMin, \varMax,\HtwoMax)$ denote the set of stochastic instances (i) that belong to  $\calB_1'(\DeltaMin, \varMax )$, and (ii) 
are with the hardness $H_2\le \HtwoMax$. 
We have
\begin{align*}
    \sup_{ \calI \in \calB_2'( \DeltaMin, \varMax,\HtwoMax ) }
	R_T (\pi, \calI)
	\ge  
     \phi_{T}   \cdot   \frac{ \DeltaMin \HtwoMax \varMax   }{2  }
	\quad 
	\forall \DeltaMin, \varMax,\HtwoMax>0.
\end{align*}
\end{proof}

\subsection{Proof of Corollary \ref{coro:rm_lower_bd_bern_lilUcb_para}}
\label{pf:coro_rm_lower_bd_bern_lilUcb_para_bern} 
 \coroRmLowerBdBernLilUcbPara*
\begin{proof}
    We consider the  stochastic instances in $\calB_2(\DeltaMin,1 ,\HtwoMax )$.
By the classification of instances in Theorem~\ref{thm:sto_rm_bai_bern}, these instances satisfy the conditions 
\begin{align}
    g_{i,t} \in [0,1] \quad \forall i,t,
    \quad \text{ and }  \quad
    H_2    \le \HtwoMax.
    \nonumber
\end{align}   
Therefore, the distribution $\nu_i$ is sub-Gaussian with scale $\sigma = 1/2$ for all $i\in[L]$. 
We assume $T $ is sufficiently large such that
\begin{align*} 
	 \frac{\log T }{T} 
	&
    \ge  
    \gamma_1
    =
       \sqrt{ 2.8 \log \bigg(   \frac{  6 \sqrt{2.8} \sigma  (1+\varepsilon)^2 }{  \DeltaMin   }   +\beta    \bigg) }
       \cdot 
    \exp \Bigg( -
    \frac{ T-L  }{ 144 \sigma^2 (1+ \varepsilon )^3 ( \HtwoMax + 1/\DeltaMin^2)  } 
    \Bigg)
    \\&
    =
     \sqrt{ 2.8 \log \bigg(   \frac{  3 \sqrt{2.8} (1+\varepsilon)^2 }{  \DeltaMin    }  + \beta      \bigg)  } \cdot
    \exp \Bigg( -
    \frac{  T-L   }{ 36 (1+ \varepsilon )^3 (\HtwoMax + \DeltaMin^{-2})  } 
    \Bigg). 
\end{align*}
As a result, for all instance in $\calB_2( \DeltaMin,1, \HtwoMax )$, since $\Delta\ge \DeltaMin$ and $H_2\le \HtwoMax$, we have
\begin{align*} 
	\frac{\log T }{T } 
	&
    \ge   
     \sqrt{ 2.8 \log \bigg(   \frac{  3 \sqrt{2.8}  (1+\varepsilon)^2 }{  \Delta    }   +\beta    \bigg) } \cdot
    \exp \Bigg( -
    \frac{ T-L   }{ 36  (1+ \varepsilon )^3 (H_2 + \Delta^{-2})  } 
    \Bigg). 
\end{align*}

Fix any $\gamma\in [ \gamma_1, (\log T)/L ]$. 
On one hand, for any instance in $\calB_2( \DeltaMin,1, \HtwoMax )$, Theorem~\ref{thm:rm_bd_lilUcb_para} implies that  
{\sc BoBW-lil'UCB$(\gamma )$} satisfies that
\begin{align*}
    R_T \le O \bigg(  \log\bigg(\frac{1}{\gamma} \bigg) \cdot H_1 \bigg).
\end{align*}
On the other hand, Theorem~\ref{thm:bai_bd_lilUcb_para} implies that 
\begin{align*}
    e_T  
    &
    \le \frac{ 2L(2+\varepsilon)}{ \varepsilon }  \bigg(  \frac{ \gamma}{  \log(1+\varepsilon) }  \bigg)^{1+\varepsilon}  
    . 
\end{align*}
Moreover, we can apply Theorem~\ref{thm:sto_rm_bai_bern} to obtain that
\begin{align*}
    \sup_{ \calI \in \calB_2( \DeltaMin, 1, \HtwoMax ) } R_T( \textsc{BoBW-lil'UCB$(\gamma )$}, \calI) 
    &
    \in
    \Omega 
    \bigg(
      \DeltaMin \HtwoMax   \log \bigg( \frac{1}{\gamma L} \bigg) 
    \bigg) 
    .
\end{align*} 
Altogether, we have  
\begin{align*}
    \sup_{ \calI \in \calB_2( \DeltaMin, 1, \HtwoMax ) } R_T( \textsc{BoBW-lil'UCB$(\gamma )$}, \calI) 
    &
    \in
    \Omega 
    \bigg(
      \DeltaMin \HtwoMax   \log \bigg( \frac{1}{\gamma L} \bigg) 
    \bigg) 
    \bigcap  
    O \bigg( \frac{  (L-1)  }{ \DeltaMin }  \log \bigg( \frac{1}{\gamma  } \bigg)  \bigg).
\end{align*}     
\end{proof}

\section{Analysis of {\sc Exp3.P} in adversarial bandits}
\label{pf:rm_bai_expThreeP_beta_zero}

\subsection{Proof of Theorem~\ref{thm:rm_bd_expThreeP_beta_zero}}
\label{pf:thm_rm_bd_expThreeP_beta_zero}

\thmRmBdExpThreePBetaZero*

\begin{proof}

The analysis is similar to that of 
Theorem 3.2 in \cite{bubeck2012regret} with $\beta=0$,
while the following lemma signifies the key difference.
\begin{restatable}[Implied by \citet{bubeck2012regret}, Lemma 3.1]{lemma}{lemmaRmRev}
\label{lemma:rm_rev}
For any item $i$, with probability at least $1-\delta$,
\begin{align*}
    \sum_{t=1}^T g_{i,t} \le \sum_{t=1}^T \tilde{g}_{i,t}  + \ln  \bigg(   \frac{    LT }{  \eta \delta } \bigg).
\end{align*}

\end{restatable}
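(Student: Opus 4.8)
The plan is to reproduce the exponential--supermartingale argument behind \citet{bubeck2012regret}, Lemma 3.1, while replacing the role played there by the strictly positive bias term (their ``$\beta$'') with the uniform exploration floor that is already hard-wired into {\sc Exp3.P$(\gamma,\eta)$}. Fix an item $i$ and recall that, conditionally on the history $\calF_{t-1}$, the importance-weighted estimator $\tilde g_{i,t}=g_{i,t}\mathbb{I}\{i_t=i\}/p_{i,t}$ is nonnegative and unbiased: $\bbE[\tilde g_{i,t}\mid\calF_{t-1}]=p_{i,t}\cdot g_{i,t}/p_{i,t}=g_{i,t}$ and $\tilde g_{i,t}\ge 0$. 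Hence the increments $X_t:=g_{i,t}-\tilde g_{i,t}$ form a martingale difference sequence with $X_t\le g_{i,t}\le 1$. The single structural fact I would extract from the algorithm is the exploration floor $p_{i,t}\ge\gamma/L\ge\eta$, where the first inequality comes from the additive $\gamma/L$ term in the update of $p_{i,t+1}$ and the second from the standing hypothesis $L\eta\le\gamma$ of Theorem~\ref{thm:rm_bd_expThreeP_beta_zero}. This floor caps the estimator at $\tilde g_{i,t}\le 1/p_{i,t}\le 1/\eta$, which fixes the natural scale $\eta$ appearing in the bound.

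First I would establish the key per-step inequality, which is precisely where a logarithmic (rather than $\sqrt{T}$) deviation scale can originate. Exploiting nonnegativity of $\tilde g_{i,t}$ together with the floor $p_{i,t}\ge\eta$, I would invoke a self-bounding inequality of the importance-sampling type (the device $z/(1+z/2)\le\ln(1+z)$ for $z\ge 0$, combined with $g_{i,t}\le 1$) to control the exponentiated deviation at the scale $\lambda=\Theta(\eta)$ by a linear functional whose conditional expectation telescopes. Concretely, the goal is a bound of the form $\bbE[\exp(\lambda(g_{i,t}-\tilde g_{i,t}))\mid\calF_{t-1}]\le 1$ up to the correction supplied by the $\gamma/L$ floor, so that the floor $p_{i,t}\ge\gamma/L\ge\eta$ plays exactly the role that the additive bias $\beta$ plays in \citet{bubeck2012regret}.

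Next I would convert this into a high-probability statement in the standard way. If the above yields a supermartingale $M_t=\exp(\lambda\sum_{s\le t}X_s-A_t)$ with $M_0=1$ and an appropriate (predictable) compensator $A_t$ driven by the $1/\eta$ scale, then Ville's/Markov's inequality (the Chernoff method) gives $\Pr(\sum_{s\le T}X_s\ge c)\le e^{-\lambda c}\,\bbE[M_T]\le e^{-\lambda c}$ after absorbing $A_T$. Choosing $\lambda$ at the scale dictated by the estimator's range and taking a union bound over the $\le LT$ elementary events (the $L$ items and the $T$ steps at which a first crossing can occur) is what produces the factor $LT/\eta$; setting the resulting tail equal to $\delta$ and reading off $c$ delivers the threshold $\ln(LT/(\eta\delta))$ and hence $\sum_t g_{i,t}\le\sum_t\tilde g_{i,t}+\ln(LT/(\eta\delta))$.

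The main obstacle, and the reason this lemma is singled out as ``the key difference'' from the $\beta>0$ analysis, is exactly the passage to the \emph{logarithmic} threshold $\ln(LT/(\eta\delta))$: with an unbiased ($\beta=0$) estimator the naive exponential moment is a \emph{sub}martingale at every $\lambda>0$ (a Jensen argument shows $\bbE[\exp(\lambda X_t)\mid\calF_{t-1}]\ge 1$), so a generic Freedman/sub-Gaussian treatment only yields a $\sqrt{T/\eta}$-type bound. The logarithmic scale must therefore be squeezed out of the exploration floor $p_{i,t}\ge\eta$ (which bounds both the estimator and the admissible deviation scale by $1/\eta$) rather than from a downward bias as in \citet{bubeck2012regret}. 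Pinning down the correct constant and the precise $LT/\eta$ factor in this $\beta=0$ adaptation is the delicate heart of the proof; once the per-step self-bounding inequality is in place, the remaining supermartingale and union-bound bookkeeping is routine.
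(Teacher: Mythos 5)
You should first be aware that the paper offers no proof of this lemma: it is labelled ``implied by'' Lemma 3.1 of \citet{bubeck2012regret}, and that lemma concerns the \emph{biased} estimator $\tilde{g}^{(\beta)}_{i,t}=(g_{i,t}\mathsf{1}\{i_t=i\}+\beta)/p_{i,t}$, for which $\bbE[\exp(\beta g_{i,t}-\beta\tilde{g}^{(\beta)}_{i,t})\mid\calF_{t-1}]\le 1$ holds and yields the threshold $\beta^{-1}\ln(1/\delta)$. Translating that statement to the unbiased estimator actually used by {\sc Exp3.P$(\gamma,\eta)$}, via $\tilde{g}^{(\beta)}_{i,t}=\tilde{g}_{i,t}+\beta/p_{i,t}$ and the floor $p_{i,t}\ge\gamma/L\ge\eta$, gives $\sum_t g_{i,t}\le\sum_t\tilde{g}_{i,t}+\beta T/\eta+\beta^{-1}\ln(1/\delta)$, and no choice of $\beta$ reduces this to $\ln(LT/(\eta\delta))$ (the optimal choice gives $2\sqrt{(T/\eta)\ln(1/\delta)}$). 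So you are right that a new argument would be needed; the problem is that your proposal defers exactly that argument, and the step you park as ``the delicate heart of the proof'' is not merely delicate but fails in the form you describe.

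Concretely: (i) a Chernoff/supermartingale bound at tilt $\lambda=\Theta(\eta)$, as you propose, can only yield a threshold of order $\lambda^{-1}\ln(\cdot)$, i.e.\ $\eta^{-1}\ln(LT/(\eta\delta))$ --- the deviation scale is the reciprocal of the tilt --- so the additive $\ln(1/\delta)$ claimed in the lemma would require $\lambda=\Theta(1)$; (ii) at any fixed $\lambda>0$ the inequality $\bbE[\exp(\lambda(g_{i,t}-\tilde{g}_{i,t}))\mid\calF_{t-1}]\le 1$ you aim for is false, as your own Jensen remark shows, and the exploration floor cannot repair it: the estimator is exactly unbiased, so there is no downward drift, and the best Bennett-type compensator is of order $(e^{\lambda}-\lambda-1)\,\bbE[\tilde{g}_{i,t}^{2}\mid\calF_{t-1}]\le(e^{\lambda}-\lambda-1)/\eta$ per step, i.e.\ $\Theta(T/\eta)$ in total at constant $\lambda$, which is vacuous. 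Worse, the claimed threshold appears unreachable by any method, not just yours: take $g_{i,t}\equiv 1$ and the event that arm $i$ is never pulled in the first $n$ rounds; once the other arms' estimated gains grow, nothing keeps $p_{i,t}$ much above $\gamma/L=\eta$, so (up to an initial burn-in) this event has probability $e^{-\Theta(\eta n)}=:\delta$ while forcing $\sum_{t\le n}(g_{i,t}-\tilde{g}_{i,t})=n=\Theta(\eta^{-1}\ln(1/\delta))$, which dwarfs $\ln(LT/(\eta\delta))$ for small $\eta$. This $1/\eta$ blow-up is precisely why Exp3.P carries the bias $\beta>0$ in \citet{bubeck2012regret}. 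Finally, your union bound over $LT$ ``elementary events'' is a red herring: the lemma is a single-arm statement, and factors inside the logarithm are cosmetic; the real issue is the coefficient multiplying $\ln(1/\delta)$, which your sketch never pins down and which the example above shows cannot be $O(1)$ --- at best one can prove $\sum_t g_{i,t}\le\sum_t\tilde{g}_{i,t}+O\big(\sqrt{(T/\eta)\ln(1/\delta)}+\ln(1/\delta)\big)$ by a Freedman/Bennett argument, which would correspondingly weaken Theorem~\ref{thm:rm_bd_expThreeP_beta_zero}.
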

Replacing Lemma 3.1 in \citet{bubeck2012regret} by Lemma~\ref{lemma:rm_rev},
we can adopt the analysis of Theorem 3.2 in \cite{bubeck2012regret} and show that with probability $1-\delta$,
\begin{align*}
    \barR_T  \le  \gamma T +  \eta L T +  \ln  \bigg(   \frac{    L^2 T }{  \eta \delta } \bigg) +  \frac{ \ln L }{ \eta}.
\end{align*}
Moreover, we derive that
\begin{align*}
    &
    W' =  \barR_T -\bigg [  \gamma T +  \eta L T +  \ln  \bigg(   \frac{    L^2 T }{  \eta   } \bigg) +  \frac{ \ln L }{ \eta} \bigg],
    \\&
    \bbP( W' > \ln \frac{1}{\delta} )
    = \Pr \bigg( \barR_T - \bigg[   \gamma T + \eta L T +  \ln  \bigg(   \frac{    L^2 T }{  \eta \delta   } \bigg) +  \frac{ \ln L }{ \eta} \bigg]  > 0 \bigg)
    \le \delta,
    \\&
    \bbE \barR_T -\bigg [     \gamma T +  \eta L T +  \ln  \bigg(   \frac{    L^2 T }{  \eta   } \bigg) +  \frac{ \ln L }{ \eta} \bigg] \le 1,
    \\&
    \bbE \barR_T \le     \gamma T + \eta L T +  \ln  \bigg(   \frac{    L^2 T }{  \eta   } \bigg) +  \frac{ \ln L }{ \eta}  + 1.
\end{align*}
%
%
\end{proof}

\subsection{Proof of Theorem~\ref{thm:bd_bai_expThreeP_zero}}
\label{sec:analyze_bd_bai_expThreeP}

\thmBdBaiExpThreePZero*

\begin{proof}
 For brevity, we assume $G_{1,T} \ge G_{2,T} \ge \ldots \ge G_{L,T}$ and abbreviate $\barDelta_{i,j,T}$ as $\Delta_{i,j}$ for any $i,j \in [L]$.
Consequently, the optimal item $\bari^* = 1$.

\textbf{Step 1: Construction of martingale.}
Let 
\begin{align*}
    y_{i,t} = \tilde{g}_{i,t}  - g_{i,t}    , \quad
    X_{i,t} = \tilde{G}_{i,t}  - G_{i,t}     = \sum_{u=1}^t   ( \tilde{g}_{i,u}  - g_{i,u}  ) = \sum_{u=1}^t y_{i,u}.
\end{align*}
Now we fix arbitrary $i\in[L]$ and abbreviate $y_{i,t}$ as $y_t$, $X_{i,t}$ as $X_t$ for brevity when there is no ambiguity.
Then we have
\begin{align*}
    &
    X_{t} - X_{t-1} = y_{ t} =  \tilde{g}_{i,t}  - g_{i,t} 
    =    g_{i,t} \cdot \bigg( \frac{   \mathbb{I} \{ i_t = i \}    }{  p_{i,t} }  - 1\bigg)   
    =  g_{i,t} \cdot \bigg( \frac{   \mathbb{I} \{ i_t = i \}    }{  p_{i,t} }  - 1\bigg) ,
    \\&
     -1 \le  y_{ t}
    \le \frac{1   }{ p_{i,t} } - 1
    \quad\text{ since } g_{i,t}  \le 1,
    \\&
    \bbE [y_t | \calF_{t-1}] = \bbE \bigg[ g_{i,t} \cdot \bigg( \frac{  \bbE[ \mathbb{I} \{ i_t = i \} | \calF_{t-1}   ]  }{  p_{i,t} }  - 1\bigg)  \bigg| \calF_{t-1} \bigg] = 0,
        \quad 
        \bbE y_t = \bbE [ \bbE [y_t | \calF_{t-1}] ] = 0
    .
\end{align*}
Since $y_t$ is a martingale, we can apply Theorem~\ref{thm:conc_var_ub} and \ref{thm:conc_var_lb} for the analysis.

Meanwhile, note that $g_{i,t}, p_{i,t} \in \calF_{t-1}$ and again $g_{i,t}  \le 1$. Since $\bbP( i_t=i | \calF_{t-1} ) = p_{i,t}$, the variance conditioned on $\calF_{t-1}$ is the variance of the Bernoulli random variable with parameter $p_{i,t}$, scaled to the range $[0, g_{i,t}/p_{i,t} ]$. Hence, we have
\begin{align*} 
    \var( X_t| \calF_{t-1}  ) 
    &
    = \var( y_t | \calF_{t-1} )
        = \var\bigg( \frac{  g_{i,t} \cdot ( \mathbb{I} \{ i_t = i \}  - 1 ) }{  p_{i,t} } ~ \bigg|~ \calF_{t-1} \bigg)
        = \var\bigg( \frac{  g_{i,t} \cdot \mathbb{I} \{ i_t = i \}  }{  p_{i,t} }~ \bigg|~ \calF_{t-1} \bigg)
        \\&
        = \var\bigg( \frac{  g_{i,t} \cdot \mathbb{I} \{ i_t = i \}   }{  p_{i,t} } ~\bigg|~ \calF_{t-1} \bigg)
        = \frac{g^2_{i,t} }{ p^2_{i,t} } \cdot \var(   \mathbb{I} \{ i_t = i \}     |  \calF_{t-1}  ) =  \frac{  p_{i,t} (1-p_{i,t})g^2_{i,t}  }{ p^2_{i,t} } =  \frac{   (1-p_{i,t})g^2_{i,t}  }{ p_{i,t} }
    \\&        
    = g_{i,t}^2 \cdot \bigg(  \frac{1}{ p_{i,t}  } -1 \bigg) \le \frac{1}{ p_{i,t}  } -1 := \sigma_t^2 
    \qquad\text{ since } g_{i,t}  \le 1
        .
    %
\end{align*}

On one hand, in order to apply Theorem~\ref{thm:conc_var_ub} to upper bound  $ \tilde{G}_{i,T}  - G_{i,T} $, we need to upper bound $\var( X_t| \calF_{t-1}  )$, $y_t$, and lower bound $p_{i,t}$. These bounds will depend on the lower bound on $p_{i,t}$.
On the other hand, to lower bound $ \tilde{G}_{i,T}  - G_{i,T} $ with Theorem~\ref{thm:conc_var_lb}, we need to upper bound $\var( X_t| \calF_{t-1}  )$, $p_{i,t}$, and lower bound $y_t$. 
%
This motivates to derive bounds on $p_{i,t}$.
Since $1-\gamma + \frac{\gamma}{L} - \frac{1}{L} =  ( 1- \frac{ 1 }{L} )( 1-\gamma ) > 0$, there are global bounds on $\{ p_{i,t}\}_{i,t}  $:
\begin{align*} 
    \frac{ \gamma }{ L } \le p_{i,t} \le 1-\gamma + \frac{\gamma}{L}.
\end{align*}

\textbf{Step 2: Bound $\tilde{G}_{i,T}  - G_{i,T}$ with high probability.} 

{\bf Upper bound on $\tilde{G}_{i,T}  - G_{i,T}$.}
%
We first derive upper bounds on $\sum_{t=1}^T \sigma^2$, $y_t$ with
lower bounds on $p_{i,t}$:
\begin{align*}
    \text{ (i) } ~
    &
    \sum_{t=1}^T \sigma_t^2 =  \sum_{t=1}^T  \bigg( \frac{1}{ p_{i,t}  } -1  \bigg)
    \le  T \cdot  \bigg( \frac{L}{  \gamma } -1  \bigg)
    \\
    \text{ (ii) } ~
    &
    y_{t} 
    \le \frac{  1  - \beta }{ p_{i,t}  } -1 \le \frac{L   }{  \gamma } -1 :=M .
\end{align*}
Let $a_t=0$. 
We apply Theorem~\ref{thm:conc_var_ub}.
For all $\lambda\in (0,1)$, 
\begin{align*}
    &
    \Pr( X_T - \bbE X_T \ge \lambda  )  \le \exp \Bigg(  - \frac{ \lambda^2 }{  \sum_{t=1}^T   \sigma_t^2 + { M \lambda} /{3}  }  \Bigg),
    \qquad
    \bbE X_T = \sum_{t=1}^T  \bbE y_t =0.
\end{align*}
Therefore, for all $i\in[L]$, $\lambda_i \in (0,1)$,
\begin{align}
    \bbP( \tilde{G}_{i,T}  - G_{i,T}  \ge \lambda_i)
    \le  \exp \Bigg(  - \frac{ \lambda_i^2 }{  \sum_{t=1}^T   \sigma_t^2 + { M \lambda_i} /{3}  }  \Bigg)
    .  \label{eq:ub_per_item}
\end{align}

\noindent
{\bf Lower bound on $ \tilde{G}_{i,T}  - G_{i,T} $.} 
Similarly, 
we have
$
    X_{t-1} - X_{t } = - y_{ t}  
        \le  1 :=M'$.
We apply Theorem~\ref{thm:conc_var_lb}.
Therefore, for all $i \in [L]$, $\lambda_i \in (0,1)$, we have 
\begin{align}
    &
    \Pr( \tilde{G}_{i,T}  - G_{i,T} \le  
    - \lambda_i)
    \le 
    \exp \Bigg(  - \frac{ \lambda_i^2 }{  \sum_{t=1}^T   \sigma_t^2 + M' \lambda_i/3     }  \Bigg)
    .
    \label{eq:lb_per_item}
\end{align}

\textbf{Step 3: Last step.}
We decompose the failure probability as follows:
\begin{restatable}{lemma}{lemmaBdErrProbPreliminary}
\label{lemma:bd_err_prob_preliminary}
    For any fixed time budget $T$, we have
    \begin{align*}
        \Pr( \iout \ne 1  )  
        \le \Pr\bigg(  \tilde{G}_{1,T} - G_{1,T} \le - \frac{ T \Delta_{1,2} }{2} \bigg) 
        + \sum_{i=2}^L \Pr \bigg( \tilde{G}_{i,T} - G_{i,T} \ge \frac{ T \Delta_{1,2} }{2}  + T\cdot \Delta_{2,i}  \bigg).
    \end{align*}
\end{restatable}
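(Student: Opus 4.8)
The plan is to work entirely at the level of events and reduce the failure event $\{\iout\ne 1\}$ to a small number of one-sided deviation events for the estimators $\tilde{G}_{i,T}-G_{i,T}$, deferring all concentration input to afterwards. Since the recommendation rule outputs $\iout=\argmax_{i\in[L]}\tilde{G}_{i,T}$ and we have assumed $G_{1,T}\ge G_{2,T}\ge\cdots\ge G_{L,T}$ (so the empirically-optimal item is $\bari_T^*=1$), the event $\{\iout\ne 1\}$ forces some suboptimal item to overtake item $1$ in the estimated cumulative gains. Thus the first step is the inclusion $\{\iout\ne 1\}\subseteq\bigcup_{i=2}^L\{\tilde{G}_{i,T}\ge\tilde{G}_{1,T}\}$, which holds because $\iout$ is a maximizer of $\tilde{G}_{\cdot,T}$.

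Next, for each fixed $i\ne 1$ I would rewrite $\{\tilde{G}_{i,T}\ge\tilde{G}_{1,T}\}$ by adding and subtracting the true cumulative gains, so that it becomes $\{(\tilde{G}_{i,T}-G_{i,T})-(\tilde{G}_{1,T}-G_{1,T})\ge G_{1,T}-G_{i,T}\}$. Using the telescoping identity $G_{1,T}-G_{i,T}=(G_{1,T}-G_{2,T})+(G_{2,T}-G_{i,T})=T\Delta_{1,2}+T\Delta_{2,i}$ (recall $T\Delta_{i,j}=G_{i,T}-G_{j,T}$), I split the threshold as $\frac{T\Delta_{1,2}}{2}+\big(\frac{T\Delta_{1,2}}{2}+T\Delta_{2,i}\big)$ and observe that if a difference of two quantities exceeds the sum of two non-negative bounds, at least one of the quantities must exceed its assigned bound. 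Concretely, this yields $\{\tilde{G}_{i,T}\ge\tilde{G}_{1,T}\}\subseteq\{\tilde{G}_{1,T}-G_{1,T}\le-\frac{T\Delta_{1,2}}{2}\}\cup\{\tilde{G}_{i,T}-G_{i,T}\ge\frac{T\Delta_{1,2}}{2}+T\Delta_{2,i}\}$.

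The crucial step — and the one responsible for the single, unsummed first term in the statement — is to combine these inclusions at the set level \emph{before} applying the union bound. The lower-deviation event for item $1$, namely $\{\tilde{G}_{1,T}-G_{1,T}\le-\frac{T\Delta_{1,2}}{2}\}$, does not depend on $i$, so taking the union over $i=2,\dots,L$ collapses it to a single copy, giving $\{\iout\ne 1\}\subseteq\{\tilde{G}_{1,T}-G_{1,T}\le-\frac{T\Delta_{1,2}}{2}\}\cup\bigcup_{i=2}^L\{\tilde{G}_{i,T}-G_{i,T}\ge\frac{T\Delta_{1,2}}{2}+T\Delta_{2,i}\}$. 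A final union bound over this decomposition produces exactly the claimed inequality.

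The main obstacle here is purely bookkeeping rather than substance: one must factor out the common item-$1$ lower-deviation event at the level of sets, since applying the union bound first to each $\{\tilde{G}_{i,T}\ge\tilde{G}_{1,T}\}$ separately would wastefully generate $L-1$ redundant copies of $\Pr(\tilde{G}_{1,T}-G_{1,T}\le-\frac{T\Delta_{1,2}}{2})$ and a strictly weaker bound. No concentration is needed at this stage; the martingale tail estimates \eqref{eq:ub_per_item} and \eqref{eq:lb_per_item} are to be invoked only subsequently, on the three resulting one-sided deviation probabilities.
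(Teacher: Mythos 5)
Your proposal is correct and takes essentially the same route as the paper's proof: the identical decomposition of $\{\iout \ne 1\}$ into the single lower-deviation event $\{\tilde{G}_{1,T}-G_{1,T}\le -T\Delta_{1,2}/2\}$ plus the $L-1$ upper-deviation events with thresholds $T\Delta_{1,2}/2+T\Delta_{2,i}$, merged at the event level before a final union bound. The only cosmetic difference is that you establish the key inclusion directly by splitting the gap $G_{1,T}-G_{i,T}=T\Delta_{1,2}+T\Delta_{2,i}$ into the two thresholds, whereas the paper argues by contraposition, showing that on the complement of all three deviation events one has $\tilde{G}_{i,T}<\tilde{G}_{1,T}$ for every $i\ne 1$.
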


Combining~\eqref{eq:ub_per_item}, \eqref{eq:lb_per_item} and
Lemma~\ref{lemma:bd_err_prob_preliminary} 
, we see that
\begin{align*}
    & 
    \lambda_1     \le \frac{ T \Delta_{1,2} }{2}  ,
    \quad
      \lambda_i \le \frac{ T \Delta_{1,2} }{2}  + T\cdot \Delta_{2,i} \quad \forall i \neq 1,
    \\ \Rightarrow~ &
    \Pr( \iout \ne 1 )
    \le \exp \Bigg(  - \frac{ \lambda_1^2 }{   \sum_{t=1}^T   \sigma_t^2 + { M' \lambda_1} /{3}    }  \Bigg)
        +
        \sum_{i=2}^L \exp \Bigg(  - \frac{ \lambda_i^2 }{    \sum_{t=1}^T   \sigma_t^2   + { M \lambda_i} /{3}    }  \Bigg).
\end{align*}

Let
\begin{align*}
    &
    \lambda_1:=
    \frac{ T \Delta_{1,2} }{2}  
    \qquad
    \lambda_i := \frac{ T \Delta_{1,2} }{2}  + T\cdot \Delta_{2,i} 
    \quad \forall i \neq 1.
\end{align*}
We now complete the proof:
\begin{align*}
    \Pr( \iout \ne 1 )
    &
    \le \exp \Bigg(  - \frac{ T\Delta_{1,2}^2/4 }{  L / \gamma  }  \Bigg)
        +
        \sum_{i=2}^L \exp \Bigg(  - \frac{ T(\Delta_{1,2}/2 + \Delta_{2,i} )^2 }{ {  L(3  +\Delta_{1,2}/2  + \Delta_{2,i}  )}/{ (3 \gamma) }   }  \Bigg)
    \\&
    = \exp \Bigg(  - \frac{  \gamma T\Delta_{1,2}^2  }{ 4L   }  \Bigg)
        +
        \sum_{i=2}^L \exp \Bigg(  - \frac{  3\gamma T(\Delta_{1,2}/2 + \Delta_{2,i} )^2 }{   L(3  +\Delta_{1,2}/2 + \Delta_{2,i}   )}    \Bigg)
    .
\end{align*}
%
%
\end{proof}

\subsection{Proof of Lemma~\ref{lemma:bd_err_prob_preliminary} }

\lemmaBdErrProbPreliminary*

\begin{proof}
We observe that 
\begin{align}
    &
    \Pr( \iout \ne 1  ) = \Pr(  \exists i \ne 1: \tilde{G}_{i,T} \ge \tilde{G}_{1,T}  )
    \nonumber
    \\&
    \le \Pr \bigg(  \exists i \ne 1: \tilde{G}_{i,T} - G_{i,T} \ge \frac{ T \Delta_{1,2} }{2}  + T\cdot \Delta_{2,i}, \text{ or } \tilde{G}_{1,T} - G_{1,T} \le - \frac{ T \Delta_{1,2} }{2}  
    \bigg)
    \label{eq:bd_err_prob_decomp}
    \\&
    \le \Pr\bigg(  \tilde{G}_{1,T} - G_{1,T} \le - \frac{ T \Delta_{1,2} }{2} \bigg) 
        + \sum_{i=2}^L \Pr \bigg( \tilde{G}_{i,T} - G_{i,T} \ge \frac{ T \Delta_{1,2} }{2}  + T\cdot \Delta_{2,i}  \bigg).
        \label{eq:bd_err_prob_preliminary}
\end{align}
It is trivial to obtain~\eqref{eq:bd_err_prob_preliminary} with \eqref{eq:bd_err_prob_decomp}. Now we complete the proof with the derivation of \eqref{eq:bd_err_prob_decomp}.
We denote
\begin{align*}
    & \calE_{1,T} := \bigg\{ \tilde{G}_{1,T} - G_{1,T} \le - \frac{ T \Delta_{1,2} }{2} \bigg\},
    \qquad
    \calE_{i,T} := \bigg\{ \tilde{G}_{i,T} - G_{i,T} \ge \frac{ T \Delta_{1,2} }{2}  + T\cdot \Delta_{2,i} \bigg\}  \quad \forall i \neq 1 .
\end{align*}
We can rewrite \eqref{eq:bd_err_prob_decomp} as $\bbP( \exists i \neq 1: \tilde{G}_{i,T} \ge \tilde{G}_{1,T} ) \le \bbP( \bigcup_{i=1}^L \calE_{i,T} )$. Hence, it is sufficient to show 
$$ \bigg\{ \bigcap_{i=1}^{L} \calE_{i,T}^C \bigg\}  ~\Rightarrow~  \{ \tilde{G}_{i,T} < G_{i,T} \quad \forall i \ne 1  \} $$
as follows. When $\bigcap_{i=1}^{L} \calE_{i,T}^C$ holds, for any $i\ne 1$, we have
\begin{align*}
    & \tilde{G}_{i,T} < G_{i,T} + \frac{ T \Delta_{1,2} }{2}  + T\cdot \Delta_{2,i}
        = G_{1,T} - T\cdot \Delta_{1,T} + \frac{ T \Delta_{1,2} }{2}  + T\cdot \Delta_{2,i} 
    \\&
     < \tilde{G}_{1,T} + \frac{ T \Delta_{1,2} }{2} - T\cdot \Delta_{1,T} + \frac{ T \Delta_{1,2} }{2}  + T\cdot \Delta_{2,i} 
          = \tilde{G}_{1,T}  T \cdot \Delta_{1,2}  - ( T \cdot \Delta_{1,2} + T\cdot \Delta_{2,T} )  + T\cdot \Delta_{2,i} 
   \\&
   = \tilde{G}_{1,T} .
\end{align*}
\end{proof}

\section{Analysis of global performances of adversarial algorithms}
 
\label{sec:pf_adv_lb_trade_off}

\subsection{Proof of Theorem~\ref{thm:adv_bai_lb} }
\label{pf:thm_adv_bai_lb}
 
 \thmAdvBaiLb*

\begin{proof}

{\bf Step 1: Construct instances.}
To begin, we let $Z_1, Z_2, \ldots, Z_T$ be a sequence of i.i.d.\ Gaussian random variables with
mean $1/2$ and variance $\sigma^2 \in  [ 1, \infty )$. Let $\varepsilon \in (0,1/2)$ be a constant that will be chosen
differently in each proof.
Under instance $ \ell$~($1 \le \ell \le  L$),
Let $g_{i,t}^\ell$ be the random gain of item $i$ at time step $t$, where
\begin{align*}
    &
    g_{i,1}^\ell = 
    \left\{
        \begin{array}{ll}
            1/2  & \text{if }  i=1 \\
            1/2 +   \varepsilon    & \text{if }  i=\ell \neq 1 \\
            1/2 -  \varepsilon     & \text{else} 
        \end{array}
    \right.
    ,
    \quad
    g_{i,t}^\ell = 
    \left\{
        \begin{array}{ll}
            \mathrm{clip}_{[0,1]} ( Z_t  )  & \text{if }  i=1 \\
            \mathrm{clip}_{[0,1]} ( Z_t +   \varepsilon )  & \text{if }  i=\ell \neq 1 \\
            \mathrm{clip}_{[0,1]} ( Z_t -  \varepsilon  )  & \text{else}
        \end{array}
    \right. 
    \quad \forall t>1.
\end{align*}
Note that $\mathrm{clip}_{[a,b]}x:= \max\{  a, \min\{ b , x\}  \} $ for $a\le b$.
Under instance $\ell$~($1\le \ell \le L$), we define notations as follows: 
\begin{itemize}
    \item[(i)] We let $G_{i,t}^\ell = \sum_{u=1}^t g_{i,t}^\ell$ and $ T \cdot \barDelta_{i,j}^\ell = G_{i,T}^\ell - G_{j,T}^\ell$ for all $i,j\in[L]$, which indicates that $\ell =\argmax_{i\in[L]}  G_{i,T}^\ell$ is the optimal item. 
    \item[(ii)] We let $i_t^\ell$ be the pulled item at time step $t$, and $O_\ell^t = \{ i_u^\ell, g_{  i_u^\ell,\tau } \}^t_{u=1}$ be the sequence of pulled items and observed gains up to and including time step $t$.
    \item[(iii)] We let $\bbP_\ell^t $ be the measure on $ O_\ell^t$, and let $P_{\ell,i}$ be the measure on the gain of item $i$.
    \item[(iv)] We define $\barDelta_{\min}^\ell := \min\limits_{j\ne \ell }   \barDelta_{\ell,j}^\ell $.
\end{itemize} 
For simplicity, we abbreviate $\bbP_\ell^T$, $O_\ell^T$ as $\bbP_\ell$, $O_\ell$ respectively. 
%
Moreover,
we let $N_{i}(t)$ denote the number of pulls of item $i$ up to and including time step $t$.

\noindent {\bf Step 2:  Change of measure.} 
%
%
 %
 First of all, we apply Lemmas~\ref{lemma:kl_to_event} and \ref{lemma:kl_decomp} obtain that for all $1\le \ell \le L$, 
 \begin{align*}
     \Pr_{ O_1 }( \iout \neq 1  ) +   \Pr_{O_\ell} ( \iout  = 1  )
     \ge \frac{ 1}{2 } \exp( - \mathrm{KL}(\bbP_1  \parallel \bbP_{\ell}  ) )
     = \frac{ 1}{2 } \exp(   - \bbE_{\bbP_1 }  [ N_{ \ell }(T) ] \cdot \mathrm{KL} ( P_{1,\ell} \parallel P_{\ell, \ell} )    ).
\end{align*}

Now we turn to bound $\bbE_{\bbP_1 }  [ N_{ \ell }(T) ]$.
Since $\sum_{\ell =1}^L \bbE_{\bbP_1} [N_\ell(T)] = T$, there exists $2\le \ell_2 \le  L $ such that $\bbE_{\bbP_1} [N_{\ell_2}(T) ] \le T/L$.

Further, with Lemma~\ref{lemma:kl_gauss_clip},
we can see that 
 \begin{align*} 
     %
     %
     %
     \Pr_{ O_1 } \iout  \neq 1  ) +   \Pr_{ \bbP_{\ell_2} } (\iout = 1  )
     &
     \ge 
     \frac{ 1}{2 } \exp(   - \bbE_{\bbP_1 }  [ N_{ \ell_2 }(T) ] \cdot \mathrm{KL} ( P_{1,\ell_2 } \parallel P_{\ell_2, \ell_2 } )    )
     \\&
     \ge
    \frac{ 1}{2 } \exp\bigg(   -   \frac{T}{L}   \cdot  \frac{ (2\varepsilon)^2 }{ 2\sigma^2 }  \bigg).
\end{align*}     
Since $\Pr_{O_{\ell_2} } (\iout \neq \ell_2 ) \ge \Pr_{O_{\ell_2} } (\iout = 1  )$, we have
\begin{align*}
    \max_{ 1 \le \ell \le L } \Pr_{O_\ell} (\iout \neq \ell ) \ge 
    \frac{ 1}{4 } \exp\bigg(   -   \frac{  2T \varepsilon^2 }{  \sigma^2  L } \bigg)
    . 
\end{align*}
 
\noindent {\bf Step 3: Comparison between $\varepsilon$ and $  \barDelta_{\min }^\ell   $.}
(i)
\underline{Under instance $1$}, since $G_{1,T}^1 \ge G_{i,T}^1 =G_{j,T}^1$ for all $i,j\ne 1$, i.e.,  item $1$ is the optimal item and all other items are suboptimal with identical rewards after $T$ time steps, we have
    $$\barDelta_{\min}^1  = \min_{j\ne 1 }   \barDelta_{1,j}^1 =     \barDelta_{1,2}^1 = \frac{ G_{1,T}^1 - G_{2,T}^1 }{T} =  \frac{1}{T}  \cdot \sum_{t=1}^T  ~[ \mathrm{clip}_{[0,1]} ( Z_t  ) - \mathrm{clip}_{[0,1]} ( Z_t -  \varepsilon  )  ].$$

Let $X_t =  \mathrm{clip}_{[0,1]} ( Z_t  ) - \mathrm{clip}_{[0,1]} ( Z_t -  \varepsilon  )   $ and   $X = \sum^T_{t=1} X_t$. 
Then $X = T \barDelta_{\min}^1 $.
We have
\begin{align*}
    X_t
    \ge  [ \mathrm{clip}_{[0,1]} [ Z_t + \varepsilon ) - \mathrm{clip}_{[0,1]} ( Z_t   ) ] \cdot \mathsf{1}\{  \varepsilon \le Z_t \le 1- \varepsilon  \} 
    = \varepsilon \cdot  \mathsf{1}\{   \varepsilon \le Z_t \le 1-  \varepsilon  \}  .
\end{align*}
%
Let $z\sigma = 1/2 - \varepsilon$. Theorem~\ref{thm:conc_gauss_single}  implies that
\begin{align}
    &
    \Pr_{Z_t} \bigg(~ \bigg| Z_t - \frac{ 1 }{2} \bigg| \ge \frac{ 1 }{2} - \varepsilon \bigg) \le
    \exp\bigg( - \frac{  (1/2 - \varepsilon)^2 }{ 2 \sigma^2 }\bigg)
    \nonumber \\
    ~\Rightarrow~
    &
    \Pr_{Z_t}  ( \varepsilon \le  Z_t  \le 1 - \varepsilon ) \ge 1- \exp\bigg(-\frac{ (1 -2\varepsilon)^2}{  8 \sigma^2 }\bigg) : = p(\varepsilon, \sigma).
    \nonumber 
\end{align}
Hence,we have
$
    \bbE X_t \ge 
     \varepsilon \cdot p( \varepsilon,\sigma ) 
$.

Since $X_t\in[0, \varepsilon]$ for all $t$, $X_1/\varepsilon, \ldots  , X_T/\varepsilon$ are independent $[0, 1]$-valued random variables, Theorem~\ref{thm:conc_var_chernoff} indicates that for all $a \in(0,1)$,
     \begin{align} 
         \Pr (      X  - \mathbb{E} X  \le -a \mathbb{E} X )    \le  \exp \bigg(     - \frac{ a^2  \mathbb{E} X   }{3 \varepsilon} \bigg)
         ~\Leftrightarrow~
         \Pr( X \ge (1-a) \bbE X ) \ge 1 -   \exp \bigg(     - \frac{ a^2  \mathbb{E} X   }{3 \varepsilon} \bigg)
         .
         \label{eq:gauss_apply_conc}
     \end{align}
Let $b=1-a$.   
Since $\bbE X_t \ge \varepsilon\cdot p(\varepsilon,\sigma)$ for all $t$,
\begin{align*}
    & 
    \Pr( \ X \ge b T \varepsilon\cdot p(\varepsilon,\sigma) \ ) \ge 1 -   \exp \bigg(     - \frac{ (1-b)^2  T  \cdot p(\varepsilon,\sigma)  }{3  } \bigg) \quad \forall b  \in (0,1) .
\end{align*}
In order words,
\begin{align*}
    & 
    \Pr  ( \ \barDelta_{\min }^1 \ge  b \varepsilon\cdot p(\varepsilon,\sigma) ~)  \ge 1 -   \exp \bigg(     - \frac{ (1-b)^2  T  \cdot p(\varepsilon,\sigma)  }{3  } \bigg) \quad \forall b  \in (0,1), \ \varepsilon \in (0,1/2) .
\end{align*}

 \noindent (ii) 
\underline{Under instance $\ell~(\ell \ne 1)$}, 
since $G_{\ell,T}^\ell \ge G_{1,T}^\ell \ge G_{i,T}^\ell =G_{j,T}^\ell$ for all $i,j\notin \{ 1,\ell\}$, i.e.,  item $\ell$ is the optimal item, item $1$ is the second optimal item, and all other items are with identical smaller rewards after $T$ time steps, we have
    $$\barDelta_{\min}^\ell  = \min_{j\ne \ell }   \barDelta_{\ell ,j}^\ell =     \barDelta_{\ell,1}^\ell = \frac{ G_{\ell,T}^\ell - G_{1,T}^\ell }{T} =  \frac{1}{T}  \cdot \sum_{t=1}^T  ~[ \mathrm{clip}_{[0,1]} ( Z_t +\Delta ) - \mathrm{clip}_{[0,1]} ( Z_t   )  ].$$ 
    
    Let $X'_t =  \mathrm{clip}_{[0,1]} ( Z_t  + \varepsilon ) - \mathrm{clip}_{[0,1]} ( Z_t   )   $ and   $X' = \sum^T_{t=1} X'_t$. 
Then $X' = T \barDelta_{\min}^\ell $.
We have
\begin{align*}
    X'_t
    \ge  [   \mathrm{clip}_{[0,1]} ( Z_t  + \varepsilon ) - \mathrm{clip}_{[0,1]} ( Z_t   )   ] \cdot \mathsf{1}\{  \varepsilon \le Z_t \le 1- \varepsilon  \} 
    = \varepsilon \cdot  \mathsf{1}\{   \varepsilon \le Z_t \le 1-  \varepsilon  \}  .
\end{align*}
%
We again apply Theorem~\ref{thm:conc_gauss_single}  to  
$
    \bbE X'_t \ge 
     \varepsilon \cdot p( \varepsilon,\sigma ) 
$.
Moreover, Theorem~\ref{thm:conc_var_chernoff} implies that 
\begin{align*}
    & 
    \Pr  ( \ \barDelta_{\min}^\ell \ge  b \varepsilon\cdot p(\varepsilon,\sigma) ~)  \ge 1 -   \exp \bigg(     - \frac{ (1-b)^2  T  \cdot p(\varepsilon,\sigma)  }{3  } \bigg) \quad \forall b  \in (0,1), \ \varepsilon \in (0,1/2) .
\end{align*}

\noindent (iii)
\underline{Altogether, for all $1\le \ell \le L$}, we have
\begin{align}
    &
   \Pr  ( \ \barDelta_{\min}^\ell \ge  b \varepsilon\cdot p(\varepsilon,\sigma)  ~)  \ge 1 -   \exp \bigg(     - \frac{ (1-b)^2 T  \cdot p(\varepsilon,\sigma)  }{3   } \bigg) \quad \forall b  \in (0,1), \ \varepsilon \in (0,1/2) .
    \label{eq:sto_reg_bai_lb_gap}
\end{align}

\noindent 
 {\bf Step 4: Consider the instance with the largest error probability.}
Let $ 1\le \ell_3 \le L $ satisfy that
\begin{align*}
    &
    \Pr_{O_{\ell_3} } (\iout \neq \ell_3 ) 
    \ge   
    \frac{ 1}{4 } \exp\bigg(   -   \frac{  2T \varepsilon^2 }{  \sigma^2  L } \bigg)
    . 
\end{align*}
Note that $\barDelta_{\min}^1 ,\ldots, \barDelta_{\min}^L $  are all determined by $ \{ Z_t \}_{t=1}^T$.
We let $ O' :=    O_{\ell_3}   \cup \{ Z_t \}_{t=1}^T $.
Then for all $ b\in (0,1) $, 
\begin{align*}  
    \Pr_{O_{\ell_3}  } (\iout \neq \ell_3 ) 
    & 
    \ge 
    \Pr_{O' } \ \big(~ \iout \neq \ell_3, \ \barDelta_{\min}^{\ell_3} \ge b \varepsilon\cdot p(\varepsilon,\sigma) ~\big)
    \\&
    \ge
    \Pr_{O' } \big(~ \iout \neq \ell_3 ~\big|~\barDelta_{\min}^{\ell_3} \ge b \varepsilon \cdot p(\varepsilon,\sigma) ~\big) \cdot \Pr_{O' } \big( ~ \barDelta_{\min}^{\ell_3} \ge b \varepsilon\cdot p(\varepsilon,\sigma) ~ \big)
    \\
    &
    \ge  
    \frac{ 1}{4 } \exp\bigg(   -   \frac{  2T \varepsilon^2 }{  b^2 \sigma^2  L \cdot p^2(\varepsilon,\sigma) } \bigg)\cdot 
        \bigg[  1 -   \exp \bigg(     - \frac{ (1-b)^2 T \cdot p(\varepsilon,\sigma)  }{3 } \bigg) \bigg].
\end{align*}
Let $\varepsilon = 1/10$, $\sigma = 1/3$, $b=7/10$.
Then 
\begin{align*} 
    &
    p(\varepsilon, \sigma)
    = 1- \exp\bigg(-\frac{ (1 -2\varepsilon)^2}{  8 \sigma^2 }\bigg) 
    = 1- \exp\bigg(-  \frac{9}{8}\cdot \bigg( \frac{4}{5} \bigg)^2   \bigg) 
    = 1- \exp\bigg(-  \frac{18}{ 25}  \bigg)  \ge \frac{ 1 }{2}
    .
\end{align*}
Moreover,
\begin{align*} 
    \Pr_{O_{\ell_3} } (\iout \neq \ell_3 ) 
    & 
    \ge \frac{ 1}{4 } \exp\bigg(   -   \frac{  2T (\barDelta_{\min}^{\ell_3})^2 }{  (7/10)^2  \cdot(1/3)^2  \cdot L \cdot (1/2)^2 } \bigg)\cdot 
        \bigg[  1 -   \exp \bigg(     - \frac{ (3/10)^2  \cdot T \cdot (1/2)  }{3 } \bigg) \bigg]
   \\&
    = \frac{ 1}{4 } \exp\bigg(   -   \frac{  7200 T (\barDelta_{\min}^{\ell_3})^2 }{  49 L } \bigg)\cdot 
        \bigg[  1 -   \exp \bigg(     - \frac{  3T    }{ 200 } \bigg) \bigg]
    \\&
    \ge \frac{ 1}{4 } \exp\bigg(   -   \frac{  150 T (\barDelta_{\min}^{\ell_3})^2 }{  L } \bigg)\cdot 
        \bigg[  1 -   \exp \bigg(     - \frac{  3T    }{ 200 } \bigg) \bigg]
        .
\end{align*}
When $T \ge 10$, since $ 1 -   \exp (     - {  3T    }/{ 200 } )  \ge 8/65$,
%
we have 
\begin{align*}
    \Pr_{O_{\ell_3} } (\iout \neq \ell_3 )  
    \ge  
%
    \frac{ 2}{65 } \exp\bigg(   -   \frac{  150 T (\barDelta_{\min}^{\ell_3} )^2 }{  L } \bigg) 
    .
\end{align*}

\noindent
{\bf Step 5: Classification of instances.}
Let $\barcalB_1( \underline{\Delta}_{ T} ,\barR  )$ denote the set of instances where 
(i)  the empirically-minimal optimality gap $  \barDelta_{ T}
\ge \underline{\Delta}_{ T} $;
and (ii) there exists $R_0 \in \bbR$ such the rewards are
bounded in $[R_0, R_0 + R]$. 
Then
\begin{align*}
	\sup_{ \barcalB_1(  \underline{\Delta}_{ T}, 1) }
	\Pr ( \iout \ne \bari^*_T )
	\ge 
	\frac{ 1 -   \exp (     - {  3T    }/{ 200 } )  }{4}  \cdot
      \exp\bigg(   -   \frac{  150 T  \underline{\Delta}_{ T}^2 }{  L } \bigg) 
      \quad \forall 0<\underline{\Delta}_{ T}\le 1.
\end{align*}
When $T\ge 10$, 
\begin{align*}
	\sup_{ \barcalB_1(  \underline{\Delta}_{ T},1) }
	\Pr ( \iout \ne \bari^*_T )
	\ge 
    \frac{ 2}{65 } \exp\bigg(   -   \frac{  150 T  \underline{\Delta}_{ T}^2 }{  L } \bigg) 
     \quad \forall 0<\underline{\Delta}_{ T}\le 1 .
\end{align*}
%
%
%
\end{proof}

\subsection{Proof of Theorem~\ref{thm:adv_rm_bai_gauss} }
\label{pf:thm_adv_rm_bai_gauss}

\thmAdvRmBaiGauss*

\begin{proof}


 The analysis is similar to that of Theorem~\ref{thm:adv_bai_lb} (See Appendix~\ref{pf:thm_adv_bai_lb}).
 We construct $L$ instances in the same way.
 
 \noindent
 {\bf Step 1: Change of measure.} We apply Lemmas~\ref{lemma:kl_decomp} and \ref{lemma:kl_to_event} to show that 
 \begin{align*}
     \Pr_{ O_1 }(\iout \neq 1  ) +   \Pr_{O_\ell} (\iout = 1  )
     \ge \frac{ 1}{2 } \exp( - \mathrm{KL}(\bbP_1  \parallel \bbP_{\ell}  ) )
     = \frac{ 1}{2 } \exp(   - \bbE_{\bbP_1 }  [ N_{ \ell }(T) ] \cdot \mathrm{KL} ( P_{1,\ell} \parallel P_{\ell, \ell} )    ).
\end{align*}          
In order to upper bound $\bbE_{\bbP_1 }  [ N_{ \ell }(T) ]$, we lower bound the number of time steps that $g_{1,t}^1-g_{\ell,t}^1 = \varepsilon$, i.e., $Z_t \in [0,1 - \varepsilon]$.
Let $z\sigma = 1/2 - \varepsilon$. We again apply Theorem~\ref{thm:conc_gauss_single} to obtain \eqref{eq:gauss_apply_conc}:
\begin{align}
    &
    \Pr_{Z_t}  ( \varepsilon \le  Z_t  \le 1 - \varepsilon ) \ge 1- \exp\bigg(-\frac{ (1 -2\varepsilon)^2}{  8 \sigma^2 }\bigg)   = p(\varepsilon, \sigma).
    \nonumber 
\end{align}
Since the expectation of the empirical-regret is upper bounded by $\overline{\mathrm{Reg}}$, we have
\begin{align*}
    \overline{\mathrm{Reg}}
    & \ge \bbE_{ \bbP_1 } \Bigg[  \sum_{t=1}^T \mathsf{1}\{ i_t^1 \neq 1 \} \cdot ( g_{1,t}^1 - g_{i_t,t}^1 ) \Bigg]
    \ge \bbE_{ \bbP_1 } \Bigg[  \sum_{t=1}^T \mathsf{1}\{ i_t^1 \neq 1,  \Delta \le Z_t \le 1-\varepsilon \} \cdot ( g_{1,t}^1 - g_{ i_t ,t}^1 ) \Bigg]
    \\&
    =  \sum_{t=1}^T \bbE_{ \bbP_1^t }  [   \mathsf{1}\{ i_t^1 \neq 1 \} \cdot [\mathrm{clip}_{[0,1]} ( Z_t  ) - \mathrm{clip}_{[0,1]} ( Z_t -\varepsilon ) ] ~|~\varepsilon \le  Z_t \le 1 - \varepsilon   ] \cdot \Pr_{Z_t} (  \varepsilon \le Z_t \le 1 - \varepsilon )
    \\&
    =  \sum_{t=1}^T \bbE_{ \bbP_1^t }  [   \mathsf{1}\{ i_t^1 \neq 1 \}  \cdot [ Z_t    -  ( Z_t -\varepsilon ) ]   ~|~\varepsilon \le  Z_t \le 1 - \varepsilon   ] \cdot  p(\varepsilon, \sigma)
    \\&
    =  \sum_{t=1}^T \bbE_{ \bbP_1^t }  [   \mathsf{1}\{ i_t^1 \neq 1 \}  ]\cdot \varepsilon   \cdot  p(\varepsilon, \sigma)
    =\varepsilon \cdot p(\varepsilon, \sigma) \cdot  \sum_{\ell=2}^L \bbE_{ \bbP_1 }[ N_\ell(T) ]   . 
\end{align*}   
Hence, there exists $2 \le \ell_0 \le L$ such that
\begin{align*}
    \bbE_{ \bbP_1 }[ N_{\ell_0} (T) ]  \le \frac{  \overline{\mathrm{Reg}} }{ \varepsilon \cdot p(\varepsilon, \sigma) \cdot (L-1) }  .
\end{align*}
Further, we again apply Lemma~\ref{lemma:kl_gauss_clip} to obtain
%
 \begin{align*}
     \Pr_{ O_1 }(\iout \neq 1  ) +   \Pr_{O_{\ell_0} } (\iout = 1  )
     &
     \ge 
     \frac{ 1}{2 } \exp(   - \bbE_{\bbP_1 }  [ N_{ \ell_0 }(T) ] \cdot \mathrm{KL} ( P_{1,\ell_0 } \parallel P_{\ell_0, \ell_0 } )    )
     \\&
     \ge
    \frac{ 1}{2 } \exp\bigg(   -    \frac{  \overline{\mathrm{Reg}} }{ \varepsilon \cdot p(\varepsilon, \sigma) \cdot (L-1) }   \cdot  \frac{ (2\varepsilon)^2 }{ 2\sigma^2 }  \bigg).
\end{align*}

Since $\Pr_{O_{\ell_0} } (\iout \neq \ell_0 ) \ge \Pr_{O_{\ell_0} } (\iout = 1  )$, we have
\begin{align*}
    \max_{ 1 \le \ell \le L } \Pr_{O_\ell} (\iout \neq \ell ) \ge 
    \frac{ 1}{4 } \exp\bigg(   -    \frac{  2 \varepsilon \overline{\mathrm{Reg}} }{  \sigma^2   \cdot p(\varepsilon, \sigma) \cdot (L-1) }  \bigg)
\end{align*}
where
$
 p(\varepsilon, \sigma) = 1- \exp [- { (1 -2\varepsilon)^2 }/{  (8 \sigma^2 )} ]  
 $.  
 
\noindent 
 {\bf Step 2: Consider the instance with the largest error probability.} 
Recall~\eqref{eq:sto_reg_bai_lb_gap} from the analysis of Theorem~\ref{thm:adv_bai_lb} in Appendix~\ref{pf:thm_adv_bai_lb}. 
For all $1\le \ell \le L$, we have
\begin{align*}
    &
   \Pr  ( \ \barDelta_{\min}^\ell \ge  b \varepsilon\cdot p(\varepsilon,\sigma)  ~)  \ge 1 -   \exp \bigg(     - \frac{ (1-b)^2 T  \cdot p(\varepsilon,\sigma)  }{3   } \bigg) \quad \forall b  \in (0,1), \ \varepsilon \in (0,1/2) .
\end{align*}

Let $ 1\le \ell_2 \le L $ satisfy that
\begin{align*}
    &
    \Pr_{O_{\ell_2} } (\iout \neq \ell_2 ) 
    \ge   
    \frac{ 1}{4 } \exp\bigg(   -    \frac{  2 \varepsilon \overline{\mathrm{Reg}} }{  \sigma^2   \cdot p(\varepsilon, \sigma) \cdot (L-1) }  \bigg).
\end{align*}
Note that $\barDelta_{\min}^1 ,\ldots, \barDelta_{\min}^L $  are all determined by $ \{ Z_t \}_{t=1}^T$.
We let $ O' :=    O_{\ell_2}  \cup \{ Z_t \}_{t=1}^T $.
Then for all $ b\in (0,1) $,  $1\le \ell \le L$,
\begin{align*}  
    \Pr_{ O_{\ell_2}  } (\iout \neq \ell_2 ) 
    & 
    \ge 
    \Pr_{O' } \ \big(~ \iout \neq \ell_2, \ \barDelta_{\min}^{\ell } \ge b \varepsilon\cdot p(\varepsilon,\sigma) ~\big)
    \\&
    \ge
    \Pr_{O' } \big(~ \iout \neq \ell_2  ~\big|~\barDelta_{\min}^{\ell } \ge b \varepsilon \cdot p(\varepsilon,\sigma) ~\big) \cdot \Pr_{O' } \big( ~ \barDelta_{\min}^{\ell } \ge b \varepsilon\cdot p(\varepsilon,\sigma) ~ \big)
    \\
    &
    \ge  
    \frac{ 1}{4 } \exp\bigg(   -    \frac{  2 \barDelta_{\min}^{\ell } \cdot \overline{\mathrm{Reg}} }{  \sigma^2   \cdot b \cdot p^2(\varepsilon, \sigma) \cdot (L-1) }  \bigg) \cdot 
        \bigg[  1 -   \exp \bigg(     - \frac{ (1-b)^2 T \cdot p(\varepsilon,\sigma)  }{3 } \bigg) \bigg].
\end{align*}
Since this inequality holds for all $1\le \ell \le L $, we let $\barDelta_{\min} = \min_{1\le \ell \le L}  \barDelta_{\min}^\ell$, and have
\begin{align*}  
    \Pr_{ O_{\ell_2}  } (\iout \neq \ell_2 )  
    \ge  
    \frac{ 1}{4 } \exp\bigg(   -    \frac{  2 \barDelta_{\min} \cdot \overline{\mathrm{Reg}} }{  \sigma^2   \cdot b \cdot p^2(\varepsilon, \sigma) \cdot (L-1) }  \bigg) \cdot 
        \bigg[  1 -   \exp \bigg(     - \frac{ (1-b)^2 T \cdot p(\varepsilon,\sigma)  }{3 } \bigg) \bigg].
\end{align*}
%
We again let $\varepsilon = 1/10$, $\sigma = 1/3$, $b=7/10$.
Then $ p( \varepsilon,\sigma ) \ge 1/2 $, and
\begin{align*}  
    \Pr_{O_{\ell_2} } (\iout \neq \ell_2 ) 
    &  
    \ge  
    \frac{ 1}{4 } \exp\bigg(   -    \frac{  2 \barDelta_{\min} \cdot \overline{\mathrm{Reg}} }{  (1/3)^2   \cdot (7/10) \cdot (1/2)^2 \cdot (L-1) }  \bigg) \cdot 
        \bigg[  1 -   \exp \bigg(     - \frac{ (3/10)^2 \cdot T \cdot (1/2)  }{3 } \bigg) \bigg]
     \\&
    \ge  
    \frac{ 1}{4 } \exp\bigg(   -    \frac{  720 \barDelta_{\min} \cdot \overline{\mathrm{Reg}} ~}{  7 (L-1) }  \bigg) \cdot 
        \bigg[  1 -   \exp \bigg(     - \frac{ 3T   }{ 200 } \bigg) \bigg]
    \\&
    \ge  
    \frac{ 1}{4 } \exp\bigg(   -    \frac{  103 \barDelta_{\min}  \cdot \overline{\mathrm{Reg}} ~}{  L-1 }  \bigg) \cdot 
        \bigg[  1 -   \exp \bigg(     - \frac{ 3T   }{ 200 } \bigg) \bigg].
\end{align*}
When $T \ge 10$, since $ 1 -   \exp (     - {  3T    }/{ 200 } )  \ge 8/65$, we have
\begin{align*}  
    \Pr_{O_{\ell_2}} (\iout \neq \ell_2 ) 
    &   
    \ge  
    \frac{ 2}{ 65 } \exp\bigg(   -    \frac{  103 \barDelta_{\min} \cdot \overline{\mathrm{Reg}} ~}{  L-1 }  \bigg)  .
\end{align*}
Suppose algorithm $\pi$ satisfies that
\begin{align*}
    \Pr_{O_{\ell_2}} (\iout \neq \ell_2 )  \le \frac{2}{65} \exp(-\psi_T),
\end{align*} 
then we have 
\begin{align*}
    \overline{\mathrm{Reg} } \ge \frac{  L-1 }{  103 \barDelta_{\min}  }  \cdot \psi_T.
\end{align*}
 
\noindent
{\bf Step 3: Classification of instances.}
Suppose algorithm $\pi$ satisfies that $\bare_T (\pi) \le 2\exp(-\psi_T )/65$.
We again consider  $\barcalB_1( \underline{\Delta}_{ T},1   )$ as in Theorem~\ref{thm:adv_bai_lb}.
Recall that $\barcalB_1( \underline{\Delta} ,\barR  )$ denote the set of instances where 
(i)  the empirically-minimal optimality gap $  \barDelta_{\min,T}
\ge \underline{\Delta}_{ T} $ in $T$ time steps;
and (ii) there exists $R_0 \in \bbR$ such the rewards are
bounded in $[R_0, R_0 + R]$. 
When $T\ge 10$,
\begin{align*}
    \sup_{ \calI \in \barcalB_1(\underline{\Delta}_{ T}, 1) }
    \bbE \barR_T (\pi,\calI) \ge \psi_T \cdot \frac{  L-1 }{  103 \underline{\Delta}_{ T} } \quad \forall 0<\underline{\Delta}_{ T}\le 1.
\end{align*}
%
%
%
%
\end{proof}

\section{Additional numerical results}

\label{append:extra_experiment}
We present the failure probabilities and counts of algorithms in different instances in Appendix \ref{append:experiment_table_emp_failure}.
%
We provide additional numerical results for both synthetic and real datasets in Appendices \ref{append:extra_experiment_syn_data} and \ref{append:extra_experiment_real_data} respectively. We also elaborate more details about the experiment setup of the PKIS2 dataset in Appendix~\ref{append:extra_experiment_real_data}.

\subsection{Empirical failure probability of {\sc BoBW-lil'UCB$(\gamma)$}}
\label{append:experiment_table_emp_failure}

\begin{table}[H]
  \centering
    \renewcommand{\arraystretch}{1.4}
  \caption{Empirical failure probability below $1\%$}
    \begin{tabular}{l "c | c | c | c |c | c}
          & \multicolumn{4}{c|}{Bernoulli instances} & \multicolumn{1}{c|}{ML-25M}  & \multicolumn{1}{c}{PKIS2} \\
          \hline
    $L$ & \multicolumn{2}{c|}{$64$} & \multicolumn{2}{c|}{$128$} &  $22$  & $109$\\
    \hline
    $\Delta$ & \multicolumn{1}{c|}{$0.05$} & \multicolumn{1}{c|}{$0.1$} & \multicolumn{1}{c|}{$0.1$} & \multicolumn{1}{c|}{$0.2$} & \multicolumn{1}{l}{ }   & \\
    \thickhline
    {\sc BoBW$(6 \times10^{-7})$} & $0.83\%$ & $0.22\%$ & $0.78\%$ & $0.20\%$ & $0.96\%$ & $0.92\%$ \\
    \hline
    {\sc BoBW$(6 \times10^{-4})$} & $0.65\%$ & $0.17\%$ & $0.62\%$ & $0.18\%$ & $0.72\%$ & $0.23\%$ \\
    \hline
    {\sc BoBW$(6 \times10^{-1})$} & $0.11\%$ & $0.03\%$ & $0.14\%$ & $0.04\%$ & $0.06\%$ & $0$ \\
%
    \end{tabular}%
  \label{tab:err_prob_err001}%
    \renewcommand{\arraystretch}{1}
\end{table}%


\begin{table}[H]
  \centering
    \renewcommand{\arraystretch}{1.4}
  \caption{Empirical failure probability below $ 2\%$}
    \begin{tabular}{l "c | c | c | c |c | c}
          & \multicolumn{4}{c|}{Bernoulli instances} & \multicolumn{1}{c|}{ML-25M}  & \multicolumn{1}{c}{PKIS2} \\
          \hline
    $L$ & \multicolumn{2}{c|}{$64$} & \multicolumn{2}{c|}{$128$} &  $22$  & $109$\\
    \hline
    $\Delta$ & \multicolumn{1}{c|}{$0.05$} & \multicolumn{1}{c|}{$0.1$} & \multicolumn{1}{c|}{$0.1$} & \multicolumn{1}{c|}{$0.2$} & \multicolumn{1}{l}{ }   & \\
    \thickhline
    {\sc BoBW$(6 \times10^{-7})$} & $1.78\%$ & $1.60\%$ & $1.06\%$ & $0.31\%$ & $0.96\%$ & $1.68\%$ \\
    \hline
    {\sc BoBW$(6 \times10^{-4})$} & $1.21\%$ & $0.95\%$ & $0.61\%$ & $0.16\%$ & $0.72\%$ & $0.42\%$ \\
    \hline
    {\sc BoBW$(6 \times10^{-1})$} & $0.28\%$ & $0.23\%$ & $0.06\%$ & $0.03\%$ & $0.06\%$ & $0$ \\
%
    \end{tabular}%
  \label{tab:addlabel}%
    \renewcommand{\arraystretch}{1}
\end{table}%

\newpage
\subsection{Empirical regret of algorithms with empirical failure probability below $1\%$}

\label{append:experiment_table_regret}

\begin{table}[htbp]
  \centering
  \caption{Empirical regret of algorithms using synthetic data}
    \begin{tabular}{l | l |  l | l | l}
    Algorithm & $L$ & $\Delta$ &  Average regret  &  Standard deviation of regret  \\
    \thickhline
    BoBW$(9 \times 10^{-1})$ & $64$ & $0.05$ & $6.45\times 10^{3}$ & $5.10\times 10^{1}$ \\
    \hline
    BoBW$(9\times 10^{-4})$ & $64$ & $0.05$ & $6.59\times 10^{3}$ & $1.38\times 10^{1}$ \\
    \hline
    BoBW$(9\times 10^{-7})$ & $64$ & $0.05$ & $6.61\times 10^{3}$ & $9.27 $ \\
    \hline
    UCB$_{3.0}$ & $64$ & $0.05$ & $1.84\times 10^{4}$ & $2.31\times 10^{3}$ \\
    \hline
    UCB$_{4.5}$ & $64$ & $0.05$ & $2.34\times 10^{4}$ & $3.41\times 10^{3}$ \\
    \hline
    UCB$_{6.0}$ & $64$ & $0.05$ & $2.64\times 10^{4}$ & $4.22\times 10^{3}$ \\
    \thickhline
    BoBW$(9\times 10^{-1})$ & $64$ & $0.1$ & $3.78\times 10^{3}$ & $3.74\times 10^{1}$ \\
    \hline
    BoBW$(9\times 10^{-4})$ & $64$ & $0.1$ & $3.90\times 10^{3}$ & $8.53 $ \\
    \hline
    BoBW$(9\times 10^{-7})$ & $64$ & $0.1$ & $3.91\times 10^{3}$ & $5.64 $ \\
    \hline
    UCB$_{3.0}$ & $64$ & $0.1$ & $8.89\times 10^{3}$ & $1.12\times 10^{3}$ \\
    \hline
    UCB$_{4.5}$ & $64$ & $0.1$ & $1.13\times 10^{4}$ & $1.68\times 10^{3}$ \\
    \hline
    UCB$_{6.0}$ & $64$ & $0.1$ & $1.28\times 10^{4}$ & $2.07\times 10^{3}$ \\
    \thickhline
    BoBW$(9\times 10^{-1})$ & $128$ & $0.1$ & $6.82\times 10^{3}$ & $3.26\times 10^{1}$ \\
    \hline
    BoBW$(9\times 10^{-4})$ & $128$ & $0.1$ & $6.91\times 10^{3}$ & $7.37 $ \\
    \hline
    BoBW$(9\times 10^{-7})$ & $128$ & $0.1$ & $6.92\times 10^{3}$ & $4.83 $ \\
    \hline
    UCB$_{3.0}$ & $128$ & $0.1$ & $1.84\times 10^{4}$ & $2.21\times 10^{3}$ \\
    \hline
    UCB$_{4.5}$ & $128$ & $0.1$ & $2.35\times 10^{4}$ & $3.39\times 10^{3}$ \\
    \hline
    UCB$_{6.0}$ & $128$ & $0.1$ & $2.67\times 10^{4}$ & $4.19\times 10^{3}$ \\
    \thickhline
    BoBW$(9\times 10^{-1})$ & $128$ & $0.2$ & $3.87\times 10^{3}$ & $2.49\times 10^{1}$ \\
    \hline
    BoBW$(9\times 10^{-4})$ & $128$ & $0.2$ & $3.95\times 10^{3}$ & $4.47 $ \\
    \hline
    BoBW$(9\times 10^{-7})$ & $128$ & $0.2$ & $3.96\times 10^{3}$ & $2.86 $ \\
    \hline
    UCB$_{3.0}$ & $128$ & $0.2$ & $8.72\times 10^{3}$ & $1.09\times 10^{3}$ \\
    \hline
    UCB$_{4.5}$ & $128$ & $0.2$ & $1.11\times 10^{4}$ & $1.62\times 10^{3}$ \\
    \hline
    UCB$_{6.0}$ & $128$ & $0.2$ & $1.26\times 10^{4}$ & $2.01\times 10^{3}$ 
    \end{tabular}%
  \label{tab:experiment_reg_syn}%
\end{table}%

\begin{table}[htbp]
  \centering
  \caption{Empirical regret of algorithms using the ML-25M dataset}
    \begin{tabular}{l | l | l | l}
    Algorithm & $L$ & Average regret  &  Standard deviation of regret  \\
    \thickhline
    BoBW $(9\times 10^{-1}) $ & $22$    &  $4.05\times 10^3 $ &  $113.98  $ \\
    \hline
    BoBW $(9\times 10^{-4}) $ & $22$    &  $5.16\times 10^3 $ &  $43.93  $ \\
    \hline
    BoBW $(9\times 10^{-7}) $ & $22$    &  $5.38\times 10^3 $ &  $31.79  $ \\
    \hline
    UCB $_{3.0} $ & $22$    &  $7.05\times 10^3 $ &  $927.60  $ \\
    \hline
    UCB $_{4.5} $ & $22$    &  $11.22\times 10^3 $ &  $1.77\times 10^3 $ \\
    \hline
    UCB $_{6.0} $ & $22 $   &  $15.04\times 10^3 $ &  $2.67\times 10^3 $ \\
    \end{tabular}%
  \label{tab:experiment_reg_ml25}%
\end{table}%

\begin{table}[htbp]
  \centering
  \caption{Empirical regret of algorithms using the PKIS2 dataset}
    \begin{tabular}{l | l | l | l}
    Algorithm & $L$ & Average regret  &  Standard deviation of regret  \\
    \thickhline
    BoBW$(9\times 10^{-1})$ & $109$ & $8.63\times 10^6$ & $2.31\times 10^3$ \\
    \hline
    BoBW$(9\times 10^{-4})$ & $109$ & $8.73\times 10^6$ & $1.55\times 10^3$ \\
    \hline
    BoBW$(9\times 10^{-7})$ & $109$ & $8.78\times 10^6$ & $1.32\times 10^3$ \\
    \hline
    UCB$_{3.0}$ & $109$ & $17.01\times 10^6$ & $2.33\times 10^6$ \\
    \hline
    UCB$_{4.5}$ & $109$ & $21.50\times 10^6$ & $2.94\times 10^6$ \\
    \hline
    UCB$_{6.0}$ & $109$ & $26.31\times 10^6$ & $3.71\times 10^6$ \\
    \end{tabular}%
  \label{tab:experiment_reg_pkis2}%
\end{table}%

\vfill
\newpage
\subsection{Experiments using synthetic data}
\label{append:extra_experiment_syn_data}

In this section, we present more numerical results for larger instances with $L=128$ items. These figures  yield the same conclusions as in Section~\ref{sec:experiment_syn}.  

{\bf Experiments with empirical failure probabilities below $1\%$.}

 
\begin{figure}[H]
    \begin{flushright}
        \includegraphics[width=.38\textwidth]{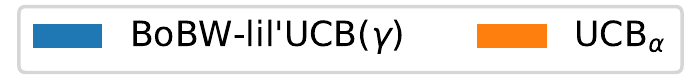} \hphantom{aaaaaaaaaaaaaaaaa}
    \end{flushright}
	\centering
	\includegraphics[width= .8\textwidth]{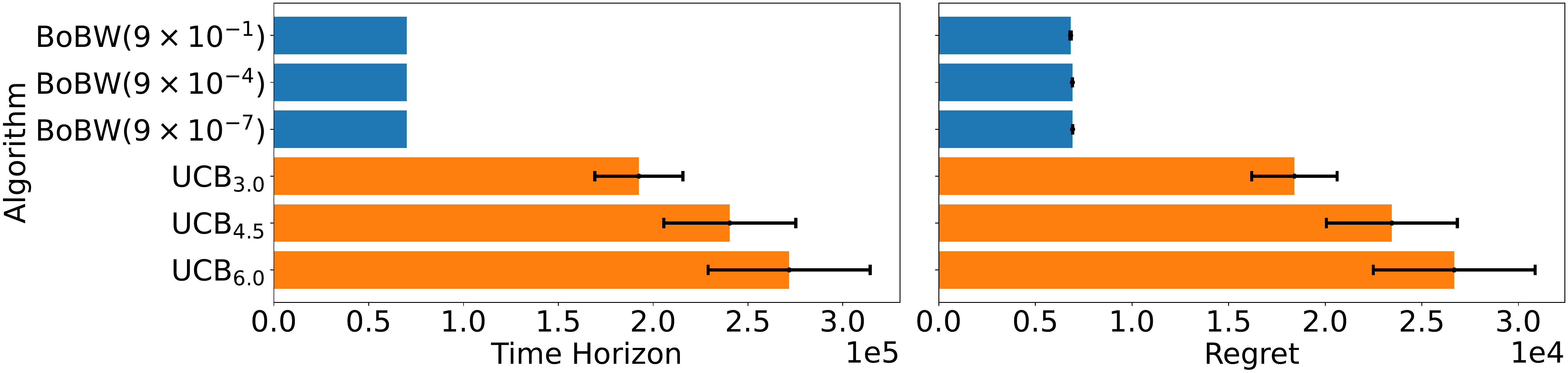}	
	\vspace{-.5em}
	\caption{Empirical failure probability $\le 1\%$: $L=128$, $ \Delta =0.1$, $\nu_i = \mathrm{Bern}(w_i)$.}
	\label{pic:bern_L128_wGapMin0_1_err001}  
\end{figure}

\begin{figure}[H]
	\centering
	\includegraphics[width= .8\textwidth]{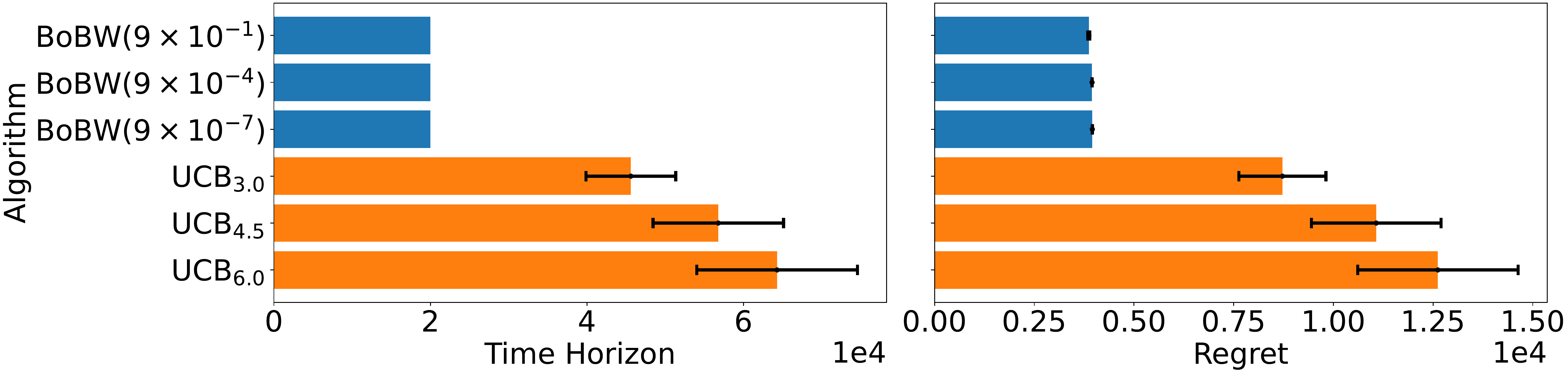}	
	\vspace{-.5em}
	\caption{Empirical failure probability $\le 1\%$: $L=128$, $ \Delta =0.2$, $\nu_i = \mathrm{Bern}(w_i)$.}
	\label{pic:bern_L128_wGapMin0_2_err001}  
\end{figure}

{\bf Experiments with empirical failure probabilities below $2\%$.}

\begin{figure}[H]
	\centering
	\includegraphics[width= .8\textwidth]{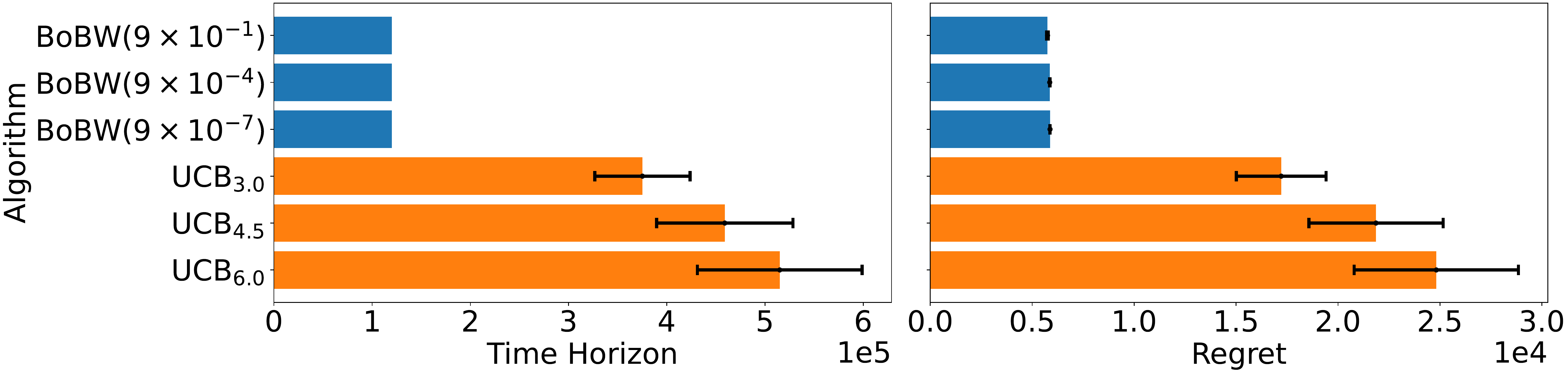}	
	\vspace{-.5em}
	\caption{Empirical failure probability $\le 2\%$: $L=64$, $ \Delta =0.05$, $\nu_i = \mathrm{Bern}(w_i)$.}
	\label{pic:bern_L64_wGapMin0_05_err002}  
\end{figure}

\begin{figure}[H]
	\centering
	\includegraphics[width= .8\textwidth]{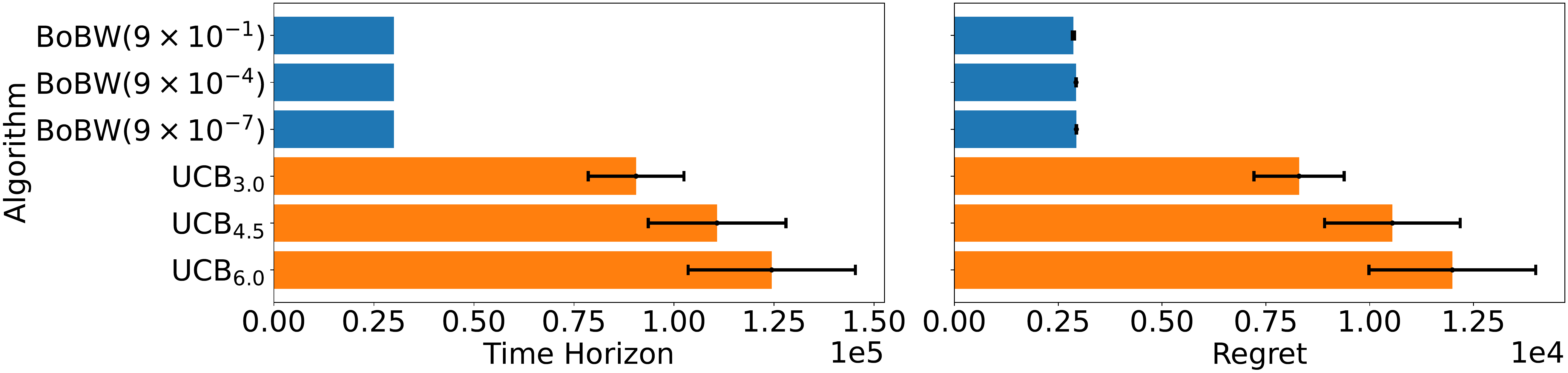}	
	\vspace{-.5em}
	\caption{Empirical failure probability $\le 2\%$: $L=64$, $ \Delta =0.1$, $\nu_i = \mathrm{Bern}(w_i)$.}
	\label{pic:bern_L64_wGapMin0_1_err002}  
\end{figure}

\begin{figure}[H]
	\centering
	\includegraphics[width= .8\textwidth]{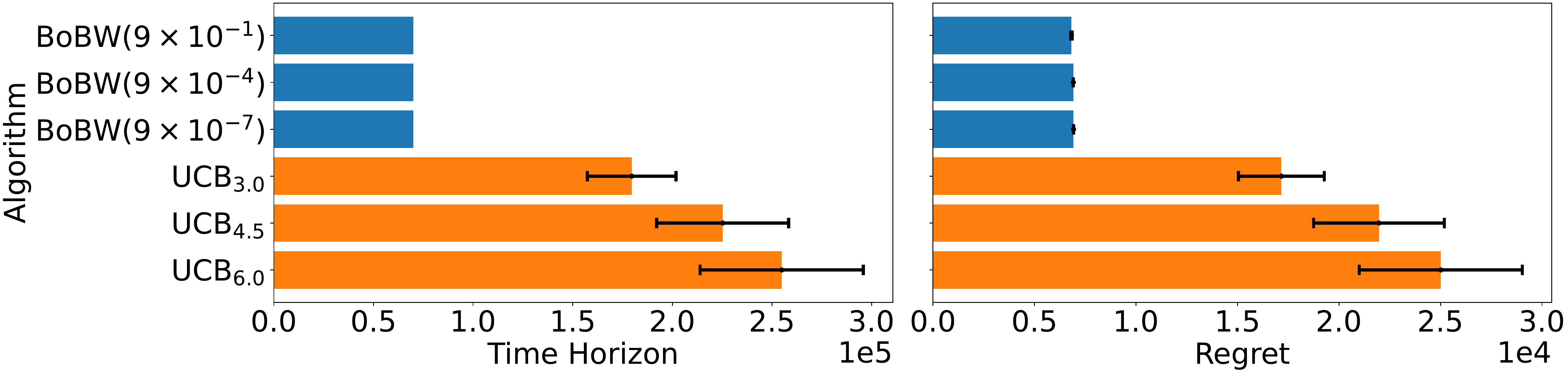}	
	\vspace{-.5em}
	\caption{Empirical failure probability $\le 2\%$: $L=128$, $ \Delta =0.1$, $\nu_i = \mathrm{Bern}(w_i)$.}
	\label{pic:bern_L128_wGapMin0_1_err002}  
\end{figure}

\begin{figure}[H]
	\centering
	\includegraphics[width= .8\textwidth]{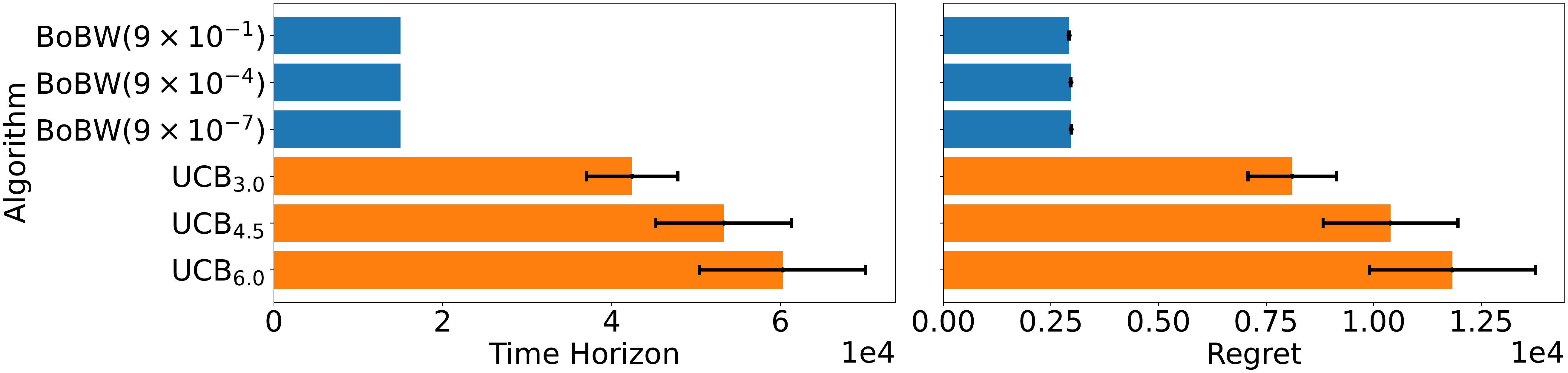}	
	\vspace{-.5em}
	\caption{Empirical failure probability $\le 2\%$: $L=128$, $ \Delta =0.2$, $\nu_i = \mathrm{Bern}(w_i)$.}
	\label{pic:bern_L128_wGapMin0_2_err002}  
\end{figure}

\subsection{Experiments using real data}
\label{append:extra_experiment_real_data} 

{\bf PKIS2 dataset.}
The repository tests 
$641$
 small molecule compounds (kinase inhibitor)  against 
 $406$
 protein kinases.
This experiment aims to find the most effective inihibitor against a targeted kinase, and is a fundamental study in cancer drug discovery.
%
PKIS2 presents a `percentage inhibition' for each inhibitor, which is averaged over several trials. 
For each entry, we normalize it to be between $0$ and $1$, and then obtain the \emph{percentage control} by subtracting each of the normalized entries from $1$. The percentage control can help understand how effective the inhibitor is against the targeted kinase. 
Since \citet{christmann2016unprecedently} reported that these values have log-normal distributions with variance less than $1$, we sample random variables form a standard normal distribution with the log of the percent control as the mean;
the similar setup was used in \citet{mason2020finding,mukherjee2021mean}.
In our experiment, we select the inhibitors tested against one specific kinase MAPKAPK5. We aim to find out the most effective inhibitor with the highest percentage control against  MAPKAPK5, and also obtain high percentage controls cumulatively during the process.
Our results may benefit the experiments that test inhibitors with genuine cancer patients, which helps to identify the most effective inhibitor with a fixed number of tests and provide effective solutions to the attendants during the tests.

{\bf Experiments with empirical failure probabilities below $2\%$.}

\begin{figure}[th]
    \vspace{-.3em} 
	\centering
	\includegraphics[width=.8\textwidth]{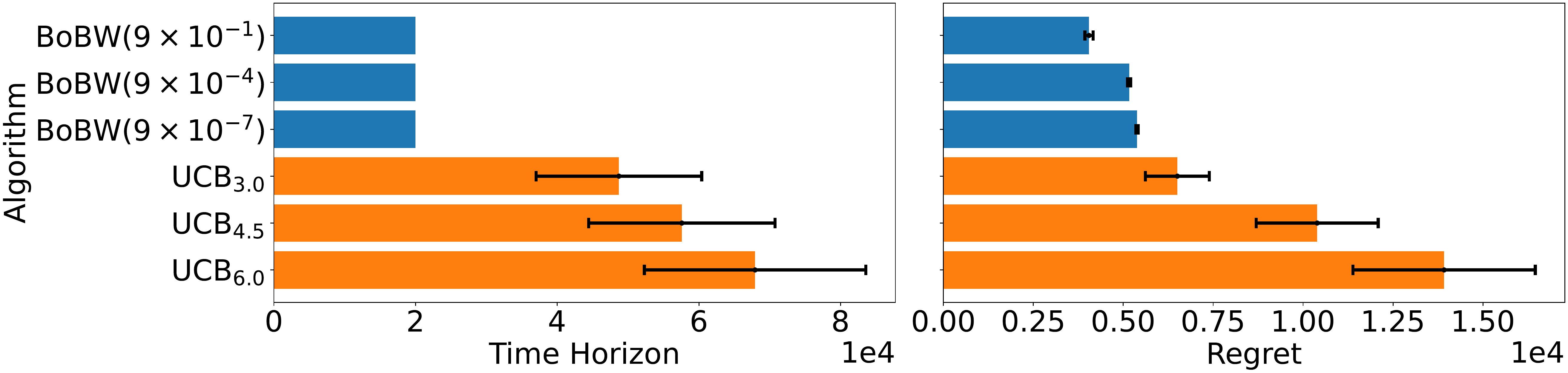} 
	\caption{Empirical failure probability $\le 2\%$. ML-25M:  $L=22$ movies with at least $50,000$ ratings.}
	\label{pic:ml25m_rating50_err002}   
    \vspace{-.3em}
\end{figure}

\newpage
\begin{figure}[th]
    \vspace{-.3em} 
	\centering
	\includegraphics[width=.8\textwidth]{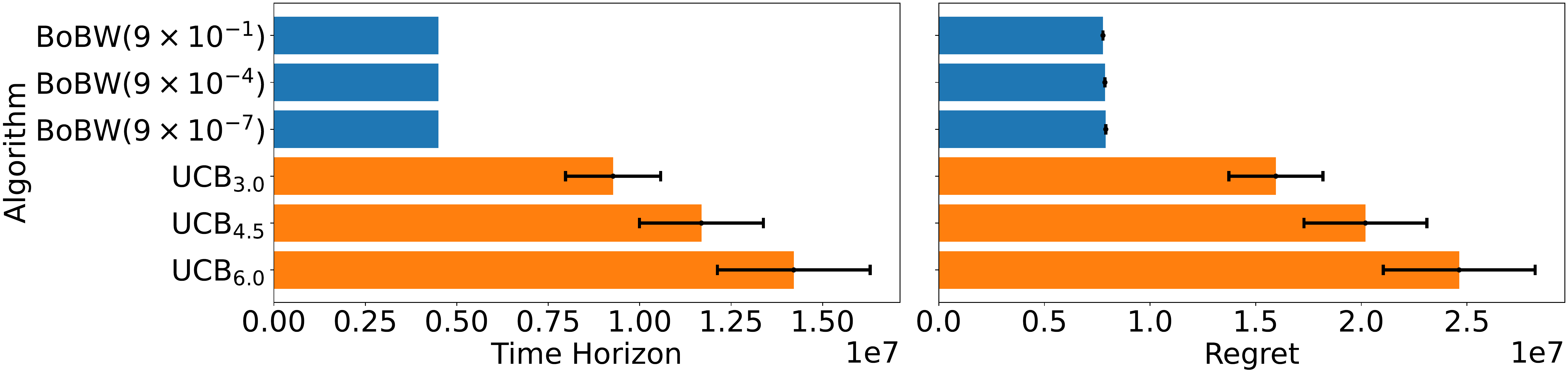} 
	\caption{Empirical failure probability $\le 2\%$. PKIS2: $L=109$ inhibitors tested against MAPKAPK5.}
	\label{pic:pkis2_MAPKAPK5_err002}   
    \vspace{-.3em}
\end{figure}

\end{document}